\pgfplotsset{compat=1.5}
\newtheorem{theorem}{Theorem}
\newtheorem{corollary}[theorem]{Corollary}
\newtheorem{lemma}{Lemma}[section]
\newtheorem{proposition}{Proposition}[section]
\newtheorem{definition}{Definition}
\newtheorem{observation}[lemma]{Observation}
\newenvironment{proofof}[1]{\begin{trivlist} \item {\bf Proof
#1:~~}}
  {\qed\end{trivlist}}
\newcommand{\namedref}[2]{\hyperref[#2]{#1~\ref*{#2}}}
\newcommand{\lemlab}[1]{\label{lem:#1}}
\newcommand{\lemref}[1]{\namedref{Lemma}{lem:#1}}
\newcommand{\alglab}[1]{\label{alg:#1}}
\renewcommand{\algref}[1]{\namedref{Algorithm}{alg:#1}}
\newcommand{\deflab}[1]{\label{def:#1}}
\newcommand{\defref}[1]{\namedref{Definition}{def:#1}}
\renewcommand{\algorithmicrequire}{\textbf{Input:}\,}
\renewcommand{\algorithmicensure}{\textbf{Output:}\,}
\newcommand{\PPr}[1]{\ensuremath{\mathbf{Pr}\left[#1\right]}}
\newcommand{\Ex}[1]{\ensuremath{\mathbb{E}\left[#1\right]}}
\renewcommand{\O}[1]{\ensuremath{\mathcal{O}\left(#1\right)}}
\newcommand{\tO}[1]{\ensuremath{\tilde{\mathcal{O}}\left(#1\right)}}
\newcommand{\eps}{\varepsilon}
\def \calA    {\mdef{\mathcal{A}}}
\def \calD    {\mdef{\mathcal{D}}}
\def \calE    {\mdef{\mathcal{E}}}
\def \calF    {\mdef{\mathcal{F}}}
\def \calH    {\mdef{\mathcal{H}}}
\def \calI    {\mdef{\mathcal{I}}}
\def \calK    {\mdef{\mathcal{K}}}
\def \calL    {\mdef{\mathcal{L}}}
\def \calP    {\mdef{\mathcal{P}}}
\def \calQ    {\mdef{\mathcal{Q}}}
\def \calU    {\mdef{\mathcal{U}}}
\def \calW    {\mdef{\mathcal{W}}}
\def \calX    {\mdef{\mathcal{X}}}
\newcommand{\mdef}[1]{{\ensuremath{#1}}\xspace}  
\DeclareMathOperator*{\argmin}{argmin}
\DeclareMathOperator*{\argmax}{argmax}
\DeclareMathOperator*{\polylog}{polylog}
\DeclareMathOperator*{\poly}{poly}
\newcommand{\flr}[1]{\mdef{\left\lfloor#1\right\rfloor}}              
\newcommand{\ignore}[1]{}
\newif\ifnotes\notestrue 
\newcommand{\samson}[1]{\textcolor{blue}{{\bf (Samson:} {#1}{\bf ) }} \marginpar{\tiny\bf
             \begin{minipage}[t]{0.5in}
               \raggedright S:
            \end{minipage}}}
\newcommand{\david}[1]{\textcolor{purple}{{\bf (David:} {#1}{\bf ) }} \marginpar{\tiny\bf
             \begin{minipage}[t]{0.5in}
               \raggedright D:
            \end{minipage}}} 
\newcommand{\samson}[1]{}
\newcommand{\david}[1]{}
\renewcommand*{\@fnsymbol}[1]{\textcolor{mahogany}{\ensuremath{\ifcase#1\or *\or \dagger\or \ddagger\or
 \mathsection\or \triangledown\or \mathparagraph\or \|\or **\or \dagger\dagger
   \or \ddagger\ddagger \else\@ctrerr\fi}}}
\providecommand{\email}[1]{\href{mailto:#1}{\nolinkurl{#1}\xspace}}
\definecolor{mahogany}{rgb}{0.75, 0.25, 0.0}
\definecolor{darkblue}{rgb}{0.0, 0.0, 0.55}
\definecolor{darkpastelgreen}{rgb}{0.01, 0.75, 0.24}
\definecolor{darkgreen}{rgb}{0.0, 0.2, 0.13}
\definecolor{darkgoldenrod}{rgb}{0.72, 0.53, 0.04}
\definecolor{darkred}{rgb}{0.55, 0.0, 0.0}
\definecolor{forestgreenweb}{rgb}{0.13, 0.55, 0.13}
\definecolor{greencss}{rgb}{0.0, 0.5, 0.0}
\definecolor{bleudefrance}{rgb}{0.19, 0.55, 0.91}
  \DeclareFontShape{T1}{lmr}{m}{scit}{<->ssub*lmr/m/scsl}{}%
\newenvironment{tbox}{\begin{tcolorbox}[
		enlarge top by=5pt,
		enlarge bottom by=5pt,
		 breakable,
		 boxsep=0pt,
                  left=4pt,
                  right=4pt,
                  top=10pt,
                  arc=0pt,
                  boxrule=1pt,toprule=1pt,
                  colback=white
                  ]
	}
{\end{tcolorbox}}
\newcommand{\bern}[1]{\textnormal{\textsf{Bern}}(#1)\xspace}
\def\NoNumber#1{{\def\alglinenumber##1{}\\ #1}\addtocounter{ALG@line}{-1}}
\definecolor{DarkRed}{rgb}{0.5,0.1,0.1}
\definecolor{RUScarlet}{rgb}{0.95,0.1,0.1}
\definecolor{DarkBlue}{rgb}{0.1,0.1,0.5}
\newcommand{\bracket}[1]{\left[#1\right]}
\newcommand{\paren}[1]{\ensuremath{\left(#1\right)}\xspace}
\newcommand{\card}[1]{\left\vert{#1}\right\vert}
\newcommand{\pw}{\textsf{pw}}
\newcommand{\baseline}{\textsc{Baseline}}
\newcommand{\ALG}{\ensuremath{\textnormal{\texttt{ALG}}}\xspace}
\newcommand{\hedge}{\textsc{Hedge}\xspace}
\newcommand{\istar}{\ensuremath{i^{*}}\xspace}
\newcommand{\benchmark}{\ensuremath{\mathsf{BM}}}
\newcommand{\Otilde}{\ensuremath{\widetilde{O}}\xspace}
\newcommand{\Nmeta}{\ensuremath{N^{\textnormal{meta}}}}
\renewcommand{\leq}{\leqslant}
\renewcommand{\le}{\leq}
\renewcommand{\geq}{\geqslant}
\renewcommand{\ge}{\geq} 
\newcommand{\expect}[1]{\mathbb{E}\bracket{#1}}
\newcommand{\expectrand}[2]{\mathbb{E}_{#1}\bracket{#2}}
\definecolor{DarkRed}{rgb}{0.5,0.1,0.1}
\definecolor{RURed}{rgb}{0.95,0.1,0.1}
\definecolor{DarkBlue}{rgb}{0.1,0.1,0.5}
\newtheorem{mdresult}{Result}
\definecolor{Red}{rgb}{0.9,0,0}
\crefname{property}{property}{Property}
\crefname{equation}{eq}{Eq}
\crefname{thm}{theorem}{Theorem}
\crefname{algocf}{Algorithm}{Algorithm}
\theoremstyle{definition}
\newtheorem{mdalg}{Algorithm}
\begin{document}

\title{Online Learning with Limited Information\\ in the Sliding Window Model}

\author{
Vladimir Braverman \\ Johns Hopkins University \\ 
\email{vova@cs.jhu.edu}
\and
Sumegha Garg \\ Rutgers University \\ 
\email{sumegha.garg@rutgers.com}
\and
Chen Wang \\ Rensselaer Polytechnic Institute \\ 
\email{wangc33@rpi.edu}
\and 
David P. Woodruff \\ Carnegie Mellon University \\ 
\email{dwoodruf@andrew.cmu.edu}
\and
Samson Zhou \\ Texas A\&M University \\ 
\email{samsonzhou@gmail.com}
}
\date{} 

\maketitle
\thispagestyle{empty}

\begin{abstract}
Motivated by recent work on the experts problem in the streaming model, we consider the experts problem in the sliding window model. The sliding window model is a well-studied model that captures applications such as traffic monitoring, epidemic tracking, and automated trading, where recent information is more valuable than older data. Formally, we have $n$ experts, $T$ days, the ability to query the predictions of $q$ experts on each day, a limited amount of memory, and should achieve the (near-)optimal regret $\sqrt{nW}\text{polylog}(nT)$ regret over any window of the last $W$ days. While it is impossible to achieve such regret with $1$ query, we show that with $2$ queries we can achieve such regret and with only $\text{polylog}(nT)$ bits of memory. Not only are our algorithms optimal for sliding windows, but we also show for every interval $\mathcal{I}$ of days that we achieve $\sqrt{n|\mathcal{I}|}\text{polylog}(nT)$ regret with $2$ queries and only $\text{polylog}(nT)$ bits of memory, providing an exponential improvement on the memory of previous interval regret algorithms. Building upon these techniques, we address the bandit problem in data streams, where $q=1$, achieving $n T^{2/3}\text{polylog}(T)$ regret with $\text{polylog}(nT)$ memory, which is the first sublinear regret in the streaming model in the bandit setting with polylogarithmic memory; this can be further improved to the optimal $\mathcal{O}(\sqrt{nT})$ regret if the best expert's losses are in a random order.
\end{abstract}

\clearpage
\setcounter{page}{1}

\allowdisplaybreaks

\section{Introduction}
\label{sec:intro}
The online learning with experts problem is a fundamental framework for sequential prediction, where an algorithm utilizes expert forecasts to make decisions over time. 
Over a time horizon of $T$ days (or time steps), an algorithm must make predictions about unknown outcomes based on the advice of $n$ predefined experts, also referred to as ``arms'' in the multi-armed bandits literature.\footnote{We use the terms ``experts'' and ``arms'' interchangeably.} 
On each day, the algorithm observes the predictions of a set of experts and then produces its own prediction based on both past observations and the current expert forecasts. 
After making its prediction, the algorithm receives feedback in the form of a cost indicating the accuracy of its decision, as well as the costs for a subset of expert predictions queried that day.  
Generally, the costs are restricted to be in some range $[0,\rho]$ for a parameter $\rho>0$, which we normalize to $1$. 
The iterative process continues over time, enabling the algorithm to refine its strategy. 

Formally, on each day $t\in[T]$, the algorithm chooses to play an arm $i_t\in[n]$, thus incurring loss $\ell^t(i_t)$. 
Additionally, it can query the losses of an additional set $S_t\subseteq[n]$ of arms with no loss, though we say that $\max_{t\in[T]}|S_t|$ is the query complexity of the algorithm. 
The sequence of expert predictions and losses is fixed in advance, and we make no distributional assumptions; that is, the input is worst-case but oblivious (non-adaptive) to the algorithm’s outputs. 
The goal is to minimize the \emph{regret}, defined as the cumulative gap between the loss sequence we obtain and that of the best arm in hindsight, i.e.,
\begin{equation}
\label{equ:regret}
\sum_{t\in[T]}\ell^t(i_t) - \min_{i\in [n]} \sum_{t\in[T]}\ell^t(i).
\end{equation}
In the case where the algorithm can freely observe all experts at all times, i.e., $S_t=[n]$ for all $t\in[T]$, the randomized weighted majority algorithm can achieve regret $\O{\sqrt{T\log n}}$~\cite{LittlestoneW94}. 
There are a number of subsequent variants, such as Hedge~\cite{Cesa-BianchiFHHSW97,FreundS97} and follow the perturbed leader \cite{KalaiV05}, that also achieve regret $\O{\sqrt{T\log n}}$, which is known to be optimal~\cite{cover1966behavior}. 
However, these algorithms follow a strategy that involves maintaining the cumulative cost for each expert, which requires $\Omega(n)$ bits of space, as well as $\Omega(n)$ update time. 

In many practical settings, the number $n$ of experts and the time horizon $T$ can be large, so that storing and processing all expert predictions can be both computationally-intensive and memory-expensive. 
Hence, a line of recent work~\cite{SrinivasWXZ22,AamandCNS23,pr23,PengZ23,WoodruffZZ23a} has focused on the online learning with experts problem with bounded memory, in particular in the \emph{data stream model}, where the algorithm processes the input in a single pass, while using working memory sublinear in the size $n$ of the input. 
In particular, an algorithm cannot store past expert predictions in full, so it cannot use the previous approaches and instead must strategically decide which information to retain while ensuring its regret remains competitive. 
Nevertheless, \cite{pr23} introduced an algorithm that uses $\polylog(nT)$ space and achieves $\sqrt{nT}\cdot\polylog(nT)$ regret. 
Thus, in some sense, there do not seem to be expensive tradeoffs between regret and space complexity.   

\paragraph{The sliding window model and interval regret.}
Although the data stream model provides a framework for analyzing algorithms that process large-scale data with limited memory, it does not capture the reality that in many applications, recent information can often be more valuable than older data~\cite{BabcockBDMW02,BabcockDMO03,PapapetrouGD15}, e.g., traffic monitoring, where real-time congestion data informs routing decisions; epidemic tracking, where recent infection rates guide public health responses; and automated trading, where the latest market signals drive investment strategies. 
This motivates the \emph{sliding window model}, where the dataset at any time consists of only the most recent $W$ updates in the stream. 
Here, the parameter $W > 0$ defines the window size, and updates older than the $W$ most recent stream elements are considered expired. 
The goal is to aggregate statistics about the active data using space that is sublinear in $W$. 
The sliding window model can be seen as a generalization of the streaming model, as the parameter $W$ can be set to the entire stream length, and is particularly relevant for time-sensitive scenarios, e.g., data summarization~\cite{ChenNZ16,EpastoLVZ17}, event detection in social media~\cite{osborne2014real}, and network traffic monitoring~\cite{cormode2005s,cormode2007streaming,Cormode13}. 

From a practical perspective, the sliding window model captures real-world constraints on data retention. 
For example, as a result of regulatory policies such as the General Data Protection Regulation (GDPR) that impose strict limits on how long certain user data can be stored~\cite{GDPR2016}, Apple stores user information for $3$ months~\cite{apple-data}, ChatGPT stores user conversations for at most $30$ days~\cite{openai-data}, and Google preserves browsing data for up to $9$ months~\cite{google-data}. 
By appropriately setting the window size parameter $W$, the sliding window model effectively captures such constraints and has been widely studied across various domains~\cite{DatarGIM02,LeeT06a,LeeT06b,BravermanO07,CrouchMS13,BravermanLLM15,BravermanLLM16,BravermanWZ21,BravermanGLWZ18,WoodruffZ21,BorassiELVZ20,EpastoMMZ22a,JayaramWZ22,BlockiLMZ23,WoodruffY23,WoodruffZZ23,Cohen-AddadJYZZ25}. 
We remark that due to the implicit expiration of data outside the active window, the sliding window model typically demands algorithmic techniques that differ from those used in the standard streaming model.  
For example, linear sketching methods that aggregate updates across the entire stream are generally incompatible with the ability to discard implicitly outdated information.

In the context of the online learning with experts problem, the regret for the sliding window model is measured at each time $t\in[T]$ by comparing the performance of the algorithm with the performance of the best arm over the last $W$ times. 
Formally, the regret at time $t$ is defined as $\sum_{s = t - W + 1}^{t} \ell^s(i_s) - \min_{i \in [n]} \sum_{s = t - W + 1}^{t} \ell^s(i)$ and the overall regret of the algorithm in the sliding-window model is then defined as the maximum regret over all time steps. 
Incidentally, this notion aligns with the stronger notion of \emph{interval regret}, which is a natural refinement of regret analysis that evaluates performance over any contiguous subinterval of time rather than the entire horizon. 
Formally, for any interval $\calI=[t_1, t_2]\subseteq[T]$, the interval regret is defined as  
\begin{equation}
\label{equ:interval-regret}
\sum_{t=t_1}^{t_2} \ell^t(i_t) - \min_{i \in [n]} \sum_{t=t_1}^{t_2} \ell^t(i).
\end{equation}  
This metric provides a more fine-grained understanding of the adaptability of an algorithm, capturing how well it performs not just in the long run but also over shorter, potentially dynamic time periods. 
Observe that the sliding-window regret is a special case of the interval regret corresponding to the case when the maximum is taken over intervals $\calI$ with length $\card{\calI}=W$. 
Surprisingly, while the sliding window model has been extensively studied, there is no previous work studying the online learning with experts problem with memory constraints, to the best of our knowledge. 

\subsection{Our Contributions}
\label{subsec:contribution}
In this work, we initiate the study of the online learning with experts problem in the sliding window model. 
We first recall that in the single-query bandit setting, any algorithm necessarily incurs interval regret $\Omega(\card{\calI}^{1-\eps})$ for all fixed constants $\eps>0$ \cite{DanielyGS15}. 
Since the interval regret corresponds exactly to the worst-case regret of the algorithm across the sliding window model, any single-query algorithm for online learning with experts in the sliding window model must incur regret $\Omega(W^{1-\eps})$ for all fixed constants $\eps>0$ \cite{DanielyGS15}. 

We show that with a single additional query, it is possible to achieve not only optimal interval regret, but also only using polylogarithmic memory. 
Specifically, one of the queries is required to follow an expert and incur some loss by the algorithm, while the other query is allowed to freely observe an arbitrary expert, which may vary between different times, without incurring loss. 
\begin{restatable}{theorem}{rootIinterval}
\label{thm:two-query-interval} 
There exists an online learning algorithm that given any instance of $n$ experts and $T$ days such that $T\geq n$ and two queries per time, achieves $\sqrt{n \card{\calI}}\cdot \polylog(T)$ interval regret for any interval $\calI$ using $\polylog(T)$ words of memory with high probability, i.e., $1-\frac{1}{\poly(nT)}$. 
\end{restatable}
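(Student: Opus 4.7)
The plan is to lift the base $2$-query, $\polylog(nT)$-memory streaming algorithm (which achieves $\sqrt{nT}\polylog(nT)$ overall regret) to the interval-regret setting via a dyadic ensemble combined with a strongly-adaptive meta-learner, while carefully sharing the two per-step queries across the simultaneously running base instances.

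\textbf{Step 1 (dyadic ensemble).} For each scale $k\in\{0,1,\ldots,\lceil\log T\rceil\}$ I would partition $[T]$ into consecutive dyadic intervals of length $2^k$. At the start of each dyadic interval $J$ I spawn a fresh copy $\calA_J$ of the base streaming algorithm that runs only on $J$, so that $\calA_J$ achieves a per-interval regret bound of $\sqrt{n\card{J}}\polylog(nT)$ on $J$. At each time $t$, exactly one dyadic interval per scale contains $t$, so only $O(\log T)$ instances $\calA_J$ are alive simultaneously and the total memory is $\polylog(nT)$.

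\textbf{Step 2 (meta-learner and interval decomposition).} Over the alive $\calA_J$ I would run a sleeping-experts / strongly-adaptive Hedge-style meta-algorithm that selects one alive instance to follow at each time, with $\polylog(T)$ regret against the best alive instance on any interval. For an arbitrary query interval $\calI\subseteq[T]$, a standard dyadic decomposition yields $\calI=\calI_1\sqcup\cdots\sqcup\calI_r$ with $r=O(\log T)$, where each $\calI_j$ is itself a dyadic interval corresponding to an alive $\calA_{\calI_j}$. Since the loss of the best arm on $\calI$ is at least the sum over $j$ of the best-arm loss on $\calI_j$, the total regret on $\calI$ is bounded by $\sum_j\bigl(\text{meta regret on }\calI_j + \sqrt{n\card{\calI_j}}\polylog(nT)\bigr)$; Cauchy--Schwarz over $r=O(\log T)$ terms bounds this sum by $\sqrt{n\card{\calI}}\polylog(nT)$, as desired.

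\textbf{Step 3 (the main obstacle: query budget).} The difficulty I expect to dominate the proof is that the $O(\log T)$ alive instances would each nominally demand $2$ queries per step, exceeding the budget of $2$ total. I would address this by sharing the two per-step queries across the ensemble: the ``play'' query is dictated by whichever instance the meta-learner picks, so every alive instance observes its loss at no extra cost, and the ``free'' query is scheduled across the $O(\log T)$ scales by a randomized round-robin that awards scale $k$ a $1/\log T$ fraction of the time steps of any dyadic interval $J$ of length $2^k$, delivering $\Omega(2^k/\log T)$ targeted observations to $\calA_J$. The main technical work is then to re-examine the analysis of the base algorithm and verify that it is robust to this logarithmic slowdown in query rate, losing only a $\polylog(nT)$ factor in its per-instance regret. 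A union bound over the $\poly(T)$ dyadic intervals together with Chernoff bounds on the scheduler's random choices then yields the claimed $1-1/\poly(nT)$ success probability.
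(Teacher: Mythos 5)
Your proposal takes essentially the same route as the paper: a dyadic ensemble of base two-query instances at scales $2^0,\ldots,2^{\lceil\log T\rceil}$, a strongly-adaptive (Daniely--Gonen--Shalev-Shwartz / Lu et al.\ style) outer meta-learner over the $O(\log T)$ alive instances, the second (lossless) query shared across scales by a uniformly random round-robin, and a re-examination of the base analysis to absorb the $1/\log T$ slowdown in estimation rate into the $\polylog$ factors. The paper's Algorithm~6 with \Cref{lem:two-query-alg-invertal}--\Cref{lem:interval-regret-outer-algorithm} does exactly this, and the dyadic-decomposition-plus-Cauchy--Schwarz step in your Step~2 mirrors how the paper stitches the per-$\ALG_\kappa$ bounds into an arbitrary-interval bound.

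One place where you should be more careful is the sentence in Step~3 claiming that, because the play query's outcome is observable to all alive instances, ``every alive instance observes its loss at no extra cost.'' The paper deliberately does \emph{not} feed the played arm's loss back into the inner instances (or even into the outer weight updates: Algorithm~6 updates $w_{t+1}(\kappa)$ using the arm $j_t$ returned by the \emph{exploration} query, not the played arm $i_t$). This is the whole ``separate exploration from exploitation'' device that the technical overview flags as the central difficulty of the partial-information setting: the played arm is sampled from a distribution that depends on the joint state of all alive instances, so importance-weighting its observed loss into the inner learners creates a dependency loop that breaks the regret proof. Your round-robin free query is exactly the right learning channel; just make sure the play query's observation is used only for incurring regret and not for updating weights, or you reintroduce the problem the paper is designed to avoid. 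Similarly, the $\polylog T$ meta-regret you assert against the best alive instance is stronger than what the paper proves (it obtains $\sqrt{n\card{\calI}}\polylog(nT)$, which suffices), and in the bandit-feedback meta-layer the $\polylog T$ version is not obviously available, so you should target the weaker bound there.
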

We observe that since interval regret is a strictly stronger notion than expected regret over the entire $T$ days, \Cref{thm:two-query-interval} also immediately implies a two-query algorithm that achieves $\sqrt{nT}\cdot\polylog(T)$ regret for the online learning with experts problem over a time horizon of length $T$, using polylogarithmic space. 
In fact, since $\Omega(\sqrt{nT})$ regret is known to be necessary for the online learning experts problem with any constant number of queries per time~\cite{AuerCFS02}, our algorithm achieves the optimal regret, up to polylogarithmic factors. 

Importantly, while \cite{Lu0CZWH24} similarly achieved $\sqrt{nT}\cdot\polylog(T)$ interval regret using two queries, \Cref{thm:two-query-interval} uses polylogarithmic space, while the result of \cite{Lu0CZWH24} uses linear space to track the losses of all $n$ experts. 
This difference is crucial for the sliding window model, where sublinear space complexity is required. 
Thus, \Cref{thm:two-query-interval} immediately achieves the first two-query algorithm for the sliding window model.
\begin{corollary}
\label{cor:sliding:window}
There exists an algorithm for the online learning with experts problem in the sliding window model that uses two queries per time. 
For any instance of $n$ experts over a sliding window of size $W\ge n$ on a time horizon of length $T$, the algorithm achieves $\sqrt{nW}\cdot\polylog(T)$ regret using $\polylog(T)$ bits of space with high probability, i.e., $1-\frac{1}{\poly(nT)}$. 
\end{corollary}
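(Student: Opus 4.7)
The plan is to observe that the sliding-window regret at any time $t \in [T]$ is exactly the interval regret over the contiguous interval $\calI_t = [\max(1, t - W + 1), t]$, which has length at most $W$. Thus, the corollary follows by applying \Cref{thm:two-query-interval} to each such interval, since the theorem handles arbitrary intervals and provides the bound $\sqrt{n\card{\calI_t}} \cdot \polylog(T) \le \sqrt{nW} \cdot \polylog(T)$ with high probability for any single $\calI_t$.

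To upgrade the per-interval high-probability guarantee to a uniform guarantee across all $T$ time steps, I would take a union bound over the at most $T$ sliding windows (one per time step). Since \Cref{thm:two-query-interval} fails with probability at most $\frac{1}{\poly(nT)}$ for any fixed interval, and we can make this polynomial arbitrarily large by adjusting internal parameters, a union bound yields failure probability at most $\frac{T}{\poly(nT)} = \frac{1}{\poly(nT)}$ over all time steps simultaneously. The hypothesis $W \ge n$ ensures that $T \ge n$ (since $T \ge W \ge n$ for the sliding-window model to be meaningful), so the precondition of \Cref{thm:two-query-interval} is satisfied.

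Finally, both the query complexity (two queries per time step) and the memory complexity ($\polylog(T)$ bits) are inherited directly from the algorithm of \Cref{thm:two-query-interval} with no modification, since we are invoking the same algorithm and merely reinterpreting its interval-regret guarantee as a sliding-window guarantee. There is no substantive obstacle to overcome: the sliding-window model is by definition the special case of interval regret where the supremum is taken over contiguous intervals of length exactly $W$, so \Cref{thm:two-query-interval} already subsumes \Cref{cor:sliding:window}, and the proof reduces to a union bound.
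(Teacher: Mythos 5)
Your proposal is correct and takes essentially the same approach as the paper, which treats \Cref{cor:sliding:window} as following directly from \Cref{thm:two-query-interval} because sliding-window regret is by definition the restriction of interval regret to the contiguous intervals $[t-W+1,t]$. The paper gives no separate proof; your write-up just makes explicit the trivial specialization and the union-bound step that underlies the uniform-over-$t$ guarantee (which the paper's Theorem~\ref{thm:two-query-interval} implicitly provides, since the internal lemmas already union-bound over all $O(T^2)$ intervals).
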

We again emphasize that \Cref{cor:sliding:window} is optimal for query complexity and near-optimal for both regret and space complexity, since (1) \cite{AuerCFS02} shows that $\Omega(\sqrt{nW})$ regret is necessary in any setting with a constant number of queries, (2) the result of \cite{DanielyGS15} shows that even with unlimited memory, $\Omega(W^{1-\eps})$ regret is necessary with a single query per time, and (3) logarithmic memory is necessary simply to store the identity of any ``good'' expert. 
Thus, in conjunction with the results of \cite{AuerCFS02,DanielyGS15}, \Cref{cor:sliding:window} resolves the online learning with experts problem in the sliding window model. 

We remark that the techniques of \Cref{thm:two-query-interval} can be adapted to handle a number of other settings beyond interval regret and the sliding window model. 
For instance, it is natural to ask what regret is achievable by memory-bounded algorithms that only have single-query bandit feedback, without the need to guarantee the stronger notion of interval regret. 
To that end, we provide the following result.
\begin{restatable}{theorem}{Ttwooverthreeonequery}
\label{thm:one-query} 
There exists an online learning algorithm that given any instance of $n$ experts and $T$ days such that $T\geq n$ and the query access of a single expert, i.e., the bandit setting, achieves $n T^{2/3}\cdot \polylog(T)$ regret using $\polylog(nT)$ words of memory with probability at least $1-1/\poly(nT)$. 
\end{restatable}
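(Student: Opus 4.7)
My plan is to adapt the two-query algorithm $\calA$ of \Cref{thm:two-query-interval} to the bandit setting by paying for the ``free observation'' query via explicit exploration. In the bandit setting each step only reveals the loss of the played expert, so simulating one virtual two-query step requires playing both the intended follow target $f_t$ and the intended observation target $o_t$, which costs two actual bandit steps. The overall strategy is an explore-then-commit: use a prefix of the time horizon to run the two-query simulation and identify a best-guess expert, then commit to that expert for the rest of the horizon.

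Concretely, I would allocate $T_1 = \Theta(n^{1/3} T^{2/3})$ bandit steps to an exploration phase. Within these, simulate $\calA$'s two-query interaction: on odd steps play the current follow target $f_t$ and feed $\ell^t(f_t)$ to $\calA$ as the follow response; on even steps play the current observation target $o_t$ and feed $\ell^t(o_t)$ to $\calA$ as the free observation. After $T_1$ steps I extract from $\calA$'s internal state the expert $c^{*}$ currently identified as best (for example, the expert with the lowest running estimate) and commit to $c^{*}$ for the remaining $T - T_1$ steps.

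The regret then decomposes into two pieces. The exploration phase contributes at most $T_1$ trivially. For the commit phase, the $\tO{\sqrt{n T_1}}$ regret guarantee of \Cref{thm:two-query-interval} applied to the $T_1/2$ simulated two-query steps translates into a per-step suboptimality gap of order $\tO{\sqrt{n/T_1}}$ for $c^{*}$, so the commit phase contributes at most $(T - T_1) \cdot \tO{\sqrt{n/T_1}}$. Choosing $T_1 = \Theta(n^{1/3} T^{2/3})$ balances the two terms at $\tO{n^{1/3} T^{2/3}}$, comfortably within the claimed $n T^{2/3} \polylog(T)$ bound. The memory cost remains $\polylog(nT)$ because we only store $\calA$'s polylog state together with the $O(\log n)$-bit identifier of $c^{*}$ and one running estimate.

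The main obstacle is transferring $\calA$'s exploration-phase regret into a per-step suboptimality bound for $c^{*}$ on the commit phase under an oblivious adversary. The plan is to exploit the fact that $\calA$'s randomized observation queries, by the design used to prove \Cref{thm:two-query-interval}, effectively spread samples approximately uniformly at random over the exploration window, so that $c^{*}$ concentrates around the global minimizer of the per-expert mean loss and hence remains near-optimal during the commit phase. A Freedman-style martingale concentration argument, in the same spirit as the one inside \Cref{thm:two-query-interval}, then produces the $1 - 1/\poly(nT)$ high-probability guarantee. A secondary subtlety is that alternating the feed to $\calA$ on even versus odd steps must not break its own high-probability analysis, but this is fine because the adversary is oblivious and fixes the losses before observing any coin flips.
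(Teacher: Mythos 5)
Your explore-then-commit plan has a fundamental flaw: it implicitly assumes stationarity of the loss sequence, which an oblivious adversary does not grant. The regret guarantee of the two-query algorithm over the exploration prefix $[1,T_1]$ bounds the algorithm's cumulative loss on those days against the best expert \emph{on those days}; it says nothing about how any expert performs on days $[T_1+1,T]$. Concretely, the adversary can fix a loss sequence in advance where expert $1$ has loss $0$ on days $[1,T_1]$ and loss $1$ on days $[T_1+1,T]$, while expert $2$ has constant loss $1/2$ throughout. Your procedure would commit to $c^* = 1$ and incur $\Theta(T)$ regret against expert $2$. Your attempt to patch this via ``$c^*$ concentrates around the global minimizer of the per-expert mean loss'' is circular: there is no reason the mean loss over $[1,T_1]$ tracks the mean loss over $[T_1+1,T]$ in the adversarial model, and the Freedman-style martingale concentration you invoke controls deviations from conditional expectations, not the adversary's ability to change the loss profile after the exploration window. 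This is exactly why explore-then-commit is a stochastic-bandit technique and not an adversarial one.

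There are secondary problems as well. Extracting a single ``best'' expert $c^*$ from $\calA$'s internal state is not well-defined in $\polylog(nT)$ memory: the pool maintained by $\calA$ is a small random subset of experts and need not contain a near-optimal one at the moment you freeze it, and $\calA$'s distribution over the pool is not a reliable pointer to a near-optimal arm after only $\Theta(n^{1/3}T^{2/3})$ steps of a low-memory algorithm. The paper avoids all of this by never committing: \algref{alg:k:for:boost} interleaves exploration and exploitation throughout the entire horizon with rates $\gamma_{arm} = \gamma_{meta} = T^{-1/3}$, maintains a pool with eviction under the relaxed cover of \Cref{def:relaxed-cover}, and applies recursive boosting (\Cref{lem:gen:regret:add}, \Cref{lem:boosting-one-query-main}) so the exploration cost $\gamma T = T^{2/3}$ and the estimation error $\sqrt{D/\gamma}$ are both controlled on every subinterval. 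That interleaved structure is precisely what lets the analysis work under a non-stationary adversary, and it is the missing ingredient in your proposal.
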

To the best of our knowledge, the only prior algorithm for online learning with sublinear memory in the single-query bandit setting achieves $\O{L\cdot n^{1/2L} \cdot T^{1-\frac{1}{2L}}}$ regret with $\Theta(n^{1/L})$ space for integers $L\ge 1$~\cite{XuZ21}. 
Notably, their results achieve $\O{n^{1/4}T^{3/4}}$ regret with $\Theta(\sqrt{n})$ space. 
By comparison, \Cref{thm:one-query} achieves strictly stronger regret while using exponentially less space. 

Thus, given the previous discussion of interval regret, \Cref{thm:one-query} furthers the strong separation between the achievable guarantees for the single-query bandit setting for the streaming model and the sliding window model. 
Indeed, for the sliding window model, $\Omega(W^{1-\eps})$ regret is necessary for any $\eps>0$ \cite{DanielyGS15}, while \Cref{thm:one-query} achieves roughly $T^{2/3}$ regret with polylogarithmic memory, providing a strong separation for any $W = T^{2/3 + \Omega(1)}$.  

Finally, one might ask whether our bounds can be further improved beyond the worst-case, e.g., through distributional assumptions on the performance of the arms. 
For example, there is a large body of work studying the random-order model in the streaming literature~\cite{guha2009stream}. 
In the context of the online learning with experts problem, \cite{SrinivasWXZ22} observed that the random-order model corresponds to the view that any permutation over the loss vectors is equally likely in the input distribution. 
In particular, \cite{SrinivasWXZ22} observes that an \emph{exchangeability} property in terms of the losses on each day allows the random order model to subsume the i.i.d. model where each loss on each arm follows a fixed underlying distribution for all times  (stochastic experts). 
Therefore, any algorithmic upper bounds established in the random-order model also translate to the i.i.d. model. 

We show that even under a weaker distributional assumption, the optimal regret can be achieved simultaneously using both single-query bandit feedback and polylogarithmic space. 
\begin{restatable}{theorem}{onequerybestrandom}
\label{thm:one-query-best-random} 
There exists an online learning algorithm that given any instance of $n$ experts and $T$ days such that $T\geq n$ and the query access of a single expert, i.e., the bandit setting, where the loss sequence of the best expert is in random order, achieves $\sqrt{n T}\cdot \polylog(nT))$ regret using $\polylog(nT)$ words of memory with probability at least $1-1/\poly(nT)$. 
\end{restatable}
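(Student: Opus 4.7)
The plan is to adapt the single-query bandit algorithm underlying \Cref{thm:one-query} to exploit the random-order assumption on the best expert's loss sequence. The crucial consequence of this assumption is that for any set of $k$ time steps chosen uniformly at random (without replacement) from $[T]$, the average loss of the best expert $i^*$ on that set deviates from its global mean $\mu^* = \tfrac{1}{T}\sum_{t\in[T]}\ell^t(i^*)$ by at most $O(\sqrt{\log(nT)/k})$ with probability $1-1/\poly(nT)$, by a Hoeffding-style bound for sampling without replacement. This is what allows us to replace the $T^{2/3}$ exploration scale of \Cref{thm:one-query} by the $\sqrt{nT}$ scale that is tight in the standard bandit setting.

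The algorithm uses an explore--exploit style with a single ``champion'' arm together with a polylogarithmic buffer of candidate arms. At each time step, with probability $p = \sqrt{n/T}\cdot \polylog(nT)$ the algorithm explores by drawing a uniformly random arm $i\in[n]$ and using its single query on $i$; otherwise it plays the current champion and uses the query on the champion to update a running empirical loss estimate. A newly sampled arm becomes a candidate only when its initial observation is low enough to be plausibly near $\mu^*$, and any candidate whose empirical mean drops below the champion's by more than a confidence radius derived from the concentration bound above replaces the champion. Only the index of the champion, a constant-size candidate list, and their empirical counters and means are stored at any moment, so the total memory is $\polylog(nT)$ words.

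The regret analysis combines three ingredients. First, the exploration cost contributes at most $pT = \sqrt{nT}\cdot \polylog(nT)$ regret, since each exploration step pays at most unit excess loss relative to the champion. Second, the uniform exploration samples the best arm $i^*$ roughly $pT/n = \sqrt{T/n}\cdot \polylog(nT)$ times in expectation, and by the random-order concentration above, this is enough to pin down the empirical mean of $i^*$ to within $O(1/\polylog(nT))$ of $\mu^*$ with high probability, so $i^*$ cannot be pruned from the candidate list once sampled. Third, the champion test threshold is set so that whenever a suboptimal arm serves as champion, its concurrent empirical mean must lie within the confidence radius of $i^*$'s concentrated empirical mean, bounding the per-step exploitation excess by the same $O(1/\polylog(nT))$ scaled appropriately with $\sqrt{n/T}$; summing over $T$ time steps yields a second $\sqrt{nT}\cdot \polylog(nT)$ term.

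The main obstacle is bounding the regret incurred while a suboptimal arm is champion: since only $i^*$ is assumed random-ordered, an adversarial arm can produce artificially low losses on a short window and temporarily displace the true champion. The key technical argument is to show that any such displacer is detected and replaced quickly, by arguing that within $\polylog(nT)$ further exploration rounds the best arm is re-sampled enough times that its concentrated empirical mean falls below the displacer's, triggering a swap. Since the champion only changes when the current empirical leader drops by at least one confidence radius, there are at most $\polylog(nT)$ champion changes in total, and the aggregate detection latency contributes a final $\sqrt{nT}\cdot \polylog(nT)$ term, matching the claimed bound.
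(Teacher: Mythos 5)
Your approach is genuinely different from the paper's, but it contains a gap that is fatal to the claimed $\sqrt{nT}$ bound. Two related points.

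\paragraph{The concentration you use does not actually exploit the random-order assumption.} You take as the ``crucial consequence'' of random order that a uniformly random (without replacement) sample of $k$ days concentrates $i^*$'s empirical mean to within $O(\sqrt{\log(nT)/k})$ of $\mu^*$. But that is just Hoeffding for sampling without replacement, and it holds for \emph{every} fixed loss sequence once you condition on picking days uniformly at random -- the random ordering of $i^*$'s losses is irrelevant. The paper's Lemma~\ref{lem:random:best:good} uses the assumption in an essentially different way: it gives concentration of $i^*$'s loss on an arbitrary \emph{consecutive} (and in fact data-dependent) interval $D_i$, which is exactly what a random permutation buys you and what fails for a worst-case sequence. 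Since your algorithm never invokes concentration on a fixed interval, its analysis would go through equally well without the random-order assumption -- and that cannot yield $\sqrt{nT}$ regret with one query and polylog memory.

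\paragraph{The exploration budget is quantitatively insufficient.} With exploration rate $p=\sqrt{n/T}\cdot\polylog(nT)$, arm $i^*$ is sampled only $k\approx pT/n=\sqrt{T/n}\cdot\polylog(nT)$ times, so your own concentration bound gives precision $\sqrt{\log(nT)/k}\approx (n/T)^{1/4}/\sqrt{\polylog(nT)}$ on $\mu^*$. Your champion-swap threshold must be set at least this wide, or else random noise in $\widehat{\mu}_{i^*}$ triggers spurious swaps. But then whenever a suboptimal champion $j$ survives the test we can only conclude $\mu_j-\mu^*\lesssim(n/T)^{1/4}$, and summing over $T$ steps gives regret $\approx T^{3/4}n^{1/4}$, not $\sqrt{nT}$. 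To drive the per-step excess down to $\sqrt{n/T}$ you would need $k\approx T/n$ samples of $i^*$, i.e.\ constant exploration rate, at which point the exploration cost alone is $\Theta(T)$. Balancing the two terms over $p$ gives at best $\approx n^{1/3}T^{2/3}$, essentially recovering the unrestricted-bandit bound from \Cref{thm:one-query}. This is precisely the barrier the paper's algorithm (\algref{alg:best-arm-random}) avoids: it \emph{plays} each candidate arm for a consecutive window (so it always has fresh, full-rate statistics for the arm under test), binary-searches over the target rate $C$, and uses random-order concentration of $i^*$ on the test window to argue both that $C$ never overshoots $\gamma+1$ and that the ``reverse regret'' accumulated whenever a suboptimal arm survives a test can be amortized against its window length. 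That amortization, not uniform exploration, is the source of the $\sqrt{nT}$ bound.
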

We emphasize that \Cref{thm:one-query-best-random} only requires that the best expert is in random order, which is a weaker requirement than the full random-order model. 
By comparison, \cite{SrinivasWXZ22} initially provided an algorithm that achieves expected regret $R$ using $\tO{\frac{nT}{R}}$ space for the online learning with experts problem in the random-order model, though these results were subsequently improved~\cite{PengZ23,pr23}, ultimately to an algorithm that achieves $\tO{\sqrt{nT}}$ regret using just polylogarithmic space in the arbitrary-order model by \cite{pr23}. 
We remark that this line of work uses polylogarithmic queries at all times whereas \Cref{thm:one-query-best-random} uses just single-query bandit feedback. 

\section{Technical Overview}
\label{sec:tech-overview}
In this section, we provide a technical overview of our algorithms and analysis. 
We first discuss other natural approaches, why they fail, and the implications. 

\subsection{Shortcomings of Natural Approaches}
Although there is a wide range of techniques for the streaming model, these generally do not translate to the sliding window model. 
For example, as previously discussed, linear sketches do not have an immediate way to handle data that is implicitly expired by additional updates. 
Thus we focus our discussion on the most relevant sliding window techniques. 

\paragraph{Histogram frameworks do not work.}
The most common approach for sliding window algorithms are histogram-based approaches, such as the exponential histogram~\cite{DatarGIM02}, the smooth histogram~\cite{BravermanO07}, and their adaptations~\cite{BravermanGLWZ18,BravermanLLM15,BravermanLLM16,ChenNZ16,EpastoLVZ17,BorassiELVZ20,BravermanWZ21,EpastoMMZ22a,BlockiLMZ23}. 
These frameworks convert an insertion-only streaming algorithm $\calA$ for a problem into a sliding window algorithm by running multiple instances of $\calA$ in parallel, starting at different times throughout the stream, called checkpoints. 
New checkpoints are created with each stream update, while existing checkpoints are removed when the values output by their corresponding algorithms $\calA$ are too ``close'' to each other, e.g., multiplicatively within a factor of $2$.  
As a result, if a function is well-behaved and bounded, it can be shown that at any point in time during the course of the data stream, there are only a logarithmic number of checkpoints. 
Importantly, there exist two checkpoints $t_1$ and $t_2$ that ``sandwich'' the beginning of the sliding window, so that intuitively, the value of the function on the data stream starting at $t_1$ and the value of the function on the data stream starting at $t_2$ sandwich the value of the function on the sliding window. 
Hence, it suffices to output the value of the algorithm $\calA$ that starts at time $t_2$. 

Unfortunately, there are multiple issues with adapting this approach for the purposes of the online learning with experts problem. 
First, these histogram approaches necessarily blow up the query complexity.  
By maintaining $\O{\log n}$ instances of a single-query bandit algorithm in parallel, the query complexity instead becomes $\O{\log n}$, which is prohibitively large for our goal. 
More problematically, these histogram approaches generally use $\O{\log n}$ instances of a streaming algorithm to achieve a constant-factor approximation to the overall loss. 
However, a constant-factor approximation to the loss can translate to a \emph{linear} regret on a window of size $W$, rather than the target goal of $\sqrt{W}$ regret. 
This can be fixed by maintaining $\sqrt{W}$ instances of a streaming algorithm and creating checkpoints whenever the loss changes by an additive $\sqrt{W}$ amount rather than a multiplicative amount, but then the resulting space becomes polynomial in $W$ rather than polylogarithmic in $W$. 

\paragraph{Online coresets and importance sampling do not work.}
A similar idea converts a notion of online algorithms to sliding window algorithms. 
Central to this framework is the concept of an online coreset, which is a representative subset of the data stream that preserves the relevant properties with respect to the order of the arriving elements. 
\cite{BravermanDMMUWZ20} showed that an online coreset for a problem can be used to achieve a sliding window algorithm for that problem, an idea that has been subsequently used to achieve sliding window algorithms for numerical linear algebra~\cite{BravermanDMMUWZ20,WoodruffY23} and clustering~\cite{WoodruffZZ23,Cohen-AddadJYZZ25}. 
Another common and related approach is based on sampling elements of the data stream based on how ``important'' the element appears to be. 
In particular, the notion of importance is sensitive to both uniqueness, i.e., how different an element appears from the remaining dataset, and recency, i.e., whether a stream update has appeared more recently in the data stream. 

However, both of these approaches require a construction of a coreset for the underlying problem. 
Unlike numerical linear algebra and clustering, no such coreset construction is known for online learning with experts, and it is far from clear why a small-space coreset preserving the necessary regret guarantees should exist. 
This fundamental limitation arises from the adversarial nature of expert learning, where the diversity and adaptability of expert predictions make it difficult to maintain a representative subset that accurately captures the overall loss.

\paragraph{Adaptations of interval regret algorithms.}
Another natural approach would be to adapt existing offline interval regret algorithms to use sublinear space. 
For example, \cite{DanielyGS15} gave an algorithm with $\Otilde(\sqrt{nT})$ regret using $\O{\log T}$ queries per day while \cite{Lu0CZWH24} gave an algorithm with a similar $\Otilde(\sqrt{nT})$ regret, but using only \emph{two} queries each day. 
However, both of these algorithms use $\Omega(n)$ memory to track the losses of experts across multiple time intervals, which is crucially used in the analysis to determine the probabilities of choosing an expert at each time. 
Without linear memory, it may be possible to grossly miscalculate the loss of an expert, which could result in an undesirable probability distribution and therefore, high regret. 
It is not clear at all how to adapt these approaches to sublinear memory and indeed even in other ``easier'' cases such as the online learning with experts problem with full information, i.e., $n$ queries per day, achieving near-optimal regret using sublinear memory is a significant challenge~\cite{XuZ21,SrinivasWXZ22,PengZ23,pr23,WoodruffZZ23a}. 

\paragraph{Adaptation of memory-bounded algorithms.}
One could instead start from existing streaming algorithms and adapt them to the sliding window model. 
For example, the algorithm of \cite{XuZ21} achieves $\O{L n^{1/L}}$ memory and $\O{L\cdot  T^{1-1/(2L)} n^{1/(2L)}}$ regret, for any positive integer trade-off parameter $L \geq 1$. 
Although the algorithm works in the single-query bandit setting, the algorithm partitions the $n$ experts into blocks of experts, e.g., $\sqrt{n}$ blocks of $\sqrt{n}$ experts for $L=2$. 
Each block of $\sqrt{n}$ experts forms a meta-expert and there is a hierarchical structure that plays a standard single-query bandit algorithm on the meta-experts. 
This inherently requires tracking (estimated) losses for all of the meta-experts, resulting in $\O{\sqrt{n}}$ memory for $L=2$. 
Similar limitations occur for other settings of $L$. 
A natural idea is to recurse on the hierarchical structure to have more levels, but then it is not immediate how to produce the outputs of the meta-experts in intermediate levels. 

Another line of recent work studies online learning with experts with bounded memory but full information~\cite{SrinivasWXZ22,PengZ23,pr23,WoodruffZZ23a}. 
These works maintain a hierarchical structure on meta-experts and crucially utilize the full information setting to simulate outputs of the meta-experts. 
Since the hierarchical structure has $\O{\log n}$ levels, it is not clear how to simulate the meta-experts using $\O{1}$ queries per time. 
Additionally, the hierarchical structure necessarily achieves poor interval regret, because the highest levels of the structure requires maintaining experts that are performing well over the entire time horizon and the algorithm is compelled to play those experts even when they perform poorly over a smaller interval. 

\subsection{The Sliding Window Algorithm}
Our algorithm follows the natural approach of sampling a \emph{pool} of experts tracked by the algorithm and pruning poorly performing experts within the pool~\cite{SrinivasWXZ22,PengZ23,pr23,WoodruffZZ23a}. 
Consider an algorithm that breaks up the time horizon $T$ into \emph{epochs} of some length $B$. 
Inside each epoch, the algorithm proceeds by running the multiplicative weights update (MWU) over the experts in the pool. 
At the end of an epoch, the algorithm samples some additional experts to the pool with a rate of $\frac{1}{n}$.
To ensure the overall pool size is small, the algorithm will prune the pool by removing experts that are \emph{no better than} a benchmark loss. 
Here, the benchmark loss $\calL^{\calD}(\benchmark)$ over a duration $\calD$ for expert $i$ is defined as the epoch-wise optimal loss among the arms in the pool except $i$. 
If any arm in the pool with loss $\calL^{\calD}(i)$ over duration $\calD$ performs not strictly better than the benchmark, i.e.,
\begin{equation}
\label{equ:cover-original-overview}
    \calL^{\calD}(i) \geq \calL^{\calD}(\benchmark) - 1,
\end{equation}
then arm $i$ will be evicted by the algorithm. 
In this way, the arms that stay in the pool must have exponentially smaller losses, so that the size of the pool is bounded by $\O{\log T}$ since the loss has to be in $[0, T]$. 
Furthermore, tracking the loss of $\O{\log T}$ arms over any duration can be done in a total of $\polylog(nT)$ space.

Since the algorithm runs MWU over the experts in the pool, the main concern for the regret analysis is how the algorithm could behave when the best expert $\istar$ is \emph{not} in the pool. 
To that end, it is natural to categorize each epoch as follows. 
If $\istar$ is sampled in an epoch and it would be subsequently evicted due to good performances of other arms in the pool, then the epoch is called a \emph{good epoch}. 
Otherwise, the epoch is called a \emph{bad epoch}. 
Intuitively, our algorithm must have $\sqrt{B}$ regret on a good epoch of length $B$ due to the guarantees of MWU. 
On the other hand, there is no control on the regret for the bad epochs, but it can be shown that there are at most $n\cdot\polylog(n)$ bad epochs because afterwards, the best arm $\istar$ will be in the pool. 
Hence, we can informally upper bound the regret by $((T/B)\cdot \sqrt{B} + nB)\cdot \polylog(nT)$, which can be optimized to roughly $T^{2/3}$ by the appropriate choice of $B$. 
We remark there are some subtleties with this argument, since the eviction policy is defined with respect to experts already sampled into the pool, so that there are dependencies in the analysis, but these issues can be overcome with some additional technical work of sampling independent experts to handle the eviction at the cost of pool size $\polylog(nT)$. 

However, the main bottleneck for this algorithm is due to playing MWU on the experts in the pool of size $\polylog(nT)$, which requires a prohibitively large number of queries at each time. 
An immediate thought would be to replace MWU with EXP3, which is an algorithm that achieves $\sqrt{T}$ regret (ignoring $\poly(n)$ terms) with the one-query bandit signals~\cite{AuerCFS02}. 
As such, the algorithm would still achieve roughly $\sqrt{B}$ regret for each epoch of length $B$. 
This results in a single query algorithm between the epochs. 
However, we would not be able to obtain the loss $\calL^{\calD}(i)$ for a fixed arm $i$ in the pool over a duration $\calD$. 
We could use a second query to uniformly sample the arms in the pool to produce estimates $\widetilde{\calL^{\calD}}(i)$ of their losses with $\sqrt{\card{\calD}}$ error. 
We could then use the estimated losses of arms in the pool to replace their actual losses for eviction rules. 
This decreases the query complexity to two experts per time, but the algorithm still achieves regret $T^{2/3}$. 

\paragraph{Boosting with additional queries.}
The main reason for our algorithm $\calA_1$ achieving $T^{2/3}$ regret is due to incurring $B$ regret per bad epoch. 
The natural approach would be to apply a standard boosting strategy by running another instance $\calA_2$ of our algorithm with time horizon $B$ though, and this algorithm would achieve $B^{2/3}$ regret on the bad epoch. 
We could then play an ``outer'' EXP3 on $\calA_1$ and $\calA_2$ to get overall regret $B^{2/3}$ on the bad epoch. 
Then the overall regret becomes $\frac{T}{B}\sqrt{B}+nB^{2/3}$, which can be optimized to $T^{4/7}$ omitting $n$ factors. 
Ideally, we could recursively apply this boosting procedure and obtain roughly $\sqrt{nT}$ regret after $\log(nT)$ layers. 

Unfortunately, this approach has a fatal flaw. 
In the partial information setting, the updates of both the baseline algorithms and the outer EXP3 are random. 
However, the randomness used are \emph{not} independent and interferes with each other.
In particular, the days we play each baseline algorithm are a function of the past loss sequences for each baseline algorithm, and the decision to play an algorithm will affect the future performance of the baseline algorithms (see \Cref{fig:boosting-dependency-issue} for an illustration of this issue).
This issue is specific for the partial information setting in this paper. 
In the full information setting, all baseline algorithms receive updates after each day, regardless of whether the outer MWU chooses it, and so we can independently analyze each baseline algorithm. 
This is not the case in the partial information setting, and overcoming this dependency issue is a major algorithmic challenge.

\paragraph{Removing dependencies for two-query algorithm.}
Our main idea to handle the dependency issue is to distill both the inner and outer EXP3 updates based on steps for ``exploration'' and ``exploitation''. 
We first consider the inner EXP3 subroutines, which form the baseline algorithms. 
For exploration, we estimate losses non-adaptively by sampling an expert uniformly at random on each day using the second query and using its feedback to form the estimate. 
For exploitation, we sample each expert with probability proportional to $\exp(-\widetilde{\calL}(i))$ (the estimated loss of arm $i$), but we do \emph{not} update the weights. 
This approach can also be generalized to the updates of the outer EXP3 algorithm, which plays on the baseline algorithms. 
Namely, on each day, with probability $\frac{1}{2}$, we sample a baseline algorithm to estimate its cost using the second query. 
The baseline algorithm is similarly sampled with a probability proportional to the exponential of the estimated losses. 
Asymptotically, the variances of the loss estimation for both the experts and the algorithms over a duration $\calD$ are still $\sqrt{\card{\calD}}$ (ignoring $n\polylog(nT)$ factors), allowing the good epoch to have at most $\sqrt{B}$ regret with epoch length $B$. 
The important thing here is that once we fix the randomness of the second query, the updates of the baseline algorithms become deterministic. 
Therefore, we can conduct the same recursive boosting, and obtain the near-optimal $\sqrt{T}$ regret after $\log(nT)$ layers. 
Finally, we can implement a more technical boosting procedure to further optimize the dependencies on $n$ in the regret.

An illustration of the ideas we discussed for the dependency issue can be found in \Cref{fig:boosting-dependency-issues-and-fixes}.

\begin{figure}[!ht]
\centering
\begin{subfigure}{0.46\textwidth}
  \includegraphics[scale=0.12]{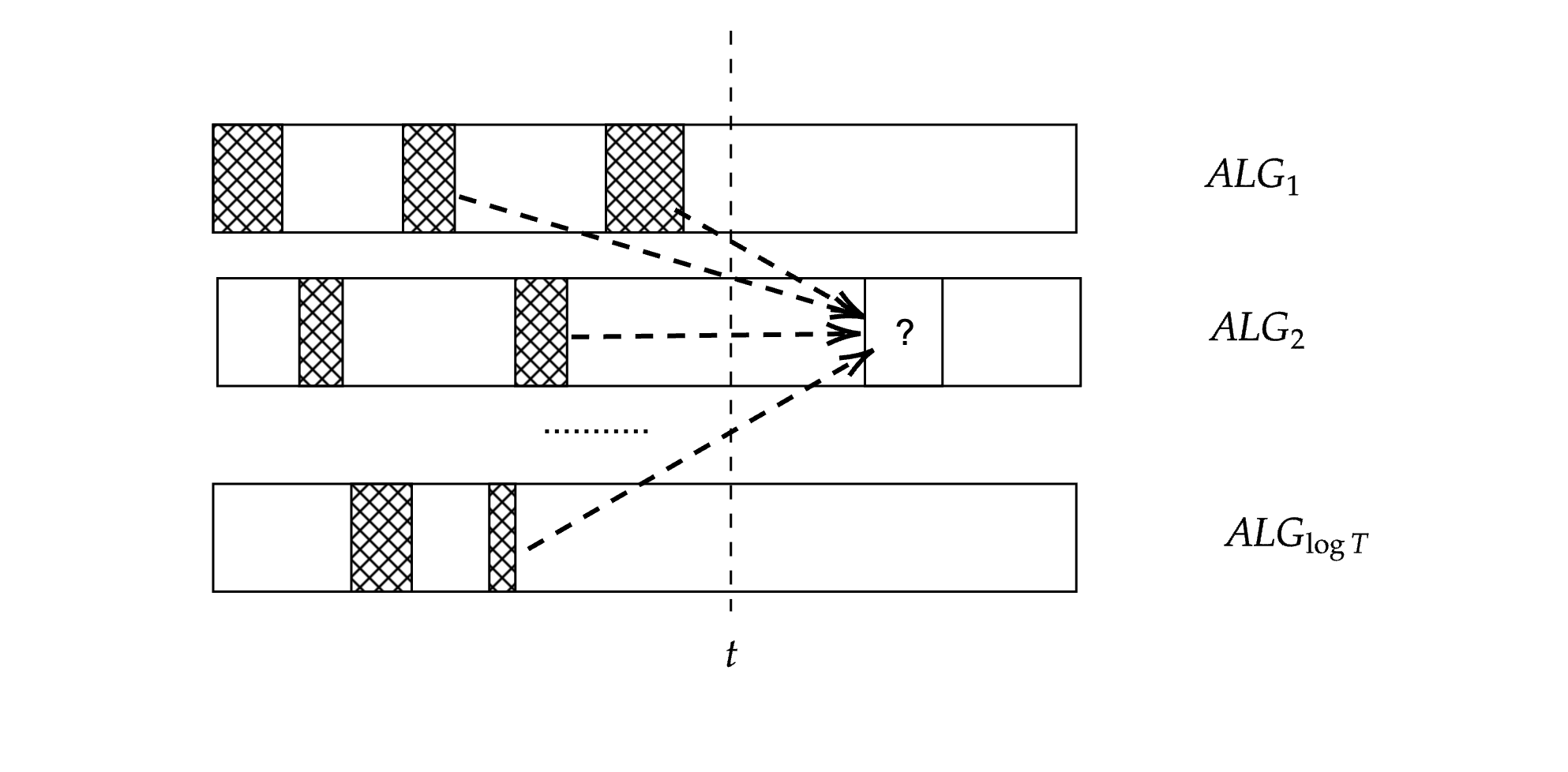}
  \caption{\centering The randomness used in the past days and other baseline algorithms interferes with the loss of a future day.}
  \label{fig:boosting-dependency-issue}
\end{subfigure}
\begin{subfigure}{0.46\textwidth}
  \includegraphics[scale=0.12]{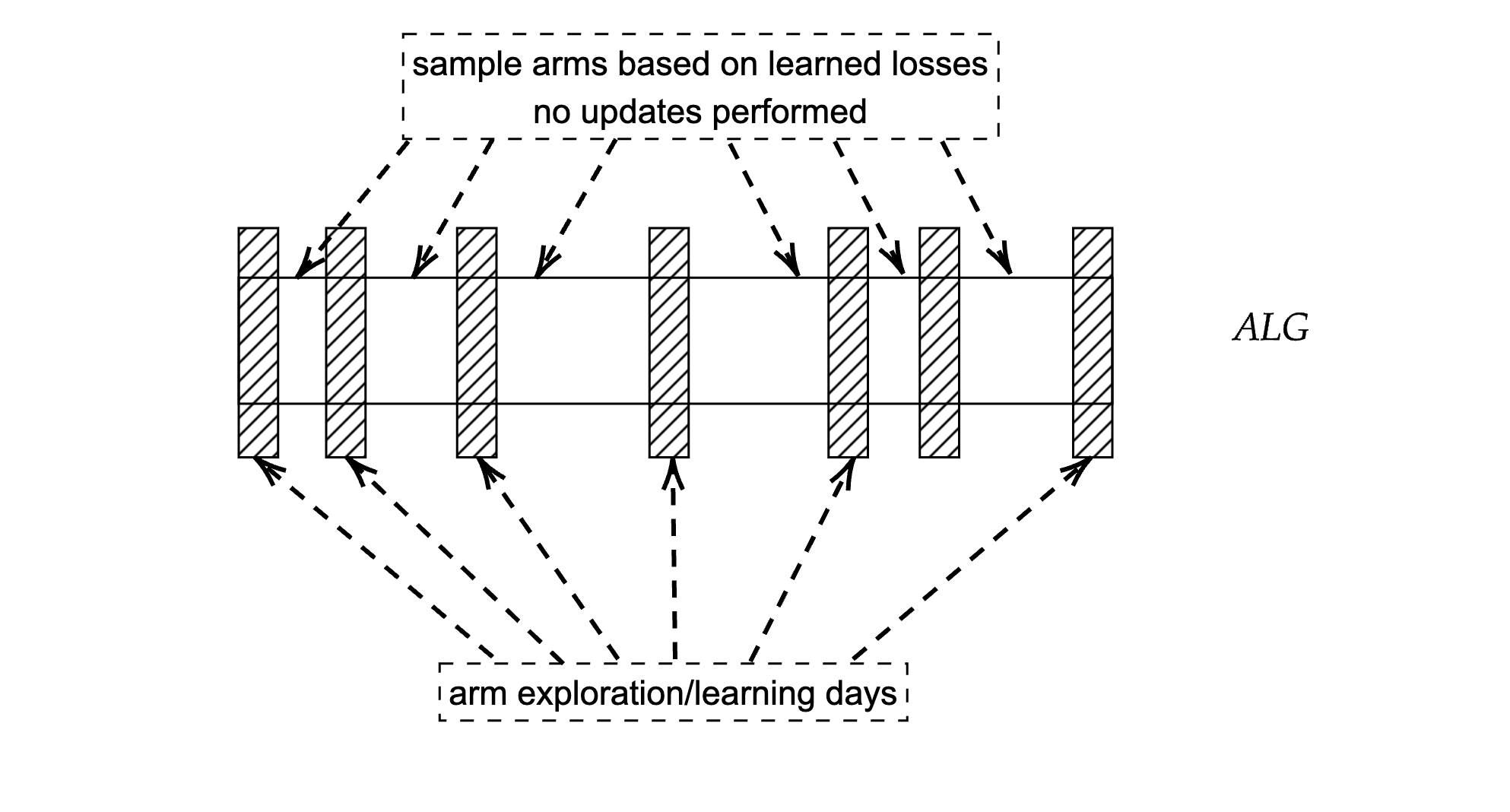}
  \caption{\centering Separating the learning and exploitation days for a baseline algorithm. The EXP3 algorithm is only updated based on loss estimations.}
  \label{fig:arm-learning-exploit-separate}
\end{subfigure}
\begin{subfigure}{0.46\textwidth}
  \includegraphics[scale=0.12]{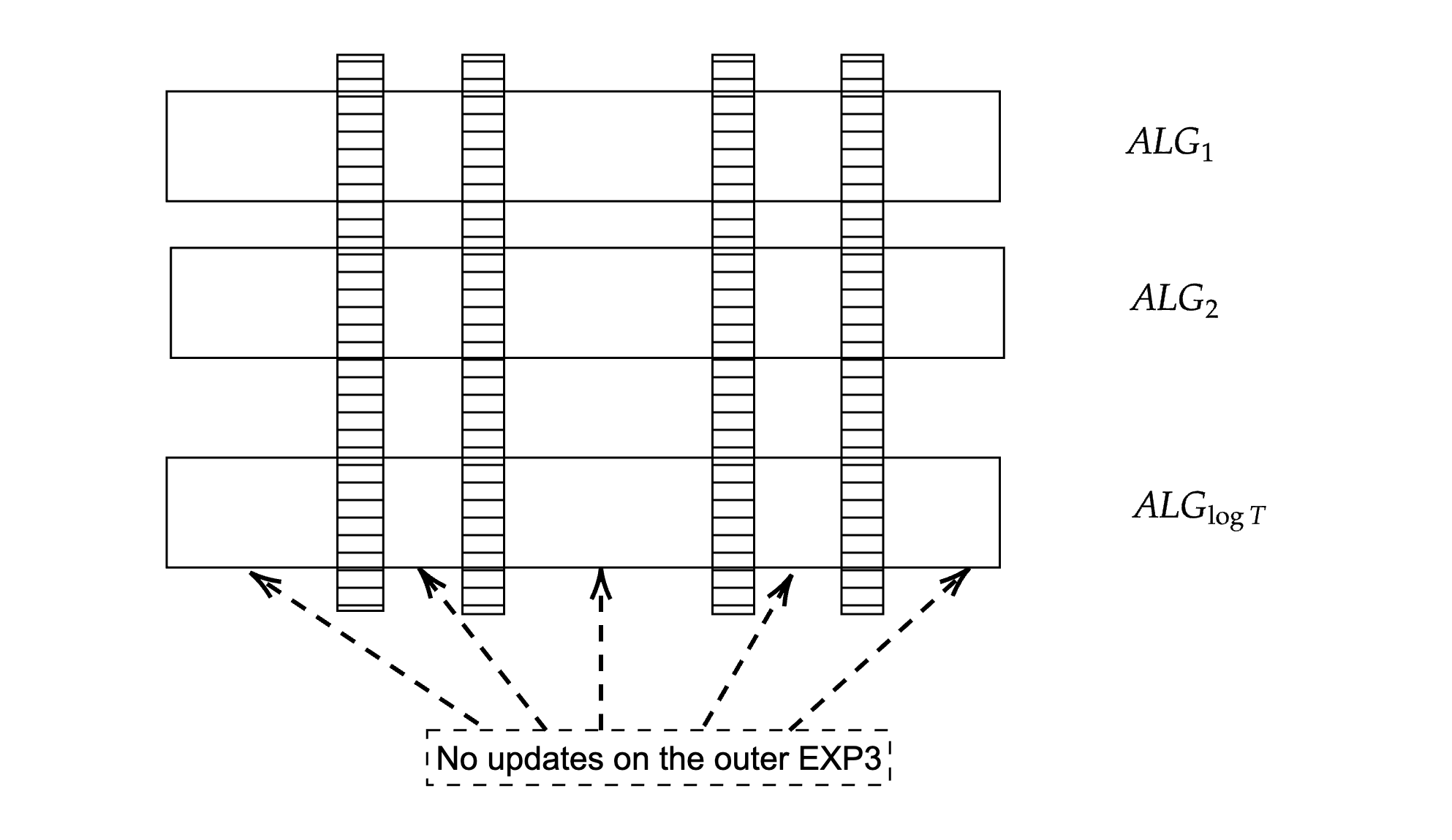}
  \caption{\centering Separating the learning and exploitation days for the outer EXP3. The outer EXP3 algorithm is only updated based on loss estimations.}
  \label{fig:algorithm-learning-exploit-separate}
\end{subfigure}
\begin{subfigure}{0.46\textwidth}
  \includegraphics[scale=0.12]{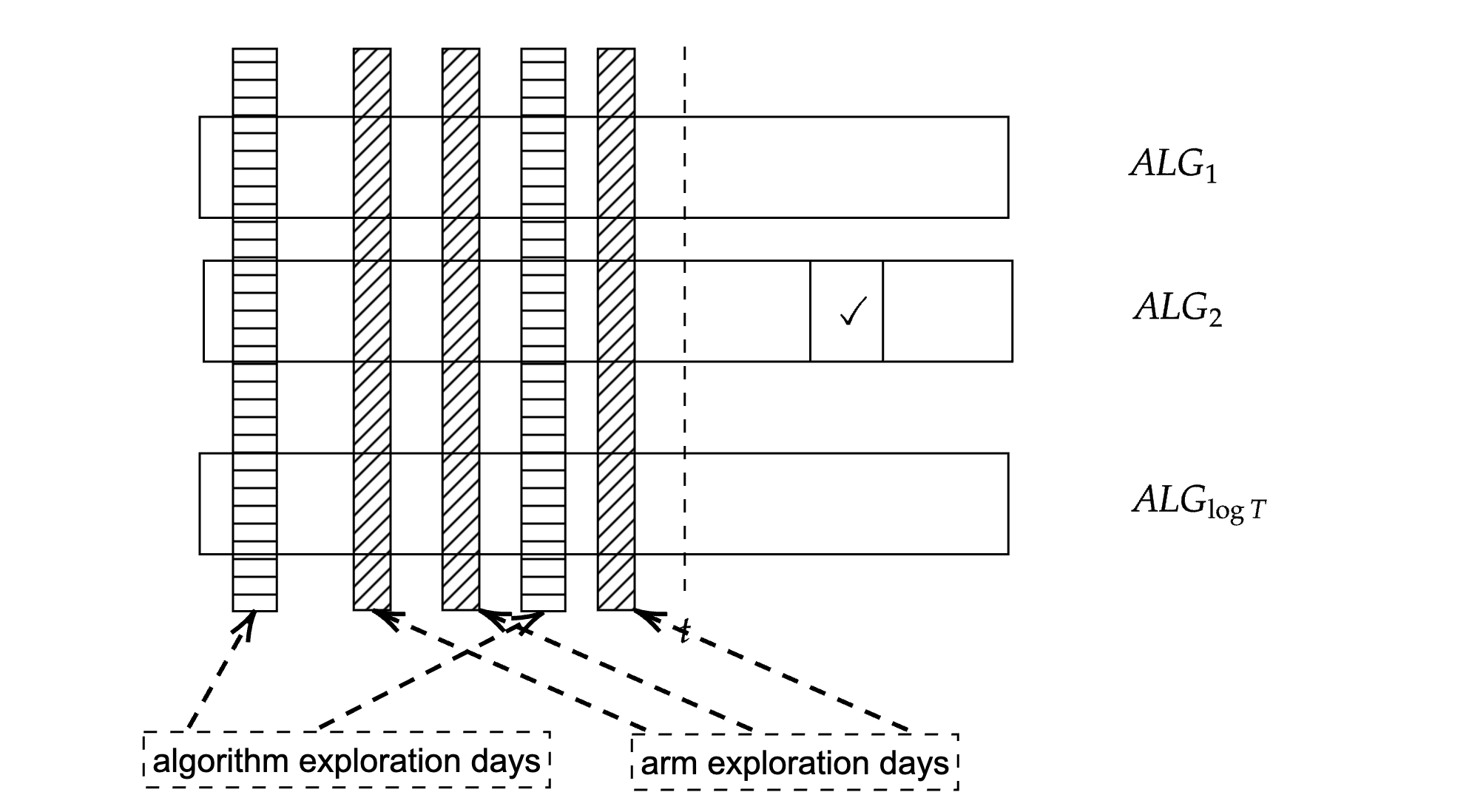}
  \caption{\centering By fixing the randomness for the second query (or exploration days), the next day we run each baseline is fixed, which allows regret control.}
  \label{fig:dependency-issue-fixed-boosting}
\end{subfigure}
\caption{An illustration of the dependency issue and the ideas to overcome the problem.}
\label{fig:boosting-dependency-issues-and-fixes}
\end{figure}

\paragraph{From two-query framework to the sliding-window algorithm.} 
We now discuss how the approach in \cite{Lu0CZWH24} could be simulated with low memory, and how the sliding-window algorithm is implemented such that the entire algorithm only takes two queries each day. 
At a high level, the algorithm in \cite{Lu0CZWH24} contains two components: the interval algorithms, which are EXP3 algorithms tailored to intervals $I$ with length $\card{I}=2^s$ for $s\in \log{T}$, and the ``outer algorithm'', which controls the weights to play each each interval algorithm. 
Intuitively, these two layers interact in a manner similar to the previously discussed recursive boosting: at each day, the outer algorithm samples an interval algorithm and plays an arm following its distribution. 
Additionally, the updates of the weights on each interval algorithm are conducted using a second query that samples from a distribution with a much smaller variance, e.g., a uniform distribution over the experts. 
This ensures the variance of one interval algorithm will not significantly affect other algorithms. 
Finally, the outer algorithm adapts the weights on each interval algorithm following the standard approach used in interval regrets by \cite{DanielyGS15}.

The key observation here is that the algorithmic framework for the sliding-window algorithm relies on the low-variance estimations of the loss sequences, and our previous framework is particularly suitable for such this property. 
Therefore, we are able to prove in a white-box manner that, since the second query is sufficient to estimate the losses with low variance, each interval algorithm still satisfies the properties we obtained in the boosting algorithm.
As a result, each of the interval algorithms achieve $\sqrt{\card{I}}$ regret on interval $I$ with $\polylog(nT)$ space, which also gives the regret bound of $\sqrt{W}$ for the sliding window model. 
Finally, since there are at most $\O{\log T}$ interval algorithms and each of them uses $\polylog(nT)$ bits of memory, the final memory bound is again $\polylog(nT)$.

\subsection{The Streaming Algorithm in the Single-Query Setting}
We now describe our framework for the single-query bandit setting, in which we obtain a streaming algorithm with $T^{2/3}$ regret. 
The algorithm follows the same framework of sampling and pruning of the pool of experts. 
However, a significant challenge here is that for an expert that has been in the pool over a duration $\calD$, we no longer have the second query to provide estimations with $\sqrt{\card{\calD}}$ error. 
As such, we need to assign exploration days that might incur regrets to estimate the loss sequences. 
Let $\gamma\in (0,1)$ be the exploration factor in EXP3, for a duration of $\calD$, so that the additive error for the loss estimation is bounded by $\O{\sqrt{\card{\calD}/\gamma}}$. 
We can incorporate such an additive error into the eviction rule, so that the good epochs of length $B$ would have regret $\O{\sqrt{B/\gamma}}$. 
On the other hand, the single-query setting pays $\gamma T$ additive regret for exploring with probability $\gamma$ across all $T$ days. 
For epochs with length $B$, the overall regret follows the form of $(\frac{T}{B}\cdot \sqrt{B/\gamma}+nB+\gamma T)\cdot \polylog(nT)$. 
By balancing $\gamma = {1}/{B^{1/3}}$ and $B = T^{3/4}$, we obtain roughly $T^{3/4}$ regret. 

\paragraph{Boosting with a better balancing.} 
To further improve the performances of the one-query algorithm, a natural approach would be to simulate the recursive boosting procedure that we developed for the two-query algorithm. 
However, we no longer have the ``free'' second query to estimate the loss sequences of the experts and the baseline algorithms, and we would need to ``sacrifice'' some exploration days and incur additional regret for those days. 
Concretely, we again enter an exploration day on each day with some probability $\gamma$. 
On the exploration days, we follow the same strategy as in the two-query algorithm to maintain estimated losses for the baseline algorithms and all experts maintained by them, i.e., with probability $\frac{1}{2}$, sample an expert uniformly at random and with probability $\frac{1}{2}$, sample a baseline algorithm uniformly at random. 
On the days that are not for exploration, we perform exploitation by sampling each baseline and expert using the estimated losses, but we do not update the weights. 

In this manner, we can maintain estimations of losses for both the experts and the baseline algorithms with an additive error at most $\sqrt{D/\gamma}$, if we explore with probability $\gamma$ over a duration $\calD$ of length $D$. 
From a better balancing, it follows that if we explore with a rate of $\gamma = 1/T^{1/3}$, then we would have additive error $\sqrt{D}\cdot T^{1/6}\leq T^{2/3}$ with high probability. 
Moreover, the regret incurred by the exploration steps is $\gamma D\leq T^{2/3}$. 
Hence, we can apply the recursive boosting procedure $\log(nT)$ times and obtain a final regret bound of $T^{2/3}$ with $\polylog(nT)$ memory, so that both the regret and the memory significantly improve upon the previous state-of-the-art for the single-query bandit setting.

\paragraph{Random-order best expert.} 
To further improve the regret to $\sqrt{T}$ when the best expert is in random order, we design a different algorithm using first principles, which does not follow the previous sampling and eviction-based framework. 
The subroutines are notably much simpler. 

At a high level, the algorithm starts with a binary search of the ``correct'' error rate of the best expert. 
Let $\gamma \sqrt{T}$ be the loss of the best expert, i.e., a loss rate of $\gamma/\sqrt{T}$, and let $C\sqrt{T}$ be the current target loss that we are aiming for in the binary search, where $1\leq C\leq \sqrt{T}$. 
To build intuition, let us first consider an idealized case where all other experts are much worse than the best expert in \emph{any interval}. 
Therefore, we could simply cycle through the experts and check whether each expert has less than $\O{C/\sqrt{T}}$ error rate. 
If the error rate is too high, we discard the expert; and if there is no expert that satisfies the desired error rate, we could increase $C$ and check again. 
On the other hand, if we find an expert with a low error rate, we could then commit to the expert for the rest of the days. 
This process only takes $\O{\log T}$ bits of space.

The analysis gets more complicated in the non-idealized case, specifically when the sub-optimal experts could have a ``hot streak'' of correct days. 
Here, a sub-optimal expert could have an error rate of less than $C/\sqrt{T}$ for a short period of time before it starts incurring much higher losses.
In this case, simply committing to the expert could incur high regret in the later days. 
This issue could be handled by \emph{dynamically checking} whether the error rate of the expert in hand is still satisfactory. 
If an expert demonstrates a much better error rate than the best expert for a short period before a much worse error, the algorithm could evict the expert and continue the process.
The regret is still low due to the fact that there are many days the expert we checked outperforms the best expert, i.e., we can account for the ``reverse regret''. 
The process stops after we check the best expert with the ``correct'' rate: since the loss sequence of the expert is in random order, we will never evict the best expert.
An amortized regret analysis then gives a regret bound of $\sqrt{T}$ for this algorithm.

\section{Preliminaries}
\label{sec:prelim}

\paragraph{Notation.} 
For a positive integer $n>0$, we use the notation $[n]$ to denote the set $\{1,2,\ldots,n\}$. 
We use the notation $\poly(n)$ to denote a fixed polynomial in $n$, whose degree can be inferred from setting appropriate constants. 
In particular, we use the notation $\polylog(n)$ to denote $\poly(\log n)$. 
We say an random event has high probability if its failure probability is $1-\frac{1}{\poly(nT)}$. 

We use $\ell^t(i)$ to denote the loss of arm $i$ at day $t$. Consequently, $\ell^{t}$ is the $n$-dimensional loss vector of day $t$. 
For the cumulative loss of arm $i$ in a duration $\calD$, we use $\calL^{\calD}(i)$.
We also write interval $\calI$ and the loss of expert $i$ over the interval $\calL^{\calI}(i)$; both notations are used depending on the context.
In general, we use $\ell$ to denote the loss of a single day and $\calL$ to denote the cumulative loss across multiple days. 
Furthermore, we sometimes write $\calL^{\calD}(\ALG)$ as a function to denote the cost of an algorithm $\ALG$ over the duration $\calD$.

\subsection{The Formal Description of the Model}
\label{subsec:model-description}
We give a more formal version of the problem descriptions. We start with the online learning problem with general worst-case loss sequences.

\paragraph{Online learning with expert using $q$ queries and $s$ space.} We consider an online learning problem over $T$ days with a set of $n$ experts. The sequence of outcomes is determined by an \emph{oblivious adversary}: for each day $t \in [T]$, the loss $\ell^t(i) \in \{0, 1\}$ for each expert $i \in [n]$ is fixed in advance, unknown to the algorithm.
An online learning algorithm $\ALG$ interacts sequentially with the instance on each day $t\in [T]$ as follows.
\begin{enumerate}
\item \textbf{Algorithm decision}: The online learning algorithm, $\ALG$, chooses one expert $i_t \in [n]$ to follow, i.e., outputs the same prediction as the expert.
\item \textbf{Incur losses}: The algorithm incurs the loss $\ell^t(i_t)$ of the chosen expert.
\item \textbf{Feedback signals}: After making the choice, the algorithm observes the loss of the chosen expert and an additional set of $(q-1)$ experts. $\ALG$ is allowed to observe the losses of the additional $(q-1)$ experts without any cost.
\end{enumerate}
The goal is to design an algorithm that minimizes for following objectives:
\begin{enumerate}[label=(\alph*).]
\item \textbf{The cumulative regret,} which is the difference between the losses of $\ALG$ and the loss of the best single expert in hindsight over the $T$ days, i.e.,
$$\sum_{t=1}^{T} \ell^t(i_t) - \min_{i \in [n]} \sum_{t=1}^{T} \ell^t(i).$$
\item \textbf{The sliding-window regret.} On each day $t$, we look at the difference between the losses in interval $[t-W+1, t]$, and we count the gap between the losses of $\ALG$ and the best arm in hindsight of $[t-W+1, t]$. In other words, the regret in the sliding window of time step $t$ is defined by 
$$\sum_{s=t-W+1}^{t} \ell^s(i_s) - \min_{i \in [n]} \sum_{s=t-W+1}^{t} \ell^s(i).$$
We define the regret in the sliding-window model as the \emph{maximum} regret of the sliding window for any $t\in [T]$.
\item \textbf{The interval regret.} Similar to the sliding-window case, for an interval $\calI=[t_1, t_2]\subseteq[T]$, the interval regret is defined as  
$$\sum_{t=t_1}^{t_2} \ell^t(i_t) - \min_{i \in [n]} \sum_{t=t_1}^{t_2} \ell^t(i).$$
Compared to the sliding-window regret, the interval regret does not require any fixed size. We define the regret in the interval regret model as the \emph{maximum} regret of any interval $\calI=[t_1, t_2]\subseteq[T]$.
\end{enumerate}

We also aim to optimize the space complexity $s$, defined as the maximum number of words that $\ALG$ utilizes at any point during its execution.

\paragraph{Online learning with random-order best expert loss sequence.} For the case of cumulative regret minimization with $n$ experts and $T$ days in online learning, we define the case of random-order best expert as follows. We let $\istar$ be the expert that attains the minimum cost, i.e., $$\istar=\argmin_{i\in [n]} \sum_{t=1}^{T} \ell^t(i).$$
We then apply a random permutation on $[T]$ for the losses of expert $\istar$. In other words, let $\sigma: 2^{[T]} \rightarrow 2^{[T]}$ be a uniform random permutation over $T$, we let $\ell^{t}(\istar)\gets \ell^{\sigma(t)}(\istar)$ before the starts of online learning.

\subsection{Concentration Inequalities}
\label{subsec:concentration-inequ}
We give the standard concentration inequalities we used in this paper.
\begin{proposition}[Bernstein's inequality, \cite{bernstein1924modification}]
\label{thm:bernstein}
Let $X_1, X_2, \ldots, X_n$ denote independent random variables such that $\left\lvert X_i - \Ex{X_i}\right\rvert|\le M$ for all $i\in[n]$. 
Let $S_n = \sum_{i=1}^n (X_i - \Ex{X_i})$ and let $\sigma^2 = \sum_{i=1}^n\text{Var}(X_i)$. 
Then, for any $t>0$:
\[\PPr{|S_n| \ge t}\le 2 \exp\left( -\frac{t^2}{2\sigma^2 + \frac{2}{3}Mt} \right).\]
\end{proposition}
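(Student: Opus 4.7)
The plan is to apply the classical Chernoff (exponential moment) method. For any $\lambda > 0$, Markov's inequality applied to $e^{\lambda S_n}$ gives $\PPr{S_n \geq t} \leq e^{-\lambda t}\,\Ex{e^{\lambda S_n}}$, and by independence of the centered variables $Y_i := X_i - \Ex{X_i}$ we have $\Ex{e^{\lambda S_n}} = \prod_{i=1}^{n} \Ex{e^{\lambda Y_i}}$. Thus the whole proof reduces to a sharp upper bound on each one-variable MGF under the conditions $\Ex{Y_i}=0$, $|Y_i|\leq M$, and $\Ex{Y_i^2}=\Var(X_i)$.

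To bound $\Ex{e^{\lambda Y_i}}$, I would Taylor-expand the exponential, take expectation, and use $\Ex{Y_i}=0$ together with the moment bound $\Ex{Y_i^k} \leq M^{k-2}\Var(X_i)$ for $k\geq 2$ (which follows directly from $|Y_i|^{k-2}\leq M^{k-2}$). The remaining series $\sum_{k\geq 2}(M\lambda)^{k-2}/k!$ is then controlled via the elementary inequality $k! \geq 2\cdot 3^{k-2}$ for $k\geq 2$, yielding $\sum_{k\geq 2}(M\lambda)^{k-2}/k! \leq 1/(2(1-M\lambda/3))$ whenever $0 < \lambda < 3/M$. Combining with $1+x \leq e^x$ gives the per-variable bound
\[
\Ex{e^{\lambda Y_i}} \;\leq\; \exp\!\left(\frac{\lambda^2\,\Var(X_i)}{2\,(1-M\lambda/3)}\right).
\]

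Multiplying across $i\in[n]$ and plugging back into the Chernoff inequality yields $\PPr{S_n \geq t} \leq \exp\bigl(-\lambda t + \lambda^2 \sigma^2/(2(1-M\lambda/3))\bigr)$. The last step is to optimize over $\lambda \in (0, 3/M)$: the choice $\lambda^* = t/(\sigma^2 + Mt/3)$ is admissible and produces the exponent $-t^2/(2\sigma^2 + (2/3)Mt)$ as claimed. A mirror argument applied to $-S_n$ handles the lower tail, and a union bound then introduces the factor of $2$ in the statement.

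The main obstacle, and the only step that is not essentially mechanical, is nailing down the constant $\tfrac{1}{3}$ in the denominator $2\sigma^2 + \tfrac{2}{3}Mt$. That constant is dictated precisely by the crude geometric comparison $k! \geq 2\cdot 3^{k-2}$; a weaker comparison gives a worse constant, and a sharper one produces a messier closed form. I would therefore spend the most care on the two inequalities $\Ex{Y_i^k}\leq M^{k-2}\Var(X_i)$ and $\sum_{k\geq 2}(M\lambda)^{k-2}/k! \leq 1/(2(1-M\lambda/3))$, since together they determine the exact shape of Bernstein's bound as stated, with everything else being a one-line Chernoff optimization.
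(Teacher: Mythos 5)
Your proof is correct and is the standard Chernoff/MGF derivation of Bernstein's inequality: center the variables, bound each one-variable MGF via the Taylor expansion with the moment bound $\Ex{|Y_i|^k}\le M^{k-2}\Var(X_i)$ and the geometric comparison $k!\ge 2\cdot 3^{k-2}$, then optimize $\lambda$ at $t/(\sigma^2+Mt/3)$ and symmetrize. For what it is worth, the paper does not supply its own proof of this statement; it is stated as a classical proposition and cited to \cite{bernstein1924modification}, so there is nothing in the paper to compare against beyond the (matching) form of the constants. One tiny stylistic point: for odd $k$ you are really bounding $\Ex{|Y_i|^k}\le M^{k-2}\Var(X_i)$ and using that this dominates $\Ex{Y_i^k}$ since $\lambda>0$; your parenthetical phrasing elides that absolute value, but the step is sound as used.
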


\subsection{The EXP3 algorithm and learning with exploration}
\label{subsec:EXP3-and-variates}
In the setting with the single-arm bandit signal, the standard algorithm to achieve the optimal regret is the Exponential-weight algorithm for Exploration and Exploitation algorithm, abbreviated as the EXP3 algorithm~\cite{AuerCFS02}. 
The algorithm description is as \Cref{alg:EXP3}.

\begin{algorithm}[!htb]
\caption{The Exponential-weight algorithm for Exploration and Exploitation (EXP3) Algorithm}
\label{alg:EXP3}
\algorithmicrequire{A set of $n$ arms; a parameter of $T$ days.}\\
\algorithmicrequire{An exploration rate of $\gamma\in (0,1]$.}
\begin{algorithmic}[1]
\State{Maintain $w_t(1), w_t(2), \cdots, w_t(n)$ as the weights for $n$ arms.} 
\State{Initialize with $w_1(i)\gets 1$ for all $i\in [n]$.}
\For{each day $t$}
\State{Sampling from distribution $P_t$ such that for each $i\in [n]$.}
\begin{align*}
P_t(i) = (1-\gamma)\cdot \frac{w_t(i)}{\sum_{i=1}^n w_t(i)} + \frac{\gamma}{n}.
\end{align*}
\State{Let $i_t \sim P_t$ be the sampled arm, observe the loss $\ell^t(i_t)$.}
\For{each arm $i\in [n]$}
\State{Let the estimation of loss be as follows.}
\begin{align*}
\widetilde{\ell^t}(i) =
    \begin{cases}
         \ell^t(i)/P_t(i), \quad & \text{if $i=i_t$}\\
        0, \quad & \text{otherwise}
    \end{cases}
\end{align*}
\State{Update $w_{t+1}(i) = \exp\paren{-\gamma \cdot \widetilde{\ell^t}(i)}\cdot w_{t}(i)$.}
\EndFor
\EndFor
\end{algorithmic}
\end{algorithm}

The standard guarantees for the EXP3 algorithm are as follows.
\begin{proposition}
\cite{AuerCFS02}
    \label{prop:exp3-guarantee}
    Let $\gamma\in (0, \frac{1}{\sqrt{T}})$, the expected regret of \Cref{alg:EXP3} is at most $\O{\sqrt{nT\log{n}}}$.
\end{proposition}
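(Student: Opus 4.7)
The plan is to follow the classical potential-function analysis of~\cite{AuerCFS02}, since the proposition simply restates a standard EXP3 regret guarantee. I would track the log-potential $\Phi_t = \ln \sum_{i=1}^n w_t(i)$ across days and, at the end, sandwich its total change $\Phi_{T+1} - \Phi_1$ between an upper bound derived from the update rule and the lower bound $\ln(w_{T+1}(i^*)/n) \geq -\gamma \sum_{t \in [T]} \widetilde{\ell^t}(i^*) - \ln n$, where $i^* = \argmin_i \sum_t \ell^t(i)$. This two-sided sandwich is what converts the exponential-weight update into a regret guarantee against any fixed arm.

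The core per-step estimate is an upper bound on $\Phi_{t+1} - \Phi_t$ obtained from $e^{-x} \leq 1 - x + x^2$ for $x \geq 0$, which is applicable because $\widetilde{\ell^t}(i) \geq 0$. After expanding $P_t$ in terms of $w_t$ and pulling out the factor of $\frac{1}{1-\gamma}$, one obtains
\[
\Phi_{t+1} - \Phi_t \;\leq\; -\frac{\gamma}{1-\gamma}\sum_{i=1}^n \Paren{P_t(i) - \frac{\gamma}{n}} \widetilde{\ell^t}(i) \;+\; \frac{\gamma^2}{1-\gamma}\sum_{i=1}^n \Paren{P_t(i) - \frac{\gamma}{n}} \widetilde{\ell^t}(i)^2.
\]
Telescoping over $t \in [T]$ and rearranging produces an inequality that relates the algorithm's estimated loss $\sum_{t,i} P_t(i) \widetilde{\ell^t}(i)$ to the cumulative estimated loss of any fixed arm, modulo a quadratic variance term. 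Taking expectations and using the unbiasedness $\mathbb{E}[\widetilde{\ell^t}(i) \mid P_t] = \ell^t(i)$ (valid because the $\gamma/n$ mixing ensures $P_t(i) > 0$) converts the linear term into actual expected losses, while the quadratic term is controlled using $\mathbb{E}[P_t(i) \widetilde{\ell^t}(i)^2 \mid P_t] = \ell^t(i)^2 \leq 1$ since losses lie in $[0,1]$.

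Combining the per-step bounds and accounting for the additive $\gamma T$ bias induced by uniform mixing (the played distribution is $(1-\gamma)$ times the exponential-weights distribution plus $\gamma$ times uniform), the expected regret is bounded by a linear combination of $\tfrac{n \ln n}{\gamma}$ from the log-potential telescoping and $\gamma T$ from exploration together with the variance. The main point of technical care is bookkeeping the $(1-\gamma)$ factors and correctly applying unbiasedness inside the squared estimator, where it is easy to lose a $\poly(n)$ factor by conditioning sloppily. The final step is to optimize $\gamma = \Theta(\sqrt{n \ln n / T})$, which balances the two terms and yields the claimed $\O{\sqrt{n T \log n}}$ regret bound; in the relevant regime where $T$ is sufficiently large relative to $n \log n$, this optimal $\gamma$ lies inside the admissible range declared in the proposition.
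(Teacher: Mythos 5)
The paper does not prove \Cref{prop:exp3-guarantee}; it cites it directly to \cite{AuerCFS02}, so there is no in-paper argument to compare against. Your sketch is the canonical potential-function proof for EXP3 and its structure (the $\ln(1+x)\le x$ step after $e^{-x}\le 1-x+x^2$, the two-sided sandwich of $\Phi_{T+1}-\Phi_1$, unbiasedness of the inverse-propensity estimator, and the variance control $\sum_i P_t(i)\,\E{\widetilde{\ell^t}(i)^2}=\sum_i \ell^t(i)^2\le n$) is correct and is the right route. There is, however, a bookkeeping slip in the last step. For this version of EXP3, where the learning rate and the exploration rate are both $\gamma$ and the update lacks a $1/n$ scaling, the telescoped log-potential contributes $\ln n/\gamma$ (not $n\ln n/\gamma$); the $n$ factor enters through the variance term $\gamma n T$, with the uniform-mixing bias contributing an additional $\gamma T$. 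The resulting bound is $\ln n/\gamma + \gamma(n+1)T$, minimized at $\gamma^*=\Theta(\sqrt{\ln n/(nT)})$ rather than $\Theta(\sqrt{n\ln n/T})$; if you plugged your $\gamma^*$ back into the correct bound you would pick up an extra $\sqrt{n}$ factor. With the corrected $\gamma^*$, we always have $\gamma^*<1/\sqrt{T}$ because $\ln n<n+1$ for all $n\ge 1$, so no assumption on $T$ being large relative to $n\log n$ is needed to stay in the admissible range, contrary to your final remark.
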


\paragraph{The ``learning with exploration'' view of EXP3.} 
The vanilla version of \Cref{alg:EXP3} explicitly maintains weights $w_t(i)$ for each arm. An alternative view of the EXP3 algorithm is to uniformly at random sample with probability $\gamma \in (0,1]$, and sample the arm proportional to the loss otherwise. 
Importantly, the ``learning'' of algorithm, i.e., the update of the losses, is only conducted on the exploration days. 
The algorithm could be described as \Cref{alg:EXP3-exploration}.

\begin{algorithm}[!htb]
\caption{The ``learning as exploration'' version of EXP3}
\label{alg:EXP3-exploration}
\algorithmicrequire{A set of $n$ arms; a parameter of $T$ days.}\\
\algorithmicrequire{An exploration rate of $\gamma\in (0,1/2]$.}
\begin{algorithmic}[1]
\State{Maintain the estimated losses $\widetilde{\calL^{1:t}}(i)$ for each $i\in [n]$ and $t\in [T]$.}
\For{each day $t$}
\State{With probability $\gamma$, enter an \emph{exploration day}.}
\If{$t$ is an exploration day}
\State{Sample $i_t$ uniformly at random from the set of arms.}
\begin{align*}
\widetilde{\ell^t}(i) =
    \begin{cases}
         n\cdot \ell^t(i)/\gamma, \quad & \text{if $i=i_t$}\\
        0, \quad & \text{otherwise}
    \end{cases}
\end{align*}
\State{Update $\widetilde{\calL^{1:t+1}}(i_t) \gets \widetilde{\calL^{1:t}}(i_t)+\widetilde{\ell^t}(i_t)$.}
\Else
\State{Sample an arm $i_t$ from the following distribution.}
\begin{equation}
\label{equ:sample-prob}
P_t(i) = \frac{\exp(-\gamma \cdot \widetilde{\calL^{1:t}}(i))}{\sum_{i=1}^n \exp(-\gamma \cdot \widetilde{\calL^{1:t}}(i))}.
\end{equation}
\EndIf
\EndFor
\end{algorithmic}
\end{algorithm}

The regret guarantee we have for \Cref{alg:EXP3-exploration} is as follows. 
\begin{proposition}
    \label{prop:exp3-exploration}
    Let $\gamma\in (0, \frac{1}{\sqrt{T}})$, the expected regret of \Cref{alg:EXP3-exploration} is at most $(n\cdot \sqrt{T/\gamma} + \gamma\cdot T)\cdot \polylog(nT)$.
\end{proposition}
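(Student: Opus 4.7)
The plan is to decompose the expected regret into two disjoint contributions---those from \emph{exploration days} and those from \emph{exploitation days}---and bound each in turn.

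The exploration contribution is immediate: each day is flagged as exploration independently with probability $\gamma$, and on such days \Cref{alg:EXP3-exploration} samples an arm uniformly at random. Since each per-round loss lies in $[0,1]$, the expected regret contributed by exploration days is at most $\gamma T$, which yields the second term of the stated bound.

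For the exploitation contribution, I view the algorithm as running Hedge/exponential weights with learning rate $\gamma$ on the estimated loss sequence $\widetilde{\ell^t}$. Two structural facts drive the analysis. First, the estimator is unbiased, $\Ex{\widetilde{\ell^t}(i)}=\ell^t(i)$, because the inverse-propensity factor $n/\gamma$ exactly compensates for the probability $\gamma/n$ with which a particular pair $(t,i)$ is sampled. Second, the estimator is sparse and bounded: only one arm per exploration day receives a nonzero estimate of magnitude at most $n/\gamma$, so the per-day variance is $\O{n/\gamma}$ and over $T$ days the total variance is $\O{nT/\gamma}$. From here I would execute three steps. (i) Apply Bernstein's inequality (\Cref{thm:bernstein}) to $\widetilde{\calL^{1:T}}(i)=\sum_t\widetilde{\ell^t}(i)$ for each arm $i$, with variance $nT/\gamma$ and range $n/\gamma$, and union bound over $i\in[n]$ to get $\abs{\widetilde{\calL^{1:T}}(i)-\calL^{1:T}(i)}\le \sqrt{nT/\gamma}\cdot\polylog(nT)$ uniformly with high probability. (ii) Run the standard potential-function telescoping with $\Phi_t=\log\sum_i\exp(-\gamma\cdot\widetilde{\calL^{1:t}}(i))$ to obtain a Hedge-style inequality bounding $\sum_t\langle P_t,\widetilde{\ell^t}\rangle-\min_i\widetilde{\calL^{1:T}}(i)$ in terms of $\log n/\gamma$ and $\gamma\sum_t\sum_i P_t(i)(\widetilde{\ell^t}(i))^2$. (iii) Take expectations, using unbiasedness to turn $\langle P_t,\widetilde{\ell^t}\rangle$ into $\langle P_t,\ell^t\rangle$ (the conditional exploitation-day loss of \ALG) and Jensen's inequality to pass from $\Ex{\min_i\widetilde{\calL^{1:T}}(i)}$ to $\min_i\calL^{1:T}(i)$. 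Combining with the exploration bound yields expected regret $\O{(n\sqrt{T/\gamma}+\gamma T)\polylog(nT)}$.

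The main obstacle I anticipate is that the algorithm ties the learning rate and the exploration rate to the same parameter $\gamma$, yet the estimates have magnitude up to $n/\gamma$, so $\gamma\cdot\widetilde{\ell^t}(i)$ can be as large as $n$. A naive application of $\exp(-x)\le 1-x+x^2/2$ in the Hedge step would inflate the second-moment term to $\O{nT}$ and destroy the bound. Overcoming this requires carefully exploiting the sparsity of $\widetilde{\ell^t}$---the single updated arm per exploration day is sampled uniformly, contributing an extra factor of $1/n$ to $\Ex{P_t(i_t)(\widetilde{\ell^t}(i_t))^2}$---and then combining this sparsity with the Bernstein concentration above so that the quadratic term collapses to $n\sqrt{T/\gamma}\cdot\polylog(nT)$ rather than the trivial $nT$. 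The delicate accounting between the magnitude, sparsity, and concentration of the estimator is the heart of the proof.
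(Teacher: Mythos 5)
Your proposal follows essentially the same decomposition as the paper's proof: split the regret into exploration and exploitation days, bound the exploration contribution by $\gamma T$, apply Bernstein's inequality (\Cref{thm:bernstein}) to control the gap between estimated and true cumulative losses, and finish via a Hedge regret guarantee governed by the estimator's second moment (the paper packages this last step as \Cref{prop:hedge}, plugging in $U=\O{n^2/\gamma}$ to read off the $n\sqrt{T/\gamma}\cdot\polylog(nT)$ term).

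One small correction to the ``obstacle'' you flag: the inequality $\exp(-x)\le 1-x+x^2/2$ holds for all $x\ge 0$, so the fact that $\gamma\cdot\widetilde{\ell^t}(i)$ can reach $n$ does not by itself break the Hedge potential argument --- the constraint is only on the second moment, which is exactly what \Cref{prop:hedge} requires. The genuine subtlety, which neither your sketch nor the paper fully spells out, is that \Cref{prop:hedge} is stated for the optimally tuned learning rate, whereas \Cref{alg:EXP3-exploration} hard-wires $\eta=\gamma$; the paper elides this mismatch by citing the proposition at face value. Your proposed fix of ``sparsity contributing an extra factor of $1/n$'' does not add savings beyond what is already folded into the standard $\O{n/\gamma}$ per-round second-moment calculation (the $1/n$ from uniform sampling cancels against one $n$ in the $(n/\gamma)^2$ squared magnitude), so it does not change the conclusion of the argument.
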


We believe \Cref{prop:exp3-exploration} is known in the literature; regardless, we provide the proof in \Cref{app:omitted-proof-prelim} for completeness. 
Furthermore, \Cref{prop:exp3-exploration} implies the following lemma.
\begin{corollary}
    \label{lem:EXP3-with-approx-loss}
    Let $\{\ell^{t}\}_{t=1}^{T}$ be the set of loss vectors, and let $\widetilde{\calL^{\calD}}$ be the estimation of the losses over a duration $\calD$. Suppose that
    \[\card{\widetilde{\calL^{\calD}}(i)-\sum_{t\in\calD}\ell^{t}(i)}\leq \sqrt{\card{\calD}/\gamma}\cdot \polylog(nT).\]
    Then, sampling with \Cref{equ:sample-prob} at every step in $D$ leads to regret $\sqrt{\card{\calD}/\gamma}\cdot n\polylog(nT)$.
\end{corollary}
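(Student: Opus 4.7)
The plan is to rerun the Hedge potential-function argument underlying \Cref{prop:exp3-exploration} on the provided estimates $\widetilde{\calL^{1:t}}$, and then translate the resulting estimated-regret bound to a true-regret bound using the hypothesis. Define the incremental estimate $\widetilde{\ell^t}(i):=\widetilde{\calL^{1:t+1}}(i)-\widetilde{\calL^{1:t}}(i)$, so that \Cref{equ:sample-prob} is exactly the Hedge distribution at learning rate $\gamma$ applied to the sequence $\{\widetilde{\ell^t}\}_{t\in\calD}$.

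The first step is the standard potential-function computation. Tracking $\Phi_t:=\log\sum_i\exp(-\gamma\widetilde{\calL^{1:t}}(i))$, the one-step inequality $\exp(-\gamma x)\leq 1-\gamma x+\gamma^2 x^2$ and telescoping across $\calD$ yield
\[
\sum_{t\in\calD}\sum_i P_t(i)\,\widetilde{\ell^t}(i)\leq\min_i\widetilde{\calL^{\calD}}(i)+\frac{\log n}{\gamma}+\gamma\sum_{t\in\calD}\sum_i P_t(i)\,\widetilde{\ell^t}(i)^2.
\]
Since each $\widetilde{\ell^t}(i)$ is $O(1)$-bounded in the settings of interest (the losses are binary and the estimators inherit this scaling), the variance term is $O(\gamma\card{\calD})$, and the estimated regret is $O(\log n/\gamma+\gamma\card{\calD})\leq\sqrt{\card{\calD}/\gamma}\polylog(nT)$.

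The second step converts this to true regret. Decompose
\[
\sum_{t\in\calD}\sum_i P_t(i)\,\ell^t(i)-\min_i\calL^{\calD}(i)=\Bigl[\sum_{t\in\calD}\sum_i P_t(i)\,\widetilde{\ell^t}(i)-\min_i\widetilde{\calL^{\calD}}(i)\Bigr]+\Delta_1+\Delta_2,
\]
with $\Delta_2:=\min_i\widetilde{\calL^{\calD}}(i)-\min_i\calL^{\calD}(i)$ and $\Delta_1:=\sum_{t\in\calD}\sum_i P_t(i)(\ell^t(i)-\widetilde{\ell^t}(i))$. The hypothesis immediately gives $\card{\Delta_2}\leq\sqrt{\card{\calD}/\gamma}\polylog(nT)$. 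For $\Delta_1$, rewrite $\Delta_1=\sum_i\sum_{t\in\calD} P_t(i)(\ell^t(i)-\widetilde{\ell^t}(i))$ and, for each arm $i$, control the inner sum via Abel summation: the same concentration argument (e.g., \Cref{thm:bernstein}) that produces the hypothesis at $\calD$ also produces the same bound at every prefix of $\calD$, and combining this per-prefix bound with $\sum_t\card{P_{t+1}(i)-P_t(i)}=O(\gamma\card{\calD})$ (the total variation of the multiplicative-weights distribution under $O(1)$-bounded updates) gives a per-arm contribution of $\sqrt{\card{\calD}/\gamma}\polylog(nT)$. Summing over the $n$ arms yields $\card{\Delta_1}\leq n\sqrt{\card{\calD}/\gamma}\polylog(nT)$, and combining the three pieces delivers the claimed bound.

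The main obstacle is $\Delta_1$: the naive inequality $\card{\sum_t P_t(i)\,(\ell^t(i)-\widetilde{\ell^t}(i))}\leq\card{\calL^{\calD}(i)-\widetilde{\calL^{\calD}}(i)}$ fails because time-varying $P_t$ can destroy the cancellation built into the cumulative-error hypothesis, so an Abel-summation detour through partial sums is required. The remainder of the argument is a routine rerun of the Hedge analysis carried out in the proof of \Cref{prop:exp3-exploration} deferred to \Cref{app:omitted-proof-prelim}, and the union over the $n$ arms in the treatment of $\Delta_1$ is precisely what produces the extra factor of $n$ in the corollary's bound.
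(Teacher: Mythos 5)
The paper does not actually provide a proof of this corollary (it is presented as an immediate consequence of \Cref{prop:exp3-exploration}), so there is no in-paper argument to compare against. Your reconstruction, however, has a real gap in the treatment of $\Delta_1$. Summation by parts gives, per arm $i$, $\card{\sum_{t\in\calD}P_t(i)(\ell^t(i)-\widetilde{\ell^t}(i))}\le M_i\bigl(1+\sum_t\card{P_{t+1}(i)-P_t(i)}\bigr)$ where $M_i$ is the maximal per-prefix error; even granting your assertion that the per-arm total variation is $O(\gamma\card{\calD})$, this yields $M_i(1+\gamma\card{\calD})=\sqrt{\card{\calD}/\gamma}\,(1+\gamma\card{\calD})\cdot\polylog(nT)$. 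The factor $1+\gamma\card{\calD}$ does not disappear, and in the regimes where the paper actually invokes this corollary (e.g.\ $\gamma=T^{-1/3}$ with $\card{\calD}$ up to $T$) it is polynomially large, so the resulting bound far exceeds the claimed $n\sqrt{\card{\calD}/\gamma}\polylog(nT)$. Moreover, the ``$O(1)$-bounded updates'' premise underlying both the total-variation estimate and the second-order Taylor form of your potential bound is false in context: the per-step estimates are importance-weighted, scaling as $\card{\calP}/\gamma$ on a single arm per step, so a single step can move the softmax by $\Omega(1)$, and the correct Hedge analysis here is the variance-conditioned one (\Cref{prop:hedge}), not the bounded-loss one.

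What actually closes the gap is the martingale structure that the Abel detour abandons. Everywhere the paper uses this corollary, the per-step estimates satisfy $\Ex{\widetilde{\ell^t}(i)\mid\calF_{t-1}}=\ell^t(i)$ and $P_t$ is $\calF_{t-1}$-measurable, so $P_t(i)\bigl(\ell^t(i)-\widetilde{\ell^t}(i)\bigr)$ is a martingale difference sequence with conditional second moment $O(\card{\calP}/\gamma)\cdot\polylog(nT)$. Applying Bernstein/Azuma to this martingale directly gives $\card{\Delta_1}\leq\sqrt{\card{\calD}/\gamma}\cdot\polylog(nT)$ with no total-variation penalty, and this is the step that really carries the corollary. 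These conditional-unbiasedness and predictability properties are not spelled out in the corollary's hypothesis, which only constrains the terminal cumulative error, but they are implicit in the paper's constructions of $\widetilde{\calL}$ and in its derivation from \Cref{prop:exp3-exploration}; a proof that tries to avoid them, as yours does, cannot recover the stated bound.
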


\paragraph{Using an additional query.} In the multi-signal query model, where we are allowed to observe the loss of more than one arm, we could obtain similar (yet stronger) guarantees than \Cref{alg:EXP3-exploration}. In particular, we could perform exploration every day, resulting in the \Cref{alg:EXP3-exploration-two-query}.

\begin{algorithm}[!htb]
\caption{EXP3 with two queries}
\label{alg:EXP3-exploration-two-query}
\algorithmicrequire{A set of $n$ arms (experts); a parameter of $T$ days.}\\
\algorithmicrequire{An exploration rate of $\gamma\in (0,1]$; a learning rate of $\eta$.}
\begin{algorithmic}[1]
\State{Maintain the estimated losses $\widetilde{\calL^{1:t}}(i)$ for each $i\in [n]$ and $t\in [T]$.}
\For{each day $t$}
\State{Sample an arm $i_t$ from the following distribution and play $i_t$.}
\begin{align*}
P_t(i) = \frac{\exp(-\eta\cdot \widetilde{\calL^{1:t}}(i))}{\sum_{i=1}^n \exp(-\eta\cdot \widetilde{\calL^{1:t}}(i))}.
\end{align*}
\State{Sample $i^{\text{est}}_t$ uniformly at random from the set of arms, but do \emph{not} play $i^{\text{est}}_t$.}
\NoNumber{\Comment{No loss incurred since we do not play $i^{\text{est}}_t$}}
\begin{align*}
\widetilde{\ell^t}(i) =
    \begin{cases}
         n\cdot \ell^t(i), \quad & \text{if $i=i^{\text{est}}_t$}\\
        0, \quad & \text{otherwise}
    \end{cases}
\end{align*}
\State{Update $\widetilde{\calL^{1:t+1}}(i^{\text{est}}_t) \gets \widetilde{\calL^{1:t}}(i^{\text{est}}_t)+\widetilde{\ell^t}(i^{\text{est}}_t)$.}
\EndFor
\end{algorithmic}
\end{algorithm}

The guarantees for \Cref{alg:EXP3-exploration-two-query} directly follow from existing work. A proof of the following statement can be found in \Cref{app:omitted-proof-prelim}. 
\begin{proposition}
    \label{prop:two-query-EXP3}
    The expected regret of \Cref{alg:EXP3-exploration-two-query} using $\eta=\frac{1}{n}\cdot \sqrt{\frac{\log{n}}{T}}$ is at most $\O{n\cdot \sqrt{T\log{n}}}$.
\end{proposition}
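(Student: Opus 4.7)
}
The plan is to apply the standard exponential-weights (Hedge) regret bound to the unbiased loss estimators $\widetilde{\ell^t}(i)$ produced by the second (uniform) query. The key observation is that $i^{\text{est}}_t$ is sampled uniformly and independently of $i_t$ and of the history, so conditional on the past the estimator satisfies $\EEx{}{\widetilde{\ell^t}(i)} = \ell^t(i)$ and $\EEx{}{\widetilde{\ell^t}(i)^2} = \frac{1}{n}\cdot (n\ell^t(i))^2 \leq n$, using $\ell^t(i)\in[0,1]$. Also, with the stated $\eta = \frac{1}{n}\sqrt{\log n / T}$ we have $\eta\cdot \widetilde{\ell^t}(i)\leq \eta n \leq 1$ for $T \geq \log n$, which justifies the standard inequality $\exp(-\eta x)\leq 1-\eta x+\eta^2 x^2$ used inside the potential argument.

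First, I would set $W_t = \sum_i \exp(-\eta\, \widetilde{\calL^{1:t}}(i))$ and bound the log-potential ratio in the usual way:
\begin{align*}
\log\frac{W_{t+1}}{W_t} \;\leq\; -\eta \sum_{i\in[n]} P_t(i)\,\widetilde{\ell^t}(i) + \eta^2 \sum_{i\in[n]} P_t(i)\,\widetilde{\ell^t}(i)^2.
\end{align*}
Telescoping over $t\in[T]$ and comparing against any fixed expert $i^\star$ via $\log W_{T+1}\geq -\eta\,\widetilde{\calL^{1:T}}(i^\star) - \log n$ yields the deterministic inequality
\begin{align*}
\sum_{t=1}^T \sum_{i\in[n]} P_t(i)\,\widetilde{\ell^t}(i) \;\leq\; \widetilde{\calL^{1:T}}(i^\star) + \frac{\log n}{\eta} + \eta \sum_{t=1}^T \sum_{i\in[n]} P_t(i)\,\widetilde{\ell^t}(i)^2.
\end{align*}

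Next, I would take expectations and use the independence of $i^{\text{est}}_t$ from $P_t$ (which is measurable with respect to the past). Conditioning on the filtration $\mathcal{F}_{t-1}$: the left-hand side becomes $\sum_i P_t(i)\,\ell^t(i) = \EEx{}{\ell^t(i_t)\mid\mathcal{F}_{t-1}}$ since $i_t\sim P_t$ independently; the variance term is bounded by $\sum_i P_t(i)\cdot n\,(\ell^t(i))^2 \leq n$; and by linearity $\EEx{}{\widetilde{\calL^{1:T}}(i^\star)} = \calL^{1:T}(i^\star)$. Choosing $i^\star = \argmin_i \calL^{1:T}(i)$ and using $\EEx{}{\min_i \widetilde{\calL^{1:T}}(i)}\leq \min_i \calL^{1:T}(i)$ if one prefers to apply the bound after the minimum, we conclude
\begin{align*}
\EEx{}{\sum_{t=1}^T \ell^t(i_t)} - \min_{i\in[n]} \calL^{1:T}(i) \;\leq\; \frac{\log n}{\eta} + \eta\, n T.
\end{align*}
Substituting $\eta = \frac{1}{n}\sqrt{\log n / T}$ gives $\frac{\log n}{\eta} = n\sqrt{T\log n}$ and $\eta n T = \sqrt{T\log n}$, so the expected regret is $\O{n\sqrt{T\log n}}$, as claimed.

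There is no real obstacle here since the proof is a textbook Hedge analysis on the unbiased estimators from the uniform second query; the only mild care needed is to verify the range condition $\eta\,\widetilde{\ell^t}(i)\leq 1$ so the potential inequality applies, and to keep track of the conditioning to separate the independent randomness of $i_t$ and $i^{\text{est}}_t$ when taking expectations. For completeness in the appendix one could also cite \cite{AuerCFS02} directly, since the algorithm is a variant of EXP3 where uniform exploration is decoupled from the played arm via the extra query.
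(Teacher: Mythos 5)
Your proof is correct, but it takes a genuinely different route than the paper. The paper proves \Cref{prop:two-query-EXP3} by invoking a black-box lemma from \cite{Lu0CZWH24} (restated as \Cref{lem:inverse-propensity-regret} in the appendix), which bounds the expected regret of an exponential-weights scheme whose weight updates use inverse-propensity loss estimators from a \emph{separate} exploration distribution $z_t$ satisfying the ratio condition $P_t(i)\leq C\cdot z_t(i)$; it then sets $z_t(i)=1/n$ (uniform second query) so $C=n$, and plugs in $\eta=\tfrac{1}{n}\sqrt{\log n/T}$. You instead re-derive the bound from first principles via the standard Hedge potential argument applied directly to $\widetilde{\ell^t}$: bound the per-step log-potential ratio by $-\eta\sum_i P_t(i)\widetilde{\ell^t}(i)+\eta^2\sum_i P_t(i)\widetilde{\ell^t}(i)^2$, telescope against a fixed comparator $i^\star$, and then take expectations using the conditional independence of $i^{\text{est}}_t$ from $P_t$, unbiasedness of the estimator, and the second-moment bound $\sum_i P_t(i)\,\Ex{\widetilde{\ell^t}(i)^2\mid\mathcal{F}_{t-1}}\leq n$, which is exactly where the uniform second query enters. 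Both arguments are sound and give the same bound; the paper's is shorter in exposition because the cited lemma encapsulates the same potential computation, while yours is self-contained and makes the decoupled-exploration mechanism explicit. One minor overcaution: since $\widetilde{\ell^t}(i)\geq 0$, the inequality $e^{-x}\leq 1-x+x^2$ holds for all $x\geq 0$ without any restriction, so the range condition $\eta\,\widetilde{\ell^t}(i)\leq 1$ (hence $T\geq\log n$) is not actually needed for your potential step.
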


\subsection{The SQUINT algorithm}
\label{subsec:squint}
We use the SQUINT algorithm for the opimtal boosting as in \cite{pr23}. 
For an algorithm whose distribution over the arms (experts) is $p_t$ at day $t$
define $v_t(i) = \sum_{i=1}^{n} p_t(i) \ell^{t}(i) - \ell^{t}(i)$ be the loss difference between expert i and the algorithm. The SQUINT algorithm (\Cref{alg:squint}) and its guarantees are as follows.

\begin{algorithm}
\label{alg:squint}
\caption{SQUINT, \cite{KoolenE15}, cf. \cite{pr23}}
\begin{algorithmic}[1]
\For{$t = 1, 2, \dots, T$}
    \State Compute $p_t \in \Delta_n$ over experts such that $p_t(i) \propto \mathbb{E}_{\eta}[\eta \cdot \exp(\eta \sum_{\tau=1}^{t-1} v_{\tau}(i) - \eta^2 \sum_{\tau=1}^{t-1} v_{\tau}^2(i))]$
    \State Sample an expert $i_t \sim p_t$ and observe the loss vector $\ell_t \in [0,1]^n$
\EndFor
\end{algorithmic}
\end{algorithm}

\begin{lemma}
\cite{KoolenE15}
\label{lem:squint}
Let the learning rate $\eta$ is sampled from the prior distribution $\gamma(\eta)$ over $[0, 1/2]$ such that $\gamma(\eta) \propto \O{\frac{1}{\eta \log^2(\eta)}}$. For any expert $i \in [n]$, the SQUINT guarantees that
\[
\mathbb{E}\left[\sum_{t=1}^{T} \ell_t(i_t) - \sum_{t=1}^{T} \ell_t(i)\right] \le \O{\sqrt{V_T^i \log(nT)}}.
\]
where $V_T^i = \sum_{t=1}^{T} (\mathbb{E}_{i_t}[l_t(i_t)] - l_t(i))^2$ is the loss variance of expert $i$.
\end{lemma}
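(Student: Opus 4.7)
The plan is to follow a Bayesian potential-function argument in the spirit of the original SQUINT analysis. Define the mixture potential
\[
\Phi_t \;=\; \mathbb{E}_{\eta \sim \gamma,\; i \sim \pi}\!\left[\exp\!\bigl(\eta R_t^i - \eta^2 V_t^i\bigr)\right],
\]
where $R_t^i = \sum_{\tau \le t} v_\tau(i)$, $V_t^i = \sum_{\tau \le t} v_\tau(i)^2$, $\pi$ is the uniform prior on $[n]$, and $\gamma$ is the given improper prior on $\eta \in [0,1/2]$ so that $\Phi_0 = 1$. The first key step is to show $\mathbb{E}[\Phi_t \mid \calF_{t-1}] \le \Phi_{t-1}$, and hence $\mathbb{E}[\Phi_T] \le 1$.

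For the potential-drop step I would factor
\[
\Phi_t \;=\; \mathbb{E}_{\eta,i}\!\left[\exp\!\bigl(\eta R_{t-1}^i - \eta^2 V_{t-1}^i\bigr)\cdot \exp\!\bigl(\eta v_t(i) - \eta^2 v_t(i)^2\bigr)\right]
\]
and apply the pointwise inequality $\exp(\eta x - \eta^2 x^2) \le 1 + \eta x$ (valid for $|x|\le 1$ and $\eta\in[0,1/2]$, a standard second-order Taylor fact). What remains is the linear term $\mathbb{E}_{\eta,i}[\eta\,\exp(\eta R_{t-1}^i - \eta^2 V_{t-1}^i)\, v_t(i)]$. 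By the SQUINT update rule $p_t(i) \propto \mathbb{E}_\eta[\eta \exp(\eta R_{t-1}^i - \eta^2 V_{t-1}^i)]$, the sum over $i$ equals $Z_t\sum_i p_t(i)\,v_t(i) = Z_t(\langle p_t,\ell_t\rangle - \langle p_t,\ell_t\rangle) = 0$, so the linear correction vanishes exactly.

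Given $\mathbb{E}[\Phi_T] \le 1$, I would extract the per-expert bound by conditioning on a single expert $i^\ast \in [n]$ (paying a factor $1/\pi(i^\ast) = n$) and restricting $\eta$ to a dyadic interval around the target rate $\eta^\ast = \min\{1/2,\; R_T^{i^\ast}/(2 V_T^{i^\ast})\}$. The improper prior $\gamma(\eta) \propto 1/(\eta \log^2(1/\eta))$ is designed precisely so that any such dyadic interval with $\eta^\ast \in (1/T, 1/2)$ carries prior mass $\Omega(1/\log^2 T)$ — the classical log-log trick for adapting to an unknown scale. Chaining these two losses and applying Jensen's inequality (since $\log$ is concave) yields
\[
\eta^\ast\,\mathbb{E}[R_T^{i^\ast}] - (\eta^\ast)^2\,\mathbb{E}[V_T^{i^\ast}] \;\le\; O(\log(nT)),
\]
and plugging in the optimal $\eta^\ast$ gives $\mathbb{E}[R_T^{i^\ast}] \le O(\sqrt{\mathbb{E}[V_T^{i^\ast}]\log(nT)})$. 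Since by definition $\mathbb{E}[R_T^{i^\ast}] = \mathbb{E}[\sum_t \ell_t(i_t)] - \sum_t \ell_t(i^\ast)$, this is the claimed bound. The boundary case $\eta^\ast = 1/2$ corresponds to $R_T^{i^\ast} \ge V_T^{i^\ast}$, where the bound $R_T^{i^\ast} \lesssim V_T^{i^\ast}$ already implies the desired inequality up to constants.

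The main obstacle is the potential-drop step: the Taylor bound $\exp(\eta x - \eta^2 x^2) \le 1+\eta x$ is sharp enough only because $|v_t(i)| \le 1$ and $\eta \le 1/2$, and because the SQUINT weighting contains the extra factor of $\eta$ that is essential for the cancellation of the linear term against $p_t$. A secondary subtlety is bookkeeping around the improper prior: one must verify that $\log\log(1/\eta^\ast)$ contributions remain $O(\log(nT))$ even when the effective $\eta^\ast$ is extremely small, which is handled by a separate trivial bound whenever $\eta^\ast \le 1/T$ (in that regime $V_T^{i^\ast} \ge T R_T^{i^\ast}/2$ already forces the desired inequality).
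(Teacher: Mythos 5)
The paper does not prove this lemma; it is stated as an imported result from \cite{KoolenE15} with no proof given, so there is no internal argument to compare your proposal against. Your sketch correctly reproduces the standard SQUINT potential-function analysis from the cited source: define the mixture potential $\Phi_t = \mathbb{E}_{\eta\sim\gamma, i\sim\pi}[\exp(\eta R_t^i - \eta^2 V_t^i)]$, use the second-order Taylor inequality $e^{z-z^2}\le 1+z$ (which does hold on $z\in[-1/2,1/2]$, hence for $z=\eta v_t(i)$ with $\eta\le 1/2$ and $|v_t(i)|\le 1$), and observe that the extra factor of $\eta$ in the SQUINT sampling weight makes the linear correction proportional to $\sum_i p_t(i)v_t(i)$, which is identically zero by the definition of $v_t$. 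The extraction step via a dyadic interval around $\eta^\ast = \min\{1/2, R_T^{i^\ast}/(2V_T^{i^\ast})\}$ and the log-log prior is also the right mechanism for adapting to the unknown scale.

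Two small points worth tightening. First, the potential inequality $\Phi_T\le 1$ is \emph{deterministic} (it holds for every loss sequence, since $p_t$ is a deterministic function of the history), so the appeal to Jensen's inequality is not actually needed; the quantity $\sum_t\langle p_t,\ell_t\rangle - \sum_t\ell_t(i)$ on the left of the lemma is already deterministic and equals $R_T^i$ directly. Second, and more substantively, the precise SQUINT bound in \cite{KoolenE15} carries an additive term of order $\log(1/\pi(i))+\log\log T$ in addition to the $\sqrt{V_T^i(\log(1/\pi(i))+\log\log T)}$ leading term, and the boundary case $\eta^\ast=1/2$ (i.e., $R_T^i>V_T^i$) only yields $R_T^i\le O(\log(nT))$, which does not by itself imply $R_T^i\le O(\sqrt{V_T^i\log(nT)})$ when $V_T^i$ is small. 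Your boundary-case remark gestures at this, but the claim ``$R_T^{i^\ast}\lesssim V_T^{i^\ast}$ already implies the desired inequality up to constants'' is not quite right as stated. In practice this is harmless — the paper's lemma is simply absorbing the additive $O(\log(nT))$ into the big-$O$, and the downstream applications in the paper already incur $\polylog(nT)$ slop — but if you want the statement exactly as written, you should either add the explicit additive $O(\log(nT))$ term or argue separately that it is dominated in the regime of interest.
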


\section{Our Baseline: A Low-Memory Algorithm with Single-Query Bandit Losses}
\label{sec:PR-alg-est-eviction}
As we discussed in \Cref{sec:tech-overview}, the baseline of our algorithms for both the sliding-window and streaming models is a variate of the algorithm in \cite{pr23} that works in the single-query bandit setting. 
The algorithm achieves $\Otilde(nT^{3/4})$ regret with $\polylog(nT)$ space with high probability.
The best algorithm that could achieve $T^{3/4}$ regret was the algorithm in \cite{XuZ21} with $\Theta(\sqrt{n})$ memory. As such, on its own, the algorithm already improves the memory efficiency for sublinear-memory online learning in the bandit setting by nearly exponential factors.

\paragraph{High-level overview and necessary notions.}
We now give a high-level our algorithm in this section. 
Roughly speaking, the algorithm modifies the framework of \cite{pr23} to the bandit setting. 
We have discussed a simplified analysis in \Cref{sec:tech-overview}; however, note that directly applying that analysis framework would result in serious dependency issues: to define ``an epoch that could evict the best expert'', we would need to fix the randomness of the experts in the pool; however, if the best expert actually joins the pool, it will interfere with other experts in the pool, which makes the argument circular.

To handle such dependency issues, we would need the same structure as \cite{pr23} that maintains an iterative merging process for the pools.
In particular, the pool is divided into $K=\log{T}$ pools $\calP = \calP_1 \cup \calP_2\cdots \cup \calP_K$, and the $k$-th subpool contains the experts added in the most recent $2^k$ epochs. 
Every $2^k$ epochs, the pool $\calP_k$ merges to pool $\calP_{k+1}$, and we will perform pruning by removing experts that are \emph{covered}, i.e., with loss sequences not competitive with the benchmark loss. 
During this merge, we sample $1/\polylog{(n)}$-fraction of the experts to the \emph{filter} set $\calF$, and the final notion of ``being able to evict'' is defined using a \emph{benchmark loss} over the experts in $\calF$.

We now introduce the necessary notions toward the formal definition of the benchmark loss. When we talk about a \emph{lifespan} (or duration) $\calD(i)$ for an expert $i\in[n]$, we mean an interval $(t_1, t_2]$ such that both $t_1$ and $t_2$ are \emph{integer multiples} of some parameter $B$. 
We use $\calD$ as a short-hand notation when the context is clear. 
Since we operate with the partial information setting, we insist on maintaining the \emph{estimations} of the loss. 
For a given interval $\calD$ and an expert $i$, we use $\widetilde{\calL^{\calD}}(i)$ to denote an estimate for the loss of expert $i$ over the duration $\calD$. It is easy to see that $\card{\widetilde{\calL^{\calD}}(i)-\sum_{t\in\calD}\ell^{t}(i)}\leq \card{\calD}$ since the losses are in $\{0,1\}$. 
We would eventually need estimations with better accuracy for our regret bounds.

\paragraph{The benchmark loss.} 
We are now ready to formally define the benchmark loss.
To better illustrate the process to obtain the benchmark loss, we write it in the form of an algorithm (\algref{alg:strong-benchmark}).
\begin{algorithm}[!htb]
\caption{The Dynamic Benchmark for expert (arm) $i$}
\alglab{alg:strong-benchmark}
\algorithmicrequire{A set $\calF$ of filter experts.}\\
\algorithmicrequire{Epoch length $B$;  a lifespan $\calD=(t_1, t_2]$ which is an \emph{integer multiplier} of $B$.}\\
\algorithmicrequire{The per-epoch estimated cost of each expert $j\in \calF$.}\\
\algorithmicensure{ A benchmark of cost.}
\begin{algorithmic}[1]
\State{Let $t(j_1)\leq t(j_2)\leq t(j_3)\leq \cdots \leq t(j_{\card{\calF}})$ be ranked by the time that the filter expert enters the pool.}
\State{Divide the lifespan $\calD$ in intervals $\calI_1, \calI_2, \cdots, \calI_{{\card{\calD}}/{B}}$.}
\State{Initialize $\widetilde{\calL^{\calD}}(\benchmark)\gets 0$.}
\For{each interval $\calI_\ell$ for $\ell \in [1, \card{L}/{B}]$}
\State{Let $\calI(t(j_{k-1}), t(j_{k}))$ be the time interval such that $t(j_{k-1})\leq (t_1 + (\ell-1)\cdot B) \leq t(j_{k})$.}
\State{$\widetilde{\calL^{\calD}}(\benchmark)\gets\min_{j\in \calF}\widetilde{\calL}^{\calI(t(j_{k-1}), t(j_{k}))}(j)+\widetilde{\calL^{\calD}}(\benchmark)$.}
\EndFor
\State{Return $\widetilde{\calL^{\calD}}(\benchmark)$.}
\end{algorithmic}
\end{algorithm}

In \algref{alg:strong-benchmark}, the filter set $\calF$ is produced in \algref{alg:filter-new}. The epoch length is a parameter by \algref{alg:eviction-with-fake-cost}, and the lifespan of an expert is a function of \algref{alg:eviction-with-fake-cost}. Compared to \cite{pr23}, we use only the \emph{estimations} of the losses to compute the benchmark loss, and the way we maintain the estimated loss can be found in \algref{alg:eviction-with-fake-cost}.

We now formally define the notion of \emph{cover} as was originally defined in \cite{pr23}. Perhaps contrary to an initial intuition, we would evict the arms more \emph{aggressively}. Our definition of cover is as follows. 
\begin{definition}[Approximate cover of an expert]
\label{def:cover}
Given a set $\calF$ of filter arms and an exponent $\rho\in(0,1)$.  
We say that an arm $i$ with lifespan $\calD$ is \emph{covered} if after running \algref{alg:strong-benchmark} with $i$ and $\calF$ to get $\widetilde{\calL^{D}(\benchmark)}$, and we have that
\begin{align*}
\widetilde{\calL^{\calD}(i)}\ge\widetilde{\calL^{\calD}(\benchmark)} - C \cdot n\log(nT)\cdot (\card{\calD})^\rho,
\end{align*}
where $C$ is an absolute constant. 
\end{definition}

We are now ready to introduce our baseline algorithm formally as in \algref{alg:eviction-with-fake-cost}. The control flow is as follows: in each epoch, the algorithm samples arms (experts) to the pool $\calP_1$ with probability $1/n$. 
Then, the algorithm performs EXP3 with $\gamma=(\frac{n}{B})^{1/3}$ as the exploration parameter for the epoch. 
By the end of the epoch, the pool $\calP_k$ will merge to $\calP_{k+1}$ if it's the $2^k$-th epoch. 
The merging procedure is controlled by the $\textsc{Merge}$ algorithm (\algref{alg:merge-new}), which in turn calls algorithm \algref{alg:filter-new} to determine covering and eviction.

\begin{algorithm}[!htb]
\caption{The baseline algorithm with single-query bandit signals}
\alglab{alg:eviction-with-fake-cost}
\textbf{Parameters}: $B$ as the length of each epoch; $\gamma$ as the probability for exploration in EXP3.
\begin{algorithmic}[1]
\State{Maintain a pool $\calP=\calP_{1}\cup \calP_{2}\cup \cdots \calP_{K}$ of arms with $K=\log{T}$.}
\State{For each arm $i \in \calP$, let $D(i)$ be the \emph{lifespan} of $i$ in the pool.}
\State{Divide the horizon into $T/B$ epochs.}
\State{Maintain the estimated cost for every epoch for arms in the pool $\calP$.}
\For{Each epoch $\tau\in [T/B]$}
\State{Sample each arm and add to $\calP_{1}$ with probability $\frac{1}{n}$.}
\State{Run EXP3 for all arms in the pool $\calP$ with parameter $\gamma=(\frac{n}{B})^{1/3}$.}
\For{each interval $\calI$ induced by the time experts enters the pool}
\State{Maintain $\widetilde{\calL^{\calI}}(i)=\sum_{t: i_t=i}\widetilde{\ell^{t}}(i)$ for each $i$, where $\widetilde{\ell^{t}}(i)$ is the fake loss of EXP3}
\EndFor
\If{$\tau=C\cdot 2^{C'}$ for some integer $C, C'$}
\State{Let $\pw(\tau)$ be the largest integer such that $\tau=C\cdot 2^{\pw(\tau)}$ for some integer $C$.}
\For{$k \in [\pw(\tau)]$}
\State{Update $\calP_{k+1}$ with \algref{alg:merge-new} as the merge of $\calP_{k+1}$ and $\calP_{k}$.}
\State{$\calP_{k}\gets \emptyset$.}
\EndFor
\EndIf
\EndFor
\end{algorithmic}
\end{algorithm}

\begin{algorithm}[!htb]
\caption{\textsc{Merge}$(\calP_A,\calP_B)$}
\alglab{alg:merge-new}
\textbf{Input}: Pools $\calP_{A}$ and $\calP_{B}$.\\
\textbf{Parameter}: $Q=16\log(nT)$.
\begin{algorithmic}[1]
\State{Initially let $\calP_{C}=\calP_{A} \cup \calP_{B}$.}
\For{$q=1:Q$}
\State{Estimate the size $s$ of $\calP_{C}$ by assigning a $\bern{\frac{1}{\log^{4}(nT)}}$ random variable for each arm in $\calP_{C}$.}
\If{$s\leq \log^{5}(nT)$}
\State{Return the pool $\calP_{C}$.}
\Else
\State{Sample $\frac{1}{\log^{4}(nT)}$-fraction of the arms to get a filter set $\calF$.}
\State{Run the \algref{alg:filter-new} with $\calP_C$ and $\calF$ to get set $\calX$.}
\State{$\calP_{C}=\calF\cup \calX$.}
\EndIf
\EndFor
\end{algorithmic}
\end{algorithm}

\begin{algorithm}[!htb]
\caption{\textsc{Filter}$(\calF,\calQ) $}
\alglab{alg:filter-new}
\textbf{Input}: Pools $\calQ$ and filter set of arms $\calF$.
\begin{algorithmic}[1]
\For{each arm $i\in\calQ$}
\State{If $i$ is covered by $\calF$ (as prescribed by Definition~\ref{def:cover}) with $\rho=2/3$, then remove $i$ from $\calQ$.}
\EndFor
\State{Return the updated set of $\calQ$.}
\end{algorithmic}
\end{algorithm}

The rest of this section is to prove the memory and regret for \algref{alg:eviction-with-fake-cost}. 

\subsection{The analysis: \texorpdfstring{$\polylog(nT)$}{polylog(nT)} memory and \texorpdfstring{$\Otilde(nT^{3/4})$}{poly(n)T3/4} regret}

Similar to the case of \cite{pr23}, we first analyze the memory complexity since the regret analysis depends on it. 
The next lemma shows that the \textsc{Merge} algorithm is efficient, which forms the backbone of our memory efficiency analysis.
\begin{lemma}
\lemlab{lem:merge:size}
With probability $1-\frac{1}{\poly(nT)}$, the output $\calP_C$ of $\textsc{Merge}(\calP_A,\calP_B)$ satisfies
\[|\calP_C|\le\max\left(2\log^9(nT),\frac{1}{4}(|\calP_A|+|\calP_B|)\right).\]
\end{lemma}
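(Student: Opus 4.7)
The plan is to condition on a good event over all the random coins used by \textsc{Merge} and then split into cases based on whether $N := |\calP_A| + |\calP_B|$ is below or above the threshold $8\log^9(nT)$. Let $\calP_C^{(q)}$ denote the pool at the start of iteration $q \in [Q]$ and write $N_q = |\calP_C^{(q)}|$. Each iteration uses two independent Bernoulli$(1/\log^4(nT))$ rounds: one to estimate the size $s$ and one to form the filter set $\calF$. A multiplicative Chernoff bound, followed by a union bound over the $Q = 16\log(nT)$ iterations, gives with probability $1-1/\poly(nT)$:
\begin{equation*}
\tfrac{1}{2}\cdot \tfrac{N_q}{\log^4(nT)} \le s \le 2\cdot \tfrac{N_q}{\log^4(nT)},\qquad |\calF| \le 2\cdot \tfrac{N_q}{\log^4(nT)}
\end{equation*}
whenever $N_q \ge \log^5(nT)$, together with $s \le \log^5(nT)$ whenever $N_q \le \tfrac{1}{4}\log^9(nT)$.

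Next I would argue that each iteration that does not early-terminate shrinks the pool by a factor of at least $4$. If the algorithm does not return at iteration $q$, then $s > \log^5(nT)$, which by the concentration above forces $N_q \ge \tfrac{1}{2}\log^9(nT)$. The key structural step is to show that \textsc{Filter}$(\calF,\calP_C^{(q)})$ removes all but $\O{\log^4(nT)}$ of the non-filter arms, so that $N_{q+1} \le |\calF| + \O{\log^4(nT)} \le N_q/4$ (using $|\calF| \le 2N_q/\log^4(nT)$ and $N_q \ge \tfrac{1}{2}\log^9(nT)$). The intuition is that a uniform random subset of density $1/\log^4(nT)$ contains, with high probability, an arm whose estimated loss $\widetilde{\calL^{\calD}}(\cdot)$ sits in the top $\O{\log^4(nT)}$ of the pool, and then the approximate-cover condition of \defref{def:cover} evicts every arm ranked below it (the estimation slack $C \cdot n\log(nT)\cdot |\calD|^\rho$ absorbs the noise between $\widetilde{\calL}$ and $\calL$).

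I would then combine the two cases. If $N \le 8\log^9(nT)$, the shrinkage step drives $N_q$ below $\tfrac{1}{4}\log^9(nT)$ within $\O{1}$ iterations, at which point the Chernoff tail on $s$ forces termination with $|\calP_C| \le 2\log^9(nT)$. If instead $N > 8\log^9(nT)$, even a single successful shrinkage gives $N_2 \le N/4$, and if this happens to occur only after the termination test has fired, then the pool is already smaller than $2\log^9(nT) \le N/4$. Either way the conclusion holds, and a union bound over all Chernoff events and the structural covering events, each with failure probability $1/\poly(nT)$, yields the advertised $1-1/\poly(nT)$ guarantee.

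The main obstacle is the structural claim in the second paragraph: that a $1/\log^4(nT)$-random filter together with \textsc{Filter} actually evicts all but $\O{\log^4(nT)}$ of the arms with high probability. Establishing this requires (i) showing that ranking arms by $\widetilde{\calL^{\calD}}(\cdot)$ yields, with high probability, that at least one of the $\O{\log^4(nT)}$ best-ranked arms falls into $\calF$ (a standard coupon-style sampling argument), and (ii) controlling the uniform estimation error of $\widetilde{\calL^{\calD}}$ relative to the true loss so that the $C\cdot n\log(nT)\cdot |\calD|^\rho$ slack in \defref{def:cover} is large enough to cover all arms past this rank; the remaining Chernoff and union-bound bookkeeping is routine.
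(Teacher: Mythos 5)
Your high-level bookkeeping (Chernoff on the size estimate $s$, union bound over $Q=16\log(nT)$ iterations, conditioning on the loss-estimation event) is consistent with the paper, but the central structural claim in your second paragraph is a genuine gap, and in fact the clean picture you propose is not available.

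You argue that you can rank the arms of $\calP_C^{(q)}$ by a single scalar $\widetilde{\calL^{\calD}}(\cdot)$, that with high probability one of the top $\O{\log^4(nT)}$ arms falls into $\calF$, and that \defref{def:cover} then evicts every arm ranked below it — a coupon-collector argument giving a factor-$4$ shrinkage per iteration. This does not go through for two reasons. First, the arms in the pool have \emph{different} lifespans $\calD(i)$ (they entered at different times), so $\widetilde{\calL^{\calD(i)}}(i)$ and $\widetilde{\calL^{\calD(j)}}(j)$ are not directly comparable and the arms do not admit the total order your argument presupposes. Second, and more importantly, the benchmark in \algref{alg:strong-benchmark} is a piecewise concatenation: it chops the lifespan $\calD(i)$ into segments delimited by the entry times of filter experts and takes the best filter expert on each segment separately. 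Consequently, ``$i$ is covered by $\calF$'' is \emph{not} the statement that some single filter arm has smaller aggregate estimated loss than $i$; it is a condition about a sum of segment-wise minima. An arm $i$ that looks bad in aggregate can still fail to be covered if no single filter expert dominates it over the specific sub-intervals relevant to $i$. This is exactly why the paper's proof cannot be a one-shot ranking/coupon argument: it instead builds, recursively, a decreasing chain of filter experts $j_r \preceq j_{r-1} \preceq \cdots \preceq j_1$ together with the sets $V_r(i)$ (arms that cover $i$ over a specific interval $\calI(t(i),t(j_{r-1}))$), $R_r$ (arms covered by many others), $O_r$ (arms entering after $j_r$), and $\calW_r$, and argues by induction that either many arms are evicted or the chain can be extended; the contradiction at depth $\approx\log(nT)$ comes from the losses being bounded by $T$, so eventually $V_r(i)=\calW_r$ for all $i$.

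A smaller but related discrepancy: you claim a factor-$4$ drop per non-terminating iteration, whereas the paper only establishes a per-iteration shrinkage of $\left(1-\frac{1}{6\log(nT)}\right)$ and obtains the overall factor $\tfrac14$ only after composing all $Q=16\log(nT)$ iterations via $\left(1-\frac{1}{6\log(nT)}\right)^{16\log(nT)} \le \tfrac14$. Your per-iteration constant-factor shrinkage would require the one-shot ranking argument to work, which, as above, it does not. To repair the proof you would need to replace the ranking/coupon step by the paper's recursive $V_r, R_r, O_r, \calW_r$ machinery (or something of equal strength) that respects the interval-wise, entry-time-dependent structure of the benchmark.
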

\begin{proof}
Suppose $\textsc{Filter}$ has been called for $q_{\max}\in[Q]$ times and for $q\in[q_{\max}]$, let $\calP_{C,q}$ be the pool $\calP_C$ at the beginning of the $q$-th time. 
Let $\calE$ be the event that the cost of each expert $i$ that has been in the pool for $\card{\calD}$ time is estimated correctly up to a multiplicative $\left(1+\O{\frac{n\log(nT)}{(\card{\calD})^{1-\rho}}}\right)$-approximation. 
We shall show that conditioned on $\calE$, then with probability at least $1-\frac{1}{\poly(nT)}$, we have either:
\begin{itemize}
\item 
$|\calP_{C,q}|\le2\log^9(nT)$
\item 
$|\calP_{C,q+1}| \le\left(1-\frac{1}{6\log(nT)}\right)\cdot|\calP_{C,q}|$. 
\end{itemize}
Observe that such a statement would then imply after taking a union bound over $k\in[k_{\max}]$ that
\begin{align*}
|\calP_{C,q_{\max}+1}|&\le\max\left(2\log^9(nT),\left(1-\frac{1}{6\log(nT)}\right)^{16\log(nT)}\cdot|\calP_{C,1}|\right)\\
&\le\max\left(2\log^9(nT),\frac{1}{4}(|\calP_A|+|\calP_B|)\right),
\end{align*}
since $|\calP_{C,1}|=|\calP_A|+|\calP_B|$. 

To show the claim, we fix $q\in[q_{\max}]$ and suppose by that $|\calP_{C,q}|>2\log^9(nT)$. 
Hence, it suffices to prove $|\calP_{C,q+1}| \le\left(1-\frac{1}{6\log(nT)}\right)\cdot|\calP_{C,q}|$ with high probability. 
Note that by a standard Chernoff bound, the estimated size $s$ satisfies
\[\PPr{s\le\log^5(nT)}\le\frac{1}{\poly(nT)}.\]

Now, we let $M=|\calP_{C,q}|$ and initialize $\calW_1=\calP_{C,q}$. 
For each $r\ge 1$ and each expert $i\in\calW_r$, we define
\[V_r(i)=\left\{j\in\calW_r\setminus\{i\}\,|\,\calL^{\calI(t(i), t(j_{r-1}))}(i)\ge\calL^{\calI(t(i), t(j_{r-1}))}(j)-2^{r-1}\right\},\]
so that $V_r(i)$ is the set of experts that will cover $i$ over the interval $\calI(t(i), t(j_{r-1}))$. 
We also define
\[R_r=\{i\in\calW_r\,|\,|V_r(i)|\ge\log^6(nT)\},\]
so that $R_r$ is the set of experts $i\in\calW_r$ that have $V_r(i)$ with large size. 
We shall ultimately argue that these experts are readily removed by our algorithm, as they are likely covered by the combination of the experts $j_1,\ldots,j_{r-1}$ in the filter in conjunction with some expert from $V_r$. 

Let $j_r$ be the expert in $\calW_r\setminus R_r$ that most recently entered the pool that has also been sampled into the filter $\calF_q$, so that 
\[j_r=\argmax_{j\in(\calW_r\setminus R_r)\cap\calF_q} E_j,\]
where $E_j$ is the entry time of expert $j$, if such an expert exists. 
Otherwise, we leave $j_r$ undefined. 
If $j_r$ is defined, we define the set $O_r$ to be the experts that enter the pool later than $j_r$, so that
\[O_r=\{i\in\calW_r\setminus R_r\,|\,E_i\ge E_{j_r}\}.\]
Finally, we define $\calW_{r+1}=\calW_r\setminus(R_r\cup O_r\cup V_r(j_r))$ for each $r\ge 1$. 

\paragraph{Induction base case.}
Let $r\ge 1$ and let $\calE_1$ be the event that $R_m\le\frac{1}{4\log(nT)}\cdot M$ for all $m\in[r]$, where we recall that $M=|\calP_{C,q}|$. 
We inductively show that
\begin{enumerate}
\item 
$|\calW_{r+1}|\ge\left(1-\frac{r}{2\log(nT)}\right)\cdot M$
\item
We have $j_1,\ldots,j_r\in\calF_q$ and $j_r\preceq j_{r-1}\preceq\ldots\preceq j_1$
\item 
For any $m\in[r]$ and expert $i\in\calW_{r+1}$, we have
\[\calL^{\calI(t(j_m), t(j_{m-1}))}(i)\ge\calL^{\calI(t(j_m), t(j_{m-1}))}(j_m)+2^{m-1}.\]
\end{enumerate}
We observe that the base case of $r=0$ holds immediately and suppose the statement holds up to $r-1$.

\paragraph{Inductive step.}
We now proceed with some casework on the size of $R_r$. 

\textbf{Case 1.} 
In the case where $|R_r|\ge\frac{1}{4\log(nT)}\cdot M$, we have that for each expert $i\in R_r$, each corresponding expert $j\in V_r(i)$ is sampled into the filter set $\calF_q$ with probability $\frac{1}{\log^4(nT)}$. 
Therefore, the probability that no expert $j\in V_r(i)$ is sampled into $\calF_q$ is at most $\left(1-\frac{1}{\log^4(nT)}\right)^{\log^6(nT)}\le\frac{1}{\poly(nT)}$, since $|V_r(i)|\ge\log^6(nT)$ for any $i\in R_r$ by definition. 
Hence with high probability, there exists an expert $j\in V_r(i)\cap\calF_q$. 
Then conditioned on $\calE$, we have:
\begin{align*}
\calL&^{\calI(t(i), t(j_0)}(i)=\calL^{\calI(t(i), t(j_{r-1})}(i)+\sum_{m=1}^{r-1}\calL^{\calI(t(j_m), t(j_{m-1})}(i)\\
&\ge\calL^{\calI(t(i), t(j_{r-1})}(i)+\sum_{m=1}^{r-1}\left(\calL^{\calI(t(j_m), t(j_{m-1})}(i)\right)-n\log(nT)\left(\frac{t(j_{m-1})-t(j_m)}{B}\right)^{\rho}\\
&\ge\left(\calL^{\calI(t(i), t(j_{r-1})}(j)-2^{r-1}\right)
+\sum_{m=1}^{r-1}\left(\calL^{\calI(t(j_m), t(j_{m-1}))}(j_m)+2^{m-1}\right)-n\log(nT)\left(\frac{t(j_{m-1})-t(j_m)}{B}\right)^{\rho}\\
&\ge\sum_{m=1}^{r-1}\left(\calL^{\calI(t(j_m), t(j_{m-1}))}(j_m)\right)+\calL^{\calI(t(i), t(j_{r-1})}(j)-1-n\log(nT)\left(\frac{t(j_{m-1})-t(j_m)}{B}\right)^{\rho}.
\end{align*}
Again conditioning on the event $\calE$, so that each expert has its loss well-approximated, we have
\begin{align*}
\calL^{\calI(t(i), t(j_0))}(i)&\ge\sum_{m=1}^{r-1}\left(\calL^{\calI(t(j_m), t(j_{m-1}))}(j_m)\right)+\calL^{\calI(t(i), t(j_{r-1})}(j)-3n\log(nT)\left(\frac{t(j_{m-1})-t(j_m)}{B}\right)^{\rho}.
\end{align*}
Hence with high probability, the set of experts $R_r\setminus\calF_q$ will be removed from the pool. 
Observe that the size of the filter $\calF_q$ is at most $\frac{2}{\log^4(nT)}$ with high probability by standard Chernoff bounds. 
Since we have $|R_r|\ge\frac{1}{4\log(nT)}\cdot M$, then by a union bound, we have that $|R_r\setminus\calF_q|\ge\frac{1}{6\log(nT)}$ with high probability. 

\textbf{Case 2.} 
In the other case, we have $|R_r|<\frac{1}{4\log(nT)}\cdot M$. 
Since $j_r$ is defined as the latest expert of $\calW_r\setminus R_r$ that is sampled into $\calF_q$ and each expert is sampled into $\calF_q$ with probability $\frac{1}{\log^4(nT)}$, then we have that with high probability, at most $\log^6(nT)$ experts in $\calW_r\setminus R_r$ arrive later than $j_r$. 
In other words, $|O_r|\le\log^6(nT)$ with high probability. 
Since we recursively defined $\calW_{r+1}=\calW_r\setminus(R_r\cup O_r\cup V_r(j_r))$, then we have
\begin{align*}
|\calW_{r+1}|&\ge|\calW_r|-|R_r|-|O_r|-|V_r(j_r)|\\
&\ge\left(1-\frac{r-1}{2\log(nT)}\right)\cdot M-\frac{1}{4\log(nT)}\cdot M-\log^6(nT)-\log^6(nT)\\
&\ge\left(1-\frac{r}{2\log(nT)}\right)\cdot M,
\end{align*}
where the last step is due to the assumption that $|M|\ge2\log^9(nT)$ in this case. 
We have $j_r\in\calF_q$ by definition and moreover $j_r\preceq j_{r-1}\preceq\ldots\preceq j_1$ since $\calW_r$ is a subset of $\calW_{r-1}$. 
Finally, since $V_r(j_r)\cap\calW_{r+1}=\emptyset$, then the inductive step is complete. 

Putting things together, observe that if $R_r<\frac{1}{4\log(nT)}\cdot M$ for all $r\in[\log(nT)+1]$, then $V_r(i)=\calW_r$ for all $i\in\calW_r$, since the loss of any expert $j$ is at most $T$. 
Therefore, $|R_r|=|\calW_r|\ge\left(1-\frac{\log(nT)}{2\log(nT)}\right)\cdot M=\frac{1}{2}\cdot M$, in which case we have a stronger guarantee that $|\calP_{C,q+1}|\le\frac{1}{2}\cdot|\calP_{C,q}|$. 
\end{proof}

\begin{lemma}
\lemlab{lem:num:experts}
For any epoch $\tau \in\left[\frac{T}{B}\right]$ and any $q\in[Q]$, we have that with high probability, $|\calP_q^{(\tau)}|\le2\log^9(nT)$.
\end{lemma}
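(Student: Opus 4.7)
The plan is to induct on the epoch index $\tau$ and show that, with probability at least $1-1/\poly(nT)$, every pool $\calP_k^{(\tau)}$ maintained by \algref{alg:eviction-with-fake-cost} satisfies the claimed bound. The base case $\tau=0$ is immediate since every pool starts empty.

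For the inductive step, I would fix $\tau$ and assume the bound holds at the start of epoch $\tau$ for every pool. Within epoch $\tau$, only two operations change pool sizes: random sampling into $\calP_1$, and the cascading merges triggered when $\pw(\tau)\ge 1$. For the sampling step, each of the $n$ experts enters $\calP_1$ independently with probability $1/n$, so a standard Chernoff bound shows that $|\calP_1|$ grows by at most $O(\log(nT))$ per epoch with high probability. Because $\calP_1$ is emptied whenever $\tau$ is even, it collects samples from at most two consecutive epochs before being reset, which gives $|\calP_1|=O(\log(nT))\le 2\log^9(nT)$ at all times.

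For the merging step, if $\pw(\tau)=m$ the algorithm performs $\textsc{Merge}(\calP_{k+1},\calP_k)$ sequentially for $k=1,\ldots,m$. I would run an inner induction on $k$: by the outer hypothesis together with the bound on $\calP_1$, both $|\calP_k|$ and $|\calP_{k+1}|$ entering the merge are at most $2\log^9(nT)$. The previous merge-size lemma then yields
\[
|\calP_{k+1}^{\mathrm{new}}| \le \max\!\left(2\log^9(nT),\; \tfrac{1}{4}\bigl(|\calP_k|+|\calP_{k+1}|\bigr)\right) \le \max\!\left(2\log^9(nT),\; \log^9(nT)\right) = 2\log^9(nT),
\]
after which $\calP_k$ is set to $\emptyset$ and the invariant is restored before the next merge in the cascade is invoked. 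The same bound applies uniformly to the intermediate pools $\calP_C$ arising across the $Q$ filter iterations inside \textsc{Merge}, since filtering is monotone (it only shrinks $\calP_C$), so each pool observed at any inner iteration $q\in[Q]$ inherits the bound from the pool before filtering.

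Finally, I would close with a union bound: there are at most $T/B$ epochs, $K=\log T$ merge levels per epoch, and $Q=16\log(nT)$ filter iterations per merge, giving $\poly(nT)$ probabilistic events in total, each failing with probability $1/\poly(nT)$. The main obstacle is the bookkeeping inside a single epoch when $\pw(\tau)$ is large: the nested induction must verify that after each cascading merge $\calP_k\to\calP_{k+1}$, the updated $\calP_{k+1}$ still satisfies the invariant before the subsequent merge involving $\calP_{k+1}$ and $\calP_{k+2}$ is invoked. The contraction factor $1/4$ in the merge-size lemma provides ample slack to absorb the $O(\log(nT))$-sized increments from $\calP_1$ and to maintain the invariant, but the ordering of events and the conditioning on past high-probability events must be tracked carefully.
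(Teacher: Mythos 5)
Your proof is correct and relies on exactly the same two ingredients as the paper's: a Chernoff bound to control $|\calP_1|$, and \lemref{lem:merge:size} to propagate the $2\log^9(nT)$ bound through merges. The only presentational difference is the order of the nested induction: you take the epoch index $\tau$ as the outer induction variable (with an inner induction over the cascade levels $k$ within a single epoch), whereas the paper inducts on the pool level $q$ (with the time index implicit). Both decompositions discharge the same proof obligations, so this is essentially the same argument rather than a genuinely different route; if anything, your explicit bookkeeping of the cascade ordering within an epoch makes the dependency structure slightly more transparent than the paper's terse version.
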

\begin{proof}
We argue by induction on the level $q\in[Q]$ of the pool.  
Note that for any fixed $\tau\in\left[\frac{T}{B}\right]$, by a standard Chernoff bound, we have that $|P_q^{(\tau)}|\le2\log^9n(nT)$ with high probability. 
Now suppose the claim holds up to level $q-1$. 
Since $\calP_q$ is empty at the beginning and merged with $\calP_{q-1}$ every $2^{q-1}$ epochs, then by \lemref{lem:merge:size}, we have that with high probability $|\calP_q^{(\tau)}|\le2\log^9(nT)$.
\end{proof}

\begin{lemma}
\lemlab{lem:pool:mem}
With high probability, we have that for all times $t\in[T]$, $|\calP^{(t)}|=\O{\log^{10}(nT)}$, and thus the memory used by \algref{alg:eviction-with-fake-cost} is at most $\O{\log^{20}(nT)}$ words of space. 
\end{lemma}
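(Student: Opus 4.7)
The plan is to bootstrap from the per-level bound of \lemref{lem:num:experts} by a union bound over levels and epoch boundaries, and then convert the bound on the total pool size into a bound on the number of stored quantities.

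First, I would observe that the pool $\calP^{(t)}$ only changes at the boundaries between epochs: new arms are added at the start of each epoch via $\bern{1/n}$ sampling, and merges/evictions only happen at epoch boundaries of index $\tau = C \cdot 2^{C'}$. Hence it suffices to bound $|\calP^{(\tau)}|$ for each $\tau \in [T/B]$. By \lemref{lem:num:experts}, for each fixed pair $(\tau,q)$ with $q \in [Q]$ we have $|\calP_q^{(\tau)}| \le 2\log^9(nT)$ with probability $1-1/\poly(nT)$. Since $K = \log T$ and the number of epochs is at most $T$, a union bound over the $KT/B \le \poly(nT)$ many $(\tau,q)$ pairs still yields failure probability $1/\poly(nT)$ after adjusting the polynomial exponent in the initial high-probability event. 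Summing over the $K = \log T$ sub-pools gives
\[
|\calP^{(t)}| \;=\; \sum_{q=1}^{K} |\calP_q^{(t)}| \;\le\; \log T \cdot 2\log^9(nT) \;=\; \O{\log^{10}(nT)}
\]
at every time $t \in [T]$ simultaneously.

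Next, I would turn this into a memory bound. Each expert in the pool contributes $\O{\log n}$ bits for its identity, so storing the pool itself costs $\O{\log^{10}(nT) \cdot \log n}$ bits. The dominant cost is maintaining, for each expert $i \in \calP$, the estimated cumulative losses $\widetilde{\calL^{\calI}}(i)$ needed by \algref{alg:strong-benchmark}: these are indexed by the intervals induced by the entry times of experts currently in $\calP$, of which there are at most $|\calP^{(t)}|$ many. Thus the number of scalar estimates stored is at most $|\calP^{(t)}|^2 = \O{\log^{20}(nT)}$, and since each estimate lies in $[0,T]$ (hence fits in $\O{\log(nT)}$ bits, i.e., one word), the total working memory is $\O{\log^{20}(nT)}$ words, as claimed. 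Accounting for the (constant number of) auxiliary scalars needed by \textsc{Merge} and \textsc{Filter} is absorbed into this bound.

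The main subtlety I would flag is not a calculation but an event-conditioning issue: \lemref{lem:num:experts} is stated per epoch, and the correctness of the merge argument in \lemref{lem:merge:size} itself conditions on the event $\calE$ that loss estimates are sufficiently accurate. Formally propagating the high-probability guarantees simultaneously across all epochs and all levels, while maintaining that the estimation-accuracy event continues to hold throughout, is where one must be careful to ensure the total failure probability remains $1/\poly(nT)$. This is handled by choosing the polynomial in the per-step failure bound large enough so that the union bound over $\poly(nT)$ many events still yields high probability.
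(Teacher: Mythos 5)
Your proposal is correct and follows essentially the same route as the paper: invoke \lemref{lem:num:experts} per level, sum over the $\O{\log(nT)}$ sub-pools to get $|\calP^{(t)}|=\O{\log^{10}(nT)}$, and then observe that the dominant memory cost is storing, for each expert in the pool, loss estimates indexed by the lifespans of the other pool experts, giving a quadratic $\O{\log^{20}(nT)}$-word bound. The extra care you take with the union bound over epochs and the event-conditioning subtlety is a reasonable elaboration of details the paper treats implicitly, but it is not a different argument.
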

\begin{proof}
By \lemref{lem:num:experts}, we have $|\calP_q^{(\tau)}|\le2\log^9(nT)$ for all $\tau\in\left[\frac{T}{B}\right]$ and any $q\in[Q]$. 
Since $Q=\O{\log(nT)}$ and $\calP^{(t)}$ is equal to the nearest $\calP^{(\tau)}$ for $\tau \in \left[\frac{T}{B}\right]$, we have $|\calP^{(t)}|\le\sum_{q\in[Q]}|\calP_q^{(\tau)}|=\O{\log^{10}(nT)}$. 
It remains to store the loss of each expert $i\in\calP^{(t)}$ across the lifespan of each expert $j\in\calP^{(t)}$, which requires $\O{\log^{10}(nT)}$ words of space for each expert $i$ due to the number of experts $j\in\calP^{(t)}$. 
Hence, the total memory used by \algref{alg:eviction-with-fake-cost} is at most $\O{\log^{20}(nT)}$ words of space. 
\end{proof}

\subsection{Regret analysis}
\label{subsec:baseline-regret-analysis}
We now move to the regret analysis. 
Compared to the case in \cite{pr23}, one complication in our algorithm is that to bound the \emph{actual} regret, we cannot simply use the cover in \Cref{def:cover} to conduct \emph{analysis}. 
This is due to the fact that we need to define \emph{good} and \emph{bad} epochs in the algorithm. 
Roughly speaking, a good epoch is defined as an epoch such that if the best expert $i^*$ joins the pool, it could eventually be covered by the filter.
However, if we use the covering notion as in \Cref{def:cover}, then what does it even mean to cover $i^*$ with \emph{random variables}?
As discussed in \Cref{sec:tech-overview}, one of the technical innovations in our analysis is to introduce the notion of \emph{conceptual cover}, defined as follows.
\begin{definition}[Conceptual cover of an expert]
\deflab{def:concept:cover}
Given a set $\calF$ of filter arms, and let expert $i$ be with lifespan $\calD$. 
Let $\calL^{\calD}(\benchmark)$ be constructed in a similar manner as $\widetilde{\calL^{\calD}(\benchmark)}$ but using the loss $\calL$ of each expert, rather than the estimated cost $\widetilde{\calL}$. 
We say that an arm $i$ is \emph{covered} if
\begin{align*}
\calL^{\calD}(i) \geq \calL^{\calD}(\benchmark) - 3C \cdot n\log(nT)\cdot (\card{\calD})^{\rho},
\end{align*}
where $C$ is an absolute constant.
\end{definition}

We now formalize the notion of the good and bad epochs in the same manner as in \cite{pr23}.
\begin{definition}[Active and passive experts]
    For any epoch $\tau\in [T/B]$, we say an expert is passive if the expert is not sampled during the filtration process, either into the filter or to estimate the size. 
\end{definition}

Fix the randomness used by the loss sequence, the active expert, and the \emph{exploration in EXP3}, for each epoch $a_\tau$ we assume the best expert $i^*$ joins the pool on epoch $a_\tau$.
We further define the \emph{eviction time} $t(a_{\tau})$ as the first time $i^*$ is \emph{conceptually covered} by the set of active experts. If $i^*$ is never conceptually covered, we add $a_\tau$ to the set of bad epochs $\calH$.

\begin{lemma}
\lemlab{lem:regret:partial}
Let $\calE$ be the event that the cost of each expert $i$ that has been in the pool for $\card{\calD(i)}$ time is estimated correctly up to a multiplicative $\left(1+\O{\frac{n\log(nT)}{\card{\calD(i)}^{1-\rho}}}\right)$-approximation. 
Condition on the event $\calE$. 
Furthermore, conditioning on a fixed loss sequence and set of sampled and active experts for each epoch, then with high probability, we have
\[\sum_{t=1}^{T/B}\sum_{b=1}^B\ell^{(t-1)B+b}(i_{(t-1)B+b})-\sum_{t=1}^{T}\ell^{t}(i^*)\le B\cdot|\calH|+\O{\frac{T}{B}\cdot\left(\gamma B\log(nT) + B^{\rho}n\log(nT)\right)}.\]
\end{lemma}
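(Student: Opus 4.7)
The plan is to decompose the total regret epoch by epoch and treat bad and good epochs separately. Writing $R_\tau = \sum_{b=1}^{B}\ell^{(\tau-1)B+b}(i_{(\tau-1)B+b})-\sum_{b=1}^{B}\ell^{(\tau-1)B+b}(i^*)$, the total regret equals $\sum_{\tau=1}^{T/B} R_\tau$. For each bad epoch $a_\tau\in\calH$, I would use the trivial bound $R_\tau\le B$ (since $\ell^t(\cdot)\in\{0,1\}$), which contributes at most $B\cdot|\calH|$ to the sum. The work then reduces to bounding the regret of every good epoch by $\O{\gamma B\log(nT)+B^{\rho}n\log(nT)}$.

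\textbf{Handling a good epoch.} Fix a good epoch $a_\tau\notin\calH$ and consider the conceptual eviction time $t(a_\tau)$ guaranteed by the definition: there is a sequence $j_1\succeq j_2\succeq\cdots$ of active filter arms (determined by the conditioning on the active set) such that the conceptual benchmark over the lifespan $\calD$ satisfies $\calL^{\calD}(\benchmark)\le\calL^{\calD}(i^*)+3Cn\log(nT)\cdot|\calD|^{\rho}$ by \defref{def:concept:cover}. Since \algref{alg:eviction-with-fake-cost} runs EXP3 with exploration rate $\gamma$ over the pool $\calP$ containing all of the filter experts $j_1,j_2,\ldots$, I would split the epoch into the subintervals $\calI_\ell$ induced by the entry times of these filter experts, and apply \Cref{lem:EXP3-with-approx-loss} on each subinterval $\calI_\ell$. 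Under the event $\calE$, the estimated losses maintained by the algorithm are accurate up to the slack built into \defref{def:cover}, so EXP3 incurs per-interval regret $\O{\sqrt{|\calI_\ell|/\gamma}\cdot n\polylog(nT)+\gamma|\calI_\ell|\polylog(nT)}$ against the piecewise minimizer $j_{\ell}$ that defines $\calL^{\calD}(\benchmark)$. Summing over the subintervals using concavity of $x\mapsto x^{1/2}$ (and the choice $\gamma=(n/B)^{1/3}$ so that the exploration and estimation terms are comparable) gives regret at most $\O{\gamma B\log(nT)+B^{\rho}n\log(nT)}$ against $\calL^{\calD}(\benchmark)$ per good epoch.

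\textbf{Putting it together.} Adding the cover slack $3Cn\log(nT)\cdot|\calD|^{\rho}\le 3Cn\log(nT)\cdot B^{\rho}$ between the benchmark and $\calL^{\calD}(i^*)$, the total regret over a good epoch is $\O{\gamma B\log(nT)+B^{\rho}n\log(nT)}$. Summing over all $T/B$ epochs and combining with the bad epochs gives the stated bound. The high-probability statement comes from applying Bernstein's inequality (\Cref{thm:bernstein}) to the per-day deviation $\widetilde{\ell^t}-\ell^t$ used implicitly in \Cref{lem:EXP3-with-approx-loss}; after conditioning on the loss sequence and the active/filter experts, the only randomness left is EXP3's exploration/sampling, so the increments are bounded martingale differences and a union bound over the $\O{T/B}$ epochs and $\O{\log^{10}(nT)}$ experts per pool (by \lemref{lem:pool:mem}) yields the $1-1/\poly(nT)$ guarantee.

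\textbf{Main obstacle.} The subtle step is the piecewise EXP3 argument: EXP3 is not restarted at each $t(j_\ell)$, so its weights are inherited across subintervals, and I need to argue that the regret against the piecewise benchmark decomposes cleanly. This is where the estimated-loss accuracy under $\calE$ is crucial, since it lets me compare the algorithm's sampling distribution in interval $\calI_\ell$ to the distribution concentrated on $j_\ell$ via a potential argument on $\sum_i\exp(-\gamma\widetilde{\calL}(i))$. Handling this, together with the additive slack introduced by the conceptual-versus-algorithmic distinction in cover, is the only non-routine part of the proof; everything else is a union bound and the regret of EXP3.
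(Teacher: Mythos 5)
Your proposal is correct and follows essentially the same route as the paper's proof. Both decompose the total regret epoch by epoch, bound each bad epoch trivially by $B$ (giving $B\cdot|\calH|$), and bound each good epoch by the per-epoch EXP3 regret plus the conceptual-cover slack amortized over the lifespan, summing to $\O{\frac{T}{B}\paren{\gamma B\log(nT)+B^{\rho}n\log(nT)}}$. The only structural difference is that the paper inserts the intermediate quantity $i^*_t$ (the best expert in the pool during epoch $t$): it first bounds the algorithm's loss against $\sum_t\calL^t(i^*_t)$ via EXP3 over \emph{all} epochs, and then bounds $\sum_t\calL^t(i^*_t)-\sum_t\calL^t(i^*)$ via the bad/good split, whereas you go directly to the piecewise benchmark. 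Both frames give the same bound. Regarding the obstacle you flag: it is real in principle, but it dissolves because \algref{alg:eviction-with-fake-cost} restarts EXP3 at the start of every epoch and the benchmark in \algref{alg:strong-benchmark} is itself aggregated epoch-wise, so the per-epoch EXP3 guarantee against the best expert in $\calP^{(t)}$ is all that is needed; one does not have to carry weights across the $t(j_\ell)$ boundaries or run a cross-interval potential argument.
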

\begin{proof}
For each $t\in\left[\frac{T}{B}\right]$, let $i_t^*$ be the best expert of pool $\calP^{(t)}$ during epoch $t$. 
Let $\calH$ be the set of bad epochs. 
Then with high probability, we have
\begin{align*}
\sum_{t=1}^{T/B}\sum_{b=1}^B\ell^{(t-1)B+b}(i_{(t-1)B+b})&-\sum_{t=1}^{T}\ell^{t}(i^*)=\sum_{t=1}^{T/B}\sum_{b=1}^B\ell^{(t-1)B+b}(i_{(t-1)B+b})-\sum_{t=1}^{T/B}\calL^t(i^*)\\
&=\sum_{t=1}^{T/B}\sum_{b=1}^B\ell^{(t-1)B+b}(i_{(t-1)B+b})-\sum_{t=1}^{T/B}\calL^t(i_t^*)+\sum_{t=1}^{T/B}\calL^t(i_t^*)-\sum_{t=1}^{T/B}\calL^t(i^*)\\
&\le\frac{T}{B}\cdot\left(\gamma B\log(nT) + B^{\rho}n\log(nT)\right)+\sum_{t=1}^{T/B}\calL^t(i_t^*)-\sum_{t=1}^{T/B}\calL^t(i^*),
\end{align*}
conditioned on (1) the success of EXP3, since the loss over each epoch $t$ is competitive with the best expert $i_t^*$ in the pool $\calQ^{(t)}$ during epoch $t$, and on (2), the high probability event that the exploration rate $\gamma$ is selected at most $\gamma B\log(nT)$ times over any epoch of length $B$.

We now decompose the epochs $t\in\left[\frac{T}{B}\right]$ into the bad epochs $t\in\calH$ and the good epochs $t\notin\calH$. 
\begin{align*}
\sum_{t=1}^{T/B}\calL^t(i_t^*)-\sum_{t=1}^{T/B}\calL^t(i^*)&\le\sum_{t\in\calH}\left(\calL^t(i_t^*)-\calL^t(i_t^*)\right)+\sum_{t\notin\calH}\left(\calL^t(i_t^*)-\calL^t(i_t^*)\right)\\
&\le B\cdot|\calH|+\frac{T}{B}\cdot\left(\gamma B\log(nT) + B^{\rho}n\log(nT)\right),
\end{align*}
again conditioning on the success of EXP3. 
This completes the regret bound as claimed. 
\end{proof}
We now upper bound the number of bad epochs. 
\begin{lemma}
\lemlab{lem:bad:epochs}
Let $\calH$ be the set of bad epochs. 
With high probability, we have $|\calH|\le\O{n\log^{10}(nT)}$.
\end{lemma}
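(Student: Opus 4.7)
The plan is to argue by contradiction using a probabilistic sampling argument on the best expert $i^*$. Suppose, towards contradiction, that $|\calH| > Cn\log^{10}(nT)$ for a sufficiently large absolute constant $C$. The central observation is that each epoch $\tau \in [T/B]$ independently samples $i^*$ into $\calP_1$ with probability $\tfrac{1}{n}$, and this randomness is independent of the randomness that determines whether $a_\tau$ is bad (the bad-epoch status is defined with respect to the fixed randomness of the loss sequence, the active experts, and EXP3 exploration). Hence, by a Chernoff bound, with probability at least $1 - 1/\poly(nT)$, at least one of the first $O(n\log(nT))$ bad epochs actually samples $i^*$. Let $\tau^*$ denote this first such ``sampled bad'' epoch.

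I would next establish that once $i^*$ is placed in the pool at epoch $\tau^*$, it remains in the pool for the entire remaining horizon. Since $a_{\tau^*}\in\calH$, the definition of bad epoch together with \defref{def:concept:cover} says that $i^*$ is never conceptually covered by the (fixed) active experts. Conditioning on the event $\calE$ from \lemref{lem:regret:partial}, the estimated loss of every expert over any duration $\calD$ differs from the true loss by at most a multiplicative $(1+\O{n\log(nT)/\card{\calD}^{1-\rho}})$ factor, i.e., an additive $\O{n\log(nT)\cdot\card{\calD}^\rho}$ term. Since the slack in \defref{def:cover} is $3C\cdot n\log(nT)\cdot\card{\calD}^\rho$, strictly larger than the estimation error, any expert not conceptually covered is also not approximately covered; consequently \algref{alg:filter-new} will never evict $i^*$.

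Next I would show that within $\polylog(nT)$ epochs after $\tau^*$, the expert $i^*$ lands in the filter set $\calF$ of some merge with probability $1 - 1/\poly(nT)$. Each time \algref{alg:merge-new} is invoked on a pool containing $i^*$, the expert is sampled into $\calF$ independently with probability $\tfrac{1}{\log^4(nT)}$, and by the hierarchical merging schedule (coupled with \lemref{lem:num:experts} guaranteeing that $i^*$ participates in enough merges while residing in $\calP=\calP_1\cup\cdots\cup\calP_K$), a standard Chernoff bound yields $\log^5(nT)$ independent chances within $\polylog(nT)$ epochs. Once $i^*\in\calF$, the benchmark constructed by \algref{alg:strong-benchmark} satisfies $\widetilde{\calL^{\calD}}(\benchmark)\le\widetilde{\calL^{\calD}}(i^*)$ for every later duration $\calD$, since the per-interval minimum is taken over $\calF\ni i^*$; under event $\calE$ this translates to $\calL^{\calD}(\benchmark)\le\calL^{\calD}(i^*)+\O{n\log(nT)\card{\calD}^\rho}$ so that $i^*$ is trivially (conceptually) covered, making every subsequent epoch good.

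Combining the three pieces, $|\calH|\le\tau^* + \polylog(nT) \le O(n\log(nT))+\polylog(nT) = O(n\log^{10}(nT))$, a union bound over the few low-probability events yielding the high-probability guarantee. The main obstacle I expect is the third step: carefully counting how many merges involve the pool containing $i^*$ within a short time window, because $i^*$ may climb quickly to high levels $\calP_k$ where merges occur only every $2^k$ epochs. One fix is to observe that level-wise the algorithm issues $\Theta(\log T)$ merges whose filter-sampling events are mutually independent, and that additional ``size-estimation'' samples during \algref{alg:merge-new} also make $i^*$ active; pooling these opportunities across $\polylog(nT)$ nearby epochs should suffice, but the bookkeeping is the delicate part of the proof.
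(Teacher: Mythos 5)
Your opening move (fix the loss sequence and active experts, observe that for a bad epoch $i^*$ cannot be active, and then use the independent $\Theta(1/n)$ passive-sampling probability of $i^*$ together with a Chernoff bound) is exactly the paper's; the divergence — and the gap — is in steps 5–6. You want to argue that once $i^*$ is passively in the pool, it soon lands in the filter $\calF$ of some merge, and that once $i^* \in \calF$ the benchmark always contains $i^*$, so ``every subsequent epoch is good.'' This does not follow for two related reasons. First, $\calF$ is re-sampled afresh at every call of \textsc{Merge}: $i^*\in\calF$ at one merge says nothing about $i^*\in\calF$ at later merges, so the benchmark over later durations need not include $i^*$. Second, and more structurally, the bad/good classification of an epoch $\tau'$ is a \emph{deterministic function of the fixed} $Y_{\tau'}$ (the sampled-and-active experts) and the fixed loss sequence; it is a counterfactual "if $i^*$ joined fresh at $\tau'$" defined before the passive-sampling coins are revealed. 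The observation that $i^*\in\calF$ at epoch $\tau'$ is precisely the event $i^*\in Y_{\tau'}$, which the classification already accounts for by marking such $\tau'$ good. It does not propagate forward to declare subsequent epochs good, and it cannot — doing so would make the classification depend on the passive sampling coins, which would break the independence you invoked in step 3. So your bound $|\calH|\le\tau^*+\polylog(nT)$ is not established.

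The paper instead stays entirely within the conditioned world: having fixed $Y_1,\ldots,Y_{T/B}$ and the loss sequence, it shows $\sum_{t\in\calH}\mathbbm{1}[i^*\in W_t]$ concentrates at $\Omega(|\calH|/n)$ and couples this quantity to the pool size, which by \lemref{lem:pool:mem} is $\O{\log^{10}(nT)}$ with high probability, yielding $|\calH|=\O{n\log^{10}(nT)}$. You should keep the conditioning and the passive-sampling Chernoff step, but the conclusion has to be reached through the pool-size bound (or some other invariant of the already-fixed execution), not through a claim that the bad-epoch set shrinks as $i^*$ joins.
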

\begin{proof}
Conditioning on a fixed loss sequence $\ell^1,\ldots,\ell^{T}$ and a fixed sequence of sampled and active experts $Y_t$ for each epoch $t$, let $W_t$ be the set of sampled and passive experts, so that $W_t\cap Y_t=\emptyset$. 
Let $\calH$ be the set of bad epochs and let $\calE$ be the event that the cost of each expert $i$ that has been in the pool for $D(i)$ time is estimated correctly up to a multiplicative $\left(1+\O{\frac{n\log(nT)}{\card{\calD(i)}^{1-\rho}}}\right)$-approximation. 
For any bad epoch $t\in\calH$, we have that conditioned on $\calE$, we must have $i^*\notin Y_t$ because otherwise $i^*$ would be evicted by itself, contradicting the definition of bad epoch $t$. 
Then we have
\begin{align*}
\PPr{i^*\in W_t\,|\,Y_1,\ldots,Y_{T/B},\ell^1,\ldots,\ell^T}&=\PPr{i^*\in W_t\,|\, i^*\notin Y_t}\\
&\ge\PPr{i^*\in W_t}=\frac{1}{n}\cdot\left(1-\frac{1}{\log^4(nT)}\right)^{2K\cdot 2Q}\ge\frac{1}{2n}.
\end{align*}
Since $i^*\in W_t$ is an independent event across all $t\in\calH$ conditioned on the fixing of the loss sequence and the active and sampled experts, then by standard Chernoff bounds, we have
\begin{align*}
\PPr{|\calQ|\le\frac{\card{\calH}}{4n}\,\mid\,Y_1,\ldots,Y_{T/B},\ell^1,\ldots,\ell^T}&\le\PPr{\sum_{t\in\calH}\mathbbm{1}[i^*\in W_t]\le\frac{\card{\calH}}{4n}\,\mid\,Y_1,\ldots,Y_{T/B},\ell^1,\ldots,\ell^T}\\
&\le\exp\left(-\frac{|\calH|}{16n}\right),
\end{align*}
where $\calQ$ is the entire pool at time $t$. 
By \lemref{lem:pool:mem}, we have that $|\calQ|=\O{\log^{10}(nT)}$ with high probability. 
Therefore, we have that with high probability, $|\calH|\le\O{n\log^{10}(nT)}$. 
\end{proof}

\begin{lemma}
\lemlab{lem:gen:regret:mult}
Let $\calE$ be the event that the cost of each expert $i$ that has been in the pool for $D(i)$ time is estimated correctly up to a multiplicative $\left(1+\O{\frac{n\log(nT)}{\card{\calD(i)}^{1-\rho}}}\right)$-approximation, where we have $D(i)\ge B$. 
Condition on the event $\calE$. 
We have with high probability, the regret of \algref{alg:eviction-with-fake-cost} is at most
\begin{align*}
\paren{nB + \gamma\cdot T + nT\cdot B^{\rho-1}}\cdot \polylog{(nT)}.
\end{align*}
\end{lemma}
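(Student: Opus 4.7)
The plan is to combine the two building blocks already established: the conditional regret decomposition in \lemref{lem:regret:partial} and the bound on the number of bad epochs in \lemref{lem:bad:epochs}. Both statements are conditioned on the event $\calE$ that per-expert loss estimates over their respective lifespans are accurate up to the prescribed multiplicative factor, and on a fixing of the loss sequence together with the sets of sampled and active experts across the epochs, so we can apply them simultaneously via a union bound over the two high-probability events.

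Concretely, I would first invoke \lemref{lem:regret:partial} to obtain, with high probability, the bound
\begin{align*}
\sum_{t=1}^{T/B}\sum_{b=1}^{B}\ell^{(t-1)B+b}(i_{(t-1)B+b})-\sum_{t=1}^{T}\ell^{t}(i^*) \;\leq\; B\cdot|\calH| + \O{\tfrac{T}{B}\cdot\left(\gamma B\log(nT) + B^{\rho}\,n\log(nT)\right)}.
\end{align*}
Then I would substitute the high-probability upper bound $|\calH|\le\O{n\log^{10}(nT)}$ from \lemref{lem:bad:epochs} into the first term, which yields $B\cdot|\calH|\le nB\cdot\polylog(nT)$. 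Simplifying the remaining two terms gives $\gamma T\cdot\polylog(nT)$ and $nT\cdot B^{\rho-1}\cdot\polylog(nT)$, respectively, matching exactly the three summands in the claimed bound.

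Since both \lemref{lem:regret:partial} and \lemref{lem:bad:epochs} hold with probability $1-\frac{1}{\poly(nT)}$ after conditioning on $\calE$ and on a fixed realization of the loss sequence and active-expert sets, a single union bound preserves the high-probability guarantee. The only mild subtlety is ensuring that the randomness being conditioned on in the two lemmas is the same, but both statements are phrased with respect to the same conditioning (the loss sequence and the sampled/active experts per epoch), so the combination is immediate. No genuine obstacle arises here; the lemma is essentially a bookkeeping consequence of the two prior results, and the proof is a short calculation rather than a new argument.
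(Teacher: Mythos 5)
Your proposal is correct and matches the paper's own proof: both invoke \lemref{lem:regret:partial} for the epoch-decomposed regret bound, substitute the bad-epoch count from \lemref{lem:bad:epochs}, and simplify. The additional remark about consistency of conditioning is sound but not a departure from the paper's argument.
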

\begin{proof}
By \lemref{lem:regret:partial}, the regret is at most $B\cdot|\calH|+\O{\frac{T}{B}\cdot\left(\gamma B\log(nT) + B^{\rho}n\log(nT)\right)}$. 
By \lemref{lem:bad:epochs}, we have that with high probability, $|\calH|\le\O{n\log^{10}(nT)}$. 
Therefore, the regret is at most $Bn\cdot\polylog(nT)+\O{\frac{T}{B}\cdot\left(\gamma B\log(nT) + B^{\rho}n\log(nT)\right)}$, which could be simplified to the form as in the lemma statement.
\end{proof}

We now give the main lemma of the regret for our baseline algorithm.
\begin{lemma}
With parameters $\gamma=(\frac{n}{B})^{1-\rho}$ for $\rho=\frac{2}{3}$ and $B=\O{T^{3/4}}$, the regret of \algref{alg:eviction-with-fake-cost} is $n T^{3/4}\cdot \polylog(n)$ with probability at least $1-1/\poly(nT)$.
\end{lemma}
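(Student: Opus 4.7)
The plan is to derive the lemma as a direct consequence of \lemref{lem:gen:regret:mult} by (i) verifying that the conditioning event $\calE$ on the accuracy of the estimated losses holds with high probability under the stated parameter choices, and (ii) plugging in $\gamma=(n/B)^{1/3}$, $\rho=2/3$, $B=T^{3/4}$ and checking that each of the three terms in the bound $(nB+\gamma T+nT\cdot B^{\rho-1})\cdot\polylog(nT)$ reduces to $nT^{3/4}\cdot\polylog(nT)$.

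First I would verify that event $\calE$ holds with high probability. By \lemref{lem:pool:mem}, the pool size is $\polylog(nT)$ at all times, so within any epoch of length $B$ the EXP3 sampling probability assigned to each expert $i$ in the current pool is at least $\gamma/\polylog(nT)$. Consequently, the unbiased estimator $\widetilde{\calL^{\calD}}(i)=\sum_{t\in\calD:\,i_t=i}\ell^t(i)/P_t(i)$ for a lifespan $\calD$ of length at least $B$ is a sum of independent bounded random variables with per-step magnitude $\O{\polylog(nT)/\gamma}$ and variance contribution $\O{\polylog(nT)/\gamma}$ per day. Applying Bernstein's inequality (\Cref{thm:bernstein}) gives an additive deviation of order $\sqrt{\card{\calD}\cdot\polylog(nT)/\gamma}$ with probability $1-1/\poly(nT)$; a union bound over the $\polylog(nT)$ experts in the pool and the $\poly(T)$ possible intervals $\calD$ preserves high probability. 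With $\gamma=(n/B)^{1/3}$ and $B\geq T^{3/4}$, a direct calculation shows that this additive error is at most $\card{\calD}^\rho\cdot n\log(nT)$, which corresponds exactly to the multiplicative error tolerance $(1+\O{n\log(nT)/\card{\calD}^{1-\rho}})$ required by $\calE$.

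Conditioned on $\calE$, I then apply \lemref{lem:gen:regret:mult} and plug in the parameters. The term $nB$ becomes $nT^{3/4}$; the exploration term $\gamma T=(n/B)^{1/3}\cdot T=n^{1/3}T^{-1/4}\cdot T=n^{1/3}T^{3/4}\leq nT^{3/4}$; and the cover-slack term $nT\cdot B^{\rho-1}=nT\cdot B^{-1/3}=nT\cdot T^{-1/4}=nT^{3/4}$. Each of the three terms is thus bounded by $nT^{3/4}\cdot\polylog(nT)$, yielding the claimed overall regret bound.

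The main obstacle is the accuracy step for $\calE$: ensuring that the EXP3 exploration rate $\gamma=(n/B)^{1/3}$ is large enough, given a pool size of $\polylog(nT)$, so that every expert in the pool is sampled enough times for Bernstein's inequality to deliver a sufficiently tight multiplicative approximation across all relevant sub-intervals simultaneously. Once this check is in place, combining it with \lemref{lem:gen:regret:mult} via a final union bound over the polynomially many events (successful merges from \lemref{lem:merge:size}, successful pool-size bound from \lemref{lem:num:experts}, successful loss estimation, and the good/bad epoch bound of \lemref{lem:bad:epochs}) gives the $1-1/\poly(nT)$ success probability, completing the proof.
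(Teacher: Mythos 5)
Your proposal takes essentially the same route as the paper. The paper's proof is a two-line application of \lemref{lem:regret:partial} and \lemref{lem:bad:epochs}, which is precisely what \lemref{lem:gen:regret:mult} packages together; you invoke \lemref{lem:gen:regret:mult} and then plug in the parameters, and your arithmetic ($nB$, $\gamma T=n^{1/3}T^{3/4}$, $nT B^{\rho-1}=nT^{3/4}$) matches the paper's. Your proposal is actually a bit more complete than the paper's own proof, because you explicitly verify that the conditioning event $\calE$ holds with high probability under these parameter settings (checking that the Bernstein deviation $\sqrt{|\calD|/\gamma}\cdot\polylog(nT)$ is dominated by $n\log(nT)\,|\calD|^\rho$ once $|\calD|\geq B$), whereas the paper simply applies \lemref{lem:regret:partial} and \lemref{lem:bad:epochs} without discharging the assumption that $\calE$ occurs with high probability.

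One technical caveat in the verification of $\calE$: the per-day terms $\widetilde{\ell^t}(i)=\ell^t(i)\mathbbm{1}[i_t=i]/P_t(i)$ are \emph{not} independent across $t$, since $P_t(i)$ depends on the past draws; they form a martingale-difference sequence. To make the Bernstein argument rigorous you should invoke a martingale form of Bernstein's inequality (Freedman) rather than \Cref{thm:bernstein} for independent variables. The variance and range bounds you compute are exactly what Freedman needs, so the conclusion is unchanged, but the citation to the i.i.d.\ Bernstein inequality is not strictly applicable as stated.
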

\begin{proof}
By \lemref{lem:regret:partial}, the regret is at most $B\cdot|\calH|+\O{\frac{T}{B}\cdot\left(B^{2/3}n\log(nT)\right)}$. 
By \lemref{lem:bad:epochs}, we have that with high probability, $|\calH|\le\O{n\log^{10}(nT)}$. 
Therefore, the regret is $n T^{3/4}\cdot \polylog(nT)$ for $B=\O{T^{3/4}}$. 
\end{proof}

\section{Achieving \texorpdfstring{$\sqrt{T}$}{sqrtT} Regret with Two-Query Signals in the Streaming Model}
\label{sec:two-query-alg}
En route to our sliding-window algorithm with interval regret and two queries on each day, we now consider the problem of online learning over $T$ days with two queries per day, i.e., the streaming regret minimization with two queries on each day.
We will show an algorithm that achieves $\Otilde(n\sqrt{T})$ regret with $\polylog(nT)$ space, which is an important intermediate step toward the sliding-window regret. 
To articulate our main ideas, we focus on obtaining the optimal regret on $T$ (which is $\sqrt{T}$) but not on $n$, and we will obtain \Cref{thm:regret-space-two-query-boosting} in this section. We defer the optimal boosting to \Cref{sec:monocarpic-boosting}.

We start with modifying the baseline algorithm to the two-query setting. 
Here, we are going to leverage the \emph{lossless} signal to provide low-variance estimations for the losses of each arm, hence achieving $T^{2/3}$ (instead of $T^{3/4}$) regret on a single baseline algorithm. 
The modified baseline algorithm is as \algref{alg:two-query:baseline}.

\begin{algorithm}[!htb]
\caption{Baseline algorithm $\baseline_0$}
\alglab{alg:two-query:baseline}
\algorithmicrequire{Time horizon $T$, estimated losses $\widetilde{\calL(i)}$ for all arms $i\in\calP$ over their duration in pool $\calP$}
\begin{algorithmic}[1]
\State{$\calP\gets\emptyset$, $B\gets\sqrt{T}\cdot\polylog(nT)$, $\eta=\frac{1}{\sqrt{T}}$}
\For{each epoch of length $B$}
\State{Sample each arm into $\calP$ with probability $\frac{1}{n}$}
\State{Maintain the same hierarchical structure as in Section~\ref{sec:PR-alg-est-eviction}}
\For{each time $t$ in the epoch}
\State{Update $\widetilde{\calL(i)}$ by uniformly sample an arm in $\calP$ and pull it with the \emph{lossless} signal}
\State{Play arm $i$ with probability proportional to $\exp(-\eta\widetilde{\calL(i)})$ as in \Cref{alg:EXP3-exploration-two-query}}
\NoNumber{\Comment{Do not update the estimated losses}}
\EndFor
\State{Using estimated losses, evict arms covered by the filter}
\EndFor
\end{algorithmic}
\end{algorithm}

\begin{lemma}
\lemlab{lem:bl:zero:regret}
Suppose that at all times $t\in[T]$, the estimates $\widetilde{\calL(i)}$ for the loss $\calL(i)$ of arm $i$ across its duration $\calD$ with $\card{\calD}=D$ in $\calP$ has additive error at most $\sqrt{D}\cdot\polylog(nT)$. 
Then the regret of \algref{alg:two-query:baseline} is at most $nT^{2/3}\cdot\polylog(nT)$. 
Furthermore, the memory used by \algref{alg:two-query:baseline} is at most $\polylog(nT)$ bits.
\end{lemma}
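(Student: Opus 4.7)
The argument is a clean adaptation of the single-query baseline analysis in \secref{PR-alg-est-eviction}, where we replace the exploration-driven loss estimates of EXP3 (which carry additive error $\sqrt{D/\gamma}\cdot\polylog(nT)$) by the hypothesized $\sqrt{D}\cdot\polylog(nT)$-accurate estimates maintained via the free second query. Accordingly, the eviction rule is instantiated with cover exponent $\rho=1/2$ in \Cref{def:cover} (and in the matching conceptual-cover definition), so that the cover slack tracks the improved estimation accuracy.

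\emph{Memory.} Since the pool-maintenance logic (sampling, filtering, and merging in \algref{merge-new} and \algref{filter-new}) is identical to that of \secref{PR-alg-est-eviction}, \lemref{merge:size}, \lemref{num:experts}, and \lemref{pool:mem} carry over verbatim, so that $|\calP^{(t)}|=\O{\log^{10}(nT)}$ with high probability. For each expert in $\calP$ we only need to store its estimated loss across the $\O{\log^{10}(nT)}$ sublifespans induced by other current pool members, each representable in $\polylog(nT)$ bits, for a total of $\polylog(nT)$ bits.

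\emph{Regret.} I would decompose the horizon into epochs of length $B$ and split the epochs into good and bad according to the conceptual cover, which is defined purely in terms of the true losses. This keeps the bad-epoch classification independent of the randomness of the second query, so the Chernoff argument in \lemref{bad:epochs} transfers without modification and yields $|\calH|\le\O{n\polylog(nT)}$, contributing $B\cdot|\calH|$ to the regret. For a good epoch, the estimates $\widetilde{\calL(i)}$ over the $\polylog(nT)$ pool members have additive error $\sqrt{B}\cdot\polylog(nT)$ by hypothesis, so \Cref{lem:EXP3-with-approx-loss} (effectively with $\gamma=1$, since the second query already supplies low-variance estimates) gives per-epoch regret $\sqrt{B}\cdot n\polylog(nT)$ against the pool's best expert. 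The eviction slack $n\polylog(nT)\cdot\sqrt{D}$ contributes an additional $\sqrt{B}\cdot n\polylog(nT)$ per epoch toward the gap between the pool's best expert and the filter benchmark. Mirroring \lemref{regret:partial}, summing over the $T/B$ epochs yields
\begin{equation*}
\left(nB + \frac{nT}{\sqrt{B}}\right)\cdot\polylog(nT).
\end{equation*}
Balancing at $B \asymp T^{2/3}$ gives the claimed $nT^{2/3}\cdot\polylog(nT)$ regret.

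\emph{Main obstacle.} The delicate point is that, unlike in \secref{PR-alg-est-eviction}, our play distribution $p_t(i)\propto\exp(-\eta\,\widetilde{\calL(i)})$ is driven by \emph{estimated} rather than observed losses, so we must confirm that the good-epoch regret analysis only ever invokes the hypothesis on $\widetilde{\calL(i)}$ and not finer properties of the original EXP3 update. This is exactly what \Cref{lem:EXP3-with-approx-loss} supplies: it controls the regret of an exponential-weighted sampler purely through the loss-estimation error. A secondary subtlety is that the bad-epoch Chernoff bound relies on the sampling of passive experts being independent of the estimation randomness conditional on the loss sequence and active experts; this is precisely why bad epochs must be classified using the conceptual cover (true losses) rather than the estimated-cost cover used operationally by the algorithm.
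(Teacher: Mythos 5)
Your proposal is correct and takes essentially the same approach as the paper: both set the cover exponent to $\rho=1/2$, reuse the pool-maintenance lemmas from \secref{PR-alg-est-eviction} for the memory bound, and obtain the regret via the good/bad epoch decomposition with $|\calH|\le n\polylog(nT)$ bad epochs and $\sqrt{B}\cdot n\polylog(nT)$ per good epoch, balancing at $B\asymp T^{2/3}$. The only cosmetic difference is that you re-derive the decomposition explicitly from \lemref{lem:regret:partial} and \lemref{lem:bad:epochs}, whereas the paper observes that \algref{alg:two-query:baseline} is \algref{alg:eviction-with-fake-cost} with $\gamma=0$, $\rho=1/2$ and invokes \lemref{lem:gen:regret:mult} directly, which packages the same steps.
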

\begin{proof}
Observe that the probability vector computed by $\baseline$ in \algref{alg:two-query:baseline} is precisely the probability vector of MWU. 
Moreover, \algref{alg:eviction-with-fake-cost} uses EXP3 as a subroutine, which is simply MWU on estimated losses for each arm in the pool $\calP$. 
Hence conditioned on the estimates $\widetilde{\calL(i)}$ for the loss $\calL(i)$ of arm $i$ across its duration in $\calP$ having additive error at most $\sqrt{T}\cdot\polylog(nT)$, then \algref{alg:two-query:baseline} is identical to \algref{alg:eviction-with-fake-cost} with parameters $\gamma=0$ and $\rho=\frac{1}{2}$. 
Note that here, by \Cref{prop:two-query-EXP3}, the expected regret on each epoch by running the EXP3 algorithm is $B^\rho \polylog(nT)$. 
Therefore, we can apply \lemref{lem:gen:regret:mult} and conclude that the expected regret of \algref{alg:two-query:baseline} is at most 
\[Bn\cdot\polylog(nT)+\O{\frac{T}{B}\cdot\left(\gamma B\log(nT) + B^{\rho}n\log(nT)\right)} \leq \left(\frac{nT}{B}\cdot\sqrt{B}+Bn\right)\cdot\polylog(nT).\] 
Thus for $B=T^{2/3}$, we have that the expected regret of \algref{alg:two-query:baseline} is at most $nT^{2/3}\cdot\polylog(nT)$. 

For the memory complexity, conditioning on the assumption that the estimation $\widetilde{\calL(i)}$ for the loss $\calL(i)$ of arm $i$ across its duration in $\calP$ having additive error at most $\sqrt{T}\cdot\polylog(nT)$, we can  use \lemref{lem:merge:size} with $\rho=1/2$, i.e., we evict with $\sqrt{T}\polylog(nT)$ additive error, the guarantees of pool size merging still hold. Therefore, the memory complexity follows by at most $\polylog(nT)$ bits.
\end{proof}

We now ``boost'' this algorithm to regret $n T^{1/2} \poly(\polylog{nT})$ with an idea similar to those in \cite{pr23}. However, since we work with partial information, we cannot simply claim that the regret for each level of the baselines is low. Instead, we need to bound the cost of each arm obtained by the signals from the uniform exploration. 

The key observation here is that if we \emph{estimate} the losses of experts and algorithms via uniform sampling, the eviction time is still well-defined once we fix the randomness of the losses $\{\ell^{t}\}_{t=1}^{t}$, the set of active experts, and the \emph{second query}. 
As such, we are still able to conduct the analysis in the same manner of \cite{pr23}.

\begin{algorithm}[!htb]
\caption{Two-query algorithm $\baseline_k$}
\alglab{alg:two-query}
\algorithmicrequire{Time horizon $T$}
\begin{algorithmic}[1]
\State{Initialize an instance $\calA_1$ of $\baseline_{k-1}$ with time horizon $T$}
\State{$B\gets n^{(2-2^{k+2})/(2^{k+2}-1)}\cdot T^{1-\frac{1}{2^{k+2}-1}}$}
\For{each epoch of length $B$}
\State{$\widetilde{\calL(\calA_1)}, \widetilde{\calL(\calA_2)}\gets 0$}
\State{Initialize an instance $\calA_2$ of $\baseline_{k-1}$ with time horizon $B$}
\State{Initialize EXP3 on $\calA_1$ and $\calA_2$}
\For{each $t\in[T]$}
\State{Let $\calP$ be the pool of arms maintained by $\calA_1$ and $\calA_2$}
\State{\textbf{with} probability $\frac{1}{2}$}
\State{\hspace{1.5em}Observe a random arm $i\in\calP$ with the regretless signal}
\State{\hspace{1.5em}Increase $\widetilde{\calL(i)}$ by ${2|\calP|}\cdot\ell^t(i)$} 
\State{\hspace{1.5em}Update $\calA_1$ and $\calA_2$ with $\widetilde{\calL(i)}$}
\State{\textbf{otherwise}}
\State{\hspace{1.5em}Choose $j\in\{1,2\}$ uniformly at random}
\State{\hspace{1.5em}Observe the arm selected by $\calA_j$ with the regretless signal}
\State{\hspace{1.5em}Increase $\widetilde{\calL(\calA_j)}$ by $4\ell^t(i)$}
\State{Use MWU on $\widetilde{\calL(\calA_1)}$ and $\widetilde{\calL(\calA_2)}$ to choose an arm to pull}
\NoNumber{\Comment{Do not update the estimated losses}}
\EndFor
\EndFor
\end{algorithmic}
\end{algorithm}

We first show that the true losses of each arm over its duration in the pool is well-estimated by the algorithm.
\begin{lemma}
\lemlab{lem:two:arms:acc}
Consider \algref{alg:two-query} and suppose $\ell^t(i)\in[0,1]$ for all $t\in[T]$. 
For any arm $i\in[n]$ that has been in the pool $\calP$ over a duration $\calD$ of length $D$, we have
\[\left\lvert\widetilde{\calL(i)}-\sum_{t\in\calD}\ell^t(i)\right\rvert\le\sqrt{D}\cdot\polylog(nT),\]
with high probability.
\end{lemma}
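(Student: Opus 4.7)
The plan is to write $\widetilde{\calL(i)} = \sum_{t\in\calD} X_t$, show this is an unbiased estimator of $\sum_{t\in\calD}\ell^t(i)$ via a martingale argument, and then apply a Bernstein/Freedman-type concentration inequality together with the pool-size bound from \lemref{lem:pool:mem}.

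First, I would define $X_t = 2|\calP^{(t)}|\cdot\ell^t(i)$ at those time steps $t\in\calD$ where the ``with probability $\tfrac{1}{2}$'' branch of \algref{alg:two-query} is selected and arm $i$ happens to be the uniformly chosen arm in the pool, and $X_t=0$ otherwise. Let $\calF_{t-1}$ denote the filtration including everything up to (and including) the pool $\calP^{(t)}$ at time $t$. Since $i\in\calP^{(t)}$ for $t\in\calD$, we have
\[
\EEx{}{X_t \mid \calF_{t-1}} \;=\; \frac{1}{2}\cdot \frac{1}{|\calP^{(t)}|}\cdot 2|\calP^{(t)}|\cdot\ell^t(i) \;=\; \ell^t(i),
\]
so $M_t := X_t - \ell^t(i)$ is a martingale difference sequence with respect to $\{\calF_t\}$.

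Second, I would condition on the high-probability event $\calE$ from \lemref{lem:pool:mem} that $|\calP^{(t)}|\le\polylog(nT)$ for every $t\in[T]$. Under $\calE$ (and using $\ell^t(i)\in[0,1]$), we have $|M_t|\le 2|\calP^{(t)}|+1 \le \polylog(nT)$ and
\[
\Var(M_t\mid\calF_{t-1}) \;\le\; \EEx{}{X_t^2\mid\calF_{t-1}} \;=\; \frac{1}{2|\calP^{(t)}|}\cdot\bigl(2|\calP^{(t)}|\bigr)^2\ell^t(i)^2 \;\le\; 2|\calP^{(t)}|\;\le\;\polylog(nT).
\]
Summing over $t\in\calD$ gives total conditional variance $\sigma^2\le D\cdot\polylog(nT)$.

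Third, I would invoke Freedman's martingale version of Bernstein's inequality (\Cref{thm:bernstein} in its martingale form) with threshold $\tau = \sqrt{D}\cdot\polylog(nT)$: the probability that $\bigl|\sum_{t\in\calD}M_t\bigr|\ge\tau$ is at most $2\exp\!\left(-\tfrac{\tau^2}{2\sigma^2 + \tfrac{2}{3}\polylog(nT)\cdot\tau}\right)$, which is $\tfrac{1}{\poly(nT)}$ once the $\polylog$ factors are chosen appropriately. Combined with a union bound over the $\polylog(nT)$ many arms ever in the pool (from \lemref{lem:pool:mem}), and over the $\poly(T)$ possible durations $\calD$, this yields the claimed bound for every arm simultaneously with high probability.

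The main obstacle is that $|\calP^{(t)}|$ is itself a random quantity determined by past loss estimates and eviction decisions, so the $X_t$ are not i.i.d.\ and a plain Bernstein bound does not directly apply. This is precisely why we need the martingale framing together with the uniform pool-size bound: once we condition on $\calE$, the increments are uniformly bounded by $\polylog(nT)$ and the conditional-variance control gives $\sqrt{D}\cdot\polylog(nT)$ additive error, rather than the trivial $D$ error one would get without $\calE$.
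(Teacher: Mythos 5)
Your proof is correct and follows essentially the same route as the paper: condition on the high-probability pool-size bound, observe that each per-step estimate is unbiased with conditional second moment $\O{|\calP_t|}\cdot\ell^t(i)^2$, and apply a Bernstein-type tail bound. The one place you go beyond the paper is in explicitly framing the increments as a martingale difference sequence and invoking Freedman's inequality rather than the i.i.d.\ Bernstein bound; this is in fact the more careful version, since, as you note, $|\calP^{(t)}|$ is itself a past-measurable random variable so the increments are not independent, and the paper's appeal to \Cref{thm:bernstein} (stated for independent variables) is slightly informal on this point. Your martingale framing closes that gap without changing the substance or the resulting bound.
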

\begin{proof}
For any $t\in\calD$, let $\widetilde{\ell^t(i)}$ be the contribution of the estimate due to time $t$ toward $\widetilde{\calL(i)}$, so that $\widetilde{\calL(i)}=\sum_{t\in D(i)}\widetilde{\ell^t(i)}$, where $D(i)$ is the lifespan of $i$. 
Let $\calP_t$ denote the state of $\calP$ at time $t$. 
Let $\calE$ denote the event that $|\calP_t|\le\polylog(nT)$ for all $t\in[T]$ so that $\PPr{\calE}\ge1-\frac{1}{\poly(nT)}$ by \lemref{lem:bl:zero:regret}. 
We have that $\widetilde{\ell^t(i)}={2|\calP_t|}\cdot\ell^t(i)$ with probability $\frac{1}{2|\calP_t|}$ and otherwise, $\widetilde{\ell^t(i)}=0$ with probability $1-\frac{1}{2|\calP_t|}$. 
Then conditioned on $\calE$, we have 
\[\Ex{\widetilde{\ell^t(i)}}=\ell^t(i),\qquad\Ex{(\widetilde{\ell^t(i)})^2}=(\ell^t(i))^2\cdot\polylog(nT).\]
Hence by Bernstein's inequality, c.f., \Cref{thm:bernstein}, we have that with high probability,
\[\left\lvert\widetilde{\calL(i)}-\sum_{t\in\calD}\ell^t(i)\right\rvert\le\sqrt{D}\cdot\polylog(nT),\]
when $\ell^t(i)\in[0,1]$ for all $t\in[T]$. 
\end{proof}
We next show that the true losses of each baseline is well-estimated by the algorithm.
\begin{lemma}
\lemlab{lem:two:meta:acc}
Consider \algref{alg:two-query} and suppose $\ell^t(i)\in[0,1]$ for all $t\in[T]$. 
For a fixed $t\in[T]$ and each $j\in\{1,2\}$, let $\calL(\calA_j)$ be the loss of $\calA_j$ up to and including time $t$, and let $\widetilde{\calL(\calA_j)}$ be the estimate. 
Then, for any epoch with length $B$, 
\[\left\lvert\widetilde{\calL(\calA_j)}-\calL(\calA_j)\right\rvert\le\sqrt{B}\cdot\polylog(nT),\]
with high probability.
\end{lemma}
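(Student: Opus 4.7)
The plan is to exhibit $\widetilde{\calL(\calA_j)} - \calL(\calA_j)$ as a sum of a bounded martingale difference sequence with small conditional variance, and then apply a martingale extension of Bernstein's inequality (Freedman's inequality). Let $i_j^t$ denote the (random) arm that $\calA_j$ would recommend on day $t$, and let $\widetilde{\ell^t(\calA_j)}$ denote the increment contributed to $\widetilde{\calL(\calA_j)}$ on day $t$, so that within the current epoch $\calL(\calA_j) = \sum_{t} \ell^t(i_j^t)$ and $\widetilde{\calL(\calA_j)} = \sum_{t} \widetilde{\ell^t(\calA_j)}$.

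Let $\calG_{t-1}$ be the filtration generated by all coin flips, random arm observations, loss updates, and the internal randomness of $\calA_1,\calA_2$ up through day $t-1$, together with whatever internal bits $\calA_j$ uses at day $t$ to draw $i_j^t$; in particular $i_j^t$ is $\calG_{t-1}$-measurable and $\ell^t(i_j^t)$ is determined (the losses themselves are oblivious). By inspection of \algref{alg:two-query}, conditional on $\calG_{t-1}$, the outer biased coin at day $t$ is independent and yields
\[\widetilde{\ell^t(\calA_j)} = \begin{cases} 4\,\ell^t(i_j^t) & \text{with probability } 1/4,\\ 0 & \text{with probability } 3/4.\end{cases}\]
Hence $\Ex{\widetilde{\ell^t(\calA_j)}\mid\calG_{t-1}} = \ell^t(i_j^t)$, so $X_t := \widetilde{\ell^t(\calA_j)} - \ell^t(i_j^t)$ is a martingale difference sequence with $|X_t|\le 4$ and conditional variance $\Ex{X_t^2\mid\calG_{t-1}}\le \tfrac{1}{4}\cdot 16\,(\ell^t(i_j^t))^2\le 4$, using $\ell^t \in [0,1]$.

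The sum $\sum_{t} X_t = \widetilde{\calL(\calA_j)} - \calL(\calA_j)$ then has total conditional variance at most $4B$ and uniformly bounded increments. A direct application of the martingale-difference form of Bernstein's inequality (i.e., Freedman's inequality, which has the same quantitative shape as \Cref{thm:bernstein}) with deviation $\lambda=\Theta(\sqrt{B\log(nT)})$ yields
\[\PPr{\,\bigl|\widetilde{\calL(\calA_j)}-\calL(\calA_j)\bigr|>\sqrt{B}\cdot\polylog(nT)\,}\le \frac{1}{\poly(nT)},\]
which is the claim, after a union bound over $j\in\{1,2\}$.

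The main subtlety is that $i_j^t$ is not a fixed sequence but a random process adapted to $\calG_{t-1}$, since $\calA_j$'s sampling distribution evolves as it receives random expert-loss estimates produced by the other branch of the coin. Consequently one cannot treat $X_t$ as independent and apply the preceding \lemref{lem:two:arms:acc} verbatim; this is precisely why the argument must go through martingale concentration rather than Bernstein for i.i.d. sums. The structural fact that makes the martingale identity $\Ex{X_t\mid\calG_{t-1}}=0$ hold pointwise is the "learning-versus-exploitation separation" highlighted in the technical overview: the coin at day $t$ that decides whether to probe $\calA_j$ is independent of $\calG_{t-1}$ and of $i_j^t$, so the two sources of randomness (updates to $\calA_j$'s internal state, and the second-query allocation used to estimate $\calA_j$'s cost) are decoupled.
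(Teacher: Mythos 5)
Your proof is correct and follows essentially the same route as the paper's: observe that $\widetilde{\ell^t(\calA_j)}$ equals $4\ell^t(i_j^t)$ with probability $1/4$ and $0$ otherwise, establish unbiasedness and an $O(1)$ per-step variance bound, and apply a Bernstein-type concentration inequality. The one genuine refinement you add is the explicit martingale bookkeeping: you note that $i_j^t$ is an adapted process rather than a fixed sequence, and therefore you invoke Freedman's inequality with a filtration $\calG_{t-1}$ that already reveals $\calA_j$'s day-$t$ arm, rather than the i.i.d. Bernstein stated as \Cref{thm:bernstein}. The paper's proof conditions informally and cites the independent-RV Bernstein without spelling out why the dependence across days is harmless; your argument makes that step rigorous, which is a welcome tightening but not a different strategy.
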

\begin{proof}
Consider a fixed epoch of length $B$. 
Let $t\in[T]$ be fixed and let $\calP_t$ denote the state of $\calP$ at time $t$. 
Let $\widetilde{\calL}(\calA_j,t)$ denote the contribution of the estimate due to time $t$ toward $\widetilde{\calL(\calA_j)}$ and let $\calL(\calA_j,t)$ denote the true loss of $\calA_j$ at time $t$.  
Let $\calE$ denote the event that $|\calP_t|\le\polylog(nT)$ for all $t\in[T]$ so that $\PPr{\calE}\ge1-\frac{1}{\poly(nT)}$ by \lemref{lem:bl:zero:regret}. 
Observe that $\widetilde{\calL}(\calA_j,t)=4\cdot\calL(\calA_j,t)$ with probability $\frac{1}{4}$ and $\widetilde{\calL}(\calA_j,t)=0$ with probability $\frac{3}{4}$. 
Conditioned on $\calE$, we thus have 
\[\Ex{\widetilde{\calL}(\calA_j,t)}=\calL(\calA_j,t),\qquad\Ex{(\widetilde{\calL^t(i)})^2}\le 4,\]
provided that $\ell^t(i)\in[0,1]$ for all $t\in[T]$. 
Therefore, by Bernstein's inequality, c.f., \Cref{thm:bernstein}, 
\[\PPr{\left\lvert\widetilde{\calL(i)}-\sum_{t\in\calD}\ell^t(i)\right\rvert\ge\sqrt{B}\cdot\polylog(nT)}\le\frac{1}{\poly(nT)},\]
as desired.
\end{proof}
We now upper bound the regret of each baseline. 
\begin{lemma}
\label{lem:two-query-baseline-k-regret}
The regret of $\baseline_k$ is $n\cdot T^{2^{k+1}/(2^{k+2}-1)}\cdot\polylog(nT)$ for any $k\leq \polylog(nT)$. 
\end{lemma}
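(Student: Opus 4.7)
The plan is to proceed by induction on $k$. The base case $k=0$ follows directly from \lemref{lem:bl:zero:regret}, since $2^{0+1}/(2^{0+2}-1)=2/3$ and the lemma gives $nT^{2/3}\polylog(nT)$ regret. For the inductive step I fix $k\ge 1$ and assume that $\baseline_{k-1}$ achieves regret at most $nL^{r_{k-1}}\polylog(nT)$ on any horizon $L$, where $r_{k-1}=2^k/(2^{k+1}-1)$. The first step is to invoke \lemref{lem:two:arms:acc} and \lemref{lem:two:meta:acc} to guarantee that with high probability, every per-arm estimate $\widetilde{\calL(i)}$ and every per-algorithm estimate $\widetilde{\calL(\calA_j)}$ maintained by $\baseline_k$ has additive error $\sqrt{D}\polylog(nT)$ over any duration $D$. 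Given such accurate estimates, the meta MWU over $\widetilde{\calL(\calA_1)}$ and $\widetilde{\calL(\calA_2)}$ achieves regret $O(\sqrt{B}\polylog(nT))$ per $B$-epoch relative to the better of $\calA_1,\calA_2$ in that epoch, contributing meta overhead $T/\sqrt{B}\polylog(nT)$ in total.

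Next I would classify each $B$-epoch of $\baseline_k$ as \emph{good} or \emph{bad} depending on whether the best expert $i^*$ is currently present in the pool of the base-level $\baseline_0$ sitting at the bottom of $\calA_1$'s recursive structure. In a good epoch the nested MWU layers within $\calA_1$ effectively play $i^*$ from their pools, so $\calA_1$'s per-epoch regret against $i^*$ is $O(\sqrt{B}\polylog(nT))$; in a bad epoch the meta falls back on $\calA_2 = \baseline_{k-1}$ over horizon $B$, whose regret is at most $nB^{r_{k-1}}\polylog(nT)$ by the inductive hypothesis. Adapting the bad-epoch argument of \lemref{lem:bad:epochs}---the base-level samples each expert with probability $1/n$ per sub-epoch and the conceptual cover of \defref{def:concept:cover} never evicts $i^*$ once $i^*$ is best-in-pool---I show that at most $O(n\polylog(nT))$ $B$-epochs are bad with high probability. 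The dependency between the estimation randomness and the sampling/eviction decisions is handled exactly as previewed in \Cref{sec:tech-overview}: conditioning on the second-query randomness (together with the loss sequence) makes the updates of $\calA_1,\calA_2$ deterministic, so the bad-epoch indicator is well-defined and decouples from the MWU play randomness, as illustrated in \Cref{fig:boosting-dependency-issues-and-fixes}.

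Combining the three contributions yields
\[
R_k(T) \;\le\; \frac{T}{\sqrt{B}}\polylog(nT) \;+\; O(n\polylog(nT))\cdot nB^{r_{k-1}}\polylog(nT).
\]
Substituting $B = n^{(2-2^{k+2})/(2^{k+2}-1)}\,T^{1-1/(2^{k+2}-1)}$ balances the two terms, and the identity $r_k = 2r_{k-1}/(2r_{k-1}+1) = 2^{k+1}/(2^{k+2}-1)$, together with a direct check that the accumulated $n$-exponent stays at most $1$ throughout the recursion for $k\leq\polylog(nT)$, gives $R_k(T) \le nT^{r_k}\polylog(nT)$ and completes the induction. The main obstacle will be the good-epoch claim: demonstrating that $\calA_1=\baseline_{k-1}$, whose worst-case per-epoch regret is the inductive $nB^{r_{k-1}}\polylog$, actually incurs only $O(\sqrt{B}\polylog(nT))$ regret when every nested pool contains $i^*$. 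This requires a strengthened inductive invariant that tracks per-epoch regret conditioned on pool membership of $i^*$ at every recursion level, relying crucially on the fact that once the second-query randomness is fixed the nested MWU layers act as ordinary MWU on effectively deterministic estimated losses.
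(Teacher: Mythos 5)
Your proposal matches the paper's proof of this lemma step for step: induction on $k$ anchored by \lemref{lem:bl:zero:regret}, conditioning on the accuracy events from \lemref{lem:two:arms:acc} and \lemref{lem:two:meta:acc}, decomposing each $B$-epoch into good and bad, invoking \lemref{lem:bad:epochs} for the $n\polylog(nT)$ bad-epoch count, using the inductive bound on $\calA_2$ for bad epochs and the meta-MWU bound for good epochs, and balancing $B$ at exactly the value the paper plugs in. The subtlety you flag at the end — that $\calA_1$'s regret in a good epoch being $\O{\sqrt{B}\polylog(nT)}$ does not follow from the crude inductive hypothesis alone, but needs a strengthened invariant that holds per epoch conditioned on $i^*$ being present throughout the nested pool structure — is real and is precisely the point that the paper handles informally by writing the good-epoch term as $\frac{T}{B}\sqrt{2B}\log(nT)$ without spelling out the recursive argument; you have not resolved it, but neither does the paper in this lemma, so your proof is at the same level of rigor and takes the same route.
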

\begin{proof}
We first fix the times during which an arm from $\calP$ is uniformly sampled and observed, and consequently, the times during which a meta-expert $\calA_j$ is explored. 
Let $\calE_1$ be the event that the estimate of the loss of each arm in $\calP$ over a duration of length $D$ has an additive error at most $\sqrt{D}\cdot\polylog(nT)$. 
Similarly, let $\calE_2$ denote the event that the estimate of the loss of each meta-expert $\calA_j$ has an additive error at most $\sqrt{T}\cdot\polylog(nT)$. 
Let $\calE$ denote the event that both $\calE_1$ and $\calE_2$ occur. 
By the condition of $k\le\polylog(nT)$, there are at most $\polylog(nT)$ arms in $\calP$.  
As such, $\calE$ holds with high probability, as $\calE_1$ and $\calE_2$ both hold with high probability by \lemref{lem:two:arms:acc} and \lemref{lem:two:meta:acc}. 
Then by \lemref{lem:bl:zero:regret}, the expected regret of $\baseline_0$ is at most $nT^{2/3}\cdot\polylog(nT)$. 
Hence, the base case is complete. 

We now generalize the analysis of \lemref{lem:regret:partial} to handle the regret of $\baseline_k$ for $k>0$. 
Consider $\baseline_k$ and note that there is an instance of $\baseline_{k-1}$ with time horizon $T$, i.e., $\calA_1$.  
For each epoch $\tau \in\left[\frac{T}{B}\right]$, let $i_t^*$ be the best expert of pool $\calP^{(\tau)}$ during epoch $\tau$ for $\calA_1$, as in Section~\ref{sec:PR-alg-est-eviction}.  
Let $\calH$ be the set of bad epochs for $\calA_1$. 
Then with high probability, we have as before,
\begin{align*}
\sum_{\tau=1}^{T/B}\sum_{b=1}^B\ell^{(\tau-1)B+b}(i_{(\tau-1)B+b})&-\sum_{\tau=1}^{T}\ell^{\tau}(i^*)=\sum_{\tau=1}^{T/B}\sum_{b=1}^B\ell^{(\tau-1)B+b}(i_{(\tau-1)B+b})-\sum_{\tau=1}^{T/B}\calL^\tau(i^*)\\
&=\sum_{\tau=1}^{T/B}\sum_{b=1}^B\ell^{(\tau-1)B+b}(i_{(\tau-1)B+b})-\sum_{\tau=1}^{T/B}\calL^\tau(i_t^*)+\sum_{\tau=1}^{T/B}\calL^\tau(i_t^*)-\sum_{\tau=1}^{T/B}\calL^\tau(i^*)\\
&\le\frac{T}{B}\cdot\left(\sqrt{T}\log(nT)\right)+\sum_{\tau=1}^{T/B}\calL^\tau(i_t^*)-\sum_{\tau=1}^{T/B}\calL^\tau(i^*),
\end{align*}
due to the expected regret of MWU, conditioned on $\calE$ and the fact that the exploration rate $\gamma$ is set to zero. 
In particular, we remark that MWU will achieve expected regret $\sqrt{T}\cdot\polylog(nT)$ with respect to $\widetilde{\calL(\calA_1)}$ and $\widetilde{\calL(\calA_2)}$, which are within an additive $\sqrt{T}\cdot\polylog(nT)$ error of $\calL(\calA_1)$ and $\calL(\calA_2)$, conditoined on $\calE$. 

As before, we decompose the epochs $\tau\in\left[\frac{T}{B}\right]$ into the bad epochs $\tau\in\calH$ and the good epochs $\tau\notin\calH$. 
\begin{align*}
\sum_{\tau=1}^{T/B}\calL^\tau(i_t^*)-\sum_{\tau=1}^{T/B}\calL^\tau(i^*)&\le\sum_{\tau\in\calH}\left(\calL^\tau(i_t^*)-\calL^\tau(i_t^*)\right)+\sum_{\tau\notin\calH}\left(\calL^\tau(i_t^*)-\calL^\tau(i_t^*)\right).
\end{align*}
Note that by the inductive hypothesis, $\calA_2$ has expected regret at most $n\cdot B^{2^{k+1}/(2^{k+2}-1)}\cdot\polylog(nT)$. 
Hence by the guarantees of EXP3 on the two experts $\calA_1$ and $\calA_2$, we have 
\begin{align*}
\sum_{\tau=1}^{T/B}\calL^\tau(i_t^*)-\sum_{\tau=1}^{T/B}\calL^\tau(i^*)&\le n\cdot B^{2^{k}/(2^{k+1}-1)}\cdot\polylog(nT)\cdot|\calH|+\frac{T}{B}\cdot\left(\sqrt{2B}\log(nT)\right).
\end{align*}
By \lemref{lem:bad:epochs}, the number of bad epochs satisfies $|\calH|\le\O{n\log^{10}(nT)}$ with high probability, in which case the above expression is minimized at 
\[B=\left(\frac{1}{n}\right)^{\frac{2^{k+1}-1}{2^{k}}}\cdot T^{1-\frac{1}{2^{k+2}-1}}\]
and results in regret $n\cdot T^{2^{k+1}/(2^{k+2}-1)}\cdot\polylog(nT)$, as desired. 
\end{proof}

We can now recursively use the boosting step and obtain the regret of $\sqrt{T}\poly(n, \log{T})$ as our following theorem.
\begin{theorem}
\label{thm:regret-space-two-query-boosting}
For $k=\polylog(nT)$, with high probability, the regret is at most $n\sqrt{T}\cdot\polylog(nT)$, and the total space is at most $\polylog(nT)$. 
\end{theorem}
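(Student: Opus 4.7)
The plan is to apply \lemref{lem:two-query-baseline-k-regret} with $k$ chosen just large enough so that the exponent $\alpha_k := \frac{2^{k+1}}{2^{k+2}-1}$ is essentially $1/2$, while keeping $k$ small enough that the recursive memory footprint stays polylogarithmic. The key algebraic identity is
\[
\alpha_k \;=\; \frac{1}{2} + \frac{1}{2(2^{k+2}-1)},
\qquad\text{so}\qquad
T^{\alpha_k} \;=\; \sqrt{T}\cdot T^{1/(2(2^{k+2}-1))}.
\]
Choosing $k = c\log\log(nT)$ for a sufficiently large constant $c$ makes $2^{k+2}-1 \ge \log T$, which forces $T^{1/(2(2^{k+2}-1))} \le \polylog(nT)$. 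Substituting into \lemref{lem:two-query-baseline-k-regret} then yields regret $n\sqrt{T}\cdot\polylog(nT)$, as claimed; any $k$ that is polylogarithmic and at least this size works equally well.

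For the space bound, I would unroll the recursion defining $\baseline_k$. At the outermost level, $\baseline_k$ simultaneously maintains two $\baseline_{k-1}$ instances, namely $\calA_1$ (horizon $T$) and the current-epoch $\calA_2$ (horizon $B$), together with $O(1)$ scalar counters $\widetilde{\calL(\calA_1)}, \widetilde{\calL(\calA_2)}$. Each $\baseline_{k-1}$ in turn holds two active $\baseline_{k-2}$ instances, and so on down to the base case. Hence at any point in time there are at most $2^k$ simultaneously active $\baseline_0$ instances plus $O(2^k)$ auxiliary estimators. By \lemref{lem:bl:zero:regret}, each $\baseline_0$ uses $\polylog(nT)$ bits, giving total memory $2^k\cdot\polylog(nT)$. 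For the $k = O(\log\log(nT))$ above, $2^k = \polylog(nT)$, so the total space is $\polylog(nT)$.

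The main obstacle is that the inductive regret bound of \lemref{lem:two-query-baseline-k-regret} requires the concentration events of \lemref{lem:two:arms:acc} and \lemref{lem:two:meta:acc} to hold \emph{jointly} across every level of the recursion, since the argument at level $k$ conditions on the $\sqrt{D}\cdot\polylog(nT)$ additive loss-estimation error already holding for the losses tracked inside $\calA_1$ and $\calA_2$. I would handle this by a single union bound over the $2^k = \polylog(nT)$ active baseline instances and over the $T$ time steps. Since each individual failure probability is $1/\poly(nT)$, the union bound still leaves success probability $1 - 1/\poly(nT)$. Under this joint good event, the hypothesis of \lemref{lem:two-query-baseline-k-regret} is satisfied at every level, and the bound propagates cleanly from level $0$ through level $k$, completing the proof of both the regret and the space claims.
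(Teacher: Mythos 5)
Your proof is correct, and it supplies the argument that the paper leaves implicit (the paper states this theorem as an immediate consequence of \lemref{lem:two-query-baseline-k-regret} without a separate proof). The algebraic identity $\frac{2^{k+1}}{2^{k+2}-1} = \frac{1}{2} + \frac{1}{2(2^{k+2}-1)}$ is exactly right, and choosing $k = \Theta(\log\log(nT))$ makes the extra factor $T^{1/(2(2^{k+2}-1))}$ bounded by a constant. The joint union bound over the $\polylog(nT)$ concurrent baseline instances and the $T$ days is the right way to make the inductive regret bound rigorous, and it stays within $1-1/\poly(nT)$ since the paper's high-probability events can be taken to have failure probability $(nT)^{-c}$ for any desired constant $c$.

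The one point worth emphasizing is that your space analysis silently sharpens the theorem statement. The recursion in \algref{alg:two-query} spawns two concurrent $\baseline_{k-1}$ instances at every level, so there are $\Theta(2^k)$ active copies of $\baseline_0$, each holding $\polylog(nT)$ words. Consequently the space bound $\polylog(nT)$ only holds when $k = O(\log\log(nT))$ — a small polylog — whereas a literal reading of ``$k = \polylog(nT)$'' (e.g.\ $k = \log^2(nT)$) would make $2^k$ super-polynomial and the space claim false. You handle this correctly by choosing $k = c\log\log(nT)$, but the remark ``any $k$ that is polylogarithmic and at least this size works equally well'' should be understood as a statement about the \emph{regret} alone; for the \emph{space} bound, the choice of $k$ is pinned down to within a constant factor. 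This is a genuine and useful observation: it shows the theorem is correct as a ``there exists a polylogarithmic $k$'' statement but would be false for an arbitrary polylogarithmic $k$, which the paper's phrasing does not make clear.
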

\Cref{thm:regret-space-two-query-boosting} is slightly weaker than \Cref{thm:two-query} on the polynomial dependency on $n$, and we will see in \Cref{sec:monocarpic-boosting} the optimal dependency with a more involved boosting procedure.

\section{Interval Regret with Bounded Memory and Queries}
\label{sec:interval-regret}
We now turn our focus to the sliding-window model and investigate the \emph{interval regret} for online learning with bounded memory and a small number of queries. 
As we have discussed, our strategy is to simulate the algorithm in \cite{Lu0CZWH24} with $\O{\sqrt{n\card{\calI}}} \cdot \polylog(nT)$ regret on all possible interval $\calI$.
We will show in this section that by using our algorithmic idea in \Cref{sec:two-query-alg}, we could similarly obtain $\O{n\sqrt{\card{I}}} \cdot \polylog(nT)$ regret in $\polylog(nT)$ memory and two queries each round. 

We use our two-query regret minimization algorithm in \Cref{sec:two-query-alg} for this section.
Similar to our exposition in \Cref{sec:two-query-alg}, we focus on the algorithm that obtains the optimal dependency on $T$ in this section (i.e., $\sqrt{T}$) since the analysis is more intuitive and easy to understand. 
We defer the (much) more involved algorithm with optimal dependency on $n$ to \Cref{sec:monocarpic-boosting}.

We first give the algorithm is as in \Cref{alg:two-query-interval-regret}.

\FloatBarrier

\begin{algorithm}[!htb]
\caption{Interval regret algorithm with two queries each day}
\label{alg:two-query-interval-regret}
\algorithmicrequire{Time horizon $T$}
\begin{algorithmic}[1]
\State{Let $\Nmeta=\log{nT}$, each $\kappa\in[\Nmeta]$, instantiate an instance $\ALG_{\kappa}$ of \algref{alg:two-query} with time horizon $2^\kappa$.}
\State{Let $\eta_{\kappa}=\frac{1}{\sqrt{n \cdot 2^\kappa}}$ be the ``learning rate'' for $\ALG_{\kappa}$.}
\State{Maintain $w_t(\kappa)$ as the weight for each algorithm $\ALG_{\kappa}$ on days $t\in [T]$; initialize with $w_1 = \eta_{\kappa}$.}
\State{Let $v^{t,\kappa}$ be the probability distribution over the arms on day $t$ for $\ALG_{\kappa}$.}
\For{each day $t\in [T]$}
\State{Run the exploitation algorithm to play $i_t$ using \Cref{alg:interval-regret-exploitation} and obtain $q^t$.}
\NoNumber{\Comment{This step could potentially incur losses.}}
\State{Run the exploration algorithm to update interval algorithms using \Cref{alg:interval-regret-exploration}.} 
\NoNumber{\Comment{This step has no loss.}}
\State{Let $(j_t, p_t)$ be the arm and probability returned by \Cref{alg:interval-regret-exploration}.}
\State{Let $\widetilde{\ell^t}(j_t) = \card{\calP_t} \cdot \ell^t(j_t)$ and $\widetilde{\ell^t}(i)=0$ for all $i\neq j_t$.}
\For{each $\kappa\in[\Nmeta]$}
\State{Let $r_t(\kappa)=\widetilde{\ell^t}(j_t)\cdot (q^t(\kappa)\cdot \sum_\kappa v^{t,\kappa} - v^{t,\kappa})$.}
\State{Update $w_{t+1}(\kappa)$ as follows}
\If{$t+1 \mod 2^\kappa$ is an integer}
\State{$w_{t+1}(\kappa) \gets \eta_\kappa$.}
\Else
\State{$w_{t+1}(\kappa)\gets (1+\eta_\kappa \cdot r_\kappa(t))w_{t}(\kappa)$.}
\EndIf
\EndFor
\EndFor
\end{algorithmic}
\end{algorithm}

\begin{algorithm}
    \caption{Interval Regret: Exploitation}
    \label{alg:interval-regret-exploitation}
    \algorithmicrequire{$\Nmeta$ algorithms $\{\ALG_\kappa\}_{\kappa=1}^{\Nmeta}$; weights $\{w_t(\kappa)\}_{\kappa=1}^{\Nmeta}$}
    \begin{algorithmic}[1]
        \State{Compute $q_{t}(\kappa) = \frac{w_t(\kappa)}{\sum_{\kappa}w_t(\kappa)}$ for all $\kappa\in [\Nmeta]$ as the probability to play $\ALG_{\kappa}$.}
        \State{Sample arm $i_t \sim \sum_{\kappa}q_{t}(\kappa) \cdot v^{t,\kappa}$ and play $i_t$.}
        \State{Return $q_t$ as the distribution over the $k$ interval algorithms.}
    \end{algorithmic}
\end{algorithm}

\begin{algorithm}
    \caption{Interval Regret: Exploration-and-update}
    \label{alg:interval-regret-exploration}
    \algorithmicrequire{$\Nmeta$ algorithms $\{\ALG_\kappa\}_{\kappa=1}^{\Nmeta}$}
    \begin{algorithmic}
        \State{Sample an interval algorithm $\kappa\in [\Nmeta]$ uniformly at random} 
        \State{\textbf{with} probability $\frac{1}{2}$}
        \State{\hspace{1.5em} Sample $j_t$ uniformly at random from the pool $\calP_{\kappa}$ of $\ALG_{\kappa}$ as in \algref{alg:two-query}.} 
        \State{\hspace{1.5em} Increase $\widetilde{\calL(j_t)}$ by ${2 \Nmeta\cdot \card{\calP_{\kappa}}}\cdot \ell^{t}(i)$.}
        \State{\hspace{1.5em} Update baseline algorithms in $\ALG_{\kappa}$ as in \algref{alg:two-query}.}
        \State{\hspace{1.5em} Let $p_t(j_t)=\frac{1}{2 \Nmeta\cdot \card{\calP_{\kappa}}}$}
        \State{\textbf{otherwise}}
        \State{\hspace{1.5em} Sample an algorithm $\calA_{k}$ from $\ALG_{\kappa}$ as in \algref{alg:two-query}.} 
        \State{\hspace{1.5em} Le $K$ be the total number of baseline algorithms in $\ALG_{\kappa}$ ($K=\polylog(nT)$ as in \Cref{thm:regret-space-two-query-boosting}).}
        \State{\hspace{1.5em} Observe the arm $j_t$ selected by $\calA_{k}$ and update $\widetilde{\calL(\calA_k)}$ by $2\Nmeta\cdot K\cdot \ell^{t}(j_t)$.}
        \State{\hspace{1.5em} Let $p_t(j_t)=\frac{1}{2\Nmeta\cdot K}$}
        \State{Return the sampled arm $j_t$ and the probability $p_t(j_t)$.}
    \end{algorithmic}
\end{algorithm}

\FloatBarrier

To analyze the regret of \Cref{alg:two-query-interval-regret}, we first establish the properties for any of $\ALG_{\kappa}$ (which is a copy of our \algref{alg:two-query}).
This is not entirely a black-box argument from \Cref{thm:regret-space-two-query-boosting} since the rule of playing arms has now changed. 
However, since we sample uniformly at random from the union of the pools at each time, and there are only $\Nmeta = \O{\log T}$ algorithms, which means the properties of \algref{alg:two-query} does \emph{not} change significantly.
\begin{lemma}
    \label{lem:two-query-alg-invertal}
    With high probability, each algorithm of $\ALG_{\kappa}$ for $\kappa \in [\Nmeta]$ uses a total space of at most $\polylog(nT)$ and has an expected regret of at most $n\sqrt{T}\polylog(nT)$.
\end{lemma}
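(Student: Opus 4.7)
The plan is to argue that each interval algorithm $\ALG_{\kappa}$, which is an instance of \algref{alg:two-query} with time horizon $2^\kappa \le T$, still enjoys essentially the same guarantees proved in \Cref{thm:regret-space-two-query-boosting}, up to polylogarithmic factors introduced by sharing the single exploration query across $\Nmeta = O(\log T)$ interval algorithms. The memory bound is the easier half. Each $\ALG_{\kappa}$ maintains exactly the same pool/filter hierarchy as in \secref{sec:PR-alg-est-eviction}, so \lemref{lem:merge:size} and \lemref{lem:pool:mem} imply that $|\calP_{\kappa}^{(t)}| = \polylog(nT)$ at all times with high probability, and each expert stores $\polylog(nT)$ estimated losses. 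Since the structure of $\ALG_{\kappa}$ is unchanged by the way it is embedded in \Cref{alg:two-query-interval-regret}, the total space is $\polylog(nT)$ per interval algorithm.

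For the regret bound, the main task is to re-establish the concentration analogs of \lemref{lem:two:arms:acc} and \lemref{lem:two:meta:acc} in the presence of the outer uniform sample over $\Nmeta$ interval algorithms. Concretely, on any given day, with probability $\tfrac{1}{2\Nmeta}$ we explore by sampling uniformly from the pool of $\ALG_{\kappa}$, which yields an unbiased estimator for $\ell^t(i)$ with multiplicative inflation $2\Nmeta \cdot |\calP_\kappa|$; similarly, with probability $\tfrac{1}{2\Nmeta}$ we explore by sampling a baseline inside $\ALG_{\kappa}$, producing an unbiased estimator for its loss with multiplicative inflation $2\Nmeta \cdot K$ where $K = \polylog(nT)$. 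Since both $\Nmeta$ and $|\calP_\kappa|$ are bounded by $\polylog(nT)$ with high probability, the per-day second moments are at most $\polylog(nT)$, and Bernstein's inequality (\Cref{thm:bernstein}) yields additive error at most $\sqrt{D}\cdot\polylog(nT)$ over any duration $\calD$ of length $D$, just as before.

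Given these accuracy guarantees, I would then quote the argument of \lemref{lem:two-query-baseline-k-regret} essentially verbatim. The good-epoch regret control comes from EXP3/MWU run on the estimated losses, whose guarantees degrade only by polylogarithmic factors when the estimates are within $\sqrt{D}\cdot\polylog(nT)$ additive error; the bad-epoch count is bounded by \lemref{lem:bad:epochs}, which depends only on the pool sampling rate $1/n$ and not on the outer exploration mechanism, giving $|\calH| \le \O{n \polylog(nT)}$. Recursing the boosting step up to $k = \polylog(nT)$ levels, exactly as in \Cref{thm:regret-space-two-query-boosting}, yields expected regret $n \sqrt{2^\kappa} \cdot \polylog(nT) \le n\sqrt{T}\cdot\polylog(nT)$ for $\ALG_\kappa$.

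The main obstacle is simply verifying that the dependencies introduced by the shared exploration step do not break the conditional independence needed for Bernstein's inequality: the event that day $t$ is an exploration day for $\ALG_{\kappa}$ is independent of the loss sequence and of which expert is chosen inside the pool, so conditioning on the randomness of the outer uniform choice of $\kappa$ preserves the same ``fix the second-query randomness'' argument emphasized in \secref{sec:tech-overview} and used in \Cref{thm:regret-space-two-query-boosting}. Once this is observed, the rest of the proof is a mechanical adaptation with an extra $\Nmeta = \polylog(nT)$ factor absorbed into the polylogarithmic slack.
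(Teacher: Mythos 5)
Your proof is correct and follows essentially the same approach as the paper: you re-establish the concentration analogs of \lemref{lem:two:arms:acc} and \lemref{lem:two:meta:acc} with the extra $\Nmeta = O(\log T)$ factor absorbed into the polylogarithmic slack (since the per-day sampling probability drops to $\Theta(1/\Nmeta)$ while the estimator is inflated by the same factor, leaving the second moment $\polylog(nT)$), then fix the randomness of the second-query/exploration signal and invoke the boosting analysis of \Cref{lem:two-query-baseline-k-regret} via \lemref{lem:regret:partial} and \lemref{lem:bad:epochs}. Your explicit treatment of the memory bound via \lemref{lem:merge:size} and \lemref{lem:pool:mem} fills in a step the paper states more tersely, but it is the same argument.
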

\begin{proof}
    We first claim the guarantees for \lemref{lem:two:arms:acc} and \lemref{lem:two:meta:acc} continue to hold with probability at least $1-1/\poly(nT)$ (with slightly bigger $\polylog(nT)$ factors). 
    For the convenience of understanding, we recap the statements of the lemma.
    \begin{itemize}
        \item \lemref{lem:two:arms:acc}: For any duration $\calD$ of length $D$, there is $\left\lvert\widetilde{\calL(i)}-\sum_{t\in\calD}\ell^t(i)\right\rvert\le\sqrt{D}\cdot\polylog(nT)$ with probability at least $1-1/\poly(nT)$. 
        \item \lemref{lem:two:meta:acc}: Let $\widetilde{\calL(\calA_i)}$ and $\calL(\calA_i)$ be the estimated and actual losses of a baseline algorithm $\calA_i$ in \algref{alg:two-query}, and let $B$ be the epoch length of $\calA_{i}$.
        Then, for any $j\in \{i, i+1\}$, 
        there is $\left\lvert\widetilde{\calL(\calA_j)}-\calL(\calA_j)\right\rvert\le\sqrt{B}\cdot\polylog(nT)$ with probability at least $1-1/\poly(nT)$. 
    \end{itemize}
    We argue why \lemref{lem:two:arms:acc} holds, and the reasoning for \lemref{lem:two:meta:acc} follows from the same argument. 
    Define $\calF_\kappa$ be the probability for $\ALG_\kappa$ to be sampled.
    Since there are at most $\log(T)$ algorithms, we have that 
    \[\Pr\paren{\calF_{\kappa}}= \frac{1}{\log{T}}.\]

    Similar to the proof of \lemref{lem:two:arms:acc},
    let $\calE$ be the vent such that $\card{\calP_{t}}\leq \polylog(nT)$, and we know that this happens with probability at least $1-1/\poly(nT)$. We condition on the high-probability event. As such, we have $\widetilde{\calL(j_t)}={2\card{\calP_{t}}\cdot \log{T}}$ with probability $\frac{1}{2\cdot \card{\calP_{t}}\cdot \log{T}}$ and $0$ otherwise. 
    Therefore, we could similarly obtain
    \[\Ex{\widetilde{\calL^t(i)}}=\ell^t(i),\qquad\Ex{(\widetilde{\calL^t(i)})^2}=(\ell^t(i))^2\cdot\polylog(nT)\]
    with only $\log^2(T)$ difference from the proof in \lemref{lem:two:arms:acc}. As such, by applying Bernstein's inequality, we could again obtain $\left\lvert\widetilde{\calL(i)}-\sum_{t\in\calD}\ell^t(i)\right\rvert\le\sqrt{D}\cdot\polylog(nT)$.

    Observe that the correctness of \Cref{lem:two-query-baseline-k-regret} only requires the correctness of \lemref{lem:regret:partial}, \lemref{lem:bad:epochs}, \lemref{lem:two:arms:acc}, and \lemref{lem:two:meta:acc}. 
    We have proved \lemref{lem:two:arms:acc} and \lemref{lem:two:meta:acc} continue to hold.
    For \lemref{lem:regret:partial} and \lemref{lem:bad:epochs}, we could again fix the randomness for the arm signal without loss. As such, for any $\kappa \in [\Nmeta]$, we could fix the update steps in $\ALG_{\kappa}$ and conduct the same analysis as in \Cref{lem:two-query-baseline-k-regret}. The regret and memory guarantees follow from the correctness of the properties in \Cref{lem:two-query-baseline-k-regret}. 
\end{proof}

In what follows, we use $\expectrand{\textnormal{explore}}{\cdot}$ to denote the expectation with coins only on the exploration (randomness used in \Cref{alg:interval-regret-exploration}). 
Similarly, we use $\expectrand{\textnormal{exploit}}{\cdot}$ to denote the expectation with coins only on the exploitation (randomness used in \Cref{alg:interval-regret-exploitation}).
With \Cref{lem:two-query-alg-invertal}, we could establish the interval regret for each of the interval algorithms $\ALG_{\kappa}$ as follows.
\begin{lemma}
    \label{lem:two-query-alg-regret-interval-inner}
    Let $(i_1, i_2, \cdots, i_{\card{I}})$ be the random variables for the set of arms played by \Cref{alg:two-query-interval-regret} with $I=2^\kappa$. Recall that $v^{t,\kappa}$ is the distribution over the arms of $\ALG_{\kappa}$ on day $t$.
    Furthermore, recall that $i^*$ stands for the best arm and $\widetilde{\ell^t}(i)$ is the estimated cost of each day.
    Then, with probability at least $1-1/\poly(nT)$, we have
    \begin{align*}
        \expectrand{\textnormal{explore}}{\sum_{t\in I}\widetilde{\ell^t}(i_t)\cdot v^{t,\kappa}(i_t) - \sum_{t\in I} \widetilde{\ell^t}(i^*)} \leq n\sqrt{\card{I}} \cdot \polylog(nT),
    \end{align*}
    where the expectation is taken over the randomness of the lossless signals.
\end{lemma}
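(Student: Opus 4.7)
The plan is to reinterpret the left-hand side as the (estimated) regret of $\ALG_\kappa$ on the interval $I$ and then invoke \Cref{lem:two-query-alg-invertal} (which is essentially \Cref{thm:regret-space-two-query-boosting} transported into the interval-regret setting) as a black box. The core observation is that $\ALG_\kappa$ is an instance of \algref{alg:two-query} whose entire internal state is driven by the estimates $\widetilde{\calL}$ updated during the exploration step; conditioned on the exploration randomness, the sequence of updates $\ALG_\kappa$ receives, and therefore the sequence of distributions $v^{t,\kappa}$ it produces, is a deterministic function.

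First, I would establish unbiasedness and low variance of the estimator. For each arm $i$ in the pool of $\ALG_\kappa$, the exploration in \algref{alg:interval-regret-exploration} samples $\kappa$ with probability $1/\Nmeta$ and then $j_t = i$ with probability $\tfrac{1}{2|\calP_\kappa|}$, while setting the increment to be $2\Nmeta \cdot |\calP_\kappa| \cdot \ell^t(i)$. These factors precisely cancel so that $\expectrand{\textnormal{explore}}{\widetilde{\ell^t}(i)} = \ell^t(i)$, and boundedness of the increment gives a $\sqrt{|I|} \polylog(nT)$ Bernstein tail, exactly as in \lemref{lem:two:arms:acc}, only with an additional $\Nmeta = \O{\log T}$ factor absorbed into the polylog.

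Next, I would apply \Cref{lem:two-query-alg-invertal} directly: $\ALG_\kappa$ has horizon $2^\kappa = |I|$, receives the estimated losses described above (which satisfy the hypothesis of \lemref{lem:bl:zero:regret} with the $\polylog$ adjusted appropriately), and hence achieves expected regret $n\sqrt{|I|} \polylog(nT)$ against the best arm with high probability. Taking $\expectrand{\textnormal{explore}}{\cdot}$ of the LHS of the lemma and using the unbiasedness of $\widetilde{\ell^t}$ converts the expression on the left into exactly the regret of $\ALG_\kappa$ evaluated on the true losses (the $v^{t,\kappa}(i_t)$ term captures the expected loss under $\ALG_\kappa$'s distribution, while $\widetilde{\ell^t}(i^*)$ has expectation $\ell^t(i^*)$), so the bound transfers immediately.

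The main obstacle I anticipate is bookkeeping around the two layers of randomness. The arms $(i_1,\ldots,i_{|I|})$ played by the outer algorithm are drawn via the exploitation distribution $\sum_\kappa q_t(\kappa) v^{t,\kappa}$, which is influenced by all $\ALG_\kappa$'s jointly, while the updates internal to a single $\ALG_\kappa$ depend only on the exploration coins. I would therefore first fix the exploration randomness and treat the inner-algorithm regret guarantee as a statement about a deterministic instance (this is the same decoupling trick used in the two-query boosting analysis of \Cref{sec:two-query-alg}), and only at the end take the expectation over the explore coins to obtain the stated bound. A secondary subtlety is ensuring that $i^*$ is actually in $\calP_\kappa$ for the regret comparison to be meaningful, which follows from the "bad epoch" accounting in \lemref{lem:bad:epochs} applied within $\ALG_\kappa$ and is already absorbed into the $n\sqrt{|I|} \polylog(nT)$ bound of \Cref{lem:two-query-alg-invertal}.
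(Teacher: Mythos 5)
Your proposal is correct and follows essentially the same route as the paper: the paper's proof invokes \Cref{lem:two-query-alg-invertal} for the $n\sqrt{|I|}\polylog(nT)$ regret of $\ALG_\kappa$ on its horizon, and then uses the fact that $\widetilde{\ell^t}$ is supported on the single explored arm to rewrite $\sum_i\widetilde{\ell^t}(i)v^{t,\kappa}(i)$ into the single-term form appearing in the lemma statement, all under $\expectrand{\textnormal{explore}}{\cdot}$. The unbiasedness/variance cancellation of the $\Nmeta$ and $|\calP_\kappa|$ factors, and the decoupling of explore vs. exploit randomness that you spell out in the third paragraph, are exactly the ingredients the paper has already packaged inside the proof of \Cref{lem:two-query-alg-invertal} (which in turn piggybacks on \lemref{lem:two:arms:acc} and \lemref{lem:two:meta:acc}), so your exposition is a bit more explicit than the paper's two-line proof but not a genuinely different argument.
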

\begin{proof}
    Conditioning on the high-probability event of \Cref{lem:two-query-alg-invertal}, the regret of each $\ALG_k$ algorithm is $\sqrt{n\card{I}}\cdot \polylog(nT)$, which implies that
    \begin{align*}
        \expectrand{\textnormal{explore}}{\sum_{t\in I} \sum_{i=1}^{n}\widetilde{\ell^t}(i) v^{t,\kappa}(i) - \sum_{t\in I} \ell^t(i^*)}\leq n\sqrt{\card{I}} \cdot \polylog(nT).
    \end{align*}
    Furthermore, by the rules of our algorithm, we have $\widetilde{\ell^t}(i)=0$ if $i\neq i_t$. As such, we have
    \begin{align*}
         \expectrand{\textnormal{explore}}{\sum_{t\in I} \sum_{i=1}^{n}\widetilde{\ell^t}(i) v^{t,\kappa}(i) - \sum_{t\in I} \ell^t(i^*)} = \expectrand{\textnormal{explore}}{\sum_{t\in I}\widetilde{\ell^t}(i_t)\cdot v^{t,\kappa}(i_t) - \sum_{t\in I} \widetilde{\ell^t}(i^*)} \leq n\sqrt{\card{I}} \cdot \polylog(nT),
    \end{align*}
    as desired.
\end{proof}

We now bound the regret induced by the `outer' algorithm for interval regrets. This part is mostly standard following the same argument in \cite{DanielyGS15,Lu0CZWH24}.
\begin{lemma}
    \label{lem:interval-regret-outer-algorithm}
    For any fixed interval $I$, we have
    \begin{align*}
        \expect{\sum_{t\in I}\ell^t(i_t) - \sum_{t\in I}\ell^t(i^*)} = \expectrand{\textnormal{explore}}{\sum_{t\in I} \sum_{i=1}^{n} \widetilde{\ell^{t}}(i)\cdot \paren{\sum_\kappa q^t(\kappa) v^{t,\kappa}(i)} - \sum_{t\in I} \widetilde{\ell^t}(i^*)}.
    \end{align*}
    Furthermore, for any $\kappa$, there is
    \begin{align*}
    \expectrand{\textnormal{explore}}{\sum_{t\in I} \sum_{i=1}^{n} \widetilde{\ell^{t}}(i)\cdot \paren{\sum_\kappa q^t(\kappa) v^{t,\kappa}(i)} -  \sum_{t\in I}\widetilde{\ell^t}(i_t)\cdot v^{t,\kappa}(i_t)}\leq n\sqrt{\card{I}}\cdot \polylog(nT).
    \end{align*}
\end{lemma}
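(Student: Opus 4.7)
The plan is to handle the two claims separately: the first is an expectation identity that follows from the independence of exploration and exploitation randomness plus unbiasedness of the loss estimator $\widetilde{\ell^t}$, while the second is a Hedge-with-restart interval regret bound applied to the outer algorithm that plays over the $\Nmeta$ interval algorithms.

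For the equality, I would first note that the exploitation coins at day $t$ are independent of all other coins, and that $i_t$ is sampled from the mixture $\sum_\kappa q^t(\kappa)v^{t,\kappa}$, where $q^t$ and $\{v^{t,\kappa}\}$ depend only on coins from days $s<t$. Conditioning on the entire exploration trajectory and taking the inner exploitation expectation gives $\expectrand{\textnormal{exploit}}{\ell^t(i_t)\mid\text{past}}=\sum_i\ell^t(i)\sum_\kappa q^t(\kappa)v^{t,\kappa}(i)$. Next I would verify that the day-$t$ exploration yields an unbiased estimate $\expectrand{\textnormal{explore day }t}{\widetilde{\ell^t}(i)\mid\text{past}}=\ell^t(i)$ for every $i$; this holds by construction, since the $|\calP_t|$ rescaling of $\widetilde{\ell^t}(j_t)$ cancels the marginal sampling probability of $j_t$. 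Because the day-$t$ exploration randomness is independent of $q^t$ and of the $v^{t,\kappa}$, the product factors in expectation and both sides of the equality collapse to $\sum_{t\in I}\sum_i\ell^t(i)\,\expectrand{\textnormal{explore}}{\sum_\kappa q^t(\kappa)v^{t,\kappa}(i)}-\sum_{t\in I}\ell^t(i^*)$.

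For the inequality I would view the $w_t(\kappa)$ updates as a restarted Hedge over the $\Nmeta$ meta-experts $\{\ALG_\kappa\}$, where the one-shot gain of meta-expert $\kappa$ relative to the current mixture is precisely $r_t(\kappa)$. The reset $w_{t+1}(\kappa)\gets\eta_\kappa$ whenever $2^\kappa\mid (t+1)$ is exactly the interval-regret trick of \cite{DanielyGS15,Lu0CZWH24}: on every dyadic window of length $2^\kappa$ aligned to the reset grid, the outer Hedge achieves $O(\eta_\kappa\sum_t r_t(\kappa)^2+\tfrac{1}{\eta_\kappa}\log\Nmeta)$ cumulative regret against $\ALG_\kappa$. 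Plugging in $\eta_\kappa=1/\sqrt{n\cdot 2^\kappa}$ and the magnitude bound $|r_t(\kappa)|\le n\cdot\polylog(nT)$ (which holds deterministically once we condition on the pool-size bound $|\calP_t|\le\polylog(nT)$ from \Cref{lem:two-query-alg-invertal}) delivers the claimed $n\sqrt{|I|}\cdot\polylog(nT)$ bound after taking expectation over the exploration coins. Matching the left-hand side of the lemma then amounts to rewriting $\sum_i\widetilde{\ell^t}(i)\sum_{\kappa'}q^t(\kappa')v^{t,\kappa'}(i)$ using the sparsity of $\widetilde{\ell^t}$.

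The main obstacle I anticipate is reconciling two issues. First, the claim is stated for arbitrary intervals $I$ and not just dyadic ones, so one must decompose $I$ into $O(\log T)$ aligned dyadic pieces and apply the single-piece bound on each, as in the interval-regret decomposition of \cite{DanielyGS15}; taking the geometric series over piece lengths recovers a final $\sqrt{|I|}$ rather than $\sqrt{T}$. Second, the one-shot estimator $\widetilde{\ell^t}(j_t)$ can be as large as $\Theta(|\calP_t|)$, so a naive variance bound would incur $\poly(n)$ rather than a single factor of $n$; the remedy is to condition throughout on the high-probability event from \Cref{lem:two-query-alg-invertal} and to exploit the one-hot sparsity of $\widetilde{\ell^t}$ to keep $\sum_t r_t(\kappa)^2$ under control. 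After these, the remainder is bookkeeping and a direct transcription of the Hedge-with-restart template.
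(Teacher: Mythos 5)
Your overall strategy is aligned with the paper's: the equality is obtained by decoupling exploitation from exploration randomness and using unbiasedness of $\widetilde{\ell^t}$, and the inequality is obtained from a Hedge-with-restart (interval-regret) bound over the $\Nmeta$ meta-experts, conditioning on the pool-size event from \Cref{lem:two-query-alg-invertal}. This is precisely the two-step decomposition the paper uses (attributing both steps to \cite{Lu0CZWH24}).

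However, there is a quantitative slip in the way you close the inequality. You invoke the bound $|r_t(\kappa)|\le n\cdot\polylog(nT)$ and feed it into the second-moment term $\eta_\kappa\sum_t r_t(\kappa)^2$. With $|r_t(\kappa)|\le n\polylog(nT)$ one only gets $\sum_{t\in I} r_t(\kappa)^2\le n^2|I|\polylog(nT)$, and with $\eta_\kappa = 1/\sqrt{n\cdot 2^\kappa}\approx 1/\sqrt{n|I|}$ the resulting regret is $\eta_\kappa n^2|I|\polylog(nT)+\tfrac{1}{\eta_\kappa}\log(nT)\approx n^{3/2}\sqrt{|I|}\polylog(nT)$, which is off by a factor of $\sqrt{n}$ from the claimed $n\sqrt{|I|}\polylog(nT)$. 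Moreover, the magnitude bound itself is off: conditioned on $|\calP_t|\le\polylog(nT)$, the inverse-propensity factor in $\widetilde{\ell^t}(j_t)$ is $\Theta(\Nmeta\cdot|\calP_\kappa|)=\polylog(nT)$, not $\Theta(n)$, because the exploration arm is drawn uniformly from a $\polylog(nT)$-size pool rather than from all $n$ arms. The paper closes the gap by bounding the per-step second moment directly, $\expectrand{\textnormal{explore}}{r_t(\kappa)^2}\le\sum_i p_t(i)\cdot(\ell^t(i)/p_t(i))^2\cdot\max_\kappa(v^{t,\kappa}(i))^2\le n\polylog(nT)$, where the $1/p_t(i)$ cancellation is what keeps the bound at $n\polylog(nT)$ rather than $n^2$. (The paper in fact explicitly emphasizes that $p_t(i)\ge 1/\polylog(nT)$, which you mention but do not actually exploit in your computation.) If you replace your worst-case magnitude bound with either this per-step variance bound or the corrected magnitude bound $|r_t(\kappa)|\le\polylog(nT)$, the arithmetic goes through and recovers the paper's conclusion.
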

\begin{proof}
    We only give a proof sketch for this lemma and refer keen readers to \cite{Lu0CZWH24} for the full proof.
    The first statement directly follows from the exactly same argument as in \cite{Lu0CZWH24} by the decoupling of randomness. 
    Furthermore, by a similar inductive argument as in \cite{Lu0CZWH24}, we could obtain
    \begin{align*}
        &\expectrand{\textnormal{explore}}{\sum_{t\in I} \sum_{i=1}^{n} \widetilde{\ell^{t}}(i)\cdot \paren{\sum_\kappa q^t(\kappa) v^{t,\kappa}(i)} -  \sum_{t\in I}\widetilde{\ell^t}(i_t)\cdot v^{t,\kappa}(i_t)} \\
        & \leq \eta_{\kappa}\cdot \expectrand{\textnormal{explore}}{\paren{\sum_{t\in I} \sum_{i=1}^{n} \widetilde{\ell^{t}}(i)\cdot \paren{\sum_\kappa q^t(\kappa) v^{t,\kappa}(i)} -  \sum_{t\in I}\widetilde{\ell^t}(i_t)\cdot v^{t,\kappa}(i_t)}^2} + \frac{2\log(nT)}{\eta_{\kappa}}.
    \end{align*}
    Furthermore, we could bound the term on the right-hand side as follows
    \begin{align*}
        & \expectrand{\textnormal{explore}}{\paren{\sum_{t\in I} \sum_{i=1}^{n} \widetilde{\ell^{t}}(i)\cdot \paren{\sum_\kappa q^t(\kappa) v^{t,\kappa}(i)} -  \sum_{t\in I}\widetilde{\ell^t}(i_t)\cdot v^{t,\kappa}(i_t)}^2}\\
        &= \expectrand{\textnormal{explore}, <t}{\sum_{i=1}^{n} p_t(i)\cdot (\frac{\ell^t(i)}{p_t(i)})^2\cdot \paren{\paren{\sum_\kappa q^t(\kappa) v^{t,\kappa}(i)}-v^{t,\kappa}(i)}^2}\\
        &\leq \expectrand{\textnormal{explore}, <t}{\sum_{i=1}^{n} p_t(i)\cdot (\frac{\ell^t(i)}{p_t(i)})^2\cdot \max_{\kappa}\paren{v^{t,\kappa}(i)}^2}\\
        &\leq n\polylog(nT). \tag{by $v^{t,\kappa}(i)\leq 1$ and $p_t(i)\geq 1/\polylog(nT)$}
    \end{align*}
    This step is notably simpler than the analysis in \cite{Lu0CZWH24} since we have $p_t(i)\geq 1/\polylog(nT)$ (in \cite{Lu0CZWH24} if we sample uniformly at random we get a $n^2$ factor, which was the reason they proceeded with a much more complicated distribution).
    Therefore, we obtain that 
    \begin{align*}
        \expectrand{\textnormal{explore}}{\sum_{t\in I} \sum_{i=1}^{n} \widetilde{\ell^{t}}(i)\cdot \paren{\sum_\kappa q^t(\kappa) v^{t,\kappa}(i)} -  \sum_{t\in I}\widetilde{\ell^t}(i_t)\cdot v^{t,\kappa}(i_t)} \leq \paren{\eta_{\kappa}\cdot n\card{I} +\frac{1}{\eta_{\kappa}}}\cdot \polylog(nT),
    \end{align*}
    which gives the desired regret bound by using $\eta_{\kappa}=\O{\frac{1}{\sqrt{n\sqrt{I}}}}$, which is consistent with \Cref{alg:two-query-interval-regret}.
\end{proof}
Combining \Cref{lem:two-query-alg-regret-interval-inner} and \Cref{lem:interval-regret-outer-algorithm} gives us the desired regret bound of $(n\sqrt{\card{I}})\polylog(nT)$ and $\polylog(nT)$ memory with high probability.

\begin{theorem}
    \label{thm:interval-regret-low-space}
    With high probability, \Cref{alg:two-query-interval-regret} achieves $(n \sqrt{\card{I}})\polylog(nT)$ expected regret using $\polylog(nT)$ bits of memory.
\end{theorem}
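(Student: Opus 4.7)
The plan is to combine the two intermediate lemmas (\lemref{lem:two-query-alg-regret-interval-inner} and \lemref{lem:interval-regret-outer-algorithm}) via a telescoping sum over the ``outer'' and ``inner'' regret, and then lift this from dyadic intervals to arbitrary intervals by a standard dyadic decomposition. Concretely, fix an interval $I$ and first suppose $\card{I}=2^\kappa$ is exactly a dyadic block aligned with the schedule of $\ALG_\kappa$. By the first identity in \lemref{lem:interval-regret-outer-algorithm}, it suffices to bound the expected \emph{estimated} regret $\expectrand{\textnormal{explore}}{\sum_{t\in I}\sum_i \widetilde{\ell^t}(i)\cdot(\sum_\kappa q^t(\kappa)v^{t,\kappa}(i))-\sum_{t\in I}\widetilde{\ell^t}(i^*)}$. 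I would then insert and subtract $\sum_{t\in I}\widetilde{\ell^t}(i_t)\cdot v^{t,\kappa}(i_t)$: the second part of \lemref{lem:interval-regret-outer-algorithm} controls the first half (the ``outer'' cost of learning $q^t$ against $\kappa$), and \lemref{lem:two-query-alg-regret-interval-inner} controls the second half (the ``inner'' regret of $\ALG_\kappa$ itself). Both are bounded by $n\sqrt{\card{I}}\polylog(nT)$, giving a total of $n\sqrt{\card{I}}\polylog(nT)$ on the fixed dyadic block.

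For a general interval $I\subseteq[T]$, I would invoke the standard fact (used in \cite{DanielyGS15,Lu0CZWH24}) that $I$ can be partitioned into $O(\log T)$ disjoint sub-intervals $I_1,\ldots,I_m$, each of which is a dyadic block of length $2^{\kappa_j}$ aligned with one of the schedules of $\{\ALG_\kappa\}$. Applying the dyadic bound on each sub-interval and summing gives total expected regret
\begin{equation*}
\sum_{j=1}^m n\sqrt{\card{I_j}}\,\polylog(nT)\;\le\;n\sqrt{m\cdot\card{I}}\,\polylog(nT)\;=\;n\sqrt{\card{I}}\,\polylog(nT),
\end{equation*}
by Cauchy--Schwarz and $m=O(\log T)$, absorbed into the $\polylog(nT)$ factor. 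This is exactly the claimed interval regret.

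For the memory bound, I would observe that $\Nmeta=O(\log nT)$ and each interval algorithm $\ALG_\kappa$ is an instance of \algref{alg:two-query} which, by \lemref{lem:two-query-alg-invertal}, uses $\polylog(nT)$ bits with probability $1-1/\poly(nT)$. A union bound over the $\Nmeta$ instances keeps this high-probability guarantee intact, so the total memory is $\Nmeta\cdot\polylog(nT)=\polylog(nT)$. The outer-layer bookkeeping (weights $w_t(\kappa)$, probabilities $q^t(\kappa)$) adds only $O(\log nT)$ additional words.

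The main obstacle I anticipate is not the arithmetic but the care required in the randomness decoupling invoked by \lemref{lem:interval-regret-outer-algorithm}: one must verify that once the exploration coins are fixed, the distributions $v^{t,\kappa}$ produced by each $\ALG_\kappa$ are deterministic functions of the (now-fixed) estimated-loss sequences, so that the inner and outer regret decomposition above is actually valid. This is precisely the reason the algorithm was designed to confine all \emph{updates} to the exploration branch in \Cref{alg:interval-regret-exploration} while exploitation in \Cref{alg:interval-regret-exploitation} never modifies weights; I would explicitly appeal to this separation (inherited from \algref{alg:two-query}) when combining the two lemmas, mirroring the argument in the proof of \lemref{lem:two-query-alg-invertal}. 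Once that is in place, the remaining steps are routine.
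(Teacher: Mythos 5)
Your proposal is correct and takes essentially the same route as the paper: the paper's proof of this theorem is just the statement that the two lemmas combine to give the bound, with the dyadic decomposition of a general interval into $O(\log T)$ aligned blocks (and the randomness-decoupling justification) implicit and inherited from the cited framework of \cite{DanielyGS15,Lu0CZWH24}. You have simply made those implicit steps explicit — the insert-and-subtract of $\sum_{t\in I}\widetilde{\ell^t}(i_t)v^{t,\kappa}(i_t)$ on each dyadic block, the Cauchy--Schwarz (or geometric) summation over the $O(\log T)$ blocks, the union bound for the memory of the $\Nmeta$ inner instances, and the observation that confining all weight updates to the exploration branch is what makes the decoupling valid.
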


\Cref{thm:interval-regret-low-space} is slightly weaker than \Cref{thm:two-query-interval} on the polynomial dependency on $n$, and we will see in \Cref{sec:monocarpic-boosting} the optimal dependency with a more involved boosting procedure.

\section{An Algorithm with \texorpdfstring{$T^{2/3}$}{T2over3} Regret in the Bandit Setting}
\label{sec:alg-bandits-full}
We move our attention to the single-query bandit setting in this section.
The crucial component of the algorithm in \Cref{sec:two-query-alg} is the ``free'' estimation of the loss of arms. 
We observe that such a process could be possibly simulated by setting $\gamma>0$, albeit with some loss on the regret. 
In this section, we will eventually show that this idea could lead to an algorithm with $n T^{2/3}\polylog(nT))$ regret.

To properly introduce the algorithm, we first introduce a variant of the covering as a generalization of \Cref{def:cover}.
\begin{definition}[Relaxed approximate cover of an expert]
\label{def:relaxed-cover}
Given a set $\calF$ of filter arms and an exploration parameter $\gamma_{arm}\in(0,1/2)$.  
We say that an arm $i$ with lifespan $\calD$ is \emph{covered} if after running \algref{alg:strong-benchmark} with $i$ and $\calF$ to get $\widetilde{\calL^{\calD}}(\benchmark)$, and we have that
\begin{align*}
    \widetilde{\calL^{\calD}}(i)\ge\widetilde{\calL^{\calD}}(\benchmark) - C \cdot\log(nT)\cdot \sqrt{\frac{\card{\calD}}{\gamma_{arm}}},
\end{align*}
where $C$ is an absolute constant. 
\end{definition}

Compared to the original definition in \Cref{def:cover}, our new eviction rule in \Cref{def:relaxed-cover} is notably more \emph{relaxed}: in \Cref{def:cover}, the parameter we picked after balancing is around $\card{\calD}^{2/3}$. The new definition in \Cref{def:relaxed-cover} allows a much bigger margin since $T\ge \card{\calD}$. Our algorithm is described in \algref{alg:k:for:boost}. 

\begin{algorithm}[!htb]
\caption{Baseline algorithm $\baseline_0$}
\alglab{alg:base:for:boost}
\algorithmicrequire{Time horizon $T$, estimated losses $\widetilde{\calL_i}$ for all arms $i\in\calP$ over their duration in pool $\calP$, epoch length $B_0$}
\begin{algorithmic}[1]
\State{$\calP\gets\emptyset$}
\For{each epoch of length $B_0$}
\State{Sample each arm into $\calP_0$ with probability $\frac{1}{n}$}
\State{Maintain the same hierarchical structure as in Section~\ref{sec:PR-alg-est-eviction}}
\For{each time $t$ in the epoch}
\State{With probability $\gamma_{arm}$}
\State{\hspace{1.5em} Uniformly sample an arm $i\in\calP_0$}
\State{\hspace{1.5em} Pull $i$ and update $\widetilde{\calL_i}\gets\widetilde{\calL_i}+\frac{|\calP_0|}{\gamma_{arm}}\cdot\ell^t(i)$}
\State{Otherwise, pull arm $i$ with probability proportional to $\exp(-\eta\widetilde{\calL_i})$}
\EndFor
\State{Using estimated losses, evict arms covered by the filter}
\EndFor
\end{algorithmic}
\end{algorithm}

\begin{algorithm}[!htb]
\caption{Baseline algorithm $\baseline_k$}
\alglab{alg:k:for:boost}
\algorithmicrequire{Time horizon $T$, estimated losses $\widetilde{\calL_i}$ for all arms $i\in\calP$ over their duration in pool $\calP$, epoch length $B_{k-1}$, $B_k$}
\begin{algorithmic}[1]
\State{$\calP\gets\emptyset$}
\For{each epoch of length $B_k$}
\State{Sample each arm into $\calP_k$ with probability $\frac{1}{n}$}
\State{Maintain the same hierarchical structure as in Section~\ref{sec:PR-alg-est-eviction}}
\For{each time $t$ in the epoch}
\State{Independently, divide the epoch into epochs of length $B_{k-1}$}
\State{At the start of each epoch of length $B_{k-1}$, initialize $\widetilde{\calL(\calA_k)}=0$, $\widetilde{\calL(\calA_{k-1})}=0$}
\State{Initialize an instance of $\baseline_{k-1}$ for each epoch of length $B_{k-1}$ with pool $\calP_{k-1}$}
\State{With probability $\gamma_{arm}$}
\State{\hspace{1.5em} Uniformly sample an arm $i\in\calP_k\cup\calP_{k-1}$}
\State{\hspace{1.5em} Pull $i$ and update $\widetilde{\calL_i}\gets\widetilde{\calL_i}+\frac{|\calP_k\cup\calP_{k-1}|}{\gamma_{arm}}\cdot\ell^t(i)$}
\State{With probability $\gamma_{meta}$}
\State{\hspace{1.5em} Uniformly sample a meta-expert $j\in\{k,k-1\}$}
\State{\hspace{1.5em} Follow $\baseline_j$ and update $\widetilde{\calL(\calA_j)}\gets\widetilde{\calL(A_j)}+\frac{2}{\gamma_{meta}}\cdot\ell^t(i)$}
\State{Otherwise, follow a meta-expert $j\in\{k,k-1\}$ with probability proportional to $\exp(-\eta\widetilde{\calL(\calA_j)})$}
\EndFor
\State{Using estimated losses, evict arms covered by the filter}
\EndFor
\end{algorithmic}
\end{algorithm}

We now define the good and bad epochs with the new covering rules. To this end, we need the argument with the good and bad epochs in the same manner of \Cref{subsec:baseline-regret-analysis}, albeit with a different form of \emph{conceptual cover}. 
\begin{definition}[Relaxed conceptual cover of an expert]
\label{def:relax:concept:cover}
Given a set $\calF$ of filter arms and an exploration parameter $\gamma_{arm}\in(0,1/2)$. 
Let $\calL^{\calD}(\benchmark)$ be constructed in a similar manner as $\widetilde{\calL^{\calD}}(\benchmark)$ but using the loss $\calL$ of each expert, rather than the estimated cost $\widetilde{\calL}$. 
We say that an arm $i$ is \emph{covered} if
\begin{align*}
\calL^{\calD}(i) \geq \calL^{\calD}(\benchmark) - 3C\cdot\log(nT)\cdot \sqrt{\frac{\card{\calD}}{\gamma_{arm}}},
\end{align*}
where $C$ is the same absolute constant as in \Cref{def:relaxed-cover}.
\end{definition}
Essentially, the definitions of \Cref{def:relaxed-cover} and \Cref{def:relax:concept:cover} exactly mirror the case for \Cref{def:cover} and \defref{def:concept:cover} with the changed ``slackness'' accounting for the additive error. Similarly, we define an epoch $E$ as a \emph{good epoch} if the best expert $i^*$ will eventually be conceptually covered and evicted out of the pool if it is sampled in epoch $E$ (using the new covering notion as in \Cref{def:relax:concept:cover}). 
Alternatively, we define an epoch $E$ as \emph{bad epoch} if it is not good, i.e., if $i^*$ not would be evicted by the conceptual eviction rule if it is sampled in epoch $E$.

In what follows, we first analyze the space complexity for the algorithm before analyzing the regret. 
For the space complexity, we essentially argue that \lemref{lem:merge:size} holds for \emph{all} $\baseline_k$ algorithms as long as $k\leq \polylog(nT)$. 
\begin{lemma}
    \lemlab{lem:merge:size:all:baseline}
    Let $\calP^{(k)}_{A}$, $\calP^{(k)}_{B}$, and $\calP^{(k)}_{C}$ be the pools in the \textsc{Merge} algorithm for $\baseline_k$ for $k\leq \log^{10}(nT)$. With high probability, there is
    \begin{align*}
        |\calP^{(k)}_C|\le\max\paren{2\log^9(nT),\frac{1}{4}(|\calP^{(k)}_A|+|\calP^{(k)}_B|)}.
    \end{align*}
\end{lemma}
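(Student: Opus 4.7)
The plan is to show that the entire argument of \lemref{lem:merge:size} goes through essentially verbatim for every $\baseline_k$ with $k\le\log^{10}(nT)$, once we verify the underlying loss-estimation event that drives that proof. Recall that the proof of \lemref{lem:merge:size} hinges on an event $\calE$ stating that, for each expert $i$ still in the pool after having lifespan $\calD$, the maintained estimate $\widetilde{\calL^{\calD}}(i)$ is within the additive slack built into the cover definition. In the present setting the cover is relaxed as in \Cref{def:relaxed-cover}, so the event I will track is
\[
\left\lvert\widetilde{\calL^{\calD}}(i) - \calL^{\calD}(i)\right\rvert \;\le\; C\log(nT)\sqrt{\card{\calD}/\gamma_{\mathrm{arm}}},
\]
for every expert $i$ in $\calP^{(k)}$ and every merging lifespan $\calD$ that arises during the execution of $\baseline_k$.

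First, I will verify this estimation bound. On each time step of $\baseline_k$, with probability $\gamma_{\mathrm{arm}}$ we perform an arm-exploration step, sample uniformly from $\calP_k \cup \calP_{k-1}$, and increment $\widetilde{\calL_i}$ by $(\card{\calP_k\cup\calP_{k-1}}/\gamma_{\mathrm{arm}})\cdot\ell^t(i)$. Conditional on the high-probability event that every pool stays of size $\polylog(nT)$ (which will be obtained inductively in $k$ along with the main claim), each single-step contribution is an unbiased, bounded-variance estimator of $\ell^t(i)$ with second moment $\O{\polylog(nT)/\gamma_{\mathrm{arm}}}$. Applying Bernstein's inequality (\Cref{thm:bernstein}) to the sum over $t\in\calD$ and taking a union bound over the polynomially many choices of $(i,\calD)$ that arise yields the above additive bound with probability $1-1/\poly(nT)$. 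The induction on $k$ is sound because the size guarantee for level $k$ only needs the estimation accuracy at levels $\le k$, which only needs pool size bounds at levels $\le k-1$.

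Given this estimation event, I will now replay the proof of \lemref{lem:merge:size} step-by-step, with the only change being that every appearance of the covering slack $C n\log(nT)\card{\calD}^{\rho}$ is replaced by $C\log(nT)\sqrt{\card{\calD}/\gamma_{\mathrm{arm}}}$. The inductive construction of the sets $R_r$, $O_r$, $V_r(\cdot)$, and $\calW_r$ is identical. In the Case 1 calculation, the telescoping chain
\[
\calL^{\calI(t(i),t(j_0))}(i) \;\ge\; \sum_{m=1}^{r-1}\calL^{\calI(t(j_m),t(j_{m-1}))}(j_m) + \calL^{\calI(t(i),t(j_{r-1}))}(j) - O(1)
\]
produces a constant slack per telescoped term whose total is absorbed into the relaxed slack, because $\sqrt{D/\gamma_{\mathrm{arm}}}$ is superadditive over the partitions used. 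Hence every expert in $R_r\setminus\calF_q$ is still covered with high probability, and the Chernoff argument controlling $|R_r\setminus\calF_q|$, $|O_r|$, and the filter size $|\calF_q|$ is unchanged. Case 2 uses only combinatorial properties of the filter and is unaffected.

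Finally, I will take a union bound over the at most $\log^{10}(nT)$ levels and over the $\O{\log T}$ calls to $\textsc{Merge}$ within each level to conclude that, simultaneously for all $k\le\log^{10}(nT)$,
\[
|\calP^{(k)}_C| \;\le\; \max\!\left(2\log^9(nT),\tfrac{1}{4}(|\calP^{(k)}_A|+|\calP^{(k)}_B|)\right),
\]
with high probability. The main obstacle I anticipate is checking that the telescoped error accumulation under the square-root slack still dominates the per-segment loss gap; this requires noting that $\sum_m \sqrt{|\calI_m|/\gamma_{\mathrm{arm}}}\le\sqrt{|\calD|/\gamma_{\mathrm{arm}}}\cdot\polylog(nT)$ thanks to the Cauchy--Schwarz bound together with the fact that there are only $\polylog(nT)$ segments induced by the filter, which is exactly what the new cover definition was calibrated to absorb.
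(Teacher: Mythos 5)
Your plan matches the paper's intent exactly: the paper's own proof of this lemma is a one-line deferral to \lemref{lem:merge:size}, and what you have written is precisely the expansion one would supply — verify the estimation event under the relaxed (square-root) slack of \Cref{def:relaxed-cover} via Bernstein as in \lemref{lem:t23:arms:acc}, then replay the $R_r$/$O_r$/$V_r$/$\calW_r$ induction from \lemref{lem:merge:size} verbatim, closing the loop with a union bound over the $\polylog(nT)$ levels and merge calls. One terminological slip: $\sqrt{D/\gamma_{\mathrm{arm}}}$ is not superadditive over a partition of $\calD$ — the sum of square-roots over segments in fact \emph{exceeds} $\sqrt{D/\gamma_{\mathrm{arm}}}$ — but your parenthetical correction via Cauchy--Schwarz (at most $\polylog(nT)$ segments, hence $\sum_m\sqrt{|\calI_m|/\gamma_{\mathrm{arm}}}\le\sqrt{D/\gamma_{\mathrm{arm}}}\cdot\polylog(nT)$) is the right statement, and the extra $\polylog$ factor is absorbed into the cover constant exactly as in the original proof. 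Your induction-on-$k$ bootstrap is stated a bit loosely (the variance of a level-$k$ loss estimate depends on $|\calP_k\cup\calP_{k-1}|$, so it needs the size bound at level $k$ itself, not just $k-1$; this is the standard "both properties hold initially and each preserves the other" bootstrap), but the argument is sound.
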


The proof of \lemref{lem:merge:size:all:baseline} follows largely from the same proof of \lemref{lem:merge:size}, and we omit the details to avoid excessive repetition of the texts.
By \lemref{lem:merge:size:all:baseline}, we could show the memory efficiency of \algref{alg:k:for:boost} as follows.
\begin{lemma}
\lemlab{lem:boosting-pool-mem}
With high probability, the memory used by \algref{alg:k:for:boost} is at most $\O{\log^{30}(nT)}$ words.
\end{lemma}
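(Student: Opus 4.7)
The plan is to mirror the proof of \lemref{lem:pool:mem} in a level-by-level fashion, using \lemref{lem:merge:size:all:baseline} as the core engine, and then pay a multiplicative factor for the recursion depth $k\le\polylog(nT)$. First I would observe that the state of $\baseline_k$ at any time consists of: (i) the hierarchical pool $\calP_k=\calP_{k,1}\cup\cdots\cup\calP_{k,Q}$ together with the per-interval estimated losses $\widetilde{\calL}(i)$ maintained for each arm $i\in\calP_k$, (ii) the state of the recursively instantiated $\baseline_{k-1}$ running on $\calP_{k-1}$, and (iii) the two meta-expert loss estimates $\widetilde{\calL(\calA_k)},\widetilde{\calL(\calA_{k-1})}$, which take only $\O{1}$ words each.

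Next I would bound the contribution of a single level. Exactly as in \lemref{lem:num:experts}, applying \lemref{lem:merge:size:all:baseline} inductively over the $Q=\O{\log(nT)}$ sub-pools (together with a Chernoff bound for the initial sampling into $\calP_{k,1}$) shows that with high probability $|\calP_{k,q}^{(\tau)}|\le 2\log^9(nT)$ for every sub-pool and every epoch $\tau$. Summing over the $Q$ sub-pools gives $|\calP_k^{(t)}|\le\O{\log^{10}(nT)}$ at every time $t$. Because for each arm $i\in\calP_k$ we must keep one estimated loss entry per lifespan interval, and the number of such intervals at level $k$ is also $\O{\log^{10}(nT)}$ (it is driven by the arrival times of the other arms in the same pool), the words of memory used at level $k$ alone are $\O{\log^{20}(nT)}$.

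Then I would iterate this over the levels. Since $\baseline_k$ internally instantiates $\baseline_{k-1}$, whose storage is bounded by the same argument applied with $k-1$, an easy induction yields that the total memory of $\baseline_k$ is at most the sum over $j\in\{0,1,\ldots,k\}$ of the per-level bound $\O{\log^{20}(nT)}$. Taking a union bound over all $k\le\polylog(nT)$ levels ensures the high-probability statement simultaneously at every level, so the overall memory is $k\cdot\O{\log^{20}(nT)}=\O{\log^{30}(nT)}$ words, as claimed. The meta-expert storage contributes only $\O{k}$ additional words and is absorbed.

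The main obstacle I anticipate is not the arithmetic but justifying that \lemref{lem:merge:size:all:baseline} may be applied cleanly at every level despite the fact that at higher levels the estimated losses $\widetilde{\calL}$ are produced by a randomized exploration procedure (driven by $\gamma_{arm}$, $\gamma_{meta}$) that is shared across levels. I would handle this by conditioning on the high-probability event that the loss estimates at each level are within the additive slack tolerated by \defref{def:relaxed-cover} (so that \textsc{Filter} behaves as analyzed), and then invoke \lemref{lem:merge:size:all:baseline} in a black-box manner on each level. A union bound over the $\polylog(nT)$ levels preserves high probability overall, and completes the argument.
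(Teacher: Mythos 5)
Your proof is correct and takes essentially the same route as the paper's: bound each level's pool size at $\O{\log^{10}(nT)}$ via the same \textsc{Merge}/Chernoff argument used in \lemref{lem:num:experts}, multiply by the $\O{\log^{10}(nT)}$ words of per-arm loss bookkeeping to get $\O{\log^{20}(nT)}$ per level (mirroring \lemref{lem:pool:mem}), then multiply by the number of levels to get $\O{\log^{30}(nT)}$. The one place you are slightly looser than the paper is at the very end: you write ``$k\le\polylog(nT)$'' and then conclude $k\cdot\O{\log^{20}(nT)}=\O{\log^{30}(nT)}$, which implicitly requires $k\le\O{\log^{10}(nT)}$; the paper makes this explicit by invoking the constraint $k\le\log^{10}(nT)$ from \lemref{lem:merge:size:all:baseline} (``we (deterministically) maintain at most $\log^{10}(nT)$ levels''). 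Your extra care in conditioning on the loss-estimate accuracy event before invoking the merge lemma at each level, and union-bounding over levels, is a valid and clean way to state what the paper leaves implicit.
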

\begin{proof}
With the same argument we used in \lemref{lem:pool:mem}, we could show that for each $\baseline_k$ algorithm, there are at most $\O{\log^{10}(nT)}$ arms in $\calQ^{(t)}$ for any time $t$, and the loss for each arm can be stored with $\O{(\log^{10}(nT)}$ words. Furthermore, we (deterministically) maintain at most $\log^{10}(nT)$ levels of $\baseline$. This leads to the desired statement of at most $\O{\log^{30}(nT)}$ words of memory.
\end{proof}

\begin{lemma}
\lemlab{lem:t23:arms:acc}
Consider \algref{alg:k:for:boost} and suppose $\ell^t(i)(t)\in[0,1]$ for all $t\in[T]$. 
For any arm $i\in[n]$ that has been in the pool $\calP$ over a duration $\calD$ of length $D$, we have
\[\left\lvert\widetilde{\calL_i}-\sum_{t\in\calD}\ell^t(i)\right\rvert\le\sqrt{\frac{D}{\gamma_{arm}}}\cdot\polylog(nT),\]
with high probability.
\end{lemma}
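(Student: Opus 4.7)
\begin{proofof}{\lemref{lem:t23:arms:acc} (Proposed Plan)}
The plan is to mimic the Bernstein argument used in \lemref{lem:two:arms:acc}, but adapted to the single-query setting where estimations only happen during the arm-exploration steps with probability $\gamma_{arm}$, so an additional $1/\gamma_{arm}$ factor enters the variance. Write $\widetilde{\calL_i} = \sum_{t \in \calD} \widetilde{\ell^t}(i)$, where by the update rule in \algref{alg:k:for:boost} we have $\widetilde{\ell^t}(i) = \frac{|\calP_t|}{\gamma_{arm}} \cdot \ell^t(i)$ with probability $\frac{\gamma_{arm}}{|\calP_t|}$ and $\widetilde{\ell^t}(i) = 0$ otherwise, where $\calP_t$ denotes the relevant union of pools at time $t$.

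First, I would condition on the high-probability event $\calE$ from \lemref{lem:boosting-pool-mem} that $|\calP_t| \leq \polylog(nT)$ for every $t \in [T]$. Under $\calE$, each random variable $\widetilde{\ell^t}(i)$ is bounded in $[0, \polylog(nT)/\gamma_{arm}]$, and conditional on the history $\calF_{t-1}$ (which fixes $\calP_t$), we have $\Ex{\widetilde{\ell^t}(i) \mid \calF_{t-1}} = \ell^t(i)$ and
\[
\Var\paren{\widetilde{\ell^t}(i) \mid \calF_{t-1}} \leq \Ex{\widetilde{\ell^t}(i)^2 \mid \calF_{t-1}} \leq \frac{|\calP_t|}{\gamma_{arm}} \cdot \ell^t(i)^2 \leq \frac{\polylog(nT)}{\gamma_{arm}}.
\]
Summing the conditional variances across $t \in \calD$ yields a total variance proxy of at most $\sigma^2 \leq D \cdot \polylog(nT)/\gamma_{arm}$, and the uniform bound on each centered increment is $M \leq \polylog(nT)/\gamma_{arm}$.

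Since the exploration coin and the uniform choice of arm are drawn freshly at each time $t$, the sequence $\{\widetilde{\ell^t}(i) - \ell^t(i)\}_{t \in \calD}$ is a martingale difference sequence with respect to $\calF_t$. I would therefore apply Freedman's inequality (the martingale version of Bernstein's inequality, \Cref{thm:bernstein}) with deviation $\tau = \sqrt{D/\gamma_{arm}} \cdot \polylog(nT)$. The bound gives
\[
\PPr{\abs{\widetilde{\calL_i} - \sum_{t \in \calD} \ell^t(i)} \geq \tau \,\big|\, \calE} \leq 2\exp\paren{-\frac{\tau^2}{2\sigma^2 + \frac{2}{3}M\tau}} \leq \frac{1}{\poly(nT)},
\]
where the choice of polylog factor in $\tau$ is taken large enough to dominate both the variance and range contributions in the denominator. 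A union bound with the failure probability of $\calE$ and over all arms $i$ and durations $\calD$ (there are only $\poly(T)$ relevant choices) then gives the statement with the claimed high probability.

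The main subtlety is that $|\calP_t|$ is itself random and depends on the past, which is why I invoke the martingale form of Bernstein rather than the classical i.i.d.\ version; once conditioned on $\calE$, however, the calculation is essentially routine and parallels \lemref{lem:two:arms:acc} with the extra $1/\gamma_{arm}$ factor tracked carefully through the variance bound.
\end{proofof}
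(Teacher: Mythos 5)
Your proposal follows essentially the same route as the paper's proof: condition on the high-probability event that $|\calP_t|\le\polylog(nT)$, compute the conditional mean ($\ell^t(i)$) and conditional second moment (of order $\polylog(nT)/\gamma_{arm}$) of each single-day estimate $\widetilde{\ell^t}(i)$, and concentrate via a Bernstein-type inequality to obtain additive error $\sqrt{D/\gamma_{arm}}\cdot\polylog(nT)$. The one place you are more careful than the paper is in observing that $|\calP_t|$ is history-dependent, so the increments form a martingale difference sequence rather than being independent, and accordingly invoking Freedman's inequality instead of the i.i.d.\ form of Bernstein (\Cref{thm:bernstein}) that the paper cites; this is the right way to make the step fully rigorous, and your explicit union bound over arms and the $\poly(T)$ candidate durations is likewise the correct supporting detail that the paper leaves implicit.
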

\begin{proof}
For any $t \in \calD$, define $\widetilde{\ell^{t}}(i)$ as the estimate corresponding to time $t$ that contributes to $\widetilde{\calL_i}$. 
Let $\calP_t$ represent the state of $\calP$ at time $t$. 
Consider the event $\calE$ where $|\calP_t|\le\polylog(nT)$ holds for all $t\in[T]$, with probability at least $1-\frac{1}{\poly(nT)}$ (\lemref{lem:t23:arms:acc}). 

Specifically, we have:
\[
\widetilde{\ell^{t}}(i)=
\begin{cases}
\frac{|\calP_t|}{2\gamma_{arm}}\cdot\ell^t(i) & \text{with probability }\gamma_{arm}\cdot\frac{2}{|\calP_t|}, \\[6pt]
0 & \text{with probability }1-\gamma_{arm}\cdot\frac{2}{|\calP_t|}.
\end{cases}
\]
Conditioned on $\calE$, it follows that:
\[\Ex{\widetilde{\calL_i}(t)}=\ell^t(i), \qquad 
\Ex{\left(\widetilde{\calL_i}(t)\right)^2}=\frac{\calL_i(t)^2}{\gamma_{arm}}\cdot\polylog(n).\]
Applying Bernstein's inequality, c.f. \Cref{thm:bernstein}, we obtain that with high probability:
\[\left\lvert\widetilde{\calL_i}-\sum_{t\in\calD} \ell^t(i)\right\rvert\le\sqrt{\frac{D}{\gamma_{arm}}}\cdot\polylog(nT),\]
provided that $\ell^t(i)\in[0,1]$ for all $t\in[T]$.
\end{proof}

\begin{lemma}
\lemlab{lem:t23:meta:acc}
Consider \algref{alg:k:for:boost} and suppose $\ell^t(i)\in[0,1]$ for all $t\in[T]$.
For a fixed $t\in[T]$ and each $j\in\{1,2\}$, let $L(\calA_j)$ be the loss of $\calA_j$ up to and including time $t$, and let $\widetilde{\calL}(\calA_j)$ be the estimate. 
Then
\[\left\lvert\widetilde{\calL}(\calA_j)-\calL(\calA_j)\right\rvert\le\sqrt{\frac{B_k}{\gamma_{meta}}}\cdot\polylog(nT),\]
with high probability.
\end{lemma}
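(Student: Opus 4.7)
The plan is to mirror the argument of \lemref{lem:two:meta:acc}, adapting it from the two-query setting to the present single-query setting where meta-exploration now fires only with probability $\gamma_{meta}$. First I would fix a single epoch of length $B_k$ and decompose the estimate as $\widetilde{\calL}(\calA_j)=\sum_{t}\widetilde{\calL}(\calA_j,t)$, where $\widetilde{\calL}(\calA_j,t)$ denotes the contribution produced on day $t$. By the update rule in \algref{alg:k:for:boost}, the event that triggers a meta-update for a specific $j\in\{k,k-1\}$ fires with probability $\gamma_{meta}/2$ (we first enter the meta-exploration branch with probability $\gamma_{meta}$, then pick $j$ uniformly from two choices); conditional on firing, we set $\widetilde{\calL}(\calA_j,t)=\tfrac{2}{\gamma_{meta}}\cdot\ell^t(i_t^{\calA_j})$, where $i_t^{\calA_j}$ is the arm that $\calA_j$ would have played at time $t$, and otherwise $\widetilde{\calL}(\calA_j,t)=0$.

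Next, conditioning on the high-probability event $\calE$ from \lemref{lem:boosting-pool-mem} that all pool sizes are at most $\polylog(nT)$ and on the history used to determine $i_t^{\calA_j}$, the meta-exploration coin at time $t$ is independent of $i_t^{\calA_j}$, so a direct calculation gives
\[
\Ex{\widetilde{\calL}(\calA_j,t)}=\ell^t(i_t^{\calA_j})=\calL(\calA_j,t),\qquad \Ex{\widetilde{\calL}(\calA_j,t)^2}\le \tfrac{2}{\gamma_{meta}},
\]
since $\ell^t(i)\in[0,1]$. Thus $\widetilde{\calL}(\calA_j)$ is unbiased for $\calL(\calA_j)$, the per-step contributions are conditionally independent across $t$, their total variance is at most $\sigma^2=2B_k/\gamma_{meta}$, and each term is bounded by $M=2/\gamma_{meta}$.

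Finally, I would invoke Bernstein's inequality (\Cref{thm:bernstein}) with deviation $\tau=C\sqrt{B_k/\gamma_{meta}}\cdot\log(nT)$ for a sufficiently large absolute constant $C$. In the regime $B_k\ge C'\log^2(nT)/\gamma_{meta}$, the variance term $2\sigma^2$ dominates $\tfrac{2}{3}M\tau$ in the Bernstein denominator, so
\[
\PPr{\,|\widetilde{\calL}(\calA_j)-\calL(\calA_j)|\ge \tau\,}\le 2\exp\!\left(-\Omega(\log^2(nT))\right)\le \tfrac{1}{\poly(nT)},
\]
which is the claimed bound $\sqrt{B_k/\gamma_{meta}}\cdot\polylog(nT)$. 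A union bound over the at most $T$ time steps (and over the two meta-experts $j\in\{k,k-1\}$) preserves high probability.

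The main subtlety, and the only place where real care is needed, is the conditioning structure: the arm $i_t^{\calA_j}$ played by the meta-expert depends on its internally maintained estimates, which are themselves updated from the arm-exploration coins of the present and past days. These must be distinguished from the meta-exploration coin fired at time $t$, which is the only randomness driving the Bernstein concentration. Once the filtration is set up so that $i_t^{\calA_j}$ is measurable before the meta-coin at time $t$ is flipped---exactly as in the analogous proof of \lemref{lem:two:meta:acc}---independence holds and the calculation above is routine.
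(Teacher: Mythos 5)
Your proof is correct and follows essentially the same approach as the paper: decompose the estimator over days, note that the meta-exploration coin fires with probability $\gamma_{meta}/2$ giving an unbiased estimator with second moment $O(1/\gamma_{meta})$, and apply Bernstein's inequality. Your slightly more careful discussion of the filtration — distinguishing the meta-coin at time $t$ from the history that determines $i_t^{\calA_j}$ — makes explicit what the paper leaves implicit, and your second-moment bound of $2/\gamma_{meta}$ is a (harmless) slight sharpening of the paper's $4/\gamma_{meta}$.
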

\begin{proof}
Consider a fixed epoch of length $B_k$. 
Let $t\in[T]$ be given, and let $\calP_t$ represent the state of $\calP$ at time $t$.  
Define $\widetilde{\calL}(\calA_j,t)$ as the contribution of the estimate at time $t$ toward $\widetilde{\calL}(\calA_j)$, and let $L(\calA_j,t)$ denote the actual loss of $\calA_j$ at time $t$.  
Let $\calE$ be the event that $|\calP_t| \le \polylog(nT)$ holds for all $t \in [T]$, ensuring that $\PPr{\calE}\ge 1-\frac{1}{\poly(nT)}$ (\lemref{lem:t23:arms:acc}).  

We observe that $\widetilde{\calL}(\calA_j,t)=\frac{2}{\gamma_{meta}}\cdot L(\calA_j,t)$ with probability $\frac{\gamma_{meta}}{2}$, and $\widetilde{\calL}(\calA_j,t)=0$ with probability $1-\frac{\gamma_{meta}}{2}$.  
Conditioned on $\calE$, it follows that  
\[
\Ex{\widetilde{\calL}(\calA_j,t)}=\calL(\calA_j,t),\qquad \Ex{(\widetilde{\calL_i(t)})^2}\le\frac{4}{\gamma_{meta}},
\]  
assuming $\ell^t(i)\in[0,1]$ for all $t\in[T]$.  
Therefore, applying Bernstein’s inequality (cf. \Cref{thm:bernstein}), we obtain  
\[
\PPr{\left|\widetilde{\calL_i}-\sum_{t\in\calD} \ell^t(i)\right|\ge\sqrt{\frac{B_k}{\gamma_{meta}}} \cdot \polylog(nT)}\le\frac{1}{\poly(nT)},
\]  
as required.
\end{proof}

\begin{lemma}
\lemlab{lem:bl:blackbox:regret}
For any $k$, the regret of $\baseline_k$ on a good epoch, as defined in \Cref{def:relax:concept:cover}, with high probability, is at most
\begin{align*}
\left(\gamma_{meta}\cdot B_k+\gamma_{arm}\cdot B_k + \sqrt{\frac{B_k}{\gamma_{arm}}}+\sqrt{\frac{B_k}{\gamma_{meta}}}\right)\cdot\polylog(nT).
\end{align*}
\end{lemma}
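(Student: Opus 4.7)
The plan is to bound the regret on a good epoch of length $B_k$ by summing four contributions matching the four terms in the statement. I first condition on the joint success of \lemref{lem:t23:arms:acc} (arm-loss accuracy up to $\sqrt{B_k/\gamma_{arm}}\cdot\polylog(nT)$), \lemref{lem:t23:meta:acc} (meta-expert accuracy up to $\sqrt{B_k/\gamma_{meta}}\cdot\polylog(nT)$), and \lemref{lem:boosting-pool-mem} (pool sizes $\polylog(nT)$), each of which holds with probability $1-1/\poly(nT)$. I also fix the randomness of the $\gamma_{arm}$ and $\gamma_{meta}$ coins, so that the estimated losses $\widetilde{\calL_i}$ and $\widetilde{\calL(\calA_j)}$ become deterministic functions of the adversarial loss sequence. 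This mirrors the dependency-decoupling strategy used in \Cref{sec:two-query-alg} and is what allows the subsequent MWU analyses to be applied as black boxes.

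Next I split the $B_k$ days into exploration and exploitation days. The arm-exploration days number at most $\gamma_{arm}B_k\cdot\polylog(nT)$ by a Chernoff bound, and each contributes at most $1$ loss above $\ell^t(i^*)$, yielding additive regret $\gamma_{arm}B_k\cdot\polylog(nT)$. Identical reasoning on meta-exploration days gives $\gamma_{meta}B_k\cdot\polylog(nT)$. On exploitation days the algorithm samples a meta-expert $j\in\{k,k-1\}$ via exponential weights on $\widetilde{\calL(\calA_j)}$, so \Cref{lem:EXP3-with-approx-loss} applied over the exploitation days, with estimation error $\sqrt{B_k/\gamma_{meta}}\cdot\polylog(nT)$, gives cumulative loss within $\sqrt{B_k/\gamma_{meta}}\cdot\polylog(nT)$ of the better of $\calA_k,\calA_{k-1}$.

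I then argue the better meta-expert is itself competitive with $\ell^t(i^*)$. One of the two meta-experts plays an exponential-weights distribution on pool arms using $\widetilde{\calL_i}$; invoking \Cref{lem:EXP3-with-approx-loss} a second time over the same duration with error $\sqrt{B_k/\gamma_{arm}}\cdot\polylog(nT)$ yields regret $\sqrt{B_k/\gamma_{arm}}\cdot\polylog(nT)$ against the best arm in the pool. By the good-epoch assumption and \Cref{def:relax:concept:cover}, some filter expert has loss within $3C\log(nT)\sqrt{B_k/\gamma_{arm}}$ of $\calL(i^*)$, so chaining these three bounds with the exploration costs produces the stated sum.

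The main obstacle will be the two-level chaining: once the exploration coins are fixed, both the meta-level MWU and the inner arm-level MWU must be analyzable as black boxes via \Cref{lem:EXP3-with-approx-loss} without randomness leaking back in from the pool-update and eviction machinery, and the ``comparator'' for the inner MWU must actually track $i^*$ via the good-epoch guarantee. This is handled by the same fix-and-decouple trick underlying \Cref{sec:two-query-alg}, now adapted to a setting where exploration itself costs regret, which is precisely why both $\gamma$ parameters appear additively in the bound rather than only inside the $\sqrt{\cdot}$ terms.
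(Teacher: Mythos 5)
Your proposal follows essentially the same route as the paper's proof: bound the exploration days by $\gamma_{arm}B_k+\gamma_{meta}B_k$, apply \Cref{lem:EXP3-with-approx-loss} once at the meta-level (using \lemref{lem:t23:meta:acc}) to get within $\sqrt{B_k/\gamma_{meta}}\cdot\polylog(nT)$ of the better of $\calA_k,\calA_{k-1}$, apply it again at the arm level (using \lemref{lem:t23:arms:acc}) to get the inner MWU within $\sqrt{B_k/\gamma_{arm}}\cdot\polylog(nT)$ of the best pool arm, and invoke the good-epoch/conceptual-cover guarantee to relate that arm to $i^*$. The only cosmetic difference is that you make the Chernoff bound on the number of exploration days and the fixing of exploration randomness explicit up front, whereas the paper states the exploration count in expectation and handles the randomness fixing in the surrounding lemmas; both are fine.
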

\begin{proof}
In expectation, we have $\gamma_{arm}\cdot B_k+\gamma_{meta}\cdot B_k$ times of exploration in an epoch of length $B_k$ that each incur cost at most $1$. 
We upper bound the regret on those days simply by 
\[\gamma_{arm}\cdot B_k+\gamma_{meta}\cdot B_k.\]
It remains to consider the losses on days where a meta-expert $\{\calA_{k-1},\calA_k\}$ is selected and followed.

In a good epoch, there is an expert $i$ in $\calP$ that covers the best expert $i^*$. 
Hence, $\widetilde{\calL^{B_k}(i)}\le\widetilde{\calL^{B_k}(i^*)}+C\cdot \log(nT)\cdot\sqrt{\frac{B_k}{\gamma_{arm}}}$. 
By \lemref{lem:t23:arms:acc}, we have that with high probability, the estimates of the losses of all arms $i\in\calP$ are within additive error $\sqrt{\frac{D(i)}{\gamma_{arm}}}\cdot\polylog(nT)$. 
Therefore, we have $\calL^{B_k}(i)\le\calL^{B_k}(i^*)+3C\cdot n\log(nT)\cdot\sqrt{\frac{B_k}{\gamma_{arm}}}$. 
By the correctness of MWU (EXP3) on the loss sequence $\widetilde{\calL(i)}$ (\Cref{lem:EXP3-with-approx-loss}), the cost of $\calA_k$ is at most 
\[\sqrt{\frac{B_k}{\gamma_{arm}}}\cdot\polylog(nT)+3C\cdot \log(nT)\cdot\sqrt{\frac{B_k}{\gamma_{arm}}}+\min_{i\in\calP_{k-1}}\calL^{B_k}(i)\] 
by \Cref{lem:EXP3-with-approx-loss}.

Similarly, by \lemref{lem:t23:meta:acc}, the estimates of the losses of all meta-experts are within additive error $\sqrt{\frac{B_k}{\gamma_{meta}}}\cdot\polylog(nT)$. 
By the correctness of MWU (EXP3) on the meta-experts $\calA_k$ and $\calA_{k-1}$ (\Cref{lem:EXP3-with-approx-loss}), we have that the loss of the algorithm on exploitation days is at most 
\[\sqrt{\frac{B_k}{\gamma_{meta}}}\cdot\polylog(nT)+\sqrt{\frac{B_k}{\gamma_{arm}}}\cdot\polylog(nT)+\min_{i\in\calP_{k-1}}\calL^{B_k}(i)+3C\cdot \log(nT)\cdot\sqrt{\frac{B_k}{\gamma_{arm}}}.\]

Therefore, with high probability, the overall regret is at most 
\[\left(\gamma_{meta}\cdot B_k+\gamma_{arm}\cdot B_k + \sqrt{\frac{B_k}{\gamma_{arm}}}+\sqrt{\frac{B_k}{\gamma_{meta}}}\right)\cdot\polylog(nT),\]
as desired.
\end{proof}

The following proof follows similarly from \lemref{lem:bad:epochs}. 

\begin{lemma}
\lemlab{lem:gen:bad:epochs}
Let $\calH$ be the set of bad epochs by \algref{alg:k:for:boost}. 
With high probability, we have $|\calH|\le n\cdot\polylog(n)$.
\end{lemma}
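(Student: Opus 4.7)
The plan is to mirror the argument of \lemref{lem:bad:epochs} line-by-line, using the relaxed conceptual cover of \defref{def:relax:concept:cover} in place of the original conceptual cover. A key observation enabling this transfer is that \defref{def:relax:concept:cover} is defined entirely in terms of the \emph{true} losses $\calL$ of the experts (rather than their estimates), so the good/bad classification of each epoch is a deterministic function of the loss sequence and the identities of the sampled-and-active experts, independent of the exploration coins $\gamma_{arm}$ and $\gamma_{meta}$ used inside \algref{alg:k:for:boost}.

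First, I would condition on a fixed loss sequence $\ell^1,\ldots,\ell^T$ and a fixed sequence of sampled-and-active experts $Y_t$ for each epoch $t$ of length $B_k$ (those sampled into the filter or used for size estimation in \algref{alg:merge-new}), letting $W_t$ denote the set of sampled-and-passive experts in epoch $t$. For any bad epoch $t\in\calH$, I would argue $i^*\notin Y_t$: otherwise $i^*$ would itself lie in the filter set $\calF$ used by \algref{alg:strong-benchmark}, so $\calL^{\calD}(\benchmark)\le\calL^{\calD}(i^*)$ and $i^*$ would be trivially covered with zero slack, contradicting that $t$ is bad. Following the sampling rules of the algorithm,
\begin{align*}
\PPr{i^*\in W_t \mid Y_1,\ldots,Y_{T/B_k},\ell^1,\ldots,\ell^T}
\ge \frac{1}{n}\cdot\left(1-\frac{1}{\log^4(nT)}\right)^{2K\cdot 2Q} \ge \frac{1}{2n},
\end{align*}
exactly as in \lemref{lem:bad:epochs}, coming from the probability of being sampled into the pool and surviving all filter/size-estimation rounds as a passive expert. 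Since the events $\{i^*\in W_t\}_{t\in\calH}$ are mutually independent under the conditioning (the passive-pool sampling uses fresh coins in each epoch), a standard Chernoff bound yields
\begin{align*}
\PPr{\sum_{t\in\calH}\mathbbm{1}[i^*\in W_t] \le \frac{|\calH|}{4n} \,\Big|\, Y_1,\ldots,Y_{T/B_k},\ell^1,\ldots,\ell^T} \le \exp\!\left(-\frac{|\calH|}{16n}\right).
\end{align*}
The left-hand count is upper bounded by the total pool size, which by \lemref{lem:boosting-pool-mem} is $\polylog(nT)$ with high probability. Setting $|\calH|\ge Cn\polylog(nT)$ for a sufficiently large constant $C$ drives the right side below $1/\poly(nT)$, yielding $|\calH|\le n\cdot\polylog(nT)$ with high probability, which implies the lemma.

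The step I expect to be the main obstacle is verifying the conditional independence of $\{i^*\in W_t\}$ across $t\in\calH$ despite the richer multi-level structure of \algref{alg:k:for:boost}: updates to $\widetilde{\calL_i}$ now depend on both the $\gamma_{arm}$- and $\gamma_{meta}$-exploration coins, which couple the algorithm's trajectory across epochs. Crucially, however, because \defref{def:relax:concept:cover} depends only on the true losses and on the set $Y_t$ of active experts, fixing the loss sequence together with $\{Y_t\}$ already pins down $\mathbbm{1}[t\in\calH]$, and the remaining passive-pool sampling randomness (which decides $W_t$) is drawn independently per epoch. This decouples the Chernoff summands as needed, so the argument of \lemref{lem:bad:epochs} carries over essentially unchanged.
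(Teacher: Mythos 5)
Your proposal correctly mirrors the structure of \lemref{lem:bad:epochs}: condition on the loss sequence and the active experts $\{Y_t\}$, argue $i^*\notin Y_t$ for every bad epoch (so $i^*\in W_t$ would make $i^*$ survive indefinitely), lower-bound $\PPr{i^*\in W_t}$ by $1/(2n)$, apply a Chernoff bound to the independent indicators, and combine with the pool-size bound from \lemref{lem:boosting-pool-mem}. The paper states this lemma without a separate proof, saying only that it "follows similarly" from \lemref{lem:bad:epochs}, and your argument is exactly that transfer.

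One point deserves correction. You assert that fixing the loss sequence together with $\{Y_t\}$ already pins down $\mathbbm{1}[t\in\calH]$, because the relaxed conceptual cover in \defref{def:relax:concept:cover} is stated in terms of true losses. The benchmark in that definition, however, is computed over the set of \emph{alive} active experts, and which active experts are still alive at a given time depends on the eviction history, which the algorithm carries out using the estimated losses $\widetilde{\calL}$, and those estimates depend on the $\gamma_{arm}$- and $\gamma_{meta}$-exploration coins. This is precisely why the paper's definition of bad epochs explicitly says to fix ``the exploration in EXP3'' alongside the loss sequence and the active experts. The fix is harmless and does not change your argument: simply include the exploration coins in the conditioning. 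Once loss sequence, $\{Y_t\}$, and exploration coins are all fixed, $\calH$ is a deterministic set, and $W_t$ (the passive-pool sampling) is drawn from fresh, per-epoch coins that are disjoint from all of the above, so the Chernoff summands remain conditionally independent and the bound $|\calH|\le n\cdot\polylog(nT)$ follows as you wrote. In short, the route is right and matches the paper; the ``independence from exploration coins'' justification should be replaced by ``also condition on the exploration coins and observe that $W_t$ is independent of that conditioning as well.''
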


We can now establish our main regret bound for the $k$-the recursive step.
\begin{lemma}
\lemlab{lem:gen:regret:add}
Let $R(B_k)$ be the regret of $\baseline_{k-1}$ on an epoch of length $B_k$. 
Let $\calE$ be the event that \lemref{lem:t23:arms:acc} and \lemref{lem:t23:meta:acc} hold. 
Conditioned on the event $\calE$, we have with high probability, the regret of \algref{alg:k:for:boost} is at most
\begin{align*}
\frac{T}{B_k}\cdot\left(\gamma_{meta}\cdot B_k+\gamma_{arm}\cdot B_k + n\cdot \sqrt{\frac{B_k}{\gamma_{arm}}}+\sqrt{\frac{B_k}{\gamma_{meta}}}\right)\cdot\polylog(nT)+n\cdot R(B_k)\cdot\polylog(nT).
\end{align*}
\end{lemma}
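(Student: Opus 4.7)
The plan is to decompose the $T$ days into the $T/B_k$ epochs of length $B_k$, classify each epoch as good or bad according to \defref{def:relax:concept:cover}, and then separately bound the three sources of regret within each epoch: (i) exploration days, (ii) the meta-level MWU between $\calA_k$ and $\calA_{k-1}$ on exploitation days, and (iii) the cost of whichever meta-expert serves as the benchmark for the epoch. The arithmetic should then rearrange into the claimed bound.

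First, I would control the exploration cost globally. On each day an exploration (either an arm-exploration with probability $\gamma_{arm}$ or a meta-exploration with probability $\gamma_{meta}$) incurs loss at most $1$. A standard Chernoff bound gives that the total exploration cost over all $T$ days is at most $(\gamma_{arm}+\gamma_{meta})\cdot T\cdot\polylog(nT)$ with high probability, which produces the $\frac{T}{B_k}\cdot(\gamma_{arm}B_k+\gamma_{meta}B_k)\cdot\polylog(nT)$ contribution after rewriting. Next, conditioned on $\calE$, \lemref{lem:t23:meta:acc} guarantees that the estimates $\widetilde{\calL(\calA_k)}$ and $\widetilde{\calL(\calA_{k-1})}$ (which are reset at the start of each epoch) are within $\sqrt{B_k/\gamma_{meta}}\cdot\polylog(nT)$ of the true meta-expert losses. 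Applying \Cref{lem:EXP3-with-approx-loss} at the meta level then shows that on every epoch the exploitation cost of \algref{alg:k:for:boost} is at most the cost of the better of $\{\calA_k,\calA_{k-1}\}$ on that epoch, plus $\sqrt{B_k/\gamma_{meta}}\cdot\polylog(nT)$ additive regret.

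For a good epoch I would invoke \lemref{lem:bl:blackbox:regret}: the filter covers $i^*$, hence $\calA_k$'s cost on the epoch is within $(n\sqrt{B_k/\gamma_{arm}}+\sqrt{B_k/\gamma_{meta}})\cdot\polylog(nT)$ of $\calL^{B_k}(i^*)$, so the meta-MWU comparison above gives the same bound for the algorithm. Summing over the at most $T/B_k$ good epochs yields the $\frac{T}{B_k}\cdot(n\sqrt{B_k/\gamma_{arm}}+\sqrt{B_k/\gamma_{meta}})\cdot\polylog(nT)$ contribution. For a bad epoch I cannot rely on $\calA_k$, but the competing meta-expert $\calA_{k-1}$ is a fresh instance of $\baseline_{k-1}$ run for time $B_k$, whose regret against $i^*$ over that epoch is by definition at most $R(B_k)$. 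Since \lemref{lem:gen:bad:epochs} bounds $|\calH|\le n\cdot\polylog(n)$ with high probability, the total bad-epoch contribution is at most $n\cdot R(B_k)\cdot\polylog(nT)$, matching the final term.

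The main subtlety, rather than a conceptual obstacle, is the same dependency issue highlighted in \secref{sec:tech-overview}: the good/bad classification, the inner-pool evictions, the meta-MWU weights, and the estimated losses all share randomness, so per-epoch guarantees cannot be applied naively. I would resolve this by the same decoupling as in \secref{sec:PR-alg-est-eviction} and \secref{sec:two-query-alg}: condition on the loss sequence, the sets of sampled and active arms for each epoch, and the arm-exploration and meta-exploration coins. Once these are fixed, the estimators $\widetilde{\calL_i}$ and $\widetilde{\calL(\calA_j)}$ become deterministic functionals, \lemref{lem:t23:arms:acc} and \lemref{lem:t23:meta:acc} are simultaneously valid by a union bound across the at most $T/B_k$ epochs, and the per-epoch EXP3-with-approximate-losses guarantee from \Cref{lem:EXP3-with-approx-loss} applies independently. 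Adding the exploration, good-epoch, and bad-epoch contributions then reproduces exactly the bound in the lemma statement.
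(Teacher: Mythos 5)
Your proposal is correct and follows essentially the same route as the paper: decompose the time horizon into $T/B_k$ epochs, classify each as good or bad via the relaxed conceptual cover, invoke \lemref{lem:bl:blackbox:regret} for good epochs, charge $R(B_k)$ to bad epochs, and bound $|\calH|$ by $n\cdot\polylog(nT)$ using \lemref{lem:gen:bad:epochs}. Your write-up is in fact somewhat more careful than the paper's terse proof—it explicitly separates the exploration cost from the meta-MWU regret and from the benchmark meta-expert's cost, and it explicitly names the conditioning on the loss sequence, the active/sampled arm sets, and the exploration coins needed to make the good/bad classification well-defined (the paper only gestures at this with one sentence). One small caveat worth being aware of: the statement of \lemref{lem:bl:blackbox:regret} does not carry the factor of $n$ on $\sqrt{B_k/\gamma_{arm}}$ that appears in the target bound, but the factor does appear in that lemma's proof (via the cover slack $3C\cdot n\log(nT)\cdot\sqrt{B_k/\gamma_{arm}}$); you correctly restore it when you invoke the lemma, which is consistent with what the paper actually proves and with the final statement being verified.
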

\begin{proof}
Observe that each epoch $i\in\left[\frac{T}{B_k}\right]$ is either a good epoch or a bad epoch. 
Moreover, given the fixing of the exploration times, as well as the fixing of the randomness for the pool sampling and filtration processes, the classification of an epoch into a good epoch or a bad epoch is well-defined. 
By \lemref{lem:bl:blackbox:regret}, each good epoch has regret at most $\left(\gamma_{meta}\cdot B_k+\gamma_{arm}\cdot B_k + \sqrt{\frac{B_k}{\gamma_{arm}}}+\sqrt{\frac{B_k}{\gamma_{meta}}}\right)\cdot\polylog(nT)$. 
On the other hand, each bad epoch can have regret up to $R(B_k)$. 
Fortunately, by \lemref{lem:gen:bad:epochs}, there are at most $n\cdot\polylog(nT)$ bad epochs with high probability. 
Therefore, the desired claim follows. 
\end{proof}

\begin{lemma}
\label{lem:boosting-one-query-main}
Let $\gamma_{arm}=\gamma_{meta}=\frac{1}{T^{1/3}}$. 
Let $F(k)=\frac{7\left(3 \cdot 7^{k} - 3^{k} \right)}{3 \cdot 7^{k+1} - 3^{k+1}}$ and let $G(k)=\frac{2\cdot 7^{k+1}}{3\cdot 7^{k+1}-3^{k+1}}$ and $B_k=n^{-\frac{1}{G(k-1)}}\cdot T^{F(k)}\cdot\polylog(nT)$. 
Then with high probability, the regret of \algref{alg:k:for:boost} for $B_k=T^{F(k)}$ is at most $nT^{G(k)}\cdot\polylog(nT)$. 
\end{lemma}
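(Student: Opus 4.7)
The plan is to prove the statement by induction on $k$, using \Cref{lem:gen:regret:add} as the main recursive inequality and verifying the algebraic identities satisfied by the sequences $F(k)$ and $G(k)$.

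For the base case $k=0$, the algorithm $\baseline_0$ (\algref{alg:base:for:boost}) does not have the meta-expert layer, so only the $\gamma_{arm}$ term contributes to exploration. Running through the same analysis as \Cref{lem:gen:regret:add} with $\gamma_{meta}$ removed and using the trivial per-epoch bound $B_0$ on bad epochs (since there is no $\baseline_{-1}$ to fall back on), combined with $|\calH| \le n\polylog(nT)$ from \Cref{lem:gen:bad:epochs}, the regret of $\baseline_0$ is at most
\[
\bigl(\gamma_{arm}T + nT^{7/6}/\sqrt{B_0} + nB_0\bigr)\cdot\polylog(nT).
\]
With $\gamma_{arm}=T^{-1/3}$, setting $B_0 = T^{7/9}$ balances $nT^{7/6}/\sqrt{B_0} = nB_0 = nT^{7/9}$, while $\gamma_{arm}T = T^{2/3}$ is dominated. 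Since $F(0)=G(0)=7/9$, this matches the claim.

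For the inductive step, assume the statement holds for $\baseline_{k-1}$, so that running $\baseline_{k-1}$ over a horizon of $B_k$ yields $R(B_k) \le n\cdot B_k^{G(k-1)}\cdot\polylog(nT)$. Plugging $\gamma_{arm}=\gamma_{meta}=T^{-1/3}$ and $B_k = T^{F(k)}$ into \Cref{lem:gen:regret:add}, the regret of $\baseline_k$ is at most
\[
\Bigl(T^{2/3} + nT^{7/6-F(k)/2} + n\cdot R(B_k)\Bigr)\cdot\polylog(nT).
\]
The crucial algebraic step is to verify the identities $F(k)\cdot G(k-1) = G(k)$ and $7/6 - F(k)/2 = G(k)$, which together mean $F(k)(G(k-1)+1/2) = 7/6$. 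These can be checked either directly from the closed-form expressions $F(k) = 7(3\cdot 7^k - 3^k)/(3\cdot 7^{k+1}-3^{k+1})$ and $G(k) = 2\cdot 7^{k+1}/(3\cdot 7^{k+1}-3^{k+1})$, or equivalently by recognizing that $G(k)$ satisfies the fixed-point recursion $G(k) = 7G(k-1)/(3(1+2G(k-1)))$ with $G(0)=7/9$, which is exactly what balancing the two nontrivial regret terms produces. Given these identities, the exploration term becomes $nT^{G(k)}$ and the bad-epoch term $n\cdot R(B_k)$ becomes $nT^{G(k)}$ after the $n$ factor is absorbed by the slightly refined choice $B_k = n^{-1/G(k-1)}\cdot T^{F(k)}\cdot\polylog(nT)$ stated in the lemma.

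The main obstacle is the bookkeeping of $n$-factors across the induction: a naive recursion would inflate the $n$-exponent at each level since $|\calH|\le n\polylog$ and $R(B_k)\le nB_k^{G(k-1)}$, giving an $n^2$ term. The refined $B_k$ with an $n^{-1/G(k-1)}$ factor is precisely designed to cancel this extra $n$ via the identity $B_k^{G(k-1)} = n^{-1}T^{G(k)}$, restoring the $n^1$ dependence. A secondary subtlety is the behavior of $R(B_k)$ across the algorithm's partition of $B_k$ into sub-epochs of length $B_{k-1}$: one must treat $\baseline_{k-1}$ as a black box over the horizon $B_k$ and apply the inductive hypothesis to that horizon, which is legitimate because the algorithm initializes $\baseline_{k-1}$ using the pool $\calP_{k-1}$ that is sampled independently from the same oblivious loss sequence.
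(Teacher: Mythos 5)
Your proposal is correct and follows essentially the same inductive argument as the paper: invoke \Cref{lem:gen:regret:add} with $\gamma_{arm}=\gamma_{meta}=T^{-1/3}$, and verify that the three resulting terms balance at $nT^{G(k)}$ under the given $F,G$. The paper's proof states only the identity $F(k)\,G(k-1)=G(k)$ and leaves the rest implicit; you explicitly verify the second identity $7/6-F(k)/2=G(k)$, give the equivalent fixed-point recursion $G(k)=7G(k-1)/(3(1+2G(k-1)))$, and spell out the $n$-cancellation $B_k^{G(k-1)}=n^{-1}T^{G(k)}$, all of which are useful clarifications. One small bookkeeping caveat (which the paper shares, since it states $B_k$ in two incompatible ways in the lemma): you plug $B_k=T^{F(k)}$ into the $T^{7/6}B_k^{-1/2}$ term but $B_k=n^{-1/G(k-1)}T^{F(k)}$ into the $R(B_k)$ term. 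To use the refined $B_k$ consistently you should note that the $T^{7/6}B_k^{-1/2}$ term then picks up a factor $n^{1/(2G(k-1))}$, and that this is still at most $n$ because $G(k-1)\ge 2/3$ for all $k\ge 0$; the conclusion is unaffected.
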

\begin{proof}
By \lemref{lem:gen:regret:add}, the regret of $\baseline_k$ is 
\[\frac{T}{B_k}\cdot\left(\gamma_{meta}\cdot B_k+\gamma_{arm}\cdot B_k + \sqrt{\frac{B_k}{\gamma_{arm}}}+\sqrt{\frac{B_k}{\gamma_{meta}}}\right)\cdot\polylog(nT)+n\cdot R(B_k)\cdot\polylog(nT).\]
We set $\gamma_{arm}=\gamma_{meta}=\frac{1}{T^{1/3}}$. 
Hence, the regret of $\baseline_k$ is 
\[T^{2/3}\cdot\polylog(nT)+T^{7/6}B_k^{-1/2}\cdot\polylog(nT)+n\cdot R(B_k)\cdot\polylog(nT).\]
For $k=0$, we have $R(B_k)=B_k$ and $F(0)=\frac{7}{9}$. 
Thus for $B_0=n^{-\frac{1}{G(-1)}}\cdot T^{F(0)}\polylog(nT)=T^{7/9}$, the regret is $nT^{7/9}\cdot\polylog(nT)=nT^{G(k)}\cdot\polylog(nT)$, which completes our base case.

Now, suppose the regret of $\baseline_{k-1}$ is $nT^{G(k-1)}\cdot\polylog(nT)$ and specifically, for an epoch of length $B_k$, the regret is $R(B_k)=n{B_k}^{G(k-1)}\cdot\polylog(nT)$. 
We set $B_k=n^{-\frac{1}{G(k-1)}}\cdot T^{F(k)}\cdot\polylog(nT)$, so that the regret of $\baseline_k$ is 
\[T^{2/3}\cdot\polylog(nT)+T^{7/6}B_k^{-1/2}\cdot\polylog(nT)+n\cdot T^{F(k)\cdot G(k-1)}\cdot\polylog(nT).\]
Note that $F(k)\cdot G(k-1)=G(k)$. 
Therefore, the regret of $\baseline_k$ is $n T^{G(k)}\cdot\polylog(nT)$, which completes our induction. 
\end{proof}

Combining \lemref{lem:boosting-pool-mem} and \Cref{lem:boosting-one-query-main} by letting $k=\polylog(nT)$ would immediately lead to our desired result in \Cref{thm:one-query}.

\Ttwooverthreeonequery*

\section{A Near-Optimal Regret Algorithm with Single-Query Signals and Random-Order Best Expert}
\label{sec:random-order-best-arm}
We now discuss the single-query algorithm that achieves near-optimal regret when the loss sequence of the best expert is random order, i.e., the algorithm and proof of \Cref{thm:one-query-best-random}.
This result could also be viewed as a near-optimal algorithm for \emph{adversarial bandits} with very mild distribution assumptions, i.e., only the best bandit has a random-order loss.

As we have observed in \Cref{sec:alg-bandits-full}, due to the losses incurred during the explorations, it is unclear how to achieve the optimal $\sqrt{T}$ regret for the single-query setting in the streaming model using the sampling-and-eviction framework.
As such, we proceed differently here with a ``binary search'' structure for the optimal loss of the best expert.
We start with $C\sqrt{nT}$ error for $C=1$, and we gradually increase the value of $C$ if no such arm satisfies the desired loss range.
Suppose the error of the best expert is $\gamma \sqrt{nT}$.
Since the best expert is in random order, we could also find an expert within the targeted error rate when our guess is around $\gamma \sqrt{nT}$.
On the other hand, if some expert becomes satisfactory \emph{before} $C$ increases to $\gamma$, it means the expert has a very low loss in some interval that outperforms the best expert. 
We could then account for this ``reverse regret'' in the interval, and even if the expert becomes bad later (which means the algorithm will ditch this expert and continue the search), the amount of ``reverse regret'' is enough for us to amortize the regret analysis to get the optimal.

The algorithm is as in \algref{alg:best-arm-random}. Compared to other algorithms we explored in this paper, the algorithm is also notably much simpler.

\FloatBarrier
\begin{algorithm}[!htb]
\caption{A near-optimal algorithm with bandit signals and random-order best expert}
\alglab{alg:best-arm-random}
\algorithmicrequire{Time horizon $T$}
\begin{algorithmic}[1]
\State{$C\gets 1$, $i\gets 1$,$B_C\gets\frac{100}{C+1}\sqrt{\frac{T}{n}}\log(nT)$}
\For{each time}
\State{Play expert $i$}
\State{Let $D_i$ be the interval since $i$ has been played}
\If{$|D_i|=\frac{1}{\eps^2}\cdot B_C$ for some $\eps\in(0,1]$ and $\frac{T}{|D_i|}\cdot\left(\sum_{t\in D_i}\ell^{t}(i)\right)-C\cdot\sqrt{nT}>\frac{C\eps}{2}\sqrt{nT}$}
\State{$i\gets i+1$}
\If{$i=n+1$}
\State{$C\gets C+1$, $i\gets 1$}
\EndIf
\EndIf
\EndFor
\end{algorithmic}
\end{algorithm}

Observe that the memory efficiency for \algref{alg:best-arm-random} is immediate, as in \Cref{lem:random-order-best-arm-space}.
\begin{lemma}
    \label{lem:random-order-best-arm-space}
    \algref{alg:best-arm-random} uses at most $\O{\log(nT)}$ bits of memory.
\end{lemma}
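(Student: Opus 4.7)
The plan is to enumerate the persistent state maintained by \algref{alg:best-arm-random} across time steps and verify that each piece fits in $O(\log(nT))$ bits. Inspecting the pseudocode, the algorithm keeps only a constant number of scalar quantities between iterations: the current threshold counter $C$, the current expert index $i$, the number of consecutive steps the current expert has been played (i.e., $|D_i|$), and the running cumulative loss $\sum_{t\in D_i}\ell^t(i)$. The value $B_C=\frac{100}{C+1}\sqrt{T/n}\log(nT)$ is a deterministic function of $C$ and the fixed input parameters, so it does not need separate storage. The trigger condition $|D_i|=B_C/\eps^2$ for some $\eps\in(0,1]$ likewise requires no extra memory, since given $|D_i|$ and $B_C$ we can just set $\eps=\sqrt{B_C/|D_i|}$ whenever $|D_i|\ge B_C$ and test the inequality on the fly.

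Next, I would bound each stored quantity by $\poly(nT)$. The index satisfies $1\le i\le n$, which costs $O(\log n)$ bits; the running length $|D_i|$ and cumulative loss both lie in $[0,T]$, each costing $O(\log T)$ bits. The only slightly nontrivial point is bounding $C$: since $C$ is incremented only after cycling through all $n$ experts, and the relevant regime for the argument is $C\le\sqrt{T/n}$ (beyond which $C\sqrt{nT}\ge T$ already exceeds any possible cumulative loss and therefore no further eviction can ever trigger), we can safely cap $C$ at $\O{\sqrt{T}}$, which takes $O(\log T)$ bits. Summing over the constantly many registers yields a total of $O(\log(nT))$ bits, as claimed. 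There is no real obstacle here beyond observing the $\poly(nT)$ ceiling on $C$; everything else is a direct read-off from the algorithm's state.
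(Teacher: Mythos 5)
Your proof is correct and takes the same high-level route as the paper's: enumerate the $O(1)$ scalar registers the algorithm persists across rounds ($C$, $i$, $|D_i|$, and the running sum $\sum_{t\in D_i}\ell^t(i)$) and bound each by $\poly(nT)$. You add one detail the paper's one-line proof glosses over — an unconditional bound $C = O(\sqrt{T/n})$, obtained by observing that once $C\sqrt{nT} \ge T$ the eviction test can never fire since $\frac{T}{|D_i|}\sum_{t\in D_i}\ell^t(i)\le T$; the paper instead only records the high-probability bound $C\le\gamma+1$ in \Cref{cor:random-order-C-bounded}, and its proof of the space lemma simply says "some auxiliary parameters." Your unconditional ceiling on $C$ is arguably the more appropriate argument here, since a worst-case space guarantee should hold deterministically rather than with high probability.
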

\begin{proof}
    \algref{alg:best-arm-random} cycle through the experts, and at any time, we only need to keep track of the statistics of a single expert. 
    The statistics only include the number of days, the cumulative loss, and some auxiliary parameters, which could all be recorded by $\O{\log(nT)}$ bits of memory.
\end{proof}

The rest of this section is to prove the regret of \algref{alg:best-arm-random}. To this end, we first introduce the following concentration bound.

\begin{proposition}[Tail bounds for sums of hypergeometric random variables]
\cite{hoeffding1994probability}
\label{thm:hoeffding:hypergeo}
Let $X \sim \text{Hypergeometric}(N, K, n)$ be a hypergeometric random variable with expectation $\Ex{X}=\frac{K}{N}\cdot n$. 
Then, for any $t<\frac{K}{N}$,
\[\PPr{|X-\Ex{X}|\ge tn}\le2\exp(-2t^2n).\]
\end{proposition}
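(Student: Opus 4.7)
The plan is to follow Hoeffding's original reduction argument, which establishes a convex ordering between the hypergeometric distribution and its binomial counterpart, and then to apply the standard Chernoff method. First I would write $X = \sum_{i=1}^n Z_i$, where $Z_i\in\{0,1\}$ indicates whether the $i$-th draw (without replacement) from the urn of $N$ items containing $K$ successes yields a success. Each $Z_i$ is marginally Bernoulli with parameter $p = K/N$, so $\Ex{X} = np$, but the $Z_i$ are not independent; they are, however, exchangeable.

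The crux of the argument is Hoeffding's comparison inequality: for any continuous convex function $\phi$,
\[
\Ex{\phi(X)} \le \Ex{\phi(Y)}, \qquad Y \sim \text{Binomial}(n, K/N).
\]
The way I would prove this is to realize the hypergeometric law as a mixture. Fix the urn contents as a deterministic vector $(a_1,\ldots,a_N) \in \{0,1\}^N$ with exactly $K$ ones, and take $X$ to be $\sum_{i=1}^n a_{\pi(i)}$ for a uniformly random permutation $\pi$. Then by a convexity argument (decomposing the uniform distribution on permutations as a suitable convex combination of product measures, or equivalently by the classical result that sums of without-replacement samples are majorized by sums of with-replacement samples), $\Ex{\phi(X)}$ is at most the expectation under the product law, which is exactly $\Ex{\phi(Y)}$.

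Once the comparison inequality is in hand, applied with $\phi(x) = e^{s(x-np)}$ for $s > 0$, it gives
\[
\Ex{e^{s(X-np)}} \le \Ex{e^{s(Y-np)}} \le e^{s^2 n/8},
\]
where the last step is the standard Hoeffding lemma for a sum of $n$ independent $[0,1]$-valued random variables. Markov's inequality then yields $\PPr{X - np \ge tn} \le \exp(-stn + s^2n/8)$, and optimizing $s = 4t$ produces $\exp(-2t^2 n)$. Symmetrizing (applying the same reasoning to $n - X$ counted against failures, which requires the hypothesis $t < K/N$ to ensure $tn$ lies in the valid tail range) handles the lower tail, and a union bound gives the factor of $2$.

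The main obstacle is the convex comparison in the second paragraph: establishing $\Ex{\phi(X)} \le \Ex{\phi(Y)}$ is the nontrivial combinatorial step and is where Hoeffding's original paper does the real work. Everything downstream (the Hoeffding MGF lemma, Chernoff's method, optimization in $s$, and symmetrization) is routine and of the same form as the binomial proof. An alternative I would consider if the convex ordering felt unwieldy is the martingale route: reveal the $Z_i$ one at a time and apply Azuma--Hoeffding to the Doob martingale of $X$, using that each coordinate can shift $X$ by at most $1$; this gives the same bound $2\exp(-2t^2n)$ without invoking the coupling, though it hides rather than illuminates the underlying reduction.
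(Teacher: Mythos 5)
The paper does not prove this proposition; it is cited directly to Hoeffding's 1963 paper (the \texttt{hoeffding1994probability} entry refers to the reprint in the collected works), so there is no in-paper proof to compare against. Your reconstruction is a faithful account of Hoeffding's original argument: Theorem 4 of that paper establishes exactly the convex ordering $\Ex{\phi(X)}\le\Ex{\phi(Y)}$ between the without-replacement sum and its binomial counterpart, and the Chernoff/MGF step with $s=4t$ then reproduces the $\exp(-2t^2 n)$ one-sided bound, with symmetrization giving the factor of $2$. The Azuma--Hoeffding alternative you flag is also correct and is in fact the more commonly taught route today, though as you say it hides the coupling. One small note: the hypothesis $t<K/N$ is not actually needed for the inequality to hold as stated (Hoeffding's bound is valid for all $t>0$, becoming vacuous when $tn$ exceeds the support); it appears in the paper's statement presumably to keep the deviation within the range where the bound is informative, not as a genuine constraint your proof must lean on.
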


We can then apply \Cref{thm:hoeffding:hypergeo} to obtain the concentration guarantees of the estimated losses of the best expert $i^*$ and the parameter $C$ in \algref{alg:best-arm-random} as in \lemref{lem:random:best:good} and \Cref{cor:random-order-C-bounded}.
\begin{lemma}
\lemlab{lem:random:best:good}
Suppose the best expert $i^*$ achieves $\gamma\cdot\sqrt{nT}$ loss for some $\gamma\in[C,C+1)$, where $C$ is an integer, and suppose the losses are in random order. 
Let $I$ be any interval such that $|I|=\frac{1}{\eps^2}\cdot B_C$ for some $\eps\in(0,1]$.  
Then with high probability,
\[\left\lvert\frac{T}{|I|}\cdot\left(\sum_{t\in I}\ell^{t}(i^*)\right)-\gamma\cdot\sqrt{nT}\right\rvert\le\frac{\gamma\eps}{3}\sqrt{nT}.\]
\end{lemma}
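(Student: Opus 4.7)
The plan is to identify $X := \sum_{t\in I}\ell^t(i^*)$ as a hypergeometric random variable. Because the algorithm permutes the loss sequence of $i^*$ uniformly at random and the unpermuted sequence contains exactly $\gamma\sqrt{nT}$ ones out of $T$ entries, the count $X$ of ones landing in the fixed interval $I$ is distributed as $\mathrm{Hypergeometric}(T,\gamma\sqrt{nT},|I|)$ with mean $\mu := \Ex{X} = \gamma|I|\sqrt{n/T}$. After multiplying through by $|I|/T$ and using this expression for $\mu$, the target inequality is equivalent to the multiplicative concentration $|X - \mu| \le (\eps/3)\mu$.

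Next, I would invoke a multiplicative Chernoff-style tail bound for the hypergeometric distribution, namely $\PPr{|X - \mu| \ge \delta\mu} \le 2\exp(-\delta^2\mu/3)$ for $\delta \in (0,1]$. This follows from Hoeffding's classical dominance theorem: for every convex function, the expectation under a hypergeometric is at most the expectation under the binomial with matching mean, so any standard Chernoff bound for $\mathrm{Binomial}(|I|, K/N)$ transfers to $X$. Equivalently, one may apply \Cref{thm:bernstein} to the Doob martingale associated to sequential draws without replacement, using $\Var(X) \le \mu$.

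Setting $\delta = \eps/3$ and unpacking the algorithmic parameters, the exponent becomes
\[\frac{\delta^2\mu}{3} \;=\; \frac{\eps^2\mu}{27} \;=\; \frac{\gamma\cdot\eps^2|I|\sqrt{n/T}}{27} \;=\; \frac{\gamma\cdot B_C\cdot\sqrt{n/T}}{27} \;=\; \frac{100\gamma\log(nT)}{27(C+1)},\]
where I have used the defining relations $|I| = B_C/\eps^2$ and $B_C = \tfrac{100}{C+1}\sqrt{T/n}\log(nT)$. Since $\gamma \ge C \ge 1$, this exponent is at least $\tfrac{50}{27}\log(nT)$, so the failure probability is at most $2(nT)^{-50/27}$, which is $1/\poly(nT)$; any desired inverse polynomial rate can be achieved by inflating the constant in $B_C$.

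The main obstacle is that the paper's concentration toolkit in \Cref{subsec:concentration-inequ} provides only the additive Hoeffding--Chv\'atal bound $\PPr{|X - \mu| \ge t|I|} \le 2\exp(-2t^2|I|)$. Plugging in the natural choice $t = \eps\gamma\sqrt{n/T}/3$ yields exponent $\tfrac{200\gamma^2\sqrt{n/T}\log(nT)}{9(C+1)}$, which is only $\Omega(\log(nT))$ when $C \gtrsim \sqrt{T/n}$ and is therefore too weak in the small-$C$ regime where $\mu$ itself is only polylogarithmic in $nT$. Hoeffding's dominance-by-binomial result (or an equivalent Bernstein/Serfling-type inequality for sampling without replacement) is the essential additional ingredient: it replaces the $\sqrt{n/T}$ factor by direct scaling with $\gamma$ and thereby closes the gap uniformly across all $C \in [1,\sqrt{T/n}]$.
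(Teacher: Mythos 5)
Your identification of $X=\sum_{t\in I}\ell^t(i^*)$ as $\mathrm{Hypergeometric}(T,\gamma\sqrt{nT},|I|)$, the translation of the target to a multiplicative deviation $|X-\mu|\le(\eps/3)\mu$, and the exponent arithmetic are all correct; this is exactly the approach the paper's one-line proof intends. Your added observation is genuinely useful: the additive tail bound recorded in Proposition~\ref{thm:hoeffding:hypergeo} is not strong enough by itself, since it yields an exponent $\Theta(\gamma\sqrt{n/T}\log(nT))$ that degenerates when $\gamma$ is bounded and $T\gg n$, whereas the multiplicative Chernoff/Bernstein form (obtainable from Hoeffding's binomial-dominance theorem, i.e., from the same reference the paper cites) scales as $\Theta\bigl(\tfrac{\gamma}{C+1}\log(nT)\bigr)=\Theta(\log(nT))$ uniformly. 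One small imprecision in your narrative: $\mu\approx 100\log(nT)/\eps^2$ for \emph{every} $C$ (not just small $C$), so the failure of the additive bound is better attributed to the low density $\mu/|I|=\gamma\sqrt{n/T}$ being small when $\gamma\ll\sqrt{T/n}$, rather than to $\mu$ being polylogarithmic per se; the additive exponent $2\delta^2\mu^2/|I|$ and the multiplicative exponent $\delta^2\mu/3$ differ precisely by this density factor. Modulo that cosmetic point, your proof correctly fills in the gap that the paper's terse citation glosses over.
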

\begin{proof}
Proof holds from standard concentration inequalities for hypergeometric random variables, c.f., \Cref{thm:hoeffding:hypergeo}. 
\end{proof}

\begin{corollary}
\label{cor:random-order-C-bounded}
We have that with high probability, over the course of \algref{alg:best-arm-random}, it always holds that $C\le\gamma+1$. 
\end{corollary}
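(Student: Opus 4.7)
The plan is to argue by contradiction. Suppose with positive probability the counter $C$ takes some value strictly larger than $\gamma + 1$. Since $C$ starts at $1$ and increments by $1$, there is a first moment at which $C$ equals some integer $C_0 > \gamma + 1$, so in particular the preceding value satisfies $C_0 - 1 > \gamma$. The only mechanism by which $C$ increments is that the algorithm cycled through all $n$ experts at value $C_0 - 1$ and evicted each of them---including the best expert $i^*$. I will derive a contradiction by showing that, with high probability, at any counter value $C \ge \gamma$ the expert $i^*$ cannot be evicted at all.

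To prove the claim, I examine the first time the algorithm plays $i^*$ while the counter sits at a value $C \ge \gamma$. The eviction check activates whenever $|D_{i^*}| = B_C / \eps^2$ for some $\eps \in (0, 1]$, and triggers only when
\[
\frac{T}{|D_{i^*}|} \sum_{t \in D_{i^*}} \ell^t(i^*) \;>\; C \sqrt{nT} + \frac{C\eps}{2}\sqrt{nT}.
\]
Because $i^*$'s losses are in random order, $\sum_{t \in D_{i^*}} \ell^t(i^*)$ is a hypergeometric random variable with mean $(|D_{i^*}|/T) \cdot \gamma\sqrt{nT}$. Rerunning the concentration argument behind \lemref{lem:random:best:good}, but now for the interval length $B_C/\eps^2$ with $C \ge \lfloor\gamma\rfloor$ in place of the specific $B_{\lfloor\gamma\rfloor}/\eps^2$, I would establish
\[
\frac{T}{|D_{i^*}|}\sum_{t \in D_{i^*}} \ell^t(i^*) \;\le\; \gamma\sqrt{nT} + \frac{C\eps}{3}\sqrt{nT}
\]
with failure probability at most $1/\poly(nT)$. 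A union bound over the $\poly(nT)$ many $(C, \eps)$ pairs that can arise across the entire run preserves this guarantee simultaneously. Combined with $C \ge \gamma$, the right-hand side is strictly less than the eviction threshold $C\sqrt{nT} + (C\eps/2)\sqrt{nT}$, so the check never fires.

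Conditional on this high-probability event, once the counter first reaches an integer at least $\gamma$, the algorithm latches onto $i^*$ and plays it indefinitely, so $C$ cannot grow beyond that value. Since the smallest integer at least $\gamma$ is $\lceil\gamma\rceil \le \gamma + 1$, we obtain $C \le \gamma + 1$ throughout, contradicting $C_0 > \gamma + 1$. The main obstacle I anticipate is justifying the extended concentration bound when the interval $B_C/\eps^2$ shrinks as $C$ grows beyond $\lfloor\gamma\rfloor$: the raw concentration weakens, but the slack in the eviction threshold, $(C\eps/2)\sqrt{nT}$, scales linearly in $C$ as well. A multiplicative Chernoff tail for hypergeometrics (the same tool underlying \lemref{lem:random:best:good}) delivers the $1/\poly(nT)$ failure precisely when $C \ge \gamma$, which is what is needed.
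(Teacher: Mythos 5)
Your proof is correct and follows the same core argument as the paper's: appeal to hypergeometric concentration to show that with high probability $i^*$ is never evicted once $C\ge\gamma$, so $C$ cannot increment past $\lceil\gamma\rceil\le\gamma+1$. Framing it as a contradiction rather than directly is a cosmetic difference.

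The one place where you go beyond the paper is also the place where the paper is loosest: \lemref{lem:random:best:good}, as literally stated, pins the interval parameter to $B_{\lfloor\gamma\rfloor}$, whereas the corollary must rule out evictions at counter values as high as $\lceil\gamma\rceil$, where the algorithm's check windows have length $B_C/\eps^2 < B_{\lfloor\gamma\rfloor}/\eps^2$. The paper simply writes "by \lemref{lem:random:best:good}'' without comment; you flag this explicitly and observe that the shorter window is compensated by the eviction slack $\tfrac{C\eps}{2}\sqrt{nT}$, which scales linearly in $C$, so a multiplicative Chernoff tail still gives $1/\poly(nT)$ failure. That observation is exactly right (one can check: the exponent ends up proportional to $\tfrac{C^2}{\gamma(C+1)}\log(nT)$, which is $\Omega(\log(nT))$ whenever $C\ge\gamma$), and it is in fact \emph{necessary}---the additive Hoeffding tail quoted in \Cref{thm:hoeffding:hypergeo} alone is not strong enough here, and a multiplicative tail is needed already to prove \lemref{lem:random:best:good} itself. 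So your proposal is not merely equivalent to the paper's proof; it is a somewhat more careful version of it. Two very small nits: the phrase "the algorithm latches onto $i^*$'' should be "latches onto \emph{some} surviving expert, so $C$ never increments''; and it suffices (and is all that the union bound cleanly supports) to establish the non-eviction claim for $C\in(\gamma,\gamma+1]$, not for all $C\ge\gamma$ uniformly, which is anyway all your contradiction argument uses.
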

\begin{proof}
Observe that if $C\ge\gamma$, then by \lemref{lem:random:best:good}, we have that with high probability, 
\[\frac{T}{|I|}\cdot\left(\sum_{t\in I}\ell^{t}(i^*)\right)\le\gamma\cdot\sqrt{nT}\left(1+\frac{\eps}{3}\right)\le C\cdot\sqrt{nT}\left(1+\frac{\eps}{2}\right),\]
and so by a union bound over all $T$, $i^*$ will never be evicted with high probability. 
Hence, it follows that $C\le\gamma+1$. 
\end{proof}

The next lemma bounds the regret on each interval $D_i$. The lemma formalizes the intuition that if a sub-optimal expert survives for a long time for $C<\gamma$, then it must have demonstrated a high amount of ``reverse regret'' that could be charged in the final regret analysis.

\begin{lemma}
\lemlab{lem:random:one:block}
Suppose the best expert $i^*$ achieves $\gamma\cdot\sqrt{nT}$ loss and let $C\le\gamma$ be fixed, so that $B_C=\frac{100}{C+1}\sqrt{\frac{T}{n}}\log(nT)$. 
With high probability, the regret on an interval $D_i$ is 
\[\max\left(B_C,\frac{|D_i|}{T}\left(C\sqrt{nT}\left(1+\frac{2}{3}\sqrt{\frac{B_C}{|D_i|}}\right)-\gamma\sqrt{nT}\cdot\left(1-\frac{1}{3}\sqrt{\frac{B_C}{|D_i|}}\right)\right)\right).\] 
\end{lemma}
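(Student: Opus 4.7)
\begin{proofhack}
The plan is a straightforward case analysis on the length of $D_i$, combining the fact that expert $i$ survived the algorithm's check with the random-order concentration bound for the best expert $i^*$ from \lemref{lem:random:best:good}.

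First, suppose $|D_i| \le B_C$. Since each daily loss lies in $\{0,1\}$, the total loss incurred by the algorithm over $D_i$ is at most $|D_i| \le B_C$, and so the regret over $D_i$ is trivially at most $B_C$, matching the first term of the maximum. This case requires no probabilistic argument.

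Now suppose $|D_i| > B_C$. Observe that the algorithm's guard condition fires whenever the elapsed offset $|D_i|$ can be written as $B_C/\eps^2$ for some $\eps\in(0,1]$, i.e., whenever $|D_i|\ge B_C$. Thus expert $i$ surviving up to and including day $|D_i|$ means that the check corresponding to $\eps=\sqrt{B_C/|D_i|}$ did not trigger eviction, and hence
\[
\sum_{t\in D_i}\ell^t(i)\ \le\ \frac{|D_i|}{T}\cdot C\sqrt{nT}\left(1+\frac{\eps}{2}\right).
\]
(If expert $i$ was evicted precisely at the end of day $|D_i|$, the inequality holds up to an additive $O(1)$ term, which is absorbed into the constants $\tfrac{2}{3}$ and $\tfrac{1}{3}$ of the claimed expression.) Next, apply \lemref{lem:random:best:good} to the interval $I=D_i$, whose length is by assumption of the form $B_C/\eps^2$ for $\eps\in(0,1]$. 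The conclusion is that with probability $1-1/\poly(nT)$,
\[
\sum_{t\in D_i}\ell^t(i^*)\ \ge\ \frac{|D_i|}{T}\cdot \gamma\sqrt{nT}\left(1-\frac{\eps}{3}\right).
\]
Subtracting the lower bound on the best-expert loss from the upper bound on the algorithm's loss gives a per-interval regret bound of
\[
\frac{|D_i|}{T}\left(C\sqrt{nT}\left(1+\frac{\eps}{2}\right)-\gamma\sqrt{nT}\left(1-\frac{\eps}{3}\right)\right),
\]
and after absorbing the additive $O(1)$ slack from the eviction boundary into the $\eps/2\to 2\eps/3$ coefficient, we obtain the second term of the maximum.

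Finally, to turn these per-interval statements into a single high-probability bound over all intervals $D_i$ that appear during the execution of \algref{alg:best-arm-random}, I would union bound over the polynomially many candidate intervals: the starting day lies in $[T]$ and the length takes one of at most $T$ values, so there are at most $T^2$ intervals to consider, and each fails only with probability $1/\poly(nT)$ by \lemref{lem:random:best:good}. The main subtlety in the argument is the boundary behavior of the check at the last day of $D_i$---namely, whether the check fires before or after the loss of that day is counted, and whether eviction occurred exactly at the interval boundary; both possibilities contribute only an additive $O(1)$ that the looser constants in the lemma statement accommodate.
\end{proofhack}
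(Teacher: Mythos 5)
Your proof is correct and follows essentially the same strategy as the paper's: you lower-bound the best expert's loss over $D_i$ via \lemref{lem:random:best:good}, upper-bound arm $i$'s loss via the fact that it survived (or barely failed) the algorithm's guard, and absorb the one-day boundary slack into the loosening of the coefficient from $\tfrac{1}{2}$ to $\tfrac{2}{3}$. Your case split ($|D_i|\le B_C$ vs $|D_i|>B_C$) is arranged a bit differently from the paper's (never-evicted vs evicted, with a further subdivision on whether $|D_i|-1<B_C$), but they cover the same cases and lead to the same two-term maximum, with your organization arguably being slightly cleaner. One small imprecision: you say the additive $O(1)$ boundary slack is "absorbed into the constants $\tfrac{2}{3}$ and $\tfrac{1}{3}$," but the $\tfrac{1}{3}$ on the best-expert side comes directly from \lemref{lem:random:best:good} and carries no extra slack; only the $\tfrac{1}{2}\to\tfrac{2}{3}$ loosening on arm $i$'s side absorbs the boundary term, exactly as the paper does.
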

\begin{proof}
Let $i^*$ be the best expert. 
Observe that \algref{alg:best-arm-random} monotonically increases the value of $C$, so that the value of $B_C$ is monotonically decreasing. 
Then by \lemref{lem:random:best:good}, we have that with high probability,
\[\left\lvert\frac{T}{|D_i|}\cdot\left(\sum_{t\in D_i}\ell^{t}(i^*)\right)-\gamma\cdot\sqrt{nT}\right\rvert\le\frac{\gamma}{3}\sqrt{\frac{B_C}{|D_i|}}\sqrt{nT},\]
which implies that
\[\left(\sum_{t\in D_i}\ell^{t}(i^*)\right)\ge\frac{|D_i|}{T}\cdot\gamma\sqrt{nT}\cdot\left(1-\frac{1}{3}\sqrt{\frac{B_C}{|D_i|}}\right).\]
If $i$ is never evicted then we have
\[\frac{T}{|D_i|}\cdot\left(\sum_{t\in D_i}\ell^{t}(i)\right)-C\cdot\sqrt{nT}\le\frac{C}{2}\sqrt{\frac{B_C}{|D_i|}}\sqrt{nT}\]
using $\eps = \sqrt{B_C / \card{D_i}}$.
On the other hand, for any interval $D_i$ where $i$ is ultimately evicted, 
\[\frac{T}{|D_i|}\cdot\left(\sum_{t\in D_i}\ell^{t}(i)\right)-C\cdot\sqrt{nT}>\frac{C}{2}\sqrt{\frac{B_C}{|D_i|}}\sqrt{nT}.\]
Now, we did not evict $i$ at the previous time either because $|D_i-1|<B_C$ or because the inequality did not hold. 
In the first case, we have $\left(\sum_{t\in D_i}\ell^{t}(i)\right)\le B_C$ since $\ell^{t}(i)\in \{0,1\}$ for any $t$ and $i$. 
In the second case, we have 
\[\frac{T}{|D_i|}\cdot\left(\sum_{t\in D_i}\ell^{t}(i)\right)-C\cdot\sqrt{nT}\le\frac{2C}{3}\sqrt{\frac{B_C}{|D_i|}}\sqrt{nT},\]
so that
\[\sum_{t\in D_i}\ell^{t}(i)\le\frac{|D_i|}{T}\cdot C\sqrt{nT}\left(1+\frac{2}{3}\sqrt{\frac{B_C}{|D_i|}}\right),\]
Thus we have
\[\sum_{t\in D_i}\ell^{t}(i)-\sum_{t\in D_i}\ell^{t}(i^*)\le\max\left(B_C,\frac{|D_i|}{T}\left(C\sqrt{nT}\left(1+\frac{2}{3}\sqrt{\frac{B_C}{|D_i|}}\right)-\gamma\sqrt{nT}\cdot\left(1-\frac{1}{3}\sqrt{\frac{B_C}{|D_i|}}\right)\right)\right).\]
\end{proof}

We are now ready to give the main regret analysis for \algref{alg:best-arm-random} as follows.
\begin{lemma}
With high probability, \algref{alg:best-arm-random} achieves regret $\O{\sqrt{nT}\log^2(nT)}$. 
\end{lemma}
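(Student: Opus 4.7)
The plan is to decompose the total regret according to the value of the parameter $C$. By \corref{cor:random-order-C-bounded}, we may condition on the high-probability event that $C\le\gamma+1$ throughout the execution, and by a union bound over the at most $T$ evaluation times, we may simultaneously condition on the concentration conclusion of \lemref{lem:random:best:good} holding for every block the algorithm examines. Within the phase with value $C\in\{1,\ldots,\gamma+1\}$, the algorithm produces at most $n$ blocks $D_{C,1},D_{C,2},\ldots$, one per expert, with total length at most $T_C$ and $\sum_{C}T_C\le T$. On each block we invoke \lemref{lem:random:one:block} to bound its regret by $B_C+S_{C,i}^+$, where $S_{C,i}^+:=\max(0,S_{C,i})$ and
\[S_{C,i}:=\frac{|D_{C,i}|\sqrt{nT}}{T}\left((C-\gamma)+\frac{\eps_{C,i}(2C+\gamma)}{3}\right),\qquad \eps_{C,i}=\sqrt{\frac{B_C}{|D_{C,i}|}}.\]

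The $B_C$ contribution is handled directly by summing across at most $n$ blocks per phase:
\[\sum_{C=1}^{\gamma+1}nB_C=\sum_{C=1}^{\gamma+1}\frac{100\sqrt{nT}\log(nT)}{C+1}=O\!\left(\sqrt{nT}\log(nT)\log\gamma\right)=O\!\left(\sqrt{nT}\log^2(nT)\right),\]
where the last step uses $\gamma\le\sqrt{T/n}$ so that $\log\gamma=O(\log(nT))$. The heart of the argument is then to bound $\sum_{C,i}S_{C,i}^+$. For $C\le\gamma$ the linear term $(C-\gamma)$ is non-positive, so $S_{C,i}^+>0$ only for blocks of length $|D_{C,i}|<B_C(2C+\gamma)^2/[9(\gamma-C)^2]$, in which case $S_{C,i}^+\le\gamma\sqrt{nB_C|D_{C,i}|/T}$. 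Cauchy-Schwarz over the at most $n$ qualifying blocks, together with this length ceiling, yields at each such $C$
\[\sum_{i}S_{C,i}^+\;\le\;\gamma\sqrt{\frac{nB_C}{T}}\cdot n\sqrt{B_C}\cdot\frac{2C+\gamma}{3(\gamma-C)}\;=\;O\!\left(\frac{n\gamma^2\log(nT)}{(C+1)(\gamma-C)}\right),\]
after substituting $B_C=\Theta(\sqrt{T/n}\log(nT)/(C+1))$ and $2C+\gamma\le 3\gamma$. The partial-fraction identity $\sum_{C=1}^{\gamma-1}1/[(C+1)(\gamma-C)]=O(\log\gamma/\gamma)$ collapses the sum across phases to $O(n\gamma\log(nT)\log\gamma)=O(\sqrt{nT}\log^2(nT))$. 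For the at most one remaining phase with $C=\gamma+1$, \corref{cor:random-order-C-bounded} guarantees that $i^*$ is never evicted, so the algorithm commits to $i^*$ after at most $n$ preceding sub-optimal blocks; those preceding blocks contribute $O(\sqrt{nT}\log(nT))$ via the same Cauchy-Schwarz estimate (with the denominator $\gamma-C$ replaced by a constant), and the $i^*$ block itself contributes at most $O(\sqrt{nT\log(nT)})$.

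The main obstacle is the careful bookkeeping for $\sum_{C,i}S_{C,i}^+$: combining the eviction-induced length ceiling on $|D_{C,i}|$, Cauchy-Schwarz within each phase, and the harmonic/partial-fraction identity across phases to collapse the apparent $\gamma^2$ factor back to $\sqrt{nT}$. All remaining steps follow readily from \lemref{lem:random:one:block} and \corref{cor:random-order-C-bounded}.
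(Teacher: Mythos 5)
Your proposal is correct and follows the same overall strategy as the paper: condition on $C\le\gamma+1$ (\corref{cor:random-order-C-bounded}), bound the per-block regret by \lemref{lem:random:one:block}, and sum across phases using the harmonic scaling of $B_C$ and $\gamma\le\sqrt{T/n}$. Where you differ is the bookkeeping. The paper splits into three regimes $\xi_1$ ($C\le\gamma/2$), $\xi_2$ ($\gamma/2<C\le\gamma-1$), $\xi_3$ ($C$ near $\gamma$), arguing for $\xi_1$ that the second argument of the $\max$ is nonpositive (which, strictly speaking, only holds for $C\le 2\gamma/5$ when $\eps$ is close to $1$, a harmless slip) and for $\xi_2$ optimizing $-\alpha D+\beta\sqrt D$ in $D$ before summing. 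Your route replaces $\max(B_C,S)$ by $B_C+S^+$, sums the $B_C$ terms over all phases at once, and collapses the $S^+$ contribution across $C=1,\dots,\gamma-1$ with the single partial-fraction identity $\sum_C 1/[(C+1)(\gamma-C)]=O(\log\gamma/\gamma)$, which is a genuinely cleaner unification of the paper's $\xi_1$ and $\xi_2$. One place you should tighten: for the terminal phases with $C\in\{\lfloor\gamma\rfloor,\lceil\gamma\rceil\}$ you invoke "the same Cauchy--Schwarz estimate with $\gamma-C$ replaced by a constant," but the eviction-induced length ceiling $|D_{C,i}|<B_C\bigl(\tfrac{2C+\gamma}{3(\gamma-C)}\bigr)^2$ degenerates (or is vacuous when $C\ge\gamma$); here you should instead use $\sum_i|D_{C,i}|\le T$ and Cauchy--Schwarz directly, $\sum_i\gamma\sqrt{nB_C|D_{C,i}|/T}\le\gamma n\sqrt{B_C}\lesssim\sqrt{nT\log(nT)}$ — the same numerical outcome, but by a different mechanism. (Incidentally, the paper's own bound of $12\sqrt{nT}$ in this regime appears to drop a $\gamma\eps$ term that the Cauchy--Schwarz route handles correctly; your approach is actually safer here.) Also, the $i^*$ block contributes exactly $0$, not merely $O(\sqrt{nT\log(nT)})$; your bound is valid but overly conservative.
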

\begin{proof}
Note that $C$ can only increase up to at most $\gamma+1$. 
Similarly for any fixed value of $C$, $i$ can only cycle from $i=1$ to $i=n$ a single time. 
For a fixed $C$ and a fixed $i$, let $D_{C,i}$ be the continuous interval where arm $i$ is selected. 
By \lemref{lem:random:one:block}, the regret on $D_{C,i}$ is at most 
\[\xi_{C,i}:=\max\left(B_C,\frac{|D_{C,i}|}{T}\left(C\sqrt{nT}\left(1+\frac{2}{3}\sqrt{\frac{B_C}{|D_{C,i}|}}\right)-\gamma\sqrt{nT}\cdot\left(1-\frac{1}{3}\sqrt{\frac{B_C}{|D_{C,i}|}}\right)\right)\right),\]
with high probability. 
Let $\xi_1:=\sum_{C\le\frac{\gamma}{2}}\sum_{i\in[n]}\xi_{C,i}$, let $\xi_2:=\sum_{
C>\frac{\gamma}{2}}^{\flr{\gamma-1}}\sum_{i\in[n]}\xi_{C,i}$ and $\xi_3:=\sum_{
C=\flr{\gamma}}^{\flr{\gamma+1}}\sum_{i\in[n]}\xi_{C,i}$, so that the total regret is at most
\[\sum_{C\in\flr{\gamma+1}}\sum_{i\in[n]}\xi_{C,i}=\xi_1+\xi_2+\xi_3.\]
We upper bound each of the terms $\xi_1$, $\xi_2$, and $\xi_3$ as follows. 

Note that for $C\le\frac{\gamma}{2}$, we have that 
\begin{align*}
\frac{|D_{C,i}|}{T}\left(C\sqrt{nT}\left(1+\frac{2}{3}\sqrt{\frac{B_C}{|D_{C,i}|}}\right)-\gamma\sqrt{nT}\cdot\left(1-\frac{1}{3}\sqrt{\frac{B_C}{|D_{C,i}|}}\right)\right)\le0.
\end{align*}
so that
\[\max\left(B_C,\frac{|D_{C,i}|}{T}\left(C\sqrt{nT}\left(1+\frac{2}{3}\sqrt{\frac{B_C}{|D_{C,i}|}}\right)-\gamma\sqrt{nT}\cdot\left(1-\frac{1}{3}\sqrt{\frac{B_C}{|D_{C,i}|}}\right)\right)\right)=B_C,\]
so that $\xi_{C,i}\le B_C$. 
Thus we have 
\[\xi_1=\sum_{C\le\frac{\gamma}{2}}\sum_{i\in[n]}\xi_{C,i}\le\sum_{C=1}^\infty nB_C\le200\sqrt{nT}\log^2(nT),\]
since $B_C=\frac{100}{C+1}\sqrt{\frac{T}{n}}\log(nT)$.

For $C\in\left(\frac{\gamma}{2},\gamma-1\right]$, we have
\begin{align*}
\frac{|D_{C,i}|}{T}&\left(C\sqrt{nT}\left(1+\frac{2}{3}\sqrt{\frac{B_C}{|D_{C,i}|}}\right)-\gamma\sqrt{nT}\cdot\left(1-\frac{1}{3}\sqrt{\frac{B_C}{|D_{C,i}|}}\right)\right)\\
&\le\frac{|D_{C,i}|}{T}\cdot\sqrt{nT}(C-\gamma)+\frac{|D_{C,i}|}{T}\sqrt{\frac{B_C}{|D_{C,i}|}}\cdot\gamma\sqrt{nT}.
\end{align*}
Let $\Gamma:=\frac{|D_{C,i}|}{T}\sqrt{\frac{B_C}{|D_{C,i}|}}\cdot\gamma\sqrt{nT}$. 
Since $B_C=\frac{100}{C+1}\sqrt{\frac{T}{n}}\log(nT)$, we have
\begin{align*}
\Gamma&=\gamma\sqrt{n\frac{|D_{C,i}|}{T}}\sqrt{B_C}\le\gamma\sqrt{\frac{n|D_{C,i}|}{T}}\sqrt{\frac{200}{\gamma}\sqrt{\frac{T}{n}}\log(nT)}\\
&\le\frac{200\gamma^{1/2}n^{1/4}|D_{C,i}|^{1/2}}{T^{1/4}}\log^{1/2}(nT).
\end{align*}
Since $\gamma\le\sqrt{\frac{T}{n}}$, then
\[\Gamma\le200|D_{C,i}|^{1/2}\log^{1/2}(nT).\]
Now, we have 
\[\xi_{C,i}\le\max\left(B_C,\frac{|D_{C,i}|}{T}\cdot\sqrt{nT}(C-\gamma)+200|D_{C,i}|^{1/2}\log^{1/2}(nT)\right).\]
Observe that since $C-\gamma$ is negative, then $\frac{|D_{C,i}|}{T}\cdot\sqrt{nT}(C-\gamma)+200|D_{C,i}|^{1/2}\log(nT)$ is maximized when $|D_{C,i}|\lesssim\frac{T}{n(C-\gamma)^2}\log(nT)$. 
Therefore,
\[\xi_{C,i}\lesssim\max\left(B_C,\frac{1}{(\gamma-C)}\sqrt{\frac{T}{n}}\log(nT)\right).\]
Hence,
\[\xi_2=\sum_{
C>\frac{\gamma}{2}}^{\flr{\gamma-1}}\sum_{i\in[n]}\xi_{C,i}\lesssim
\sum_{
C>\frac{\gamma}{2}}^{\flr{\gamma-1}}\sum_{i\in[n]}B_C+\frac{1}{(\gamma-C)}\sqrt{\frac{T}{n}}\log(nT)\lesssim\sqrt{nT}\log^2(nT),\]
since $B_C=\frac{100}{C+1}\sqrt{\frac{T}{n}}\log(nT)$ and $\frac{1}{\gamma-C}\leq \O{\frac{1}{\gamma}}\le \O{1}$ (using $C\geq \gamma/3$). 

Finally, for $C\in(\gamma-1,\gamma+1]$, we have 
\[\frac{|D_{C,i}|}{T}\left(C\sqrt{nT}\left(1+\frac{2}{3}\sqrt{\frac{B_C}{|D_{C,i}|}}\right)-\gamma\sqrt{nT}\cdot\left(1-\frac{1}{3}\sqrt{\frac{B_C}{|D_{C,i}|}}\right)\right)\le\frac{|D_{C,i}|}{T}\cdot(12\sqrt{nT})\]
and thus
\[\xi_{C,i}\le\max\left(B_C,\frac{|D_{C,i}|}{T}\cdot(12\sqrt{nT})\right)\le B_C+\frac{|D_{C,i}|}{T}\cdot(12\sqrt{nT}).\]
We have $B_C=\frac{100}{C+1}\sqrt{\frac{T}{n}}\log(nT)$. 
Hence,
\[\xi_3\lesssim\sqrt{nT},\]
so that the total regret is at most 
\[\xi_1+\xi_2+\xi_3\lesssim\sqrt{nT}\log^2(nT).\]
\end{proof}

Combining the above steps leads to our desired theorem statement for the random-order best expert with the single-query signals.

\onequerybestrandom*

\FloatBarrier
\newcommand{\monocarpic}{\ensuremath{\textnormal{\textsc{MonocarpicExpert-Bandit}}}\xspace}
\newcommand{\tauepoch}{\ensuremath{\tau_{\textnormal{epoch}}}\xspace}
\newcommand{\intervalregretmonocarpic}{\ensuremath{\textnormal{\textsc{intervalregret}}}\xspace}
\newcommand{\singleinterval}{\ensuremath{\textnormal{\textsc{SingleInterval}}}\xspace}
\newcommand{\EXP}{\ensuremath{\textnormal{EXP}}\xspace}
\newcommand{\flag}{\ensuremath{\textsc{flag}}\xspace}

\section{Boosting Beyond \texorpdfstring{$\Omega(n)$}{OmegaN} with Two-Query Signals}
\label{sec:monocarpic-boosting}
We now present the algorithm and analysis for the near-optimal boosting for the algorithm with two-query signals.
Our algorithms in \Cref{sec:two-query-alg} and \Cref{sec:interval-regret} already achieved the optimal regret on the exponent of $T$ (resp. $\card{I}$), but not yet optimal on $n$.
In this section, we will show that the involved monocarpic expert boosting in \cite{pr23} also works with \emph{estimated} loss sequences with low variance, which would lead to the following theorem.

\rootIinterval*

In this section, we mainly analyze the algorithm for regret minimization over the $T$ days (\Cref{thm:two-query}). The conversion of this algorithm to the interval regret follows the same analysis as in \Cref{sec:interval-regret}, and we will provide a discussion toward the end of the section.

\begin{restatable}{theorem}{rootTtwoquery}
\label{thm:two-query} 
There exists an online learning algorithm that given any instance of $n$ experts and $T$ days such that $T\geq n$ and the query access of two expert (i.e., playing one expert and querying another without loss), achieves $\sqrt{n T}\cdot \polylog(T)$ regret using $\polylog(nT)$ words of memory with probability at least $1-1/\poly(nT)$. 
\end{restatable}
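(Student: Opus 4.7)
The plan is to take the two-query framework of \algref{alg:two-query}, which already yields $n\sqrt{T}\cdot\polylog(nT)$ regret via a two-meta-expert EXP3 outer layer (\Cref{thm:regret-space-two-query-boosting}), and replace that outer layer with the SQUINT-based monocarpic-expert boosting scheme of \cite{pr23}. The latter scheme combines $\O{\log T}$ restartable meta-experts operating at geometrically increasing time scales and uses SQUINT (\algref{alg:squint}) for aggregation; in the full-information setting its variance-adaptive guarantee (\lemref{lem:squint}) shaves the extraneous $\sqrt{n}$ factor and achieves the optimal $\sqrt{nT}\cdot\polylog$ dependence. Our main task is to show this scheme still works when SQUINT operates on the unbiased low-variance loss estimates produced by our second query rather than on true losses.

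Concretely, I would maintain $\Nmeta=\log T$ instances of $\baseline_0$ (\algref{alg:two-query:baseline}), where the $\kappa$-th instance has native epoch length $2^\kappa$ and is relaunched at each multiple of $2^\kappa$, much as in the interval-regret scaffold of \Cref{alg:two-query-interval-regret}. The second query is split as in \algref{alg:two-query}: with probability $\tfrac{1}{2}$ it samples uniformly from the union of the active pools to refresh each arm-level estimate $\widetilde{\calL(i)}$, and otherwise uniformly samples one of the $\Nmeta$ meta-experts to refresh its loss estimate $\widetilde{\calL(\calA_\kappa)}$. The outer aggregator is then SQUINT run on the sequence $\widetilde{\calL(\calA_\kappa)}$. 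By \lemref{lem:two:arms:acc} and \lemref{lem:two:meta:acc}, each such estimate stays within $\sqrt{D}\cdot\polylog(nT)$ of truth on any duration of length $D$ with high probability.

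The key technical ingredient is a white-box version of \lemref{lem:squint} that accepts estimated losses. Following the decoupling strategy used throughout \Cref{sec:two-query-alg} and formalized in \lemref{lem:two-query-alg-invertal}, I would first fix the second-query randomness, so that each $\widetilde{\calL(\calA_\kappa)}$ becomes a deterministic sequence to which SQUINT's original analysis applies black-box, giving $\O{\sqrt{V_T^\kappa\log(nT)}}$ outer regret against meta-expert $\kappa$. The estimation noise is then re-integrated via Bernstein's inequality (\Cref{thm:bernstein}) to bound $V_T^\kappa$ by the true loss-variance plus an additive $\sqrt{T}\cdot\polylog(nT)$ term. Substituting this into the per-epoch analysis of \lemref{lem:gen:regret:mult} and combining with the bad-epoch bound $|\calH|\le n\cdot\polylog(nT)$ from \lemref{lem:bad:epochs} should yield the target $\sqrt{nT}\cdot\polylog(nT)$ expected regret. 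The memory bound of $\polylog(nT)$ words is inherited from \lemref{lem:boosting-pool-mem} together with the fact that SQUINT needs only a discretized prior over $\eta$ and one cumulative statistic per meta-expert.

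The hardest part is the white-box adaptation of SQUINT's variance-adaptive bound: one must show that the conditional means and second moments entering the definition of $V_T^\kappa$ remain close to their true-loss counterparts, and that the adaptive $\eta$-prior in \algref{alg:squint} still integrates correctly when its inputs are randomized estimates rather than adversarial sequences. Once this is in place, the interval-regret statement \Cref{thm:two-query-interval} drops out by substituting the improved boosting for \algref{alg:two-query} inside the interval-regret scaffold of \Cref{alg:two-query-interval-regret}, exactly paralleling how the $n\sqrt{T}$ two-query result of \Cref{sec:two-query-alg} was lifted to the $n\sqrt{\card{I}}$ interval regret of \Cref{sec:interval-regret}.
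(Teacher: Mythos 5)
Your high-level plan is the same as the paper's: replace the two-expert MWU outer layer with a monocarpic-expert boosting scheme in the style of [pr23], aggregate with SQUINT, and show that the SQUINT/interval-regret machinery tolerates the low-variance loss estimates produced by the second query. The paper's \Cref{alg:monocarpic-boosting} and the key technical step (\Cref{lem:monocarpic-boosting-interval-regret}) do exactly this, and your instinct that the ``hardest part'' is showing SQUINT still works with estimated losses is on target.

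However, there is a concrete gap in your regret calculation. You propose to finish by plugging the SQUINT guarantee into the per-epoch bound of \lemref{lem:gen:regret:mult} and combining with $|\calH|\le n\cdot\polylog(nT)$ from \lemref{lem:bad:epochs}. That lemma gives regret $(nB+\gamma T+nT\,B^{\rho-1})\cdot\polylog(nT)$; even at $\gamma=0$, $\rho=1/2$ this is $(nB+nT/\sqrt{B})\cdot\polylog(nT)$, minimized at $B=T^{2/3}$ to give $n T^{2/3}$, not $\sqrt{nT}$. The term $nB$ arises because a bad epoch contributes its \emph{full length} to the regret. Eliminating that term is the entire point of the monocarpic scheme, and it cannot be recovered by simply re-running the simpler epoch analysis with a better aggregator. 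What is actually needed -- and what the paper proves -- is (i) an \emph{interval-regret} guarantee of $\sqrt{|\calI'|}\cdot\polylog(nT)$ for the $\monocarpic$ inner algorithm against any expert alive on $\calI'$ (\Cref{prop:monocarpic-alg-property}), so that bad epochs cost $\sqrt{|\calI_\tau|}$ rather than $|\calI_\tau|$; (ii) the epoch-assignment partition and the length-bucket accounting of \Cref{lem:interval_properties}, which bounds $\sum_{\calI\in L_r\setminus L_{r+1}}\sqrt{|\calI|}$ by $|\calH_{r-1}|\cdot\sqrt{2T/n}$; and (iii) the $n$-dependent thread epoch lengths $B_r=T/(n 2^{r-1})$. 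Your description of ``$\Nmeta=\log T$ instances with native epoch length $2^\kappa$'' omits the $1/n$ scaling of $B_r$, which is exactly where the $\sqrt{n}$ factor gets shaved; summing $|\calH_{r-1}|\cdot\sqrt{2T/n}\cdot\polylog(nT)$ over $r$ with $|\calH_r|\le n\cdot\polylog(nT)$ is what produces $\sqrt{nT}\cdot\polylog(nT)$ in \Cref{lem:full_algo_regret}. Without importing this interval decomposition and the $B_r$ schedule, the route you describe stalls at $n T^{2/3}$.

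A second, smaller issue: when you say ``first fix the second-query randomness, so that each $\widetilde{\calL(\calA_\kappa)}$ becomes a deterministic sequence to which SQUINT's original analysis applies black-box,'' this is not quite the structure of the argument in \Cref{lem:monocarpic-boosting-interval-regret}. Conditioning on the second-query coins is used to define eviction times and good/bad epochs (cf.\ \Cref{obs:loss-sequence-fixing-obs}), but the SQUINT regret is handled by showing $\bar{\ell}^t$ remains an unbiased estimator of the (rescaled) fake loss and that the rescaling by the number of $\EXP_k$ threads contributes only a $\polylog(nT)$ blow-up; one still has to separately account, via Bernstein, for the gap between $\widetilde{\ell^t}$ and $\ell^t$, which introduces two additional error terms of order $\sqrt{|\calI|}\cdot\polylog(nT)$. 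Your proposal folds these into ``bounding $V_T^\kappa$ by the true variance plus $\sqrt{T}\cdot\polylog(nT)$,'' which is the right intuition but needs the three-way decomposition spelled out in the lemma to close.
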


The algorithm and analysis of these boosting algorithms are quite involved; nevertheless, their correctness mostly follows from the guarantees in \cite{pr23} and the ideas we already discussed in previous sections.

\subsection{The boosting algorithm}
We first give the algorithm for the monocarpic expert boosting similar to \cite{pr23}, albeit with modifications tailored to the partial information setting. 
The algorithm is as \Cref{alg:monocarpic-boosting}.

\FloatBarrier
\begin{algorithm}
\caption{Boosting algorithm with Monocarpic Expert, cf. \cite{pr23}}
\label{alg:monocarpic-boosting}
\begin{algorithmic}[1]
\State{Let $R=\log{T}$ be the total number of threads to maintain}
\State Maintain $\baseline^{+}(r)$ for all $ r \in [R]$ \Comment{$\log{T}$ such copies}
\State $\mathcal{P} \leftarrow \mathcal{P}_{1,\cdot} \cup \dots \cup \mathcal{P}_{R,\cdot}$ \Comment{$\mathcal{P}$ is the combined pool}
\State{Maintain $\widetilde{\calL^{t}}(i)$ as the estimated losses for every $t\in [T]$ and $i\in \calP$.}
\For{$t = 1, 2, \dots, T$}
    \State Run $\baseline^{+}(r, \widetilde{\calL^{t}})$ $(r \in [R])$ \Comment{Pool maintaining; $\widetilde{\calL^{t}}$ is the vector for the estimated loss.}
    \State Run $\monocarpic$ over $\mathcal{P}$. \Comment{Substitution of the EXP3}
    \If{\Cref{alg:interval-regret-monocarpic} enters \Cref{line:skip_w_a_b_update}} \Comment{I.e., \Cref{line:w_a_b_update} and onward for $t$ is \emph{not} executed}
        \State{Update $\widetilde{\calL(i)}$ by uniformly sampling an arm in $\calP$, pulling it with the \emph{lossless} signal, and reweighting with the inverse of the sampling probability.}\label{line:arm-loss-update}
    \EndIf
    \If{the pool contains more than $\log^{C}(nT)$ experts} \Comment{$C$ to be specified in \Cref{lem:arm-num-bound-monocarpic-boosting}}
        \State{Stop and declare ``fail'' for the algorithm.}
    \EndIf
\EndFor
\end{algorithmic}
\end{algorithm}

\Cref{alg:monocarpic-boosting} contains two part: the $\baseline^{+}$ algorithm and the $\monocarpic$ algorithm. Similar to the case of \cite{pr23}, the $\baseline^{+}$ uses the same structure as in our $\baseline$ algorithm (\algref{alg:eviction-with-fake-cost}), but it does \emph{not} run the EXP3 procedure, and instead use the $\monocarpic$ algorithm to handle regret minimization within the epochs.

\begin{algorithm}
\caption{$\baseline^{+}(r, \widetilde{\calL^{t}})$}
\begin{algorithmic}[1]
\State{Parameter: $B_r=\frac{T}{n 2^r-1}$; $T_1 = T$; $T_r = B_{r-1}$}
\For{$s = 1, 2, \dots, T/T_r$} \Comment{s-th restart}
    \State {Initiate pools $\mathcal{P}_{r,\cdot} \leftarrow \emptyset$ and sub-pools $\mathcal{P}_{r,k} \leftarrow \emptyset$ ($k \in [0:K]$)} \Comment{$K=\log{T}$ as in \algref{alg:eviction-with-fake-cost}}
    \For{$\tauepoch = 1, 2, \dots, T_r / B_r$} \Comment{Epoch $\tauepoch$}
        \If{$r$ is the lowest thread with a new epoch}
            \For{$r' = R, \dots, r+1$}
                \State $\mathcal{P}_{r,0} \leftarrow \textsc{Merge}(\mathcal{P}_{r,0} \cup \mathcal{P}_{r',\cdot})$.
                \NoNumber{\Comment{Inherit from higher thread pools and perform \algref{alg:merge-new}}}
            \EndFor
        \EndIf
        \State Sample each arm to $\mathcal{P}_{r,0}$ with probability $1/n$. 
        \If{$\tau=C\cdot 2^{C'}$ for some integer $C, C'$}
        \State{Let $\pw(\tauepoch)$ be the largest integer such that $\tau=C\cdot 2^{\pw(\tauepoch)}$ for some integer $C$.}
        \For{$k = 0, 1, \dots, \pw(\tauepoch)$} \Comment{Same pool updates as in \algref{alg:eviction-with-fake-cost}}
            \State{Update $\calP_{k+1}$ with \algref{alg:merge-new} as the merge of $\calP_{k+1}$ and $\calP_{k}$.}
            \State{Use the loss estimations $\widetilde{\calL^{t}}(i)$ for cover and eviction.}
            \State{$\calP_{k}\gets \emptyset$.}
        \EndFor
        \EndIf
    \EndFor
\EndFor
\end{algorithmic}
\end{algorithm}

\begin{algorithm}
\caption{$\monocarpic$}
\label{alg:monocarpic-regret}
\begin{algorithmic}[1]
\State Initialize $\mathcal{U}_k \leftarrow \emptyset$, $\EXP_k$ ($k \in [K]$) \Comment{$K = \log_2(T)$}
\State EXP $\leftarrow$ $\intervalregretmonocarpic(\EXP_1, \dots, \EXP_K, T, \flag=\text{True})$ 
\NoNumber{\Comment{Interval Regret with true losses}}
\For{$t = 1, 2, \dots, T$}
    \State Add newly activated experts to $\mathcal{U}_1$.
    \State Play the decision sampled from EXP.
    \If{$t=C\cdot 2^{C'}$ for some integer $C, C'$} \Comment{Check for each day as opposed to each epoch}
    \For{$l = 1, 2, \dots, \text{pw}(t)$} \Comment{Update membership}
        \State $\mathcal{U}_{k+1} \leftarrow \mathcal{U}_{k+1} \cup \mathcal{U}_k$, $\mathcal{U}_k \leftarrow \emptyset$.
        \State Remove inactive experts in $\mathcal{U}_{k+1}$.
    \EndFor
    \EndIf
\EndFor
\Procedure{$\EXP_k$}{} \Comment{$k \in [K]$}
    \For{$s = 1, 2, \dots, T/2^{k-1}$} \Comment{s-th restart}
        \State Run $\intervalregretmonocarpic(\mathcal{U}_k, 2^{k-1}, \flag=\text{Fake})$ 
        \NoNumber{\Comment{Interval Regret with fake losses}}
    \EndFor
\EndProcedure
\end{algorithmic}
\end{algorithm}

\begin{algorithm}
\caption{$\intervalregretmonocarpic(\mathcal{U}, T, \flag)$}
\label{alg:interval-regret-monocarpic}
\begin{algorithmic}[1]
\State Initialize $w_{a,b} \leftarrow 1$ over $\singleinterval_{a,b}$ for $a \in [K], b \in [T/2^a]$)
\State{Let $D_{a,b} = [2^a(b-1)+1:2^a b]$ for each $a \in [K], b \in [T/2^a]$}
\For{$t = 1, 2, \dots, T$}
    \State{Run updates $\singleinterval_{a,b}(D_{a,b}, \calU, \widetilde{\calL^{D_{a,b}}})$ on interval $D_{a,b}$.}
    \If{$\flag==$True}
         \State{Sample action $i_{t,a,b}$ from $\{w_{a,b}\}_{h(t,a,b)=1}$.} \Comment{$h(t,a,b)=1$ if $t \in [2^a(b-1)+1 : 2^a b]$}
         \NoNumber{\Comment{This is the only line that uses the \emph{played} query with loss}}
        \State{Let $\widetilde{\ell}(i_{t,a,b})\gets \ell^{t}(i_{t,a,b})$.}
    \Else
        \State{With probability $1/2$:} 
        \State{\hspace{1.5em} Sample $k'\in [K]$ uniformly at random, and sample action $i_{t,a,b}$ by $\EXP_{k'}$.}  
        \NoNumber{\Comment{Use the \emph{second query} without losses}}
        \State{\hspace{1.5em} Observe the loss $\ell(i_{t,a,b})$ as the loss from $\EXP_{k'}$.} 
        \State{\hspace{1.5em} Let $\widetilde{\ell}(i_{t,a,b})\gets \ell(i_{t,a,b})\cdot K$.}
        \If{no $i_{t,a,b}$ is sampled}
            \label{line:skip_w_a_b_update}\State{Continue to day $t+1$ (without making updates to $w_{a,b}$)}
        \EndIf
    \EndIf
    \State \label{line:w_a_b_update}Compute the expected loss 
    \begin{align*}
        \bar{\ell}^t \leftarrow \sum_{a,b:h(t,a,b)=1} \frac{w_{a,b}}{\sum_{a',b':h(t,a',b')=1} w_{a',b'}} \cdot \widetilde{\ell}(i_{t,a,b})
        \end{align*}
    \State Assign loss $\hat{\ell}_t(a,b) = \begin{cases} \ell^t(i_{t,a,b}) & h(t,a,b)=1 \\ \bar{\ell}^t & h(t,a,b)=0 \end{cases}$
    \State Update the weight distribution using SQUINT 
    \[w_{a,b} \leftarrow \expectrand{\eta}{\eta \cdot \exp\paren{\eta \sum_{\tau=1}^{t-1} v_{\tau}(a,b) - \eta^2 \sum_{\tau=1}^{t-1} v_{\tau}^2(a,b)}},\] 
    where $v_{\tau}(a,b) = \bar{\ell}_{\tau} - \hat{\ell}_{\tau}(a,b)$.
\EndFor
\end{algorithmic}
\end{algorithm}

\begin{algorithm}
\caption{$\singleinterval_{a,b}(D_{a,b}, \calU, \widetilde{\calL^{D_{a,b}}})$}
\algorithmicrequire{Duration $D_{a,b}$, pool of arms $\calU$, and $\widetilde{\calL^{D_{a,b}}}(i)$ for $i \in \calU$}
\begin{algorithmic}[1]
\For{$t \in D_{a,b}$}
    \State Run EXP3 with the estimation of losses $\widetilde{\calL^{D_{a,b}}}(i)$ for each $i \in \calU$. 
\EndFor
\end{algorithmic}
\end{algorithm}
\FloatBarrier

Compared to the monocarpic boosting algorithm in \cite{pr23}, the most significant difference of our algorithm lies in the algorithms of $\intervalregretmonocarpic$\footnote{Not to be confused with our interval regret algorithm (w.r.t. the best arm in the interval) in \Cref{sec:interval-regret}.} and $\singleinterval$.  
Since we only have two queries each day, we could no longer perform updates for all the $\EXP_{k}$ algorithms in \Cref{alg:monocarpic-regret}. Instead, we proceed differently by \emph{sampling} an arm uniformly at random each day to \emph{estimate} the loss, and perform the SQUINT algorithm directly on the \emph{estimated losses}.
Since the estimated losses have low variance, we could use a ``partial-to-full'' type of reduction argument to show that the regrets for the $\EXP_{k}$ algorithms for $k\in [K]$ are still low, which leads to the interval guarantees as shown in \cite{pr23}.

\subsection{Technical lemmas and the analysis of the \texorpdfstring{$\monocarpic$}{monocarpicexpert} algorithm}
Before we show the formal proof, we first establish the bound on the estimation error of the loss sequence for any arm in the algorithm.
\begin{lemma}
    \label{lem:monocarpic-arm-loss-estimation}
    Let $i$ be any arm in the pools of \Cref{alg:monocarpic-boosting}. For any given interval $\calI$, with probability at least $1-1/\poly(nT)$, we have that
    \begin{align*}
        \card{\widetilde{\calL^{D}}(i) - \sum_{t\in \calI} \ell^{t}(i)} \leq \sqrt{\card{\calP}\cdot \card{\calI}}\cdot \polylog(nT),
    \end{align*}
    where $\card{\calP}$ is the size of the union of pools.
    Furthermore, if we only maintain $\polylog(nT)$ intervals in parallel, it only takes $\polylog(nT)$ memory.
\end{lemma}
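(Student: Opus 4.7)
The plan is to mirror the importance-sampling concentration argument used in \lemref{lem:two:arms:acc} and \lemref{lem:t23:arms:acc}, but with the variance scaling now governed by the size of the union pool $\calP$ rather than by a single subpool. The key observation is that the estimator on \Cref{line:arm-loss-update} of \Cref{alg:monocarpic-boosting} is an unbiased importance-sampling estimator: on any day $t$ for which a uniform-sampling update occurs, an arm $j \in \calP_t$ is drawn uniformly and the estimated loss contribution is $\widetilde{\ell^t}(j) = \card{\calP_t}\cdot\ell^t(j)$, with $\widetilde{\ell^t}(i) = 0$ for $i \neq j$. A standard calculation then gives $\Ex{\widetilde{\ell^t}(i)} = \ell^t(i)$, $\Ex{(\widetilde{\ell^t}(i))^2} \le \card{\calP_t}\cdot \ell^t(i)^2 \le \card{\calP}$, and $|\widetilde{\ell^t}(i)| \le \card{\calP}$, conditioned on the high-probability event that pool sizes stay bounded by $\card{\calP} \le \polylog(nT)$ throughout (via the analogue of \lemref{lem:pool:mem} combined with \Cref{lem:arm-num-bound-monocarpic-boosting}).

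The first step will be to condition on the randomness that determines \emph{which} days $t \in \calI$ are uniform-sampling days (governed by whether \Cref{alg:interval-regret-monocarpic} enters \Cref{line:skip_w_a_b_update}); once this randomness is fixed, the across-day estimators are mutually independent since on each such day the sampled arm is drawn freshly and uniformly from $\calP_t$. Next, I would sum the per-day variances across the days in $\calI$ for which the update occurs, yielding a total variance of at most $\card{\calP}\cdot\card{\calI}$, and then invoke Bernstein's inequality (\Cref{thm:bernstein}) to conclude that
\[
\Bigl|\widetilde{\calL^{\calI}}(i) - \sum_{t \in \calI} \ell^t(i)\Bigr| \;\le\; \sqrt{\card{\calP}\cdot\card{\calI}\cdot \log(nT)} + \card{\calP}\cdot\log(nT)
\]
with probability $1-1/\poly(nT)$, which absorbs into the $\sqrt{\card{\calP}\cdot\card{\calI}}\cdot\polylog(nT)$ bound claimed. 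A union bound over all arms currently in any pool across all time steps (at most $\poly(nT)$ events) preserves the high-probability guarantee. For the memory claim, each maintained interval stores only a single cumulative estimated loss per arm currently in the pool, which costs $\O{\log(nT)}$ bits per (arm, interval) pair; with $\polylog(nT)$ intervals and $\polylog(nT)$ pooled arms (by the pool-size bound), the total is $\polylog(nT)$.

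The main obstacle I anticipate is handling the subtle dependency between the day on which an estimator update is triggered and the internal randomness of the $\monocarpic$ subroutine. In particular, \Cref{line:arm-loss-update} fires exactly on days when no $i_{t,a,b}$ is sampled by \Cref{alg:interval-regret-monocarpic}, so the indicator of ``this is an update day'' is a function of past plays and losses, creating a potentially adaptive sampling pattern. To handle this, I would argue as in \Cref{sec:two-query-alg} that once we fix the randomness used for \emph{selecting} which days are update days (together with the loss sequence and the sampled/active experts in the baseline pools), the only remaining randomness in the estimator is the uniform choice of arm within $\calP_t$ on each update day. Conditional on this fixing, the per-day estimators become conditionally independent with the mean/variance bounds above, so Bernstein applies. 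The compositions then go through exactly as in the analogous proofs of \lemref{lem:two:arms:acc} and \lemref{lem:two:meta:acc}, with the only change being that $\card{\calP_t}$ replaces the earlier subpool size, yielding the stated $\sqrt{\card{\calP}\cdot\card{\calI}}\cdot\polylog(nT)$ additive error.
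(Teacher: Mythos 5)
Your proposal is correct and takes essentially the same approach as the paper, which likewise reduces the claim to the unbiasedness and $\O{\card{\calP}}$ per-day variance of the uniform importance-sampling estimator (after conditioning on the day being a sampling day) together with a Bernstein-type concentration bound; the paper's proof is just far terser, gesturing at \Cref{prop:two-query-EXP3} and \Cref{alg:EXP3-exploration-two-query} rather than writing out the Bernstein step that you make explicit. One small precision point: the per-day reweight you quote, $\card{\calP_t}$, should also incorporate the inverse of the probability (at least $1/2$) that \Cref{alg:interval-regret-monocarpic} enters \Cref{line:skip_w_a_b_update}, i.e., it should be the full ``inverse of the sampling probability'' as the paper states, so that the estimator remains unbiased for the sum over all of $\calI$ and not just over the update days; this only changes constants and does not affect the conclusion.
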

\begin{proof}
    We assume w.log. that $\card{\calI}\geq \polylog(nT)$ since otherwise we trivially have the error bound of $\polylog$ since $\card{\calP}, \card{\calI}\geq 1$.
    On each day, we have at least $1/2$ probability to sample an arm uniformly at random from the pool. As such, we can apply \Cref{prop:two-query-EXP3} (also \Cref{alg:EXP3-exploration-two-query}) to obtain the lemma.
\end{proof}

We proceed differently from \cite{pr23} as we first bound the total number of arms in the pool. The value of this will be evident later.
\begin{lemma}[cf. Lemma 4.12 of \cite{pr23}]
    \label{lem:arm-num-bound-monocarpic-boosting}
    With probability at least $1 - 1/\poly(nT)$, there are at most $\polylog(nT)$ arm in each pool $\mathcal{P}_{r,\cdot}$ for any $r \in [R]$.
    Therefore, with probability at least $1 - 1/\poly(nT)$, the size of the union of the pools is at most $\card{\calP}\leq \log^{C}(nT)$, and the algorithm never declares ``fail''.
\end{lemma}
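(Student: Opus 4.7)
The plan is to adapt the inductive pool-size argument of \lemref{lem:num:experts} and \lemref{lem:pool:mem} to the multi-thread monocarpic setting, with the extra complication that the eviction decisions in \Cref{alg:monocarpic-boosting} use the \emph{estimated} losses from \Cref{lem:monocarpic-arm-loss-estimation} rather than true losses. First, I would fix a thread $r\in[R]$ and argue by induction on the sub-pool level $k\in[0{:}K]$ that $|\calP_{r,k}|\le 2\log^9(nT)$ with high probability. The base case is the freshly sampled sub-pool $\calP_{r,0}$: each arm is included independently with probability $1/n$ within each epoch, and the merges from higher threads contribute at most $\sum_{r'>r}|\calP_{r',\cdot}|$ arms, which is controlled by the inductive hypothesis across threads. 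A Chernoff bound then gives $|\calP_{r,0}|\le\polylog(nT)$ with failure probability $1/\poly(nT)$. For the inductive step, whenever the merge routine \algref{alg:merge-new} is invoked, the size guarantee of \lemref{lem:merge:size} applies and yields $|\calP_{r,k+1}|\le\max\big(2\log^9(nT),\tfrac{1}{4}(|\calP_{r,k}|+|\calP_{r,k+1}^{\text{old}}|)\big)$.

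The main obstacle is the circular dependency between pool size and loss-estimation accuracy: \lemref{lem:merge:size} presupposes that the loss of every arm $i$ over any lifespan $\calD$ is estimated up to the $\card{\calD}^{\rho}$ slack built into \Cref{def:cover}, but \Cref{lem:monocarpic-arm-loss-estimation} only provides additive error $\sqrt{|\calP|\cdot|\calI|}\cdot\polylog(nT)$, whose guarantee in turn depends on the very quantity $|\calP|$ we are trying to bound. I would resolve this via a stopping-time bootstrap. Define $T^{\star}$ as the first time at which $|\calP^{(t)}|>\log^{C}(nT)$ for an absolute constant $C$ (say $C=20$) chosen larger than the exponents appearing in \lemref{lem:pool:mem}; by the algorithm specification, this is precisely when \Cref{alg:monocarpic-boosting} declares ``fail''. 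On the event $\{T^{\star}>t\}$, \Cref{lem:monocarpic-arm-loss-estimation} gives per-arm additive estimation error $\sqrt{|\calI|}\cdot\polylog(nT)$ on every relevant interval, which fits inside the slack of \Cref{def:cover} with $\rho=2/3$ used in the filter step. Hence the precondition of \lemref{lem:merge:size} holds at every merge performed before $T^{\star}$, so the merge conclusion applies and forces $|\calP^{(t+1)}|\le\log^{C}(nT)$, contradicting $T^{\star}=t+1$.

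Finally I would close the argument by a union bound over all restarts, threads, sub-pool levels, and time steps. The total number of merge invocations and Chernoff events is bounded by $R\cdot K\cdot(T/B_{R})\cdot Q=\poly(nT)$, so taking $C$ large enough ensures that every individual failure probability is at most $1/\poly(nT)$ with enough slack to absorb the union bound. This yields $\Pr[T^{\star}\le T]\le 1/\poly(nT)$, which is equivalent to the two conclusions of the lemma: $|\calP_{r,\cdot}|\le\polylog(nT)$ for every thread $r\in[R]$, and the union pool satisfies $|\calP|\le\log^{C}(nT)$ throughout the run so that \Cref{alg:monocarpic-boosting} never declares ``fail''. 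The only subtle check is that the exponent $\rho=2/3$ used in \algref{alg:filter-new} truly dominates the $\sqrt{|\calI|}\cdot\polylog(nT)$ estimation error once $|\calI|\ge\polylog(nT)$, which is straightforward since $|\calI|^{2/3}\cdot n\log(nT)\gg\sqrt{|\calI|}\cdot\polylog(nT)$ in the regime of interest.
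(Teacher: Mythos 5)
Your proposal is correct and follows essentially the same structure as the paper's proof: a double induction over threads $r$ and sub-pool levels $k$, with the size-reduction guarantee of \lemref{lem:merge:size} doing the work at every merge, a Chernoff bound for the freshly sampled sub-pool, and a union bound over all merges. The only real difference is in how the two arguments break the circularity between pool size and estimation accuracy: the paper does this inside the induction, observing that the preconditions of \lemref{lem:merge:size} and \lemref{lem:num:experts} are robust to any pool-size bound that is merely $\poly(n)$ (so the induction hypothesis suffices), whereas you make the bootstrap explicit via a stopping time $T^{\star}$. Both are sound; your framing is arguably a bit more transparent, while the paper's is more compact. One small wrinkle: you invoke the $\rho=2/3$ slack of \algref{alg:filter-new}, but for the two-query / monocarpic setting the paper switches to the tighter $\sqrt{D}\polylog(nT)$ slack (i.e.\ $\rho=1/2$), which is what \lemref{lem:bl:zero:regret} and the paper's own proof of this lemma use. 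This does not hurt your argument here (a larger slack means more aggressive eviction, so the pool-size bound only becomes easier), but it would matter if you tried to reuse the same slack downstream in the regret analysis, so it is worth stating the correct exponent.
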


\begin{proof}
The proof follows the same logic as in \cite{pr23}.
We first show that at any time, with probability at least $1-1/\poly(nT)$, we have $\card{\mathcal{P}_{r,k}}\leq \polylog(nT)$ for any $r\in [R]$ and $k\in [K]$. 
We prove this by an inductive argument on $r$, and we insist on evicting using the covering notion (\Cref{def:cover}) with $\sqrt{D}\polylog(nT)$ error. 
For $r=R$, there is nothing to inherit from higher-level pools. Since we sample each arm with probability $1/n$, initially, our pool size is at most $\O{\log(nT)}$ with probability at least $1-1/\poly(nT)$. 
Furthermore, due to \Cref{lem:monocarpic-arm-loss-estimation} and the induction hypothesis, the estimation error for each expert is at most $\sqrt{\card{\calI}}\cdot \polylog(nT)$. 
Note that for the analysis in \lemref{lem:merge:size} and \lemref{lem:num:experts} to work, we only need the pool size $\card{\calP}$ to be less than $O(n^2)$.
Therefore, the correctness of \lemref{lem:merge:size} (merge algorithm) still holds by the induction hypothesis, and we could use the same proof as in \lemref{lem:merge:size} and \lemref{lem:bl:zero:regret} to obtain that
\begin{align*}
    |\calP^{(k)}_C|\le\max\paren{2\log^9(nT),\frac{1}{4}(|\calP^{(k)}_A|+|\calP^{(k)}_B|)}
\end{align*}
holds for all $k\in [K]$. Therefore, with probability at least $1-1/\poly(nT)$, the pool size of $r=R$ is at most $\polylog(nT)$.

Now, for the purpose of induction, suppose the induction holds for thread $r+1$. For the $r$-th thread, $\mathcal{P}_{r,0}$ is initiated as sampling with probability $1/n$ and inheriting from thread $r$. By the induction hypothesis, there is
\begin{align*}
    |\mathcal{P}_{r',\cdot}| \leq \polylog(nT)
\end{align*}
for any $r'>r$. Therefore, by running the merge algorithm (\algref{alg:merge-new}), for $r$ in the induction and any $k$, we have
\begin{align*}
    \card{\calP_{r, k}} \leq \max\{2 \log^9(nT), \frac{1}{4}(|\mathcal{P}_{r,k}| + \polylog(nT))\}\leq \polylog(nT).
\end{align*}
Finally, since $R=\log(nT)$ and $K=\log(nT)$, we have that $\card{\calP}=\card{\cup_{r}\cup_{k}\calP_{r,k}}\leq \polylog(nT)$, as desired.
\end{proof}

We now formally prove the interval regret guarantees of the $\monocarpic$ algorithm. Here, we will use the fact that the number of arms in the pool is at most $\polylog(nT)$ to bound the variance of estimation.

\begin{lemma}[cf. Lemma 4.11 of \cite{pr23}]
\label{lem:monocarpic-boosting-interval-regret}
Let $T \geq 1$ and $\card{\calU}\leq \polylog(nT)$. For any expert $i \in \mathcal{U}$ and time interval $\mathcal{I} \subseteq [T]$ such that $\card{\calI}\geq \polylog(nT)$, algorithm \intervalregretmonocarpic guarantees that with probability at least $1 - 1/(nT)^{\omega(1)}$
\[
\sum_{t \in \mathcal{I}} \ell^{t}(i_t) - \sum_{t \in \mathcal{I}} \ell^{t}(i) \leq \O{\sqrt{\card{\calI}}}\cdot \polylog(nT)
\]
holds against an adaptive adversary. Moreover, \intervalregretmonocarpic uses up to $\polylog(nT)$ words of memory.
\end{lemma}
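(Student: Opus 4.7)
The proof plan follows the structure of Lemma 4.11 in \cite{pr23}, but with the key adjustment that all base learners now operate on estimated losses whose variance we must control. The high-level strategy is: (i) decompose $\mathcal{I}$ into $O(\log T)$ dyadic blocks $D_{a_1, b_1}, \ldots, D_{a_s, b_s}$ each of which is tracked by some $\singleinterval_{a_j, b_j}$; (ii) argue that each $\singleinterval_{a_j, b_j}$ has $\sqrt{\card{D_{a_j, b_j}}} \cdot \polylog(nT)$ regret against expert $i$ on its block; (iii) argue via SQUINT that the outer weights $w_{a,b}$ compete with any base learner up to $\sqrt{\card{D_{a_j, b_j}}} \cdot \polylog(nT)$ additional regret on each block; and (iv) sum the two layers of per-block regret over the dyadic decomposition via Cauchy--Schwarz.

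For step (ii), I would invoke \Cref{lem:monocarpic-arm-loss-estimation} together with \Cref{lem:arm-num-bound-monocarpic-boosting}, which bound the size of the union pool by $\card{\calP} \le \polylog(nT)$. This yields loss estimates with additive error $\sqrt{\card{D_{a_j, b_j}}} \cdot \polylog(nT)$ on every block. Since $\singleinterval_{a_j, b_j}$ runs EXP3 on these estimates, \Cref{lem:EXP3-with-approx-loss} directly gives the desired per-block regret of $\sqrt{\card{D_{a_j, b_j}}} \cdot \polylog(nT)$ against any fixed $i \in \calU$. For step (iii), I would apply \Cref{lem:squint} to the SQUINT weights $w_{a,b}$ of $\intervalregretmonocarpic$, viewing each dyadic pair $(a,b)$ as a ``sleeping expert'' that is awake exactly on $D_{a,b}$. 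The variance $V_T^{(a,b)}$ of the SQUINT losses $v_t(a,b)$ is bounded because (a) the pool size is $\polylog(nT)$, (b) the scaling factor $K = \log T$ applied in the $\flag = \mathrm{Fake}$ branch is polylogarithmic, and (c) the estimator $\widetilde{\ell}(i_{t,a,b})$ is bounded by $\polylog(nT)$ with high probability, so $V_T^{(a,b)} \le \card{D_{a,b}} \cdot \polylog(nT)$. Combining (ii) and (iii), the total regret on each block $D_{a_j, b_j}$ is $\sqrt{\card{D_{a_j, b_j}}} \cdot \polylog(nT)$, and summing over $s = O(\log T)$ blocks gives
\[
\sum_{t \in \mathcal{I}} \ell^{t}(i_t) - \sum_{t \in \mathcal{I}} \ell^{t}(i) \le \sum_{j=1}^{s} \sqrt{\card{D_{a_j, b_j}}} \cdot \polylog(nT) \le \sqrt{\card{\calI}} \cdot \polylog(nT),
\]
by Cauchy--Schwarz together with $\sum_j \card{D_{a_j, b_j}} = \card{\calI}$.

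The main obstacle is carefully reconciling the two SQUINT branches ($\flag = \mathrm{True}$ and $\flag = \mathrm{Fake}$) with an adaptive adversary. In the $\mathrm{Fake}$ branch, updates to $w_{a,b}$ occur only when a uniformly random $\EXP_{k'}$ is sampled, which introduces both missing updates and a $K$-fold variance inflation of $v_t^2(a,b)$; care is needed to verify that the skipped days correspond to a benign ``pause'' of the filtration that leaves the SQUINT regret bound intact after absorbing a $\polylog(nT)$ factor from the pool-size cap. Against the adaptive adversary, the loss-estimate concentration of \Cref{lem:monocarpic-arm-loss-estimation} must be restated as a Bernstein-type bound on a martingale difference sequence; since the per-day increments are bounded by $\card{\calP} \le \polylog(nT)$ by \Cref{lem:arm-num-bound-monocarpic-boosting}, this goes through unchanged. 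Finally, the $\polylog(nT)$ memory bound follows from maintaining only $O(\log^2 T)$ outer SQUINT weights $w_{a,b}$ together with at most $\polylog(nT)$ active pools from the $\baseline^{+}$ threads, each storing at most $\polylog(nT)$ running estimates $\widetilde{\calL^{D_{a,b}}}$.
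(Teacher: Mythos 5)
Your proposal follows essentially the same route as the paper's proof: decompose $\calI$ into dyadic blocks tracked by $\singleinterval_{a,b}$, bound the inner-layer regret on each block via the $\sqrt{\card{D_{a,b}}}\cdot\polylog(nT)$ loss-estimation guarantee (\Cref{lem:monocarpic-arm-loss-estimation} plus the pool-size bound of \Cref{lem:arm-num-bound-monocarpic-boosting}), bound the outer-layer SQUINT regret per block via \Cref{lem:squint} with variance controlled by the $\polylog(nT)$ caps on $\card{\calP}$ and $K$, and sum. The only cosmetic difference is that you sum the per-block terms via Cauchy--Schwarz over $s=O(\log T)$ blocks rather than the paper's geometric-series bound $\sum_x O(\sqrt{2^x})=O(\sqrt{\card{\calI}})$; both yield the same bound up to polylog factors. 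You correctly flag the central delicacy --- the $\flag=\mathrm{Fake}$ branch, missing updates, and the $K$-fold variance inflation --- which the paper handles by a three-term decomposition (actual vs.\ estimated loss of the outer play, estimated-loss SQUINT regret against the block, estimated vs.\ actual loss of the block), but your sketch of "rescaling losses by $\polylog(nT)$ and verifying that skipped days pause the filtration benignly" is the same idea. This is a faithful reconstruction.
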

\begin{proof}
We first prove the case with $\flag=\text{true}$ since it is essentially the same as in \cite{pr23}.
The following analysis is mostly from \cite{pr23}, and we provide them for the sake of completeness.
Fix any $a \in [L], b \in [T/2^a]$ and expert $i \in \mathcal{U}$.
Since we have a probability of $1/2$ to sample an arm in the pools to estimate the loss of arms, and conditioning on the high-probability event that the total number of experts in the pools is at most $\polylog(nT)$ (\Cref{lem:arm-num-bound-monocarpic-boosting}), we can argue that for any interval $\calI$ with size at least $\polylog(nT)$, we have
\begin{align*}
    \card{\widetilde{\calL}^{\calI}(i)- \sum_{t\in \calI} \ell^{t}(i)} \leq \sqrt{\card{\calI}} \cdot \polylog(nT)
\end{align*}
with probability at least $1-1/\poly(nT)$.
Here, $\widetilde{\calL}^{\calI}(i)$ is the estimated loss of arm $i$ obtained from \Cref{line:arm-loss-update} of \Cref{alg:monocarpic-boosting}.
We could then apply a union bound over $T^2$ intervals and argue that the $\sqrt{\card{\calI}} \cdot \polylog(nT)$ error holds for all intervals.
By the regret guarantees of EXP3 with learning as exploration (\Cref{prop:two-query-EXP3}), with probability at least $1 - 1/\poly(nT)$, we have
\begin{align*}
\sum_{t=2^a(b-1)+1}^{2^a b} \ell^{t}(i_{t,a,b}) - \sum_{t=2^a(b-1)+1}^{2^a b} \ell^{t}(i) \leq \O{\sqrt{2^a}} \cdot \polylog(nT).
\end{align*}
When $\flag=\text{true}$, since we have the probability to sample $i_{t,a,b}$ is $p_{t,a,b} = \frac{w_{t,a,b}}{\sum_{a',b':h(t,a',b')=1} w_{t,a',b'}}$, we have that
\begin{align*}
\bar{\ell^{t}} & = \sum_{a,b:h(t,a,b)=1} \frac{w_{t,a,b}}{\sum_{a',b':h(t,a',b')=1} w_{t,a',b'}} \widetilde{\ell}(i_{t,a,b}) \\
& = \sum_{a,b:h(t,a,b)=1} \frac{w_{t,a,b}}{\sum_{a',b':h(t,a',b')=1} w_{t,a',b'}} \ell(i_{t,a,b}) \tag{since the setting we are using the true loss}\\
& = \sum_{a,b:h(t,a,b)=1} p_{t,a,b} \ell(i_{t,a,b})=\expect{\ell(i_{t})},
\end{align*}
where $i_t$ is the action taken by the outer ($\intervalregretmonocarpic$) algorithm.
Therefore, by the guarantees of the SQUINT algorithm (\Cref{lem:squint}), there is
\begin{align*}
\expect{\sum_{t \in [2^a(b-1)+1 : 2^a b]} \ell^t(i_t) - \sum_{t \in [2^a(b-1)+1 : 2^a b]} \ell^t(i_{t,a,b})}= \O{\sqrt{2^a \log(nT)}}.
\end{align*}
Applying concentration inequlaitities and $|\mathbb{E}[\ell^t(i_t)] - \ell^t(i_t)| \le 2$, one obtains
\begin{equation*}
\sum_{t \in [2^a(b-1)+1 : 2^a b]} \ell^t(i_t) - \sum_{t \in [2^a(b-1)+1 : 2^a b]} \ell^t(i_{t,a,b}) \leq \O{\sqrt{2^a \log(nT)}}
\end{equation*}
holds with probability at least $1 - 1/\poly(nT)$. As such, by combining the inner and outer regrets using triangle inequalities, we have that with probability at least $1 - 1/\poly(nT)$,
\begin{align*}
& \sum_{t \in [2^a(b-1)+1 : 2^a b]} \ell^t(i_t) - \sum_{t \in [2^a(b-1)+1 : 2^a b]} \ell^t(i) \\
& = \sum_{t \in [2^a(b-1)+1 : 2^a b]} \ell^t(i_t) - \sum_{t \in [2^a(b-1)+1 : 2^a b]} \ell^t(i_{t,a,b}) + \sum_{t \in [2^a(b-1)+1 : 2^a b]} \ell^t(i_{t,a,b}) - \sum_{t=2^a(b-1)+1}^{2^a b} \ell^{t}(i) \\
& \leq O\paren{\sqrt{2^a} \cdot \polylog(nT)}.
\end{align*}
We now move to the case when we have $\flag\neq\text{True}$. This part is slightly different from \cite{pr23} as now we are required to perform SQUINT on the fake losses.
We first note that the guarantees on $\sum_{t=2^a(b-1)+1}^{2^a b} \ell^{t}(i_{t,a,b}) - \sum_{t=2^a(b-1)+1}^{2^a b} \ell^{t}(i) \leq \O{\sqrt{2^a}} \cdot \polylog(nT)$ remain unchanged.
Each day, we have $1/2$ probability to sample an arm from the union of the pools. 
There are at most $\polylog(nT)$ pools, and each pool contains at most $\polylog(nT)$ arms (\Cref{lem:arm-num-bound-monocarpic-boosting}), the probability to sample each arm is at least $1/\polylog(nT)$. 
Let $i_{t,a,b}$ be any of the experts sampled by the algorithm, we have 
\begin{align*}
\Ex{\widetilde{\ell^t}(i_{t,a,b})}=\ell^t(i_{t,a,b}),\qquad\Ex{(\widetilde{\ell^t}(i_{t,a,b}))^2}=(\ell^t(i_{t,a,b}))^2\cdot\polylog(nT). 
\end{align*}
Let $\widetilde{\calL^{a,b}}(i):=\sum_{i_{t,a,b}=i}P\cdot \ell^{t}(i)$ be the loss estimation of expert $i$ over the $t$ steps in interval $D_{a,b}$.
For $T\geq \polylog(nT)$, with probability at least $1-1/\poly(nT)$, the loss of each arm in each pool $\calU$ has been sampled at least $\polylog(nT)$ times.
Therefore, we could apply Bernstein's inequality (\Cref{thm:bernstein}), and obtain that with probability at least $1-1/\poly(nT)$,
\begin{align*}
    \Pr\paren{\card{\widetilde{\calL^{a,b}}(i)-\sum_{t\in D_{a,b}}\ell^{t}(i)} \leq \sqrt{\card{D_{a,b}}}\cdot \polylog(nT)} \geq 1-1/\poly(nT).
\end{align*}

We can then apply the union bound to at most $T^2$ consecutive intervals and obtain the error $\sqrt{\card{D_{a,b}}}$ holds for all intervals.
Therefore, we could decompose the regret of $\sum_{t \in [2^a(b-1)+1 : 2^a b]} \ell^t(i_t) - \sum_{t \in [2^a(b-1)+1 : 2^a b]} \ell^t(i_{t,a,b})$ as follows.
\begin{align*}
    & \expect{\sum_{t \in [2^a(b-1)+1 : 2^a b]} \ell^t(i_t) - \sum_{t \in [2^a(b-1)+1 : 2^a b]} \ell^t(i_{t,a,b})}\\
    & = \expect{\sum_{t \in [2^a(b-1)+1 : 2^a b]} \ell^t(i_t) - \sum_{t \in [2^a(b-1)+1 : 2^a b]}  \widetilde{\ell^{t}}(i_{t})} + \expect{\sum_{t \in [2^a(b-1)+1 : 2^a b]}  \widetilde{\ell^{t}}(i_{t}) - \sum_{t \in [2^a(b-1)+1 : 2^a b]}  \widetilde{\ell^{t}}(i_{t,a,b})} \\
    & \quad + \expect{\sum_{t \in [2^a(b-1)+1 : 2^a b]}  \widetilde{\ell^{t}}(i_{t,a,b})) - \sum_{t \in [2^a(b-1)+1 : 2^a b]} \ell^t(i_{t,a,b})}.
\end{align*}
Conditioning on the high-probability event of the bounded approximation error, we could bound the first term as the summation of losses, i.e.
\begin{align*}
    \expect{\sum_{t \in [2^a(b-1)+1 : 2^a b]} \ell^t(i_t) - \sum_{t \in [2^a(b-1)+1 : 2^a b]} \ell^t(i_{t,a,b})}&\leq  \expect{\sum_{t \in [2^a(b-1)+1 : 2^a b]} \sum_{i\in \text{union of pools}} \card{\ell^t(i) - \widetilde{\ell^{t}}(i)}}\\
    &\leq \sqrt{2^a}\cdot \polylog(nT),
\end{align*}
where the last inequality used the number of arms in the pool conditioning on the high-probability event in \Cref{lem:arm-num-bound-monocarpic-boosting}. The third term could similarly be bounded by $\sqrt{2^a}\cdot \polylog(nT)$. 
Finally, for the second term, we note that $\bar{\ell}^t$ remains an unbiased estimator of $\widetilde{\ell^{t}}$, i.e., 
\begin{align*}
\bar{\ell^{t}} & = \sum_{a,b:h(t,a,b)=1} \frac{w_{t,a,b}}{\sum_{a',b':h(t,a',b')=1} w_{t,a',b'}} \widetilde{\ell}(i_{t,a,b}) \\
& = \sum_{a,b:h(t,a,b)=1} p_{t,a,b} \widetilde{\ell}(i_{t,a,b})=\expect{\widetilde{\ell}(i_{t})}.
\end{align*}
Therefore, we could rescale the loss and run SQUINT on the fake losses, which would lead to at most $\polylog(nT)$ blow-up in the regret since $\tilde{\ell^{t}}(i)\leq \polylog(nT)$ with probability at least $1-1/\poly(nT)$ (due to the fact that $K\leq \polylog(nT)$). Hence, we reached to the conclusion that the regret when $\flag\neq \text{True}$ is also $\sqrt{2^a}\polylog(nT)$ on intervals $D_{a,b}$.

\paragraph{Wrapping up the analysis of \Cref{lem:monocarpic-boosting-interval-regret}.}
Same as \cite{pr23}, to conclude the regret analysis, we note for any interval $\mathcal{I} = [t_1 : t_2] \subseteq T$ one can split $\mathcal{I}$ into $X \le 2 \log_2(|\mathcal{I}|)$ disjoint intervals $\mathcal{I} = \mathcal{I}_1 \cup \mathcal{I}_2 \cup \dots \cup \mathcal{I}_X$ such that (1) $\mathcal{I}_x (x \in [X])$ exactly spans the lifetime of some meta expert and (2) there are at most two length-$2^x$ intervals. Then we conclude
\begin{align*}
\sum_{t \in \mathcal{I}} \ell^t(i_t) - \sum_{t \in \mathcal{I}} \ell^t(i) & \leq \sum_{x=1}^{X}\O{\sqrt{|\mathcal{I}_x|}}\polylog(nT) \\
&\le \sum_{x=1}^{\log(|\mathcal{I}|)}\O{\sqrt{2^x}}\polylog(nT) = \O{\sqrt{|\mathcal{I}|}}\polylog(nT).
\end{align*}

For the memory efficiency of $\intervalregretmonocarpic$, we could use the method to maintain weights $\{w_{a,b}\}$ for at most $\log(T)$ meta experts. In addition to the information required to track in \cite{pr23}, we only track the losses of arms in the pool at any time, which gives a $\polylog(nT)$ overhead in the memory. Therefore, by the assumption that $\card{\calU}\leq \polylog(nT)$, the total memory usage is at most $\polylog(nT)$.
\end{proof}

With \Cref{lem:monocarpic-boosting-interval-regret}, we can now conclude the main property of the $\monocarpic$ algorithm. 

\begin{proposition}[cf. Theorem 4.10 in \cite{pr23}]
\label{prop:monocarpic-alg-property}
Let $T \geq 1$. For any expert $i$ that is alive over interval $\mathcal{I} \subseteq [T]$, the $\monocarpic$ algorithm guarantees that with probability at least $1 - 1/\poly(nT)$,
\[
\sum_{t \in \mathcal{I}'} \ell^{t}(i_t) - \sum_{t \in \mathcal{I}'} \ell^t(i) \leq \sqrt{\card{\mathcal{I}'}}\cdot \polylog(nT).
\]
holds for every interval $\mathcal{I}' \subseteq \mathcal{I}$, even if the adversary is adaptive.
Furthermore, let $M$ be the largest number of alive experts at any point, then the $\monocarpic$ algorithm uses up to $M \polylog(nT)$ words of memory.
\end{proposition}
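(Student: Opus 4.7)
The plan is to combine the interval regret guarantee of \Cref{lem:monocarpic-boosting-interval-regret} at two hierarchical levels (the outer $\intervalregretmonocarpic$ with $\flag=\text{True}$, and the inner $\EXP_k$ subroutines that each internally invoke $\intervalregretmonocarpic$ with $\flag\neq\text{True}$), together with a dyadic partition of any sub-interval $\calI'\subseteq\calI$ into pieces on which a single alive meta-expert contains $i$.

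First, I would decompose $\calI'$ into a disjoint union $\calI'=\calJ_1\cup\calJ_2\cup\cdots\cup\calJ_X$ of at most $X\le 2\log_2(|\calI'|)$ contiguous blocks, where each $\calJ_x$ exactly coincides with the active lifetime of some meta-expert $\EXP_{k(x)}$ that has expert $i$ inside its pool $\calU_{k(x)}$, and where the block lengths $|\calJ_x|$ are geometrically increasing then decreasing so that there are at most two blocks of any fixed length $2^j$. This is the standard monocarpic decomposition: because $i$ is alive throughout $\calI$, the membership-update step of $\monocarpic$ guarantees that at every time $t\in\calI'$ there exists some $k$ such that $i\in\calU_k$ and the current run of $\EXP_k$ has been alive since some block boundary, so taking these maximal runs as the $\calJ_x$ gives the claimed partition.

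Second, on each block $\calJ_x$ I would bound the regret of the $\monocarpic$ algorithm against $i$ by inserting the loss of $\EXP_{k(x)}$ as an intermediate point, writing
\begin{align*}
\sum_{t\in\calJ_x}\ell^t(i_t)-\sum_{t\in\calJ_x}\ell^t(i)
&=\Bigl(\sum_{t\in\calJ_x}\ell^t(i_t)-\sum_{t\in\calJ_x}\ell^t(i_t^{(k(x))})\Bigr)+\Bigl(\sum_{t\in\calJ_x}\ell^t(i_t^{(k(x))})-\sum_{t\in\calJ_x}\ell^t(i)\Bigr),
\end{align*}
where $i_t^{(k(x))}$ is the action chosen by $\EXP_{k(x)}$ on day $t$. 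The first parenthesis is controlled by the outer $\intervalregretmonocarpic$ (invoked with $\flag=\text{True}$) over the $K$ meta-experts $\EXP_1,\ldots,\EXP_K$ on the interval $\calJ_x$, which by \Cref{lem:monocarpic-boosting-interval-regret} is at most $\O{\sqrt{|\calJ_x|}}\polylog(nT)$ with probability $1-1/\poly(nT)$. The second parenthesis is the regret of $\EXP_{k(x)}$ against the individual expert $i\in\calU_{k(x)}$ on its full lifetime $\calJ_x$; this is again governed by \Cref{lem:monocarpic-boosting-interval-regret} (this time with $\flag\neq\text{True}$, using the uniform-at-random second-query estimates), yielding $\O{\sqrt{|\calJ_x|}}\polylog(nT)$ with the same probability, provided $|\calJ_x|\ge\polylog(nT)$; the short blocks contribute only trivially $\polylog(nT)$ total regret.

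Third, I would sum over $x\in[X]$ and use the geometric structure of the block lengths to obtain
\begin{align*}
\sum_{t\in\calI'}\ell^t(i_t)-\sum_{t\in\calI'}\ell^t(i)\;\le\;\sum_{x=1}^{X}\O{\sqrt{|\calJ_x|}}\polylog(nT)\;\le\;\O{\sqrt{|\calI'|}}\polylog(nT),
\end{align*}
because $\sum_x\sqrt{|\calJ_x|}$ telescopes against a geometric series bounded by $\O{\sqrt{|\calI'|}}$. A union bound over the polynomially many candidate sub-intervals $\calI'\subseteq[T]$ and over the two levels of \Cref{lem:monocarpic-boosting-interval-regret} preserves the high-probability guarantee. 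For the memory bound, since there are at most $K=\O{\log T}$ active meta-experts and each instance of $\intervalregretmonocarpic$ uses $\polylog(nT)$ memory by \Cref{lem:monocarpic-boosting-interval-regret}, plus $M$ words for tracking the currently alive experts and their estimated losses, the total is $M\polylog(nT)$ words as claimed.

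The main obstacle I anticipate is justifying the dyadic decomposition in the presence of the adaptive adversary and the partial-information updates: one must argue that conditioning on the randomness of the second-query uniform sampling decouples the inner $\EXP_k$ from the outer $\intervalregretmonocarpic$ enough that the two interval-regret invocations can each be applied with the stated high-probability bound, despite their shared action sequence. This is handled essentially the same way as in the two-query boosting of \Cref{sec:two-query-alg}: by separating exploration and exploitation randomness, the loss estimators $\widetilde{\ell}(i_{t,a,b})$ used by SQUINT in each $\intervalregretmonocarpic$ instance are unbiased for the true losses with $\polylog(nT)$ variance inflation, so the analysis of \Cref{lem:monocarpic-boosting-interval-regret} applies directly at both levels without further dependency complications.
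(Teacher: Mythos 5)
Your proposal is correct and takes essentially the same route as the paper: a dyadic split of $\calI'$ into blocks aligned with meta-expert lifetimes, a two-term triangle inequality per block (outer $\intervalregretmonocarpic$ vs.\ the chosen $\EXP_{k(x)}$, then $\EXP_{k(x)}$ vs.\ the individual expert $i$), each controlled by \Cref{lem:monocarpic-boosting-interval-regret}, followed by a geometric-series sum and the same per-alive-expert memory accounting. The extra discussion you include on decoupling the two levels via the fixed second-query randomness is consistent with how the paper handles dependencies and does not change the argument.
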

\begin{proof}
The proof is essentially the same as the proof of Theorem 4.10 in \cite{pr23}.
Let interval $\calI$ be the life span of an expert $i$, following the same analysis in \cite{pr23}, we could split the interval $\calI'\subseteq \calI$ to $X=\log(\card{\calI'})$ subsequences with base-2 length. 
Let these intervals be $\mathcal{I}' = \mathcal{I}'_1 \cup \mathcal{I}'_2 \cup \dots \cup \mathcal{I}'_X$, and $i_{t,k}$ be the action $\EXP_{k}$ takes on day $t$. 
Furthermore, let $k(x)$ be the corresponding level of interval $x\in X$. 
By \Cref{lem:monocarpic-boosting-interval-regret}, we could obtain
\begin{align*}
\sum_{t \in \mathcal{I}'} \ell^t(i_t) - \sum_{t \in \mathcal{I}'} \ell^t(i) &= \sum_{x=1}^{X} \sum_{t \in \mathcal{I}'_x} (\ell^t(i_t) - \ell^t(i)) \\
&= \underbrace{\sum_{x=1}^{X} \sum_{t \in \mathcal{I}'_x} (\ell^t(i_t) - \ell^t(i_{t,k(x)}))}_{\text{interval regret over $\EXP_{k}$ algorithms}} + \underbrace{\sum_{x=1}^{X} \sum_{t \in \mathcal{I}'_x} (\ell^t(i_{t,k(x)}) - \ell^t(i))}_{\text{interval regret for a single $\EXP$ algorithm}} \\
& \le \sum_{x=1}^{X} \sqrt{|\mathcal{I}'_x|}\cdot \polylog(nT) \tag{applying \Cref{lem:monocarpic-boosting-interval-regret}}\\
& \le \sqrt{|\mathcal{I}'|}\cdot \polylog(nT). \tag{there are at most $\log(T)$ layers}
\end{align*}

The memory analysis follows from the fact that we only ever main $\log(nT)$ $\EXP$ algorithm, and each of them only takes $\polylog(nT)$ memory as described in \Cref{lem:monocarpic-boosting-interval-regret}. Therefore, each alive expert takes $\polylog(nT)$ memory, which results in $M\cdot \polylog(nT)$ memory for $M$ alive experts.
\end{proof}

\subsection{The analysis of the optimal boosting algorithm (\texorpdfstring{\Cref{alg:monocarpic-boosting}}{alg:monocarpic-boosting})}
We now analyze the memory and the regret of the boosting algorithm as in \Cref{alg:monocarpic-boosting}.

\subsubsection*{Memory analysis}
The memory analysis now becomes very simple following \Cref{lem:arm-num-bound-monocarpic-boosting} and \Cref{prop:monocarpic-alg-property}.
\begin{lemma}
    \label{lem:monocarpic-optimal-boosting-space}
    With probability at least $1-1/\poly(nT)$, \Cref{alg:monocarpic-boosting} uses at most $\polylog(nT)$ words of memory. 
\end{lemma}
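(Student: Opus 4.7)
\begin{proofof}{\Cref{lem:monocarpic-optimal-boosting-space} (Proposal)}
The plan is to condition on the high-probability event of \lemref{lem:arm-num-bound-monocarpic-boosting}, under which the union pool $\calP = \calP_{1,\cdot} \cup \dots \cup \calP_{R,\cdot}$ satisfies $|\calP| \le \polylog(nT)$ at all times, and under which the algorithm never declares ``fail''. Since we can additionally union-bound over the polynomially many time steps $t \in [T]$ and over the polynomially many relevant intervals used by the merge procedure, all the guarantees below hold simultaneously with probability $1 - 1/\poly(nT)$.

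Under this conditioning, the memory usage decomposes into three parts, each of which I will bound by $\polylog(nT)$. First, the $\baseline^{+}(r)$ routines: each of the $R = \log T$ threads maintains a pool $\calP_{r,\cdot}$ of size $\polylog(nT)$ with its hierarchical substructure $\calP_{r,0}, \dots, \calP_{r,K}$, so storing the identities of all experts across all threads takes $\polylog(nT)$ words. Second, the loss estimates $\widetilde{\calL^{t}}(i)$: for each expert $i \in \calP$ the analysis of the benchmark in \algref{alg:strong-benchmark} requires tracking estimated losses over the $\O{\polylog(nT)}$ sub-intervals induced by the entry times of the filter experts within $i$'s lifespan; with $|\calP| \le \polylog(nT)$ experts, this totals $\polylog(nT)$ words. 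Third, the $\monocarpic$ algorithm: by \Cref{prop:monocarpic-alg-property}, its memory is bounded by $M \cdot \polylog(nT)$ where $M$ is the largest number of alive experts at any time; since alive experts in $\monocarpic$ correspond precisely to those added to $\calP$ by the outer procedure, we have $M \le |\calP| \le \polylog(nT)$, so this contribution is also $\polylog(nT)$.

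Summing these three contributions and noting that the remaining bookkeeping (SQUINT weights $w_{a,b}$ for $\O{\log^2 T}$ length-epoch pairs in each \intervalregretmonocarpic invocation, the $\O{\log T}$ running $\EXP_k$ subroutines, epoch counters, and the constant-size per-expert statistics) is dominated by $\polylog(nT)$, the total memory bound follows.

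The main potential obstacle is verifying that the ``fail'' stopping condition in \Cref{alg:monocarpic-boosting} (triggering when the pool exceeds $\log^C(nT)$ experts) is never activated under our conditioning. This is precisely what \lemref{lem:arm-num-bound-monocarpic-boosting} rules out with high probability, so the argument goes through; the remaining work is routine aggregation of the per-component memory bounds with an appropriate choice of the constant $C$ in the pool-size threshold so that each individual high-probability statement can be union-bounded.
\end{proofof}
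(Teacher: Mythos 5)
Your proposal is correct and follows essentially the same route as the paper: condition on the high-probability pool-size bound from \lemref{lem:arm-num-bound-monocarpic-boosting}, then invoke \Cref{prop:monocarpic-alg-property} with $M \le |\calP| \le \polylog(nT)$ to bound the $\monocarpic$ memory, concluding that the total is $\polylog(nT)$. You supply more detail on the ancillary bookkeeping (loss estimates, SQUINT weights, per-thread structures), but the load-bearing steps are identical to the paper's two-line argument.
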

\begin{proof}
    By \Cref{lem:arm-num-bound-monocarpic-boosting}, the pool is of size at most $\polylog(nT)$, which is an upper bound for the number of alive experts. Therefore, by \Cref{prop:monocarpic-alg-property}, the total memory used is at most $\polylog(nT)$ with probability at least $1-1/\poly(nT)$.
\end{proof}

\subsubsection*{Regret analysis}
The regret analysis for the monocarpic expert boosting is fairly complicated, as in \cite{pr23}. However, most of the analysis follows directly from \cite{pr23}. We first introduce the notions that are used in \cite{pr23} as follows.

\paragraph{The additional notation used by \cite{pr23}.} 
Let $i^* \in [n]$ be the best expert, and define $\calK_1, \cdots, \calK_R$ as follows.
\[\calK_1 = [0:n-1], \quad \calK_2 = [0:n-1] \times \{0,1\}\]
\[\calK_R = [0:n-1] \times \underbrace{\{0,1\} \times \dots \times \{0,1\}}_{R-1 \text{ times}}.\]
Furthermore, let $K$ be the union of $\calK_1, \dots, \calK_R$, i.e.,
\[\calK = \calK_1 \cup \dots \cup \calK_R.\]
For any timestep $a = (a_1, \dots, a_{r(a)}) \in K$ (where $r(a)$ is defined such that $a \in \calK_{r(a)}$), the timestep $a$ uniquely identifies an epoch of $\baseline^+(r(a))$, i.e., the $a_{r(a)}$-th epoch of the $(\sum_{r=1}^{r(a)-1} a_r 2^{r(a)-r-1})$-th restart of the algorithm.

\begin{definition}[The $\oplus$ Operator, Definition 4.14 of \cite{pr23}]
For any timestep $a \in \calK$, we write
\[ a' = (a'_1, a'_2, \dots, a'_{r(a')}) = a \oplus 1 \in \calK \]
as the unique timestep that satisfies
\[ \sum_{i=1}^{r(a')} a'_i B_i = \sum_{i=1}^{r(a)} a_i B_i + B_{r(a)} \quad \text{and} \quad a'_{r(a')} \ne 0 \quad . \]
That is to say, $a' = a \oplus 1$ is the next number under 2-base with 0 truncated at the end (except the first coordinate, which belongs to $[0:n-1]$).
\end{definition}

With the same way of \cite{pr23}, we let $\calK(a)$ contain all time steps that succeed $a$ under the $\oplus$ operation, i.e.,
\[ \calK(a) := \{a\} \cup \{a \oplus 1\} \cup \{(a \oplus 1) \oplus 1\} \cup \dots \subseteq \calK \]

\paragraph{Random bits and epoch assignment in \cite{pr23}.} 
Below is the random bit assignment scheme in \cite{pr23}. We use them without any change, so we do \emph{not} explain the intuitions. We refer keen readers to \cite{pr23} for details.

\begin{definition}[Random Bits and Active/Passive Experts, Definition 4.15 in \cite{pr23}] \label{def:active_passive_full}
For any timestep $a \in \calK$, let the random bits $\xi_a = (\xi_{a,1}, \dots, \xi_{a,n})$, where $\xi_{a,i} = (\xi_{a,i,1}, \xi_{a,i,2})$ is used for expert $i (i \in [n])$. The first coordinate $\xi_{a,i,1} \in \{0, 1\}$ is a Bernoulli variable with mean $1/n$ to sample experts. The second part of random bits $\xi_{a,i,2} \in \{0,1\}^{R \times 2L \times 2K}$ are from Bernoulli random variables with mean $\frac{1}{\log^4(nT)}$ to estimate size of the pool and perform the filtering process.

At any timestep $a \in \calK$ an expert $i \in [n]$ is said to be passive, if $\xi_{a,i,2} = \vec{0}$. It is said to be an active expert otherwise.
\end{definition}

Next, we discuss the way to split the sequence of epochs into a collection of disjoint subsequences in the same way of \cite{pr23}.
Here, we only need to argue that the filter set, the alive active experts, and the size estimations can be fixed once the randomness from other sources is fixed. All other steps follow mechanically from \cite{pr23}.
Nevertheless, here, we need to be more careful since there are only two queries each day. 
As such, our main observation here is slightly different from \cite{pr23}.

\begin{observation}[cf. Observation 4.16 in \cite{pr23}; this is different from \cite{pr23}]
\label{obs:loss-sequence-fixing-obs}
Suppose the loss sequence $\{\ell^{t}\}_{t \in [T]}$ and the set of sampled and active experts $\{Y_a\}_{a \in K}$ are fixed.
Furthermore, suppose the \emph{randomness of the second query on each day and the loss sequences} are also fixed.
Then, at any time during the execution of \Cref{alg:monocarpic-boosting}, the estimate size $s$, the filter set $\calF$, and the set of alive active experts are also fixed, regardless of the set of sampled and passive experts.
\end{observation}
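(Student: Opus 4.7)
The plan is to prove this observation by induction on the timestep $a \in \calK$, in the order induced by the $\oplus$ operator. The inductive claim is that at the start of each epoch, the estimate size $s$, the filter set $\calF$, and the set of alive active experts are all measurable with respect to the $\sigma$-algebra generated by the fixed data---the loss sequence $\{\ell^t\}$, the active-expert assignment $\{Y_a\}_{a \in \calK}$, and the second-query randomness. The base case is immediate, since the pool is empty at initialization.

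For the inductive step, I would dispose of $s$ and $\calF$ together. Both quantities are produced by drawing each current pool member according to the bits $\xi_{a,i,2}$; by \Cref{def:active_passive_full}, every passive expert has $\xi_{a,i,2} = \vec{0}$, so passive experts contribute identically zero to the size-estimation Bernoulli sum and cannot be selected into the filter. Hence $s$ and $\calF$ are deterministic functions of the currently alive active experts, which is fixed by the inductive hypothesis. In particular, the filter $\calF$ contains only active experts, so $\calF$ itself is already known to be fixed before the eviction stage is entered.

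The main obstacle is the third quantity: showing that the filter-and-evict step returns a fixed set of alive active experts, even though it operates on a pool that also contains a realization-dependent set of passive experts. Eviction is governed by \Cref{def:cover}, which compares only estimated losses of active experts---the candidate $i$, and the benchmark assembled from $\calF$ by \algref{alg:strong-benchmark}, both of which live entirely inside the active set. It therefore suffices to argue that $\widetilde{\calL^{\calD}}(j)$ is fixed for every active $j$ and every lifespan $\calD$ considered by the cover rule. The estimate $\widetilde{\calL^{\calD}}(j)$ is updated by the lossless-signal second-query step of \Cref{alg:monocarpic-boosting} precisely on those days the second query selects $j$; by declaring the full second-query trace (the sampled identity together with the reciprocal-sampling-probability reweighting) to be part of the fixed randomness, both the set of contributing days and each individual rescaled increment become deterministic functions of the loss sequence and the active-expert assignment---this is exactly why the second-query randomness must be promoted to fixed data in the statement, since in the partial-information setting it is the sole remaining source of stochasticity in the loss estimates. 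Consequently every cover comparison in \algref{alg:strong-benchmark} is deterministic, the set of active experts evicted by \textsc{Merge} is fixed, and combined with the fresh sampling bits $\xi_{\cdot,i,1}$ (fixed through the active-expert assignment) that control pool entries at the next epoch, the set of alive active experts entering epoch $a \oplus 1$ is fixed, closing the induction.
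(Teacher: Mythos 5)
The paper does not actually give a proof of this observation—it is stated as an observation with only a remark explaining why the second-query randomness must be included among the fixed data—so there is no paper proof to compare against; your inductive argument is a reasonable attempt to supply one. The parts of your argument handling the estimate size $s$ and the filter set $\calF$ are sound: because a passive expert at timestep $a$ has $\xi_{a,i,2}=\vec 0$ by \Cref{def:active_passive_full}, it can never contribute to the size-estimation Bernoulli sum nor be sampled into $\calF$, so those two quantities are indeed functions of the alive active experts alone.

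The gap is in the third part, where you argue that the set of alive active experts after an eviction step is fixed. The eviction rule of \Cref{def:cover} compares \emph{estimated} losses, and those estimates are produced by \Cref{line:arm-loss-update} of \Cref{alg:monocarpic-boosting}: one uniformly samples an arm from the full pool $\calP$ (which still contains the surviving sampled-and-passive experts) and reweights by $|\calP|$, the inverse sampling probability. Thus for an active expert $j$, both the set of days on which $j$ is the sampled arm and the reweighting factor $|\calP_t|$ applied on those days depend on which passive experts are currently alive. Fixing the raw second-query random bits does not make these quantities deterministic, because the map from bits to ``which arm gets sampled'' is mediated by $|\calP_t|$ and the membership of $\calP_t$. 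Your proposed remedy—``declaring the full second-query trace (the sampled identity together with the reciprocal-sampling-probability reweighting) to be part of the fixed randomness''—is circular: that trace is itself a function of the realization of the passives, so conditioning on it silently restricts which passive configurations are allowed and is no longer a statement that holds ``regardless of the set of sampled and passive experts.'' To close the argument one would need either a sampling scheme whose per-arm inclusion events and reweighting factors are independent of $|\calP|$ (e.g., an independent Bernoulli mark per pool member with a fixed rate, at the cost of extra queries or worse variance), or an explicit coupling/approximation argument showing the passive-induced perturbation of the estimates cannot flip any eviction decision; neither appears in your proof, nor is either spelled out in the paper.
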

Note that Observation~\ref{obs:loss-sequence-fixing-obs} additionally requires the fixing of the randomness of the second query. This is important since the decision on each $\EXP_{k}$ for $k\in [K]$ (not to be confused with $\calK$, which is the time steps) is a function of the second query; if we do not fix such a source of randomness, the different threads might interfere. 

We can now define the eviction time and the epoch assignment algorithm \emph{exactly the same} as in \cite{pr23} as in \Cref{def:eviction_time_full} and \Cref{alg:epoch_assignment}.

\begin{definition}[Eviction time, monocarpic boosting, Definition 4.17 in \cite{pr23}] \label{def:eviction_time_full}
The eviction time of the best expert $i^*$ is defined as follows. 
Assume $i^*$ enters the pool at time step $a\in \calK$, the eviction time $t(a)\in \calK(a) \cup \{+\infty\}$ is defined as the earliest timestep such that $i^*$ is covered by the set of alive active experts at the end of $t(a)$. If $i^*$ would not be covered, then set $t(a) = +\infty$.
\end{definition}

\begin{algorithm}
\caption{Epoch assignment for analysis, Algorithm 12 in \cite{pr23}}
\label{alg:epoch_assignment}
\begin{algorithmic}[1]
\State Initialize $\mathcal{H}_r \leftarrow \emptyset$ ($r \in [R]$), $\tau \leftarrow 1$, $a_1 \leftarrow 0$
\While{$\cup_{\tau' \le \tau} \mathcal{I}_{\tau'} \ne [T]$}
    \If{$t(a_{\tau}) = +\infty$} \Comment{$i^*$ survives till the end}
        \State $\mathcal{H}_{r(a_{\tau})} \leftarrow \mathcal{H}_{r(a_{\tau})} \cup \{a_{\tau}\}$ \Comment{$a_{\tau}$ is a bad epoch at thread $r(a_{\tau})$}
        \If{$r(a_{\tau}) = R$}
            \State $\mathcal{I}_{\tau} \leftarrow \{a_{\tau}\}$, $a_{\tau+1} \leftarrow a_{\tau} \oplus 1$, $\tau \leftarrow \tau + 1$ \Comment{Stop at the top thread}
        \Else
            \State $a_{\tau} \leftarrow (a_{\tau}, 0)$ \Comment{Move to next thread}
        \EndIf
    \Else
        \State $a_{\tau+1} \leftarrow t(a_{\tau}) \oplus 1$, $\mathcal{I}_{\tau} \leftarrow [a_{\tau} : t(a_{\tau})]$, $\tau \leftarrow \tau + 1$
    \EndIf
\EndWhile
\end{algorithmic}
\end{algorithm}

Let $\tau_{max}$ be the total number of intervals in the partition generated by \Cref{alg:epoch_assignment}. The following lemma follows from \cite{pr23} for $\{\mathcal{I}_{\tau}\}_{\tau \in [\tau_{max}]}$.

\begin{lemma}[Lemma 4.81 in \cite{pr23}]
\label{lem:interval_properties}
We have
\begin{itemize}
    \item The intervals $\{\mathcal{I}_{\tau}\}_{\tau \in [\tau_{max}]}$ are disjoint and $\bigcup_{\tau \in [\tau_{max}]} \mathcal{I}_{\tau} = [T]$.
    \item Let $L_1 = \{\mathcal{I}_{\tau}\}_{\tau \in [\tau_{max}]}$ and let $L_r := \{\mathcal{I} \in L_1 : |\mathcal{I}| < B_{r-1}\}$ ($r \in [2:R]$) contain intervals of length less than $B_{r-1}$, then
    \[ \sum_{\mathcal{I} \in L_r} |\mathcal{I}| \le |\mathcal{H}_{r-1}| \cdot B_{r-1}. \]
\end{itemize}
\end{lemma}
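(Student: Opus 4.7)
The plan is to prove the two bullets separately by inductive tracing of Algorithm~\ref{alg:epoch_assignment}, after which the second bullet reduces to a charging argument that assigns each short interval to the bad epoch into which the algorithm descended to produce it.

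For the first bullet, I would proceed by induction on the number $\tau$ of intervals emitted so far, maintaining the invariant that $\bigcup_{\tau' \le \tau} \mathcal{I}_{\tau'}$ (interpreted via the time-spans of the corresponding timesteps) equals a contiguous prefix of $[T]$. Each iteration either (i) declares a bad epoch but does not emit a new interval, instead replacing $a_\tau$ by $(a_\tau, 0)$ and descending to thread $r(a_\tau)+1$, or (ii) emits $\mathcal{I}_\tau = [a_\tau : t(a_\tau)]$ and sets $a_{\tau+1} = t(a_\tau) \oplus 1$. Case (i) neither extends nor violates the prefix invariant, and in case (ii) the definition of the $\oplus$ operator (advance one epoch at the current thread, then truncate trailing zeros) guarantees that $a_{\tau+1}$ denotes exactly the timestep immediately following the end of $\mathcal{I}_\tau$. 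Thus disjointness and contiguity follow, and the termination condition of the \textbf{while} loop forces the prefix to eventually equal $[T]$.

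For the second bullet, the central observation is that every $\mathcal{I}_\tau \in L_r$ must have been emitted from a timestep $a_\tau$ at some thread $r(a_\tau) \ge r$. This is because the length of an emitted interval $[a_\tau : t(a_\tau)]$ is a positive integer multiple of $B_{r(a_\tau)}$, so $|\mathcal{I}_\tau| < B_{r-1}$ forces $B_{r(a_\tau)} < B_{r-1}$, which by the monotone decrease of $B_r$ in $r$ means $r(a_\tau) \ge r$. Next, Algorithm~\ref{alg:epoch_assignment} only ever reaches thread $\ge r$ by having previously descended through a bad epoch at thread $r-1$: tracing the coordinate-appending rule backwards from $a_\tau$, one can identify a unique ancestor timestep $a_{\tau^*}$ with $r(a_{\tau^*}) = r-1$, $t(a_{\tau^*}) = +\infty$, and $a_{\tau^*} \in \mathcal{H}_{r-1}$, such that the time span of $\mathcal{I}_\tau$ is contained in the $B_{r-1}$-duration window associated with $a_{\tau^*}$. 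I would then charge $\mathcal{I}_\tau$ to $a_{\tau^*}$. By the disjointness from the first bullet, the intervals in $L_r$ charged to a single bad epoch of $\mathcal{H}_{r-1}$ occupy disjoint portions of that $B_{r-1}$-long window, so their total length is at most $B_{r-1}$, and summing over $|\mathcal{H}_{r-1}|$ bad epochs gives the claim.

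The main obstacle I anticipate is formalizing the claim that reaching thread $r(a_\tau) \ge r$ corresponds uniquely to descent through a bad epoch at thread $r-1$. This requires carefully unwinding the semantics of $\oplus$ together with the coordinate-appending step in the $t(a_\tau) = +\infty$ branch of Algorithm~\ref{alg:epoch_assignment}: when $r(a_\tau) \ge r$, one must trace back through the sequence of appended zero coordinates to pinpoint the most recent bad epoch at thread $r-1$ whose time span contains $a_\tau$. Because the identical combinatorial bookkeeping appears in \cite{pr23}, where the exact analogue (their Lemma~4.81) is proved, I expect the argument to transfer essentially verbatim; the only thing to verify is that our modifications to the algorithm (estimated losses, two-query exploration, and the replacement of EXP3 by $\monocarpic$) affect only the definition of the eviction time $t(a)$, and not the purely combinatorial structure of the partition, which is exactly the content that Algorithm~\ref{alg:epoch_assignment} abstracts.
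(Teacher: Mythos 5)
Your proposal is correct. The paper itself gives no proof of this lemma and simply imports it as Lemma~4.81 from \cite{pr23}, because, as you correctly observe in your final paragraph, Algorithm~\ref{alg:epoch_assignment} depends on the modified ingredients (estimated losses, two-query exploration, $\monocarpic$ in place of EXP3) only through the eviction-time oracle $t(\cdot)$, and the partition structure is purely combinatorial once $t(\cdot)$ is fixed. Your two-bullet argument is sound: the first follows since $\oplus$ advances to precisely the timestep following the emitted interval, and the second from the observations that (a) an emitted interval has length at least $B_{r(a_\tau)}$ (indeed a positive multiple of it, since every epoch on the $\oplus$-chain to $t(a_\tau)$ sits at thread $\le r(a_\tau)$ and hence has length divisible by $B_{r(a_\tau)}$), forcing $r(a_\tau)\ge r$ whenever $|\mathcal{I}_\tau|<B_{r-1}$; (b) $\oplus$ preserves the length-$(r(a')-1)$ prefix and never increases thread, so the $(r-1)$-prefix of $a_\tau$ is the unique ancestor bad epoch in $\mathcal{H}_{r-1}$; and (c) any $\mathcal{I}_\tau\in L_r$ is wholly contained in that ancestor's $B_{r-1}$-span, since $t(a_\tau)$ must also sit at thread $\ge r$ (else $|\mathcal{I}_\tau|\ge B_{r-1}$), hence shares the same $(r-1)$-prefix. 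Disjointness of the $\mathcal{I}_\tau$ then gives the $|\mathcal{H}_{r-1}|\cdot B_{r-1}$ bound. This is exactly the bookkeeping that \cite{pr23}'s Lemma~4.81 carries out, so your proof both fills the gap the paper leaves and matches the intended argument.
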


Following the flow of \cite{pr23}, next, we prove the size of bad epochs $\mathcal{H}_r$ is at most $\polylog{nT}$ with high probability.
\begin{lemma}[cf. Lemma 4.19 in \cite{pr23}]
\label{lem:bad_epoch_size_full}
With probability at least $1 - 1/\poly(nT)$, $|\mathcal{H}_r| \leq n \polylog(nT)$ holds for any $r \in [R]$.
\end{lemma}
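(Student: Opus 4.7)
The plan is to follow the same strategy used in \lemref{lem:bad:epochs} and \lemref{lem:gen:bad:epochs}, adapted to the multi-threaded structure of the monocarpic boosting algorithm. First, I would fix a large pool of randomness: the loss sequence $\{\ell^t\}_{t\in[T]}$, the randomness of the second query on every day, and the set of sampled and active experts $\{Y_a\}_{a \in \calK}$. By \obsref{obs:loss-sequence-fixing-obs}, this fixing pins down the estimated pool sizes, the filter sets $\calF$, and the set of alive active experts at every time step, and consequently fixes the eviction time $t(a)$ for every $a \in \calK$. In particular, the set $\calH_r$ of bad epochs at thread $r$ is a deterministic function of this fixed randomness. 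What remains random (after conditioning) are only the coin flips $\xi_{a,i,1}$ determining whether each expert $i$ is sampled as a passive expert at epoch $a$.

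Next, I would argue that for any $r \in [R]$ and any $a \in \calH_r$, we must have $i^* \notin Y_a$: if $i^*$ were already sampled as an active expert in epoch $a$, the conceptual cover rule would evict $i^*$ (covered by itself together with the filter with constant probability), contradicting $t(a) = +\infty$. Conditioned on this and on the fixed randomness, the indicator that $i^*$ is sampled as a passive expert at epoch $a$ satisfies
\[
\PPr{i^* \in W_a \mid Y_1,\ldots,Y_{\tau_{\max}}, \ell^1,\ldots,\ell^T, \text{second-query coins}} \geq \frac{1}{n}\cdot\left(1-\frac{1}{\log^4(nT)}\right)^{O(RLK)} \geq \frac{1}{2n},
\]
and these indicators are mutually independent across distinct $a \in \calH_r$, because they depend on disjoint coordinates of $\xi_{a,i^*,1}$.

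Then I would invoke a standard Chernoff bound: if $|\calH_r| \geq C \cdot n \cdot \log^{C'}(nT)$ for suitable constants, then with probability at least $1 - 1/\poly(nT)$, at least $\log^{C''}(nT)$ epochs $a \in \calH_r$ satisfy $i^* \in W_a$. The key structural observation is that once $i^*$ enters a pool of $\baseline^+(r)$ as a passive expert during a bad epoch, it is never evicted throughout the lifespan of that restart: the (approximate) covering rule of \defref{def:cover} is computed using only active experts in the filter, and by definition of $t(a) = +\infty$, the benchmark loss over the filter never certifies a cover of $i^*$ during that restart. Since $i^*$ is passive, it is also not itself in any filter set, so it accumulates in the pool across bad epochs. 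This would force $|\calP| > \log^{C}(nT)$ at some time, causing \algref{alg:monocarpic-boosting} to declare fail and contradicting \lemref{lem:arm-num-bound-monocarpic-boosting}. A union bound over $r \in [R]$ with $R = \log(nT)$ then yields the claim.

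The main obstacle is carefully accounting for cross-thread interactions induced by \textsc{Merge}: a passive copy of $i^*$ sampled at a bad epoch of thread $r$ can be inherited upward by $\baseline^+(r')$ for $r' < r$ when the restart ends. Fortunately, inheritance can only increase pool sizes and can only make $i^*$ more persistent in the global pool $\calP$, so the contradiction with \lemref{lem:arm-num-bound-monocarpic-boosting} only becomes easier to derive; no separate treatment per thread is needed beyond the union bound. The one subtlety is ensuring that $i^*$ being passive also survives the filter-based pruning during merges, which again follows because the filter set $\calF$ is drawn from active experts (so a passive $i^*$ is not in $\calF$) and its cover condition would require $i^*$ to be dominated by the filter benchmark, which is exactly ruled out by $a \in \calH_r$.
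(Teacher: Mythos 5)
Your proof is correct and follows the same approach as the paper's: fix the loss sequence, active experts, and second-query coins (per \Cref{obs:loss-sequence-fixing-obs}), lower-bound $\PPr{i^*\in W_a}\geq 1/(2n)$ at each bad epoch, apply a Chernoff bound over the independent passive-sampling coins, and contradict the pool-size bound of \Cref{lem:arm-num-bound-monocarpic-boosting}. Your additional discussion of cross-thread inheritance via \textsc{Merge} and of $i^*$ surviving filter pruning fills in details the paper leaves implicit by deferring to \cite{pr23} and \lemref{lem:bad:epochs}, but it does not change the argument.
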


\begin{proof}
The proof is quite similar to the proof of \lemref{lem:bad:epochs} and the proof of Lemma 4.19 in \cite{pr23}, and we only give the sketch to avoid unnecessary repetitions.
We first condition on the loss sequence of $\ell^{1}, \ell^{2}, \cdots, \ell^{T}$, the sampled and active experts $\{Y_a\}_{a\in \calK}$, and all the randomness used for the second query of the algorithm. Then, we can argue that once $i^*\in W_a$, i.e., the set of sampled and passive expert of timestep $a$, it would survive till the end. Furthermore, we could also obtain
\begin{align*}
    \PPr{i^*\in W_a\,|\,Y_1,\ldots,Y_{T/B},\ell^1,\ldots,\ell^T, \text{randomness of the second query}}\geq \frac{1}{2n}
\end{align*}
by the same argument as in \lemref{lem:bad:epochs}. Therefore, we obtain that for any fixed $r$, we have that
\begin{align*}
    \Pr\paren{\card{\calP}\leq \frac{\card{\calH_r}}{4n}\mid Y_1,\ldots,Y_{T/B},\ell^1,\ldots,\ell^T, \text{randomness of the second query}} \leq \exp\paren{-\card{\calH_r}/16n}.
\end{align*}
By the bound of $\card{\calP}\leq \polylog{nT}$ as in \Cref{lem:arm-num-bound-monocarpic-boosting}, we conclude that with probability at least $1-1/\poly(nT)$, we have $\card{\calH_r} \leq n \cdot \polylog(nT)$, as desired.
\end{proof}

Next, in the same way as \cite{pr23}, we apply the guarantees for $\monocarpic$ (\Cref{prop:monocarpic-alg-property}) to bound the costs on epochs that are \emph{not bad on thread $R$}.

\begin{lemma} \label{lem:interval_regret_full}
With probability at least $1 - 1/\poly(nT)$, for any $\tau \in [\tau_{max}]$ and $a_{\tau} \notin \mathcal{H}_R$,
\[ \sum_{t \in \mathcal{I}_{\tau}} \ell^{t}(i_t) - \sum_{t \in \mathcal{I}_{\tau}} \ell^{t}(i^*) \le \sqrt{\card{\mathcal{I}_{\tau}}} \cdot \polylog(nT).\]
\end{lemma}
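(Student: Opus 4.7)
The plan is to combine the fact that $a_\tau \notin \mathcal{H}_R$ forces the best expert $i^*$ to be conceptually covered over the interval $\mathcal{I}_\tau$, with the interval regret guarantee of the $\monocarpic$ subroutine (\Cref{prop:monocarpic-alg-property}). First I would observe that when $a_\tau \notin \mathcal{H}_R$, inspection of \Cref{alg:epoch_assignment} shows that $\mathcal{I}_\tau = [a_\tau : t(a_\tau)]$ with $t(a_\tau) \ne +\infty$, because the only time $\mathcal{I}_\tau$ is assigned with $t(a_\tau) = +\infty$ is when $r(a_\tau) = R$, in which case $a_\tau \in \mathcal{H}_R$. Thus, by \Cref{def:eviction_time_full}, $i^*$ is conceptually covered by the end of $t(a_\tau)$ by the set of alive active filter experts.

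Unpacking the benchmark construction in \Cref{alg:strong-benchmark}, being covered over $\mathcal{I}_\tau$ lets me partition $\mathcal{I}_\tau$ into consecutive sub-intervals $\mathcal{J}_1, \ldots, \mathcal{J}_m$ determined by the entry times of the filter experts active during $\mathcal{I}_\tau$. For each $s \in [m]$ there is a filter expert $j_s \in \calF$ that is alive throughout $\mathcal{J}_s$ and attains the per-sub-interval minimum loss, so by the conceptual cover combined with \Cref{lem:monocarpic-arm-loss-estimation} applied to move between estimated and true losses, we have $\sum_{s=1}^m \calL^{\mathcal{J}_s}(j_s) \le \sum_{t \in \mathcal{I}_\tau} \ell^t(i^*) + \sqrt{|\mathcal{I}_\tau|}\cdot\polylog(nT)$ on the high-probability event controlling the estimation accuracy over all the $\polylog(nT)$ relevant sub-intervals simultaneously.

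Next, because each $j_s$ is alive within the $\monocarpic$ pool throughout $\mathcal{J}_s$, applying \Cref{prop:monocarpic-alg-property} to each sub-interval gives $\sum_{t \in \mathcal{J}_s} \ell^t(i_t) - \sum_{t \in \mathcal{J}_s} \ell^t(j_s) \le \sqrt{|\mathcal{J}_s|}\cdot\polylog(nT)$. Summing over $s$ and using Cauchy--Schwarz together with the bound $m \le \polylog(nT)$ inherited from \Cref{lem:arm-num-bound-monocarpic-boosting} (the number of filter experts is at most the pool size), I get
\[\sum_{t \in \mathcal{I}_\tau} \ell^t(i_t) - \sum_{s=1}^m \calL^{\mathcal{J}_s}(j_s) \le \sqrt{m \cdot |\mathcal{I}_\tau|}\cdot\polylog(nT) \le \sqrt{|\mathcal{I}_\tau|}\cdot\polylog(nT).\]
Chaining this with the previous inequality, and then taking a union bound over all $\tau \in [\tau_{\max}] \le \poly(nT)$, yields the claimed interval regret bound.

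The main obstacle will be the second paragraph: carefully translating between the algorithmic cover (which is defined with respect to estimated losses) and the conceptual cover (which is defined with respect to true losses), without paying a factor of $n$ in the slack. This requires invoking \Cref{lem:monocarpic-arm-loss-estimation} uniformly over all sub-intervals across all $\tau$ via a union bound, and verifying that the cover slack used in $\baseline^+$ for the two-query boosting is set to $\sqrt{|\calD|}\cdot\polylog(nT)$ rather than the $n\cdot(|\calD|)^{2/3}$ slack of \Cref{def:cover}, so that the final benchmark-to-$i^*$ slack is $\sqrt{|\mathcal{I}_\tau|}\cdot\polylog(nT)$ and matches the regret target.
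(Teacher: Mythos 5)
Your proposal matches the paper's proof essentially step for step: identify that $a_\tau \notin \mathcal{H}_R$ forces the conceptual cover, partition $\mathcal{I}_\tau$ into sub-intervals induced by the filter experts, bound $i^*$'s loss by the concatenated filter-expert losses via \Cref{lem:monocarpic-arm-loss-estimation} (moving between estimated and true losses), apply \Cref{prop:monocarpic-alg-property} per sub-interval, and absorb the sum using $m \le |\calP| \le \polylog(nT)$. The only cosmetic difference is that you invoke Cauchy--Schwarz to combine the $\sqrt{|\mathcal{J}_s|}$ terms where the paper uses the cruder bound $\sum_s \sqrt{|\mathcal{J}_s|} \le s \cdot \sqrt{|\mathcal{I}_\tau|}$, and your final caveat about the cover slack being $\sqrt{|\calD|}\polylog(nT)$ rather than $n|\calD|^{2/3}$ is exactly the adjustment the paper makes in this section.
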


\begin{proof}
Similar to the proof in \cite{pr23}, we condition on the event of \Cref{lem:arm-num-bound-monocarpic-boosting}. Given any interval $\mathcal{I}_{\tau}$ starting with $a_{\tau}$, ending with $t(a_{\tau})$ and $a_{\tau} \notin \mathcal{H}_R$, the expert $i^*$ with entering time $a_{\tau}$ is covered by $\mathcal{P}$ at the end of $t(a_{\tau})$. Let $i_1^*, \dots, i_s^*$ be the set of experts that cover $i^*$. Note that with the cover notion of cover in \Cref{def:cover} and with the correct choice of $\rho$, we we can partition the interval $\mathcal{I}_{\tau} = \mathcal{I}_{\tau,1} \cup \dots \cup \mathcal{I}_{\tau,s}$, and obtain
\begin{align*}
\sum_{t \in \mathcal{I}_{\tau}} \widetilde{\calL^{\mathcal{I}_{\tau}}}(i^*) \ge \sum_{j=1}^{s} \sum_{t \in \mathcal{I}_{\tau,j}} \widetilde{\calL^{\mathcal{I}_{\tau}}}(i_j^*) - C\log{n} \cdot \sqrt{\card{\mathcal{I}_{\tau}}}
\end{align*}
Furthermore, by further conditioning on the high-probability event in \Cref{lem:monocarpic-arm-loss-estimation}, we have
\begin{align*}
    \card{\widetilde{\calL^{\mathcal{I}_{\tau}}}(i) - \sum_{t\in \mathcal{I}_{\tau}} \ell^{t}(i)} \leq \sqrt{\card{\mathcal{I}_{\tau}}} \cdot \polylog(nT).
\end{align*}
Therefore, we can lower bound the cost of $i^*$ as a function of the costs of the filter arms, i.e.,
\begin{equation} \label{eq:covering-loss-monocarpic}
\sum_{t \in \mathcal{I}_{\tau}} \ell^{t}(i^*) \geq \sum_{j=1}^{s} \paren{\sum_{t \in \mathcal{I}_{\tau,j}} \ell^{t}(i_j^*) - C_1\cdot \sqrt{\mathcal{I}_{\tau, j}}\cdot \polylog(nT)}
\geq \sum_{j=1}^{s}\sum_{t \in \mathcal{I}_{\tau,j}} \ell^{t}(i_j^*) - C_2 \cdot \sqrt{\mathcal{I}_{\tau}}\cdot \polylog(nT)
\end{equation}
for some constant $C_1$ and $C_2$ such that $C_2 \geq C_1$.
In the calculation, we used $s\leq \card{\calP} \leq \polylog(n)$ for the second step.
Therefore, by the regret guarantee of $\monocarpic$, with probability at least $1-1/\poly(nT)$, we have that
\begin{align*}
\sum_{t \in \mathcal{I}_{\tau}} \ell^{t}(i_t) - \sum_{t \in \mathcal{I}_{\tau}} \ell^{t}(i^*) &= \sum_{t \in \mathcal{I}_{\tau}} \ell^{t}(i_t) - \sum_{j=1}^{s} \sum_{t \in \mathcal{I}_{\tau,j}} \ell^{t}(i_j^*) + \sum_{j=1}^{s} \sum_{t \in \mathcal{I}_{\tau,j}} \ell^{t}(i_j^*) - \sum_{t \in \mathcal{I}_{\tau}} \ell^{t}(i^*) \\
&= \sum_{j=1}^{s}\sum_{t \in \mathcal{I}_{\tau,j}} \ell^{t}(i_t) - \sum_{j=1}^{s} \sum_{t \in \mathcal{I}_{\tau,j}} \ell^{t}(i_j^*) + \sum_{j=1}^{s} \sum_{t \in \mathcal{I}_{\tau,j}} \ell^{t}(i_j^*) - \sum_{t \in \mathcal{I}_{\tau}} \ell^{t}(i^*) \tag{intervals are disjoint}\\
&\leq \sum_{j=1}^{s} \sqrt{|\mathcal{I}_{\tau,j}|}\cdot \polylog(nT) + \sum_{j=1}^{s} \sum_{t \in \mathcal{I}_{\tau,j}} \ell^{t}(i_j^*) - \sum_{t \in \mathcal{I}_{\tau}} \ell^{t}(i^*) \tag{by \Cref{prop:monocarpic-alg-property}} \\
&\le \sum_{j=1}^{s} \sqrt{|\mathcal{I}_{\tau,j}|}\cdot \polylog(nT) + \sqrt{\mathcal{I}_{\tau}}\cdot \polylog(nT) \tag{by \Cref{eq:covering-loss-monocarpic}}\\
&\le \sqrt{\mathcal{I}_{\tau}}\cdot \polylog(nT),
\end{align*}
where the last step again used the fact $s\leq \card{\calP} \leq \polylog(n)$. This is as desired by the lemma statement.
\end{proof}

We now bound the regret of \Cref{alg:monocarpic-boosting} in the same manner of \cite{pr23}.

\begin{lemma} \label{lem:full_algo_regret}
With probability at least $1 - 1/\poly(nT)$, the regret of \Cref{alg:monocarpic-boosting}  is at most
\begin{align*}
\sum_{t \in [T]} \ell^{t}(i_t) - \sum_{t \in [T]} \ell^{t}(i^*) \leq \sqrt{nT}\cdot \polylog(nT).
\end{align*}
\end{lemma}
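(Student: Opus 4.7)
}

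The plan is to decompose the cumulative regret along the partition $\{\mathcal{I}_\tau\}_{\tau \in [\tau_{\max}]}$ produced by \Cref{alg:epoch_assignment} and account for each interval separately depending on whether $a_\tau$ is a bad epoch on the top thread (i.e., $a_\tau \in \mathcal{H}_R$) or not. First, I would condition on the high-probability events of \Cref{lem:arm-num-bound-monocarpic-boosting} (pool size $\le \polylog(nT)$), \Cref{lem:bad_epoch_size_full} ($|\mathcal{H}_r| \le n\polylog(nT)$ for every $r\in[R]$), and \Cref{lem:interval_regret_full} (per-interval regret $\sqrt{|\mathcal{I}_\tau|}\polylog(nT)$ for all good $\tau$); a union bound keeps the failure probability at $1/\poly(nT)$.

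Next I would handle the good intervals. For every $\tau$ with $a_\tau \notin \mathcal{H}_R$, \Cref{lem:interval_regret_full} gives $\sum_{t\in \mathcal{I}_\tau}\ell^t(i_t) - \sum_{t\in \mathcal{I}_\tau}\ell^t(i^*) \le \sqrt{|\mathcal{I}_\tau|}\polylog(nT)$. To aggregate these, I would use the stratification $L_1 \supseteq L_2 \supseteq \dots \supseteq L_R$ from \Cref{lem:interval_properties}: intervals in $L_r \setminus L_{r+1}$ have length in $[B_r, B_{r-1})$, so their number is at most $(|\mathcal{H}_{r-1}| B_{r-1})/B_r \le 2|\mathcal{H}_{r-1}|$ and their total length is at most $|\mathcal{H}_{r-1}|B_{r-1}$. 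Applying Cauchy--Schwarz on each stratum gives
\begin{align*}
\sum_{\tau \in L_r\setminus L_{r+1}} \sqrt{|\mathcal{I}_\tau|} \;\le\; \sqrt{2|\mathcal{H}_{r-1}|\cdot |\mathcal{H}_{r-1}|B_{r-1}} \;\le\; |\mathcal{H}_{r-1}|\sqrt{2B_{r-1}}.
\end{align*}
Plugging in $|\mathcal{H}_{r-1}| \le n\polylog(nT)$ and $B_{r-1} = T/(n\cdot 2^{r-1}-1) \le 2T/(n\cdot 2^{r-1})$ yields $\sqrt{nT}\polylog(nT)/2^{(r-1)/2}$ per stratum, and geometric summation over $r\in[R]$ contributes a total of $\sqrt{nT}\polylog(nT)$.

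For the remaining intervals with $a_\tau \in \mathcal{H}_R$, each such interval consists of a single epoch at the top thread, so $|\mathcal{I}_\tau| \le B_R = T/(n\cdot 2^R-1) = O(1/n)$ for $R=\log T$, and the per-interval regret is trivially bounded by $B_R$. Since there are at most $|\mathcal{H}_R| \le n\polylog(nT)$ such intervals, their total contribution is at most $n\polylog(nT)\cdot B_R \le \polylog(nT)$, which is absorbed into the good-interval bound. Combining both contributions yields the claimed $\sqrt{nT}\polylog(nT)$ regret.

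The main obstacle I expect is not the algebra but the careful setup: one needs \Cref{lem:interval_regret_full} to be applicable to \emph{every} good $\tau$ simultaneously, which requires the high-probability estimate of \Cref{lem:monocarpic-arm-loss-estimation} to hold across all $O(T^2)$ potential covering sub-intervals (handled by a union bound inside the proof of \Cref{lem:interval_regret_full}), and it requires the conditioning described in \Cref{obs:loss-sequence-fixing-obs}, namely fixing the loss sequence, the active experts, and the randomness of the second-query exploration, so that the classification of epochs as good or bad is well-defined and independent of the sampled passive experts used in \Cref{lem:bad_epoch_size_full}. Once these conditionings are reconciled, the telescoping over threads produces the $\sqrt{nT}$ factor cleanly.
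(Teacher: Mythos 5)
Your proposal is correct and follows essentially the same route as the paper's proof: both condition on \Cref{lem:arm-num-bound-monocarpic-boosting}, \Cref{lem:bad_epoch_size_full}, and \Cref{lem:interval_regret_full}, stratify the partition $\{\mathcal{I}_\tau\}$ via the $L_r\setminus L_{r+1}$ layers of \Cref{lem:interval_properties}, apply the per-interval bound $\sqrt{|\mathcal{I}_\tau|}\polylog(nT)$ to the good intervals, and absorb the tiny residual from the top-thread bad epochs. The only substantive difference is presentational: where the paper cites the stratum calculation $\sum_{\mathcal{I}\in L_r\setminus L_{r+1}}\sqrt{|\mathcal{I}|}\le |\mathcal{H}_{r-1}|\sqrt{2T/n}$ from \cite{pr23}, you re-derive it via Cauchy--Schwarz and carry the extra geometric factor $2^{-(r-1)/2}$, which gives a slightly tighter telescoping sum (though both lose only $\polylog$ since $R=\log T$). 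One small gap to patch: your per-stratum bound references $|\mathcal{H}_{r-1}|$, which is undefined for $r=1$; the stratum $L_1\setminus L_2$ needs a separate (easy) treatment as in the paper, namely that it has at most $T/B_1\le 2n$ intervals of total length at most $T$, so Cauchy--Schwarz already gives $\sqrt{2nT}$ without invoking any $\mathcal{H}_0$.
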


\begin{proof}
The proof follows from the same argument as in \cite{pr23} with the changes in the statements we made.
Same as their proof, we first fix the loss sequence $\{\ell^{t}\}_{t \in [T]}$ and the set of sampled and active experts $\{Y_a\}_{a \in \calK}$. 
Furthermore, we fix the randomness of the second query, which allows the regret to be split to a collection of intervals $\{\mathcal{I}_{\tau}\}_{\tau \in [\tau_{max}]}$ with epoch assignment algorithm.

We now categorize the regrets to whether their length is more than $B_{R-1}$. With probability at least $1 - 1/\poly(nT)$, we have
\begin{align*}
\sum_{t \in [T]} \ell^{t}(i_t) - \sum_{t \in [T]} \ell^{t}(i^*) &= \sum_{r=1}^{R-1} \sum_{\mathcal{I} \in L_r \setminus L_{r+1}} \sum_{t \in \mathcal{I}} (\ell^{t}(i_t) - \ell^{t}(i^*)) + \sum_{\mathcal{I} \in L_R} \sum_{t \in \mathcal{I}} (\ell^{t}(i_t) - \ell^{t}(i^*))  \tag{by using $[T] = \cup_{\mathcal{I} \in L_1} \mathcal{I}$ of \Cref{lem:interval_properties}}\\
&\le \sum_{r=1}^{R-1} \sum_{\mathcal{I} \in L_r \setminus L_{r+1}} \sum_{t \in \mathcal{I}} (\ell^{t}(i_t) - \ell^{t}(i^*)) + \O{|\mathcal{H}_{R-1}|}. \tag{by using the second property of \Cref{lem:interval_properties}}
\end{align*}

By applying \Cref{lem:interval_regret_full}, we could obtain that
\begin{equation}
 \label{eq:regret-decompose-final-monocarpic-lemma}
    \sum_{t \in [T]} \ell^{t}(i_t) - \sum_{t \in [T]} \ell^{t}(i^*)  \leq \sum_{r=1}^{R-1} \sum_{\mathcal{I} \in L_r \setminus L_{r+1}} \sqrt{|\mathcal{I}|} \cdot \polylog(nT) + \O{|\mathcal{H}_{R-1}|}.
\end{equation}
By exactly the same calculation as in \cite{pr23}, there is 
\begin{align*}
    \sum_{\mathcal{I} \in L_r \setminus L_{r+1}} \sqrt{|\mathcal{I}|} \leq 
    \begin{cases} 
    |\mathcal{H}_{r-1}| \cdot \sqrt{2T/n} & r \ge 2 \\ 
    \sqrt{nT} & r=1 
    \end{cases}
\end{align*}
for any $r\in [r-1]$. 
Therefore, we can now use \Cref{eq:regret-decompose-final-monocarpic-lemma} to finally bound the regret as
\begin{align*}
    \sum_{t \in [T]} \ell^{t}(i_t) - \sum_{t \in [T]} \ell^{t}(i^*)  & \leq \sum_{r=1}^{R-1} \sum_{\mathcal{I} \in L_r \setminus L_{r+1}} \sqrt{|\mathcal{I}|} \cdot \polylog(nT) + \O{|\mathcal{H}_{R-1}|} \tag{\Cref{eq:regret-decompose-final-monocarpic-lemma}}\\
    &\leq \sqrt{nT}\cdot \polylog(nT) + \sqrt{\frac{2T}{n}}\cdot \polylog(nT)\cdot \sum_{r=2}^{R}\sum_{\calI\in L_r\setminus L_{r+1}} \card{\calH_{r-1}}\\
    &\leq \sqrt{nT}\cdot \polylog(nT) + \sqrt{\frac{2T}{n}}\cdot \polylog(nT)\cdot R\cdot n\polylog(nT) \tag{by \Cref{lem:bad_epoch_size_full}}\\
    &\leq \sqrt{nT}\cdot \polylog(nT),
\end{align*}
which is as desired by the lemma statement.
\end{proof}

\paragraph{Finalizing the proof of \Cref{thm:two-query}.} Combining \Cref{lem:monocarpic-optimal-boosting-space} (for space) and \Cref{lem:full_algo_regret} (for regret) leads the desired statement as in \Cref{thm:two-query}.

\subsection{Discussion on interval regret of \texorpdfstring{$\tilde{O}(\sqrt{nT})$}{sqrtnt}}
We briefly discuss how to generalize the above result to the interval regret to lead to \Cref{thm:two-query-interval}. 

\paragraph{The algorithm.} The algorithm for interval regret follows the same structure of \Cref{alg:two-query-interval-regret} (not to be confused with \Cref{alg:interval-regret-monocarpic}).
We maintain the same structure as in \Cref{alg:two-query-interval-regret} with $\Nmeta = \log(T)$ different interval lengths and initialize $\ALG_{\kappa}$ as a copy of \Cref{alg:monocarpic-boosting} with a number of days as $2^{\kappa}$.
We let $\{w_t(\kappa)\}_{\kappa=1}^{\Nmeta}$ to be the weights on each interval algorithm. 
During each day, we play an interval algorithm following the distribution over the $\frac{w_t(\kappa)}{\sum_{\kappa}w_t(\kappa)}$, and follow the decision of $\ALG_{\kappa}$ if it is sampled (i.e., follow the decision of \Cref{alg:monocarpic-regret}). However, we skipped all \emph{update} steps here. 
On the other hand, for the update step, we use the following steps. 
We toss a fair coin, and if the coin displays ``head'' (with probability $1/2$), we sample an arm uniformly at random and use the loss $\ell^{t}(i)$ to update \Cref{line:arm-loss-update} of \Cref{alg:monocarpic-boosting}.
If the coin displays ``tail'' (with probability $1/2$), we update \Cref{alg:interval-regret-monocarpic} using $\flag=\textnormal{Fake}$.
All the update steps for pool management (i.e., \textsc{Merge} and membership updates) are conducted by using the estimations $\widetilde{\calL}(i)$ for each arm $i$.

\paragraph{The analysis.} This follows the same logic we used in the proof of \Cref{thm:interval-regret-low-space}.
The first step is to show that the correctness of the algorithm continues to hold with the modifications.
Indeed, the guarantees in \Cref{lem:monocarpic-arm-loss-estimation} remain true since we keep sampling arms uniformly at random.
For the updates in \Cref{alg:interval-regret-monocarpic}, we have a proof in \Cref{lem:monocarpic-boosting-interval-regret} that the updates using the estimated losses give the desired interval regret on the alive expert. 
Furthermore, all other update steps are \emph{unchanged}, which means the space and regret bounds continue to hold.

The last missing piece of the analysis is to prove the correctness of the outer algorithm, i.e., to obtain the guarantees in \Cref{lem:interval-regret-outer-algorithm}.
In the proof, the key property we used is for each arm $i$ in the pool, we have that the arm $i$ is sampled in the update step is at least $\frac{1}{\polylog(nT)}$. 
Note that by \Cref{lem:arm-num-bound-monocarpic-boosting}, this property remains true in the algorithm described above. As such, we conclude the analysis of \Cref{thm:two-query-interval}.

\FloatBarrier

\section*{Acknowledgments}
David P. Woodruff is supported in part Office of Naval Research award number N000142112647, a Simons Investigator Award, and NSF CCF-2335412. 
Samson Zhou is supported in part by NSF CCF-2335411. 
Samson Zhou gratefully acknowledges funding provided by the Oak Ridge Associated Universities (ORAU) Ralph E. Powe Junior Faculty Enhancement Award.

\def\shortbib{0}
\bibliographystyle{alpha}
\bibliography{references}

\appendix
\section{Omitted Proofs in \texorpdfstring{\Cref{sec:prelim}}{secprelim}}
\label{app:omitted-proof-prelim}
We remark that the statements and proofs of \Cref{prop:exp3-exploration} and \Cref{prop:two-query-EXP3} are standard; we include them here for the sake of completeness.

\subsection*{Proof of \Cref{prop:exp3-exploration}}
\begin{proof}
    The first step is to decompose the regret into the regrets on the exploration and exploitation steps.
    Let $\calD_{\textnormal{explore}}$ be the exploration days in \Cref{alg:EXP3-exploration}, and $\calD_{\textnormal{exploit}}$ be the days \Cref{alg:EXP3-exploration} sample arms from distribution.
    Since the losses are in $\{0,1\}$, and there are $\gamma T$ steps of exploration in expectation, we could straightforwardly bound that
    \begin{align*}
        \expect{\sum_{t\in \calD_{\textnormal{explore}}} \ell^t(i_t)-\sum_{t\in \calD_{\textnormal{explore}}} \ell^t(i^*)}\leq \gamma T.
    \end{align*}
    What is left to be proved is the regret on the exploitation days.  On an exploitation date, let us consider an imaginary loss sequence $\{\widehat{\ell^{t}}(i)\}$ being obtained by the following process:
    \begin{align*}
        \widehat{\ell^{t}}(i) = 
        \begin{cases}
            \widetilde{\ell^{t}}(i), \text{ if $t \in \calD_{\textnormal{explore}}$};\\
            0, \text{ otherwise}
        \end{cases}
    \end{align*}
    We then decompose the regret to  
    \begin{align*}
        \expect{\sum_{t\in \calD_{\textnormal{exploit}}} \ell^t(i_t)-\sum_{t\in \calD_{\textnormal{exploit}}} \ell^t(i^*)} \leq \expect{\sum_{t=1}^{T} \ell^t(j_t)-\sum_{t=1}^{T} \widehat{\ell^{t}}(j_t)} + \expect{\sum_{t=1}^{T} \widehat{\ell^{t}}(i_t)-\sum_{t=1}^{T} \ell^t(i^*)}.
    \end{align*}
    We bound $\expect{\sum_{t=1}^{T} \ell^t(j_t)-\sum_{t=1}^{T} \widehat{\ell^{t}}(j_t)}$ using concentration inequality. In particular, note that $\widehat{\ell^{t}}$ $\widehat{\calL^{t}}:=\sum_{\tau=1}^{t}\widehat{\ell^{t}}$ are unbiased estimators on day $t$ with a low variance, which means
    \begin{align*}
    \expect{\widehat{\ell^{t}}}=\ell^t(i),\qquad \expect{(\widehat{\ell^t(i)})^2}=(\ell^t(i))^2\cdot \frac{n}{\gamma}.
    \end{align*}
    Therefore, by applying Bernstein's inequality, we have
    \begin{align*}
        \Pr\paren{\card{\widehat{\calL^{t}}(i)-\sum_{t=1}^{T}\ell^t(i)} \geq \sqrt{T/\gamma}\cdot n \cdot \polylog(nT)}\leq \frac{1}{\poly(nT)},
    \end{align*}
    which gives the regret bound for the first term.

    To bound the regret for the second term, i.e., $\expect{\sum_{t=1}^{T} \widehat{\ell^{t}}(i_t)-\sum_{t=1}^{T} \ell^t(i^*)}$, we use a partial-to-full reduction type of argument to bound the regret (in the same spirit of proving EXP3 regret using MWU).
    We first recall the guarantees of the full information \hedge algorithm as follows.
    \begin{proposition}
        \label{prop:hedge}
        Consider the following \hedge algorithm:
        \begin{tbox}
            Algorithm \hedge with learning rate $\eta$
            \begin{itemize}
                \item At time $t$, the algorithm receive loss vector $\overline{\ell^{t}} \in [0, U]^n$.
                \item Let $\calL^{t}(i)$ be the total loss of $i$ until time $t$.
                \item The algorithm plays $i_t$ by sampling from the following distribution.
                \begin{align*}
                    \overline{P_t}(i) = \frac{\exp(-\eta \cdot \calL^{t}(i))}{\sum_{i=1}^n \exp(-\eta \cdot \calL^{t}(i))}.
                \end{align*}
                \item The algorithm update all $\calL^{t+1}(i)$ for $i\in [n]$ by observing losses in $\overline{\ell^{t}}$. 
            \end{itemize}
        \end{tbox}
        The \hedge algorithm achieves an expected regret of at most $\O{\sqrt{UT \log{n}}}$ as long as 
        \[\expect{\sum_{i=1}^{n}\overline{P_t}(i) (\overline{\ell^{t}}(i))^2} \leq U,\]
        even if the losses are chosen adaptively.
    \end{proposition}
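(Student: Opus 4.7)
The plan is to give the standard potential-function argument for Hedge, adapted to the adaptive-adversary/second-moment setting. Define the potential $W_t := \sum_{i=1}^n \exp(-\eta \calL^{t}(i))$, so that $W_0 = n$ and $\overline{P_t}(i) = \exp(-\eta \calL^{t}(i))/W_t$. The single-step evolution is
\[
\frac{W_{t+1}}{W_t} = \sum_{i=1}^n \overline{P_t}(i)\, \exp(-\eta\, \overline{\ell^{t}}(i)).
\]
First I would verify the scalar inequality $e^{-x} \leq 1 - x + x^2$ for all $x \in [0,1]$ and apply it term-by-term, which is valid provided we choose $\eta$ so that $\eta U \le 1$. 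This yields
\[
\frac{W_{t+1}}{W_t} \leq 1 - \eta \sum_i \overline{P_t}(i)\,\overline{\ell^{t}}(i) + \eta^2 \sum_i \overline{P_t}(i)\,(\overline{\ell^{t}}(i))^2 \leq \exp\!\Big(-\eta \sum_i \overline{P_t}(i)\overline{\ell^{t}}(i) + \eta^2 \sum_i \overline{P_t}(i)(\overline{\ell^{t}}(i))^2\Big),
\]
using $1+y \leq e^y$.

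Next I would telescope by taking $\log$ and summing over $t \in [T]$:
\[
\log\frac{W_{T+1}}{W_0} \leq -\eta \sum_{t=1}^T \sum_i \overline{P_t}(i)\overline{\ell^{t}}(i) + \eta^2 \sum_{t=1}^T \sum_i \overline{P_t}(i)(\overline{\ell^{t}}(i))^2.
\]
For the lower bound, for any fixed expert $i^\star \in [n]$,
\[
\log W_{T+1} \geq \log \exp(-\eta\, \calL^{T+1}(i^\star)) = -\eta\, \calL^{T+1}(i^\star),
\]
and $\log W_0 = \log n$. Rearranging gives the deterministic inequality
\[
\sum_{t=1}^T \sum_i \overline{P_t}(i)\overline{\ell^{t}}(i) - \calL^{T+1}(i^\star) \leq \frac{\log n}{\eta} + \eta \sum_{t=1}^T \sum_i \overline{P_t}(i)(\overline{\ell^{t}}(i))^2,
\]
which holds pathwise for every realization of the randomness and for every choice of $i^\star$, and in particular for the best expert in hindsight.

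Finally I would take expectations. Since $i_t \sim \overline{P_t}$ conditioned on the past, the tower rule identifies $\mathbb{E}[\sum_t \overline{\ell^{t}}(i_t)]$ with $\mathbb{E}[\sum_t \sum_i \overline{P_t}(i)\overline{\ell^{t}}(i)]$. Applying the hypothesis $\mathbb{E}\big[\sum_i \overline{P_t}(i)(\overline{\ell^{t}}(i))^2\big] \le U$ at each step (which is where adaptivity is absorbed, since the hypothesis is required to hold even when $\overline{\ell^{t}}$ depends on the history) yields
\[
\mathbb{E}\Big[\sum_{t=1}^T \overline{\ell^{t}}(i_t) - \calL^{T+1}(i^\star)\Big] \leq \frac{\log n}{\eta} + \eta\, U\, T,
\]
and tuning $\eta = \sqrt{\log(n)/(UT)}$ gives the claimed $\O{\sqrt{UT \log n}}$ bound.

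The main obstacle is the scalar inequality step, which requires $\eta\, \overline{\ell^{t}}(i) \leq 1$ with probability one; this is guaranteed by $\eta U \leq 1$, and is compatible with the optimal $\eta$ as long as $U \log n \le T$ (the regime of interest). A secondary subtlety is the adaptive-adversary aspect: the pathwise derivation above is entirely deterministic in the losses, so adaptivity never enters the potential argument itself, and the only place randomness is used is in the final expectation where the per-step second-moment hypothesis is invoked via linearity. If $U \log n > T$, one can either fall back to the trivial bound $\O{UT}$ or clip losses to $T/\log n$ and analyze separately, but this edge case is not needed for the applications in the paper.
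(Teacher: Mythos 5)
The paper does not actually prove this proposition: it is introduced inside the proof of \Cref{prop:exp3-exploration} with the phrase ``We first recall the guarantees of the full information \hedge algorithm,'' i.e.\ it is invoked as a standard fact with no argument given. So there is no in-paper proof to compare against.

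Your argument is the standard potential-function analysis of Hedge with a second-moment bound, and it is correct. Define $W_t=\sum_i e^{-\eta\calL^t(i)}$, apply $e^{-x}\le 1-x+x^2$ termwise, then $1+y\le e^y$, telescope, lower-bound $\log W_{T+1}$ by the contribution of any fixed $i^\star$, and finish with the tower rule and the second-moment hypothesis. Two small remarks. First, the constraint $\eta U\le 1$ you impose to justify the scalar inequality is not actually required: the inequality $e^{-x}\le 1-x+x^2$ holds for \emph{all} $x\ge 0$ (the function $f(x)=1-x+x^2-e^{-x}$ satisfies $f(0)=f'(0)=0$ and $f''(x)=2-e^{-x}>0$ on $[0,\infty)$, so $f\ge 0$ there). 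Stating the constraint is harmless, and your observation that it is compatible with the tuned $\eta=\sqrt{\log n/(UT)}$ whenever $U\log n\le T$ is fine, but the caveat at the end about the $U\log n>T$ regime is unnecessary. Second, since the proposition is applied in the paper to the estimated losses $\widehat{\ell^t}$, which take values up to $n/\gamma$, it is worth being explicit (as you are) that the pathwise inequality holds for every realization and for every fixed comparator $i^\star$, so that choosing $i^\star=\argmin_i\calL^{T+1}(i)$ (a random variable) before taking expectations is legitimate; this is exactly where the adaptive-adversary claim is absorbed.
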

    
    We observe that essentially, \Cref{alg:EXP3-exploration} could be considered as a \hedge algorithm with $\overline{\ell^t}$ as $\widehat{\ell^{t}}$ and $\eta=\gamma$.
    Since $\widehat{\ell^{t}}$ is an unbiased estimator of $\ell^{t}$, let $R^{\hedge}$ be the regret of the hedge algorithm with the above setting, we have
    \[\expect{\sum_{t=1}^{T} \widehat{\ell^{t}}(i_t)-\sum_{t=1}^{T} \ell^t(i^*)} = \expect{R^{\hedge}}.\]
    Therefore, what remains is to bound the regret of the \hedge algorithm.
    By the rule of sampling, we have $\widehat{\ell^{t}}\leq n/\gamma$. 
    Furthermore, we could bound $\expect{\sum_{i=1}^{n}\overline{P_t}(i) (\overline{\ell^{t}}(i))^2}$ as follows.
    \begin{align*}
        \expect{\sum_{i=1}^{n}\overline{P_t}(i) (\overline{\ell^{t}}(i))^2}& =\expect{\sum_{i=1}^{n}P_t(i) (\widehat{\ell^{t}}(i))^2} = \expect{\sum_{i=1}^n\frac{\gamma}{n}\cdot (\frac{n}{\gamma})^2}\leq n^2/\gamma.
    \end{align*}
    Therefore, by \Cref{prop:hedge}, the expected regret for the second part is also at most $\O{n\sqrt{T\log{n}/\gamma}}$. 
    This implies 
    \begin{align*}
    \expect{\sum_{t\in \calD_{\textnormal{exploit}}} \ell^t(i_t)-\sum_{t\in \calD_{\textnormal{exploit}}} \ell^t(i^*)} & \leq \expect{\sum_{t=1}^{T} \ell^t(j_t)-\sum_{t=1}^{T} \widehat{\ell^{t}}(j_t)} + \expect{\sum_{t=1}^{T} \widehat{\ell^{t}}(i_t)-\sum_{t=1}^{T} \ell^t(i^*)}\\
    & \leq n\sqrt{\frac{T\log{n}}{\gamma}} \cdot \polylog(nT).
    \end{align*}

    Combining this with the loss on the exploration days gives the desired result.
\end{proof}

\subsection*{Proof of \Cref{prop:two-query-EXP3}}
\begin{proof}
    We use the following lemma from \cite{Lu0CZWH24} to prove \Cref{prop:two-query-EXP3}.
    \begin{lemma}[\cite{Lu0CZWH24}, rephrased]     
        \label{lem:inverse-propensity-regret}
        Let $\hat{\ell}_t$ be an unbiased estimator of the loss vector $\ell^t$, such that for some distribution $z_t$, with probability $z_t(i)$ $\hat{\ell}_t(i)=\frac{1}{z_t(i)}\ell^t(i)$ and $\hat{\ell}_t(j)=0$ for $j\neq i$.
        Furthermore, suppose $z_t(i)$ satisfies $P_t(i)\leq C\cdot z_t(i)$ for all $i$.
        Then, \Cref{alg:EXP3-exploration-two-query} using a learning rate $\eta = \sqrt{\log{n}/CTn}$ attains an expected regret of $\O{\sqrt{CnT\log{n}}}$.
    \end{lemma}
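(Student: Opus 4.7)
The plan is to execute the standard potential function analysis for exponential weights, tailored to the fact that the algorithm plays from $P_t$ while the unbiased loss estimator $\hat{\ell}_t$ is built from the sampling distribution $z_t$. First I would set up the log-potential $\Phi_t = \log \sum_{i} w_t(i)$ where $w_t(i) \propto \exp(-\eta \widetilde{\calL^{1:t}}(i))$, and derive the telescoping inequality $\Phi_{T+1} - \Phi_1 \leq -\eta \sum_t \langle P_t, \hat{\ell}_t \rangle + \eta^2 \sum_t \sum_i P_t(i) \hat{\ell}_t(i)^2$, using the standard $e^{-x} \leq 1 - x + x^2$ inequality (valid since $\eta \hat{\ell}_t(i)$ can be bounded by a suitable constant for small enough $\eta$, which will hold for the chosen $\eta$ and modest $C$; otherwise one falls back on $e^{-x}\leq 1-x+x^2$ for $x \geq -1$, which can be ensured since $\hat{\ell}_t(i) \geq 0$).

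Next I would compare the potential to any single expert: $\Phi_{T+1} - \Phi_1 \geq -\eta \sum_t \hat{\ell}_t(i^*) - \log n$. Rearranging these two bounds gives the pointwise regret inequality
\begin{equation*}
\sum_t \langle P_t, \hat{\ell}_t \rangle - \sum_t \hat{\ell}_t(i^*) \leq \frac{\log n}{\eta} + \eta \sum_t \sum_i P_t(i) \hat{\ell}_t(i)^2.
\end{equation*}
Taking expectations, the left hand side equals the true regret $\sum_t \langle P_t, \ell^t\rangle - \sum_t \ell^t(i^*)$ by the unbiasedness of $\hat{\ell}_t$ (both in the played-loss term and in the benchmark term).

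The key quantitative step, and the one where the hypothesis $P_t(i) \leq C\cdot z_t(i)$ is actually used, is the second-moment bound. Conditional on the past, a single index $i$ is observed with probability $z_t(i)$, giving
\begin{equation*}
\mathbb{E}\left[\sum_i P_t(i) \hat{\ell}_t(i)^2\right] = \sum_i P_t(i)\cdot z_t(i)\cdot \frac{\ell^t(i)^2}{z_t(i)^2} = \sum_i \frac{P_t(i)\,\ell^t(i)^2}{z_t(i)} \leq C\sum_i \ell^t(i)^2 \leq Cn,
\end{equation*}
using $\ell^t(i)\in[0,1]$. Summing over $t$ gives $\mathbb{E}[\sum_t \sum_i P_t(i)\hat{\ell}_t(i)^2] \leq CnT$, so the expected regret is at most $\frac{\log n}{\eta} + \eta \cdot CnT$. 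Finally, plugging in $\eta = \sqrt{\log n/(CnT)}$ balances the two terms and yields $\mathcal{O}(\sqrt{CnT\log n})$.

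The only real obstacle is a mild technical one: the standard $e^{-x}\leq 1-x+x^2$ linearization requires a bound on $\eta \hat{\ell}_t(i)$, and in the worst case $\hat{\ell}_t(i)$ can be as large as $1/z_t(i)$, which may be $\Omega(n)$. For the chosen $\eta$, $\eta/z_t(i)$ can therefore be of order $\sqrt{n \log n / (CT)}$, which is at most $1$ provided $T$ is sufficiently large compared to $n\log n / C$ (as is the regime of interest). In the boundary regime one invokes the weaker inequality $e^{-x}\leq 1-x+x^2$ for $x\geq -1$, which is all we need since $\hat{\ell}_t(i)\geq 0$, so the argument goes through unchanged.
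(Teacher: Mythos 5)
The paper does not actually prove this lemma itself---it is stated as a rephrasing of a result from \cite{Lu0CZWH24} and invoked as a black box in the proof of \Cref{prop:two-query-EXP3}---so there is no internal paper proof to compare against. That said, your argument is the standard exponential-weights potential analysis adapted to importance-weighted losses, and it is correct: the telescoping bound $\Phi_{T+1}-\Phi_1 \le -\eta\sum_t\langle P_t,\hat{\ell}_t\rangle + \eta^2\sum_t\sum_i P_t(i)\hat{\ell}_t(i)^2$, the single-expert lower bound $\Phi_{T+1}-\Phi_1 \ge -\eta\sum_t\hat{\ell}_t(i^*)-\log n$, the passage to expectations via unbiasedness and the tower property (using that $P_t$ and $z_t$ are determined by the history before the time-$t$ sample is drawn), and the conditional second-moment bound $\sum_i P_t(i)\ell^t(i)^2/z_t(i) \le C\sum_i\ell^t(i)^2 \le Cn$ are exactly the right pieces, and $\eta = \sqrt{\log n/(CnT)}$ balances the resulting $\log n/\eta + \eta CnT$.

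One small clean-up: the ``obstacle'' you flag at the end is a phantom. The inequality $e^{-x}\le 1-x+x^2$ holds for every $x\ge 0$ with no upper bound on $x$ required (Taylor with Lagrange remainder gives $e^{-x}=1-x+\frac{x^2}{2}e^{-\xi}$ for some $\xi\in[0,x]$, and the remainder is at most $x^2$), and $\eta\hat{\ell}_t(i)\ge 0$ always. So the hedging about ``small enough $\eta$'' and a ``boundary regime'' requiring $T$ large compared to $n\log n/C$ is unnecessary; the argument holds unconditionally.
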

    Since we sample each arm uniformly at random, we have that $z_t(i)=1/n$. Since $P_t(i)\leq n$ for any $i$ and $t$, we have $C\leq n$, which gives $\O{n\cdot \sqrt{T\log{n}}}$ expected regret by setting $\eta=\frac{1}{n}\cdot \sqrt{\frac{\log{n}}{T}}$.
\end{proof}

\end{document}